\documentclass{article} %
\usepackage{iclr2025_conference,times}

\usepackage{amsmath,amsfonts,bm}

\def\eqref#1{equation~\ref{#1}}

\def\1{\bm{1}}

\def\rvn{{\mathbf{n}}}

\def\rvs{{\mathbf{s}}}

\def\rvx{{\mathbf{x}}}
\def\rvy{{\mathbf{y}}}
\def\rvz{{\mathbf{z}}}

\def\vmu{{\bm{\mu}}}
\def\vtheta{{\bm{\theta}}}

\def\vb{{\bm{b}}}

\def\vd{{\bm{d}}}

\def\vf{{\bm{f}}}

\def\vx{{\bm{x}}}

\def\mD{{\bm{D}}}

\def\mF{{\bm{F}}}

\def\mI{{\bm{I}}}

\def\mS{{\bm{S}}}

\def\mSigma{{\bm{\Sigma}}}

\DeclareMathAlphabet{\mathsfit}{\encodingdefault}{\sfdefault}{m}{sl}
\SetMathAlphabet{\mathsfit}{bold}{\encodingdefault}{\sfdefault}{bx}{n}

\def\gC{{\mathcal{C}}}

\def\gH{{\mathcal{H}}}

\def\gL{{\mathcal{L}}}

\def\gN{{\mathcal{N}}}
\def\gO{{\mathcal{O}}}

\def\gU{{\mathcal{U}}}

\def\sR{{\mathbb{R}}}

\newcommand{\E}{\mathbb{E}}

\newcommand{\R}{\mathbb{R}}

\newcommand{\Var}{\mathrm{Var}}

\usepackage{amssymb}
\usepackage{amsthm}
\usepackage{bbm}
\usepackage{tikz}
\usepackage{geometry}
\usepackage{algorithmic}
\usepackage{mdframed}

\usepackage[utf8]{inputenc} %
\usepackage[T1]{fontenc}    %
\usepackage{graphicx}
\usepackage{mathabx}
\usepackage{booktabs}
\usepackage{algorithmic}
\usepackage[linesnumbered,ruled,vlined]{algorithm2e}
\usepackage{acronym}
\usepackage{enumitem}
\usepackage[
    colorlinks, 
    linkcolor=blue,
    anchorcolor=blue, 
    citecolor=magenta,
]{hyperref}
\usepackage{setspace}
\usepackage[skip=3pt,font=small]{subcaption}
\usepackage[skip=3pt,font=footnotesize]{caption}
\usepackage{xcolor}

\makeatletter
\DeclareRobustCommand\onedot{\futurelet\@let@token\@onedot}
\def\@onedot{\ifx\@let@token.\else.\null\fi\xspace}

\makeatother

\newcommand{\norm}[1]{\left\Vert #1 \right\Vert}

\usepackage{graphicx} %
\usepackage{animate}
\usepackage[utf8]{inputenc} %
\usepackage[T1]{fontenc}    %
\usepackage{url}            %
\usepackage{booktabs}       %
\usepackage{amsfonts}       %
\usepackage{nicefrac}       %
\usepackage{microtype}      %
\usepackage{amsthm}
\usepackage{multirow}
\usepackage{mathtools}
\usepackage{array}
\usepackage[export]{adjustbox}
\usepackage{wrapfig}
\usepackage{setspace}
\usepackage{tcolorbox}

\newcommand{\kl}[2]{D_{\mathrm{KL}}\!\left(#1 ~ \| ~ #2\right)}
\newcommand{\gamm}[2]{D_\gamma\!\left(#1 ~ \| ~ #2\right)}
\newcommand{\cskip}{c_\text{skip}}
\newcommand{\cout}{c_\text{out}}
\newcommand{\cin}{c_\text{in}}
\newcommand{\cnoise}{c_\text{noise}}
\newcommand{\sdata}{\sigma_\text{data}^2}
\newcommand{\snoise}{\frac{\nu}{\nu - 2}\sigma^2}
\newcommand{\usnoise}{\sigma^2}
\newcommand{\smax}{\sigma_\text{max}}
\newcommand{\smin}{\sigma_\text{min}}

\newcommand{\ddp}{{\vd\mathrlap{\hspace*{.075em}'}}}

\definecolor{algoshade}{HTML}{edede9}
\definecolor{algoshade2}{HTML}{e3d5ca}
\definecolor{algoshade3}{HTML}{dee2e6}

\tcbset{
  mathstyle/.style={
    colback=algoshade3,  %
    colframe=white,
    boxrule=0.5mm,
    arc=0mm,
    auto outer arc,
    boxsep=1pt,
    left=2pt,
    right=2pt,
    top=1pt,
    bottom=1pt
  }
}

\title{Heavy-Tailed Diffusion Models}

\author{Kushagra Pandey$^{1,2}$\thanks{Work during an internship at NVIDIA} , Jaideep Pathak$^{1}$, Yilun Xu$^{1}$, \\ \textbf{Stephan Mandt$^{2}$, Michael Pritchard$^{1,2}$, Arash Vahdat$^{1}\thanks{Equal Advising}$ , Morteza Mardani$^{1}$\footnotemark[2]} \\
NVIDIA$^{1}$, University of California, Irvine $^{2}$\\
\texttt{\{pandeyk1,mandt\}@uci.edu}, \\
\texttt{\{jpathak,yilunx,mpritchard,avahdat,mmardani\}@nvidia.com}\\
}

\newtheorem{proposition}{Proposition}
\newtheorem{proposition*}{Proposition}

\iclrfinalcopy %
\begin{document}

\maketitle

\begin{abstract}
Diffusion models achieve state-of-the-art generation quality across many applications, but their ability to capture rare or extreme events in heavy-tailed distributions remains unclear. In this work, we show that traditional diffusion and flow-matching models with standard Gaussian priors fail to capture heavy-tailed behavior. We address this by repurposing the diffusion framework for heavy-tail estimation using multivariate Student-t distributions. We develop a tailored perturbation kernel and derive the denoising posterior based on the conditional Student-t distribution for the backward process. Inspired by $\gamma$-divergence for heavy-tailed distributions, we derive a training objective for heavy-tailed denoisers. The resulting framework introduces controllable tail generation using only a single scalar hyperparameter, making it easily tunable for diverse real-world distributions. As specific instantiations of our framework, we introduce \emph{t-EDM} and \emph{t-Flow}, extensions of existing diffusion and flow models that employ a Student-t prior. Remarkably, our approach is readily compatible with standard Gaussian diffusion models and requires only minimal code changes. Empirically, we show that our \emph{t-EDM} and \emph{t-Flow} outperform standard diffusion models in heavy-tail estimation on high-resolution weather datasets in which generating rare and extreme events is crucial.

\end{abstract}

\section{Introduction}
In many real-world applications, such as weather forecasting, rare or extreme events—like hurricanes or heatwaves—can have disproportionately larger impacts than more common occurrences. Therefore, building generative models capable of accurately capturing these extreme events is critically important \citep{Grndemann2022}. However, learning the distribution of such data from finite samples is particularly challenging, as the number of empirically observed tail events is typically small, making accurate estimation difficult.

One promising approach is to use heavy-tailed distributions, which allocate more density to the tails than light-tailed alternatives. In popular generative models like Normalizing Flows \citep{rezende2016variationalinferencenormalizingflows} and Variational Autoencoders (VAEs) \citep{kingma2022autoencodingvariationalbayes}, recent works address heavy-tail estimation by learning a mapping from a heavy-tailed prior to the target distribution \citep{jaini2020tailslipschitztriangularflows, kim2024tvariational}.

While these works advocate for heavy-tailed base distributions, their application to real-world, high-dimensional datasets remains limited, with empirical results focused on small-scale or toy datasets. In contrast, diffusion models \citep{ho2020denoising, songscore, lipman2023flow} have demonstrated excellent synthesis quality in large-scale applications. However, it is unclear whether diffusion models with Gaussian priors can effectively model heavy-tailed distributions without significant modifications.

In this work, we first demonstrate through extensive experiments that traditional diffusion models—even with proper normalization, preconditioning, and noise schedule design (see Section \ref{sec:experiments})—fail to accurately capture the heavy-tailed behavior in target distributions (see Fig. \ref{fig:toy} for a toy example). We hypothesize that, in high-dimensional spaces, the Gaussian distribution in standard diffusion models tends to concentrate on a spherical narrow shell, thereby neglecting the tails. To address this, we adopt the multivariate Student-t distribution as the base noise distribution, with its degrees of freedom providing controllability over tail estimation. Consequently, we reformulate the denoising diffusion framework using multivariate Student-t distributions by designing a tailored perturbation kernel and deriving the corresponding denoiser. Moreover, we draw inspiration from the $\gamma$-power Divergences \citep{EGUCHI202115, kim2024t3variationalautoencoderlearningheavytailed} for heavy-tailed distributions to formulate the learning problem for our heavy-tailed denoiser.

We extend widely adopted diffusion models, such as EDM \citep{karras2022elucidatingdesignspacediffusionbased} and straight-line flows \citep{lipman2023flow, liu2022flowstraightfastlearning}, by introducing their Student-t counterparts: \emph{t-EDM} and \emph{t-Flow}. We derive the corresponding SDEs and ODEs for modeling heavy-tailed distributions. Through extensive experiments on the HRRR dataset \citep{HRRR}, we train both unconditional and conditional versions of these models. The results show that standard EDM struggles to capture tails and extreme events, whereas \emph{t-EDM} performs significantly better in modeling such phenomena. To summarize, we present,

\begin{itemize}

\item \textbf{Heavy-tailed Diffusion Models.} We repurpose the diffusion model framework for heavy-tail estimation by formulating both the forward and reverse processes using multivariate Student-t distributions. The denoiser is learned by minimizing the $\gamma$-power divergence \citep{kim2024t3variationalautoencoderlearningheavytailed} between the forward and reverse posteriors.

\item \textbf{Continuous Counterparts.} We derive continuous formulations for heavy-tailed diffusion models and provide a principled approach to constructing ODE and SDE samplers. This enables the instantiation of \emph{t-EDM} and \emph{t-Flow} as heavy-tailed analogues to standard diffusion and flow models.

\item \textbf{Empirical Results.} Experiments on the HRRR dataset \citep{HRRR}, a high-resolution dataset for weather modeling, show that \emph{t-EDM} significantly outperforms EDM in capturing tail distributions for both unconditional and conditional tasks.

\item \textbf{Theoretical Connections.} To theoretically justify the effectiveness of our approach, we present several theoretical connections between our framework and existing work in diffusion models and robust statistical estimators \citep{futami2018variationalinferencebasedrobust}.
\end{itemize}

\begin{figure}
    \centering
    \includegraphics[width=1.0\linewidth]{./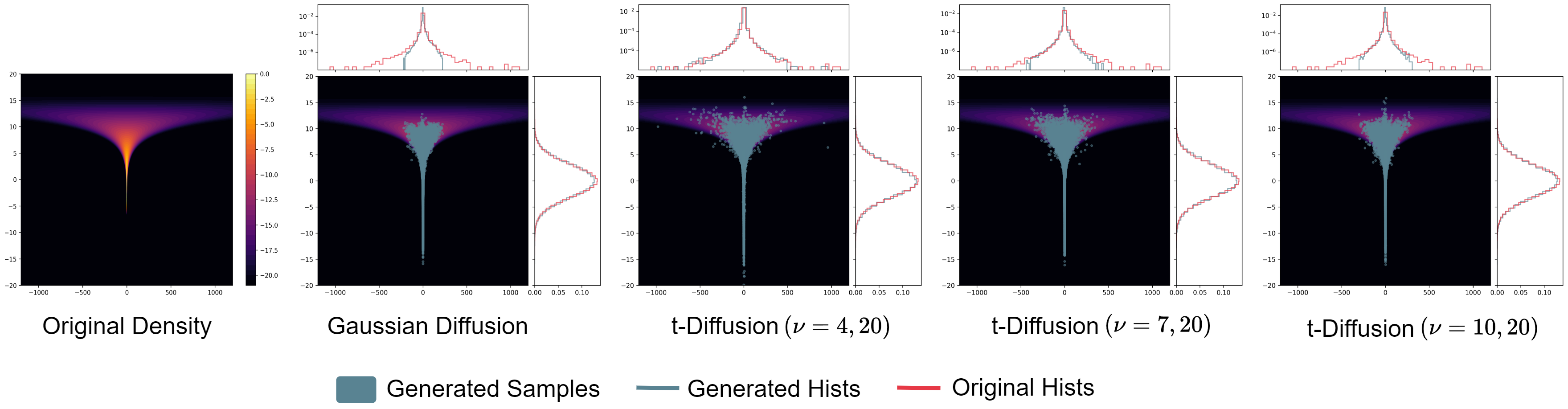}
    \caption{\small \textbf{Toy Illustration.} Our proposed diffusion model (\emph{t-Diffusion}) captures heavy-tailed behavior more accurately than standard Gaussian diffusion, as shown in the histogram comparisons (top panel, x-axis). The framework allows for \textbf{controllable tail estimation} using a hyperparameter $\nu$, which can be adjusted for each dimension. Lower $\nu$ values model heavier tails, while higher values approach Gaussian diffusion. (Best viewed when zoomed in; see App. \ref{app:exp_toy} for details)}
    \label{fig:toy}
\end{figure}

\section{Background}
As prerequisites underlying our method, we briefly summarize Gaussian diffusion models (as introduced in \citep{ho2020denoising, sohl2015deep}) and multivariate Student-t distributions.
\subsection{Diffusion Models}

Diffusion models define a \textit{forward process} (usually with an affine drift and no learnable parameters)
to convert data $\rvx_0 \sim p(\rvx_0), \rvx_0 \in \R^d$ to noise. A learnable \textit{reverse} process is then trained to generate data from noise. In the discrete-time setting, the training objective for diffusion models can be specified as,
\begin{align}
\mathbb{E}_q \bigg[ \underbrace{\kl{q(\rvx_T|\rvx_0)}{p(\rvx_T)}}_{L_T} + \sum_{t > \Delta t} \underbrace{\kl{q(\rvx_{t-\Delta t}|\rvx_t,\rvx_0)}{p_\theta(\rvx_{t-\Delta t}|\rvx_t)}}_{L_{t-1}} \underbrace{-\log p_\theta(\rvx_0|\rvx_{\Delta t})}_{L_0} \bigg] \label{eq:vb},
\end{align}
where $T$ denotes the trajectory length while $\Delta t$ denotes the time increment between two consecutive time points. $D_\text{KL}$ denotes the Kullback-Leibler (KL) divergence defined as, $\kl{q}{p} = \int q(\rvx) \log \frac{q(\rvx)}{p(\rvx)}d\rvx$.
In the objective in Eq. \ref{eq:vb}, the trajectory length $T$ is chosen to match the generative prior $p(\rvx_T)$ and the forward marginal $q(\rvx_T|\rvx_0)$. The second term in Eqn. \ref{eq:vb} proposes to minimize the KL divergence between the forward posterior $q(\rvx_{t-\Delta t}|\rvx_t,\rvx_0)$ and the learnable posterior $p_\theta(\rvx_{t-\Delta t}|\rvx_t)$ which corresponds to learning the \emph{denoiser} (i.e. predicting a less noisy state from noise). The forward marginals, posterior, and reverse posterior are modeled using Gaussian distributions, which exhibit an analytical form of the KL divergence. The discrete-time diffusion framework can also be extended to the continuous time setting \citep{songscore, song2021maximumlikelihoodtrainingscorebased, karras2022elucidatingdesignspacediffusionbased}. Recently, \citet{lipman2023flow, albergo2023stochastic} proposed stochastic interpolants (or flows), which allow flexible transport between two arbitrary distributions.

\subsection{Student-t Distributions}
The multivariate Student-t distribution $t_d(\mu, \mSigma, \nu)$ with dimensionality d, location $\mu$, scale matrix $\mSigma$ and degrees of freedom $\nu$ is defined as,
\begin{equation}
    t_d(\mu, \mSigma, \nu) = C_{\nu, d} \Big[1 + \frac{1}{\nu}(\rvx - \vmu)^\top \mSigma^{-1} (\rvx - \vmu)\Big]^{-\frac{\nu + d}{2}},
\end{equation}
where $C_{\nu, d}$ is the normalizing factor. Since the multivariate Student-t distribution has polynomially decaying density, it can model heavy-tailed distributions. Interestingly, for $\nu=1$, the Student-t distribution is analogous to the Cauchy distribution. As $\nu \rightarrow \infty$, the Student-t distribution converges to the Gaussian distribution. A Student-t distributed random variable $\rvx$ can be reparameterized as \citep{f4b6e54c-9789-36a6-b4ae-370ca07e6041}, $\rvx = \vmu + \mSigma^{1/2}\rvz/\sqrt{\kappa}$, with $\rvz \sim \gN(0, \mI_d), \kappa \sim \chi^2(\nu)/\nu$
where $\chi^2$ denotes the Chi-squared distribution.

\section{Heavy-Tailed Diffusion Models}

We now repurpose standard diffusion models using multivariate Student-t distributions. The main idea behind our design is the use of heavy-tailed generative priors \citep{jaini2020tailslipschitztriangularflows, kim2024t3variationalautoencoderlearningheavytailed} for learning a transport map towards a potentially heavy-tailed target distribution. From Eqn.~\ref{eq:vb} we note three key requirements for training diffusion models: choice of the perturbation kernel $q(\rvx_t|\rvx_0)$, form of the target denoising posterior $q(\rvx_{t-\Delta t}|\rvx_t, \rvx_0)$ and the parameterization of the learnable reverse posterior $p_\theta(\rvx_{t-1}|\rvx_t)$. Therefore, we begin our discussion in the context of discrete-time diffusion models and later extend to the continuous regime. This has several advantages in terms of highlighting these three key design choices, which might be obscured by the continuous-time framework of defining a forward and a reverse SDE \citep{karras2022elucidatingdesignspacediffusionbased} while at the same time leading to a simpler construction.

\subsection{Noising Process Design with Student-t Distributions.}
Our construction of the noising process involves three key steps.
\begin{enumerate}
    \item Firstly, given two consecutive noisy states $\rvx_t$ and $\rvx_{t-\Delta t}$, we specify a joint distribution $q(\rvx_t, \rvx_{t-\Delta t}|\rvx_0)$.
    \item Secondly, given $q(\rvx_t, \rvx_{t-\Delta t}|\rvx_0)$, we construct the perturbation kernel $q(\rvx_t|\rvx_0)=\int q(\rvx_t, \rvx_{t-\Delta t}|\rvx_0) d\rvx_{t-\Delta t}$ which can be used as a noising process during training.
    \item Lastly, from Steps 1 and 2, we construct the forward denoising posterior $q(\rvx_{t-\Delta t}|\rvx_t, \rvx_0) = \frac{q(\rvx_t, \rvx_{t-\Delta t}|\rvx_0)}{q(\rvx_t|\rvx_0)}$. We will later utilize the form of $q(\rvx_{t-\Delta t}|\rvx_t, \rvx_0)$ to parameterize the reverse posterior.
\end{enumerate}
It is worth noting that our construction of the noising process bypasses the specification of the forward transition kernel $q(\rvx_t|\rvx_{t-\Delta t})$. This has the advantage that we can directly specify the form of the perturbation kernel parameters $\mu_t$ and $\sigma_t$ as in \citet{karras2022elucidatingdesignspacediffusionbased} unlike \citet{songscore, ho2020denoising}. We next highlight the noising process construction in more detail.

\textbf{Specifiying the joint distribution $q(\rvx_t, \rvx_{t-\Delta t}|\rvx_0)$.}
\label{subsubsec:bivariate}
We parameterize the joint distribution $q(\rvx_t, \rvx_{t-\Delta t}|\rvx_0)$ as a multivariate Student-t distribution with the following form,
\begin{equation}
    q(\rvx_t, \rvx_{t-\Delta t}|\rvx_0) = t_{2d}(\vmu, \mSigma, \nu),\quad \vmu = [\mu_t;\mu_{t-\Delta t}]\rvx_0, \quad \mSigma=\begin{pmatrix}
        \sigma_{t}^2& \sigma_{12}^2(t) \\
        \sigma_{21}^2(t)& \sigma_{t-\Delta t}^2
    \end{pmatrix} \otimes \mI_d,\notag
\end{equation}
where $\mu_t, \sigma_t, \sigma_{12}(t), \sigma_{21}(t)$ are time-dependent scalar design parameters. While the choice of the parameters $\mu_t$ and $\sigma_t$ determines the perturbation kernel used during training, the choice of $\sigma_{12}(t)$ and $\sigma_{21}(t)$ can affect the ODE/SDE formulation for the denoising process and will be clarified when discussing sampling. 

\textbf{Constructing the perturbation kernel $q(\rvx_t|\rvx_0)$:} Given the joint distribution $q(\rvx_t,\rvx_{t-\Delta t}|\rvx_0)$ specified as a multivariate Student-t distribution,  it follows that the perturbation kernel distribution $q(\rvx_t|\rvx_0)$ is also a Student-t distribution \citep{ding2016conditionaldistributionmultivariatet} parameterized as, $q(\rvx_t|\rvx_0) = t_d(\mu_t\rvx_0, \sigma_t^2\mI_d, \nu)$ (proof in App.~\ref{app:proof_marginal}).
We choose the scalar coefficients $\mu_t$ and $\sigma_t$ such that the perturbation kernel at time $t=T$ converges to a standard Student-t generative prior $t_d(0, \mI_d, \nu)$. We discuss practical choices of $\mu_t$ and $\sigma_
t$ in Section \ref{subsec:t_edm}.

\textbf{Estimating the reference denoising posterior.} Given the joint distribution $q(\rvx_t,\rvx_{t-\Delta t}|\rvx_0)$ and the perturbation kernel $q(\rvx_t|\rvx_0)$, the denoising posterior can be specified as (see \citet{ding2016conditionaldistributionmultivariatet}),
\begin{equation}
    q(\rvx_{t-\Delta t}|\rvx_t, \rvx_0) = t_d(\bar{\vmu}_t, \frac{\nu + d_1}{\nu + d} \bar{\sigma}_t^2 \mI_d, \nu + d),
    \label{eq:forward_post}
\end{equation}
\begin{equation}
    \bar{\vmu}_t = \mu_{t-\Delta t}\rvx_0 + \frac{\sigma_{21}^2(t)}{\sigma_t^2}(\rvx_t - \mu_t\rvx_0), \qquad \bar{\sigma}_t^2 = \Big[\sigma_{t-\Delta t}^2 - \frac{\sigma_{21}^2(t)\sigma_{12}^2(t)}{\sigma_t^2}\Big],
    \label{eq:1}
\end{equation}
where $d_1 = \frac{1}{\sigma_t^2}\Vert\rvx_t - \mu_t\rvx_0\Vert^2$.
Next, we formulate the training objective for heavy-tailed diffusions.

\subsection{Parameterization of the Reverse Posterior}
\label{subsec:posterior_param}
Following Eqn.~\ref{eq:forward_post}, we parameterize the reverse (or the denoising) posterior distribution as:
\begin{equation}
    p_\vtheta(\rvx_{t-\Delta t}|\rvx_t) = t_d(\vmu_\vtheta(\rvx_t, t), \bar{\sigma}_t^2\mI_d, \nu + d),
    \label{eq:post_param}
\end{equation}
where the denoiser mean $\vmu_\vtheta(\rvx_t, t)$ is further parameterized as follows:
\begin{align}
    \vmu_\theta(\rvx_t,t) = \frac{\sigma_{21}^2(t)}{\sigma_t^2}\rvx_t + \Big[\mu_{t-\Delta t} - \frac{\sigma_{21}^2(t)}{\sigma_t^2}\mu_t\Big]\mD_\theta(\rvx_t, \sigma_t).
    \label{eq:mu_param}
\end{align}
While we adopt the "$\rvx_0$-prediction" parameterization \citep{karras2022elucidatingdesignspacediffusionbased}, it is also possible to parameterize the posterior mean as an $\epsilon$-prediction objective instead \citep{ho2020denoising} (See App.~\ref{app:alt_params}). Lastly, when parameterizing the reverse posterior scale, we drop the data-dependent coefficient $\frac{\nu + d_1}{\nu + d}$ 
This aligns with prior works in diffusion models \citep{ho2020denoising, songscore, karras2022elucidatingdesignspacediffusionbased} where it is common to only parameterize the denoiser mean. However, heteroskedastic modeling of the denoiser is possible in our framework and could be an interesting direction for future work. Next, we reformulate the training objective in Eqn.~\ref{eq:vb} for heavy-tailed diffusions.

\subsection{Training with Power Divergences}
The optimization objective in Eqn.~\ref{eq:vb} primarily minimizes the KL-Divergence between a given pair of distributions. However, since we parameterize the distributions in Eqn.~\ref{eq:vb} using multivariate Student-t distributions, using the KL-Divergence might not be a suitable choice of divergence. This is because computing the KL divergence for Student-t distributions does not exhibit a closed-form expression. An alternative is the $\gamma$-Power Divergence \citep{EGUCHI202115, kim2024t3variationalautoencoderlearningheavytailed} defined as,
\begin{equation}
    \gamm{q}{p} = \frac{1}{\gamma}\big[\gC_\gamma(q,p) - \gH_\gamma(q)\big], \quad \gamma \in (-1, 0) \cup (0, \infty) \notag
\end{equation}
\begin{equation}
    \gH_\gamma(p) = -\norm{p}_{1+\gamma = }-\bigg(\int p(\rvx)^{1+\gamma}d\rvx\bigg)^{\frac{1}{1+\gamma}}\qquad\gC_\gamma(q, p) = -\int q(\rvx) \bigg(\frac{p(\rvx)}{\norm{p}_{1+\gamma}}\bigg)^\gamma d\rvx, \notag
\end{equation}
where, like \citet{kim2024t3variationalautoencoderlearningheavytailed}, we set $\gamma = -\frac{2}{\nu + d}$ for the remainder of our discussion. Moreover, $\gH_\gamma$ and $\gC_\gamma$ represent the $\gamma$-power entropy and cross-entropy, respectively. Interestingly, the $\gamma$-Power divergence between two multivariate Student-t distributions, $q_\nu=t_d(\vmu_0, \mSigma_0, \nu)$ and $p_\nu=t_d(\vmu_1, \mSigma_1, \nu)$, can be tractably computed in closed form and is defined as (see \citet{kim2024t3variationalautoencoderlearningheavytailed} for a proof),
\begin{align}
\begin{split}
\mathcal{D}_\gamma[q_\nu||p_\nu] &=\textstyle
-\frac{1}{\gamma} C_{\nu, d}^\frac{\gamma}{1+\gamma}
    \left(
        1 + \frac{d}{\nu-2}
    \right)^{-\frac{\gamma}{1+\gamma}} \Bigl[\,
-|\mSigma_0|^{-\frac{\gamma}{2(1+\gamma)}}
        \left(
            1 + \frac{d}{\nu-2}
        \right) \\
        &\textstyle\quad +|\mSigma_1|^{-\frac{\gamma}{2(1+\gamma)}}
        \left(
             1 + \frac{1}{\nu - 2}\text{tr} \left( \mSigma_1^{-1}\mSigma_0
            \right) +
            \frac{1}{\nu} (\vmu_0 - \vmu_1)^\top\mSigma_1^{-1} (\vmu_0 - \vmu_1)
        \right)
    \Bigr].
\end{split}
\label{eq:power_form}
\end{align}
Therefore, analogous to Eqn.~\ref{eq:vb}, we minimize the following optimization objective,
\begin{align}
\mathbb{E}_q \bigg[\gamm{q(\rvx_T|\rvx_0)}{p(\rvx_T)} + \sum_{t > 1} \gamm{q(\rvx_{t-\Delta t}|\rvx_t,\rvx_0)}{p_\theta(\rvx_{t-\Delta t}|\rvx_t)} -\log p_\theta(\rvx_0|\rvx_1) \bigg]. \label{eq:gamma_vb}
\end{align}

Here, we note a couple of caveats. Firstly, while replacing the KL-Divergence with the $\gamma$-Power Divergence in the objective in Eqn.~\ref{eq:vb} might appear to be due to computational convenience, the $\gamma$-power divergence has several connections with robust estimators \citep{futami2018variationalinferencebasedrobust} in statistics and provides a tunable parameter $\gamma$ which can be used to control the model density assigned at the tail (see Section \ref{sec:theoretical}). Secondly, while the objective in Eqn.~\ref{eq:vb} is a valid ELBO, the objective in Eq.~\ref{eq:gamma_vb} is not. However, the following result provides a connection between the two objectives (see proof in App.~\ref{app:proof_prop1}),
\begin{proposition}
    For arbitrary distributions $q$ and $p$, in the limit of $\gamma \rightarrow 0$, $\gamm{q}{p}$ converges to $\kl{q}{p}$. Consequently, for a finite-dimensional dataset with $\rvx_0 \in \sR^d$ and $\gamma=-\frac{2}{\nu + d}$, under the limit of $\gamma \rightarrow 0$, the objective in Eqn.~\ref{eq:gamma_vb} converges to the objective in Eqn.~\ref{eq:vb}.
    \label{prop:1}
\end{proposition}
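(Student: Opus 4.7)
The plan is to establish the first assertion by a direct Taylor expansion of $\gamm{q}{p}$ around $\gamma = 0$, and then derive the second assertion from the first by observing that under $\gamma = -2/(\nu+d)$, the limit $\gamma \to 0$ is equivalent to $\nu \to \infty$, a regime in which the Student-t distributions appearing in Eqn.~\ref{eq:gamma_vb} converge pointwise to Gaussians.

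For the first claim I would expand each piece of $\gamm{q}{p} = \tfrac{1}{\gamma}\bigl[\gC_\gamma(q,p) - \gH_\gamma(q)\bigr]$ to first order in $\gamma$. Using $p^\gamma = 1 + \gamma\log p + O(\gamma^2)$ together with the identity $\log\Vert p\Vert_{1+\gamma} = \tfrac{1}{1+\gamma}\log\int p^{1+\gamma}\,d\rvx$, one obtains $\Vert p\Vert_{1+\gamma} = 1 + \gamma\int p\log p\,d\rvx + O(\gamma^2)$ and hence $\Vert p\Vert_{1+\gamma}^{-\gamma} = 1 + O(\gamma^2)$. Substituting these into the definitions yields $\gC_\gamma(q,p) = -1 - \gamma\int q\log p\,d\rvx + O(\gamma^2)$ and $-\gH_\gamma(q) = 1 + \gamma\int q\log q\,d\rvx + O(\gamma^2)$, so that $\gC_\gamma(q,p) - \gH_\gamma(q) = \gamma\,\kl{q}{p} + O(\gamma^2)$; dividing by $\gamma$ gives $\gamm{q}{p} = \kl{q}{p} + O(\gamma) \to \kl{q}{p}$.

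For the second claim, the Student-t perturbation kernel, prior, target denoising posterior, and parameterized reverse posterior all have degrees of freedom tied to $\nu$, and the scalar factor $(\nu + d_1)/(\nu + d)$ in Eqn.~\ref{eq:forward_post} tends to $1$. Since $t_d(\vmu,\mSigma,\nu)\to\gN(\vmu,\mSigma)$ pointwise as $\nu\to\infty$, each of $q(\rvx_T|\rvx_0)$, $p(\rvx_T)$, $q(\rvx_{t-\Delta t}|\rvx_t,\rvx_0)$ and $p_\vtheta(\rvx_{t-\Delta t}|\rvx_t)$ converges to its Gaussian analogue with matching mean and covariance. Applying the first claim to each divergence term (with the densities themselves depending on $\nu$) and passing the expectation through the limit then recovers Eqn.~\ref{eq:vb}.

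The main obstacle will be rigorously interchanging the $\gamma \to 0$ limit with the outer expectation $\E_q$ and with the integrals defining $\Vert p\Vert_{1+\gamma}$, while controlling the $O(\gamma)$ remainder uniformly along the family of Student-t densities indexed by $\nu$. A clean way around this is to exploit the closed-form expression in Eqn.~\ref{eq:power_form}: after substituting $\nu = -2/\gamma - d$, expanding this explicit scalar formula in $\gamma$ reduces the convergence to elementary limits of determinants, traces, and quadratic forms which match term-by-term the standard Gaussian-Gaussian KL divergence, sidestepping any delicate exchange of limits.
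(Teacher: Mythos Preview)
Your proposal is correct and follows essentially the same approach as the paper: a first-order Taylor expansion of $\gH_\gamma$ and $\gC_\gamma$ in $\gamma$ to obtain $\gamm{q}{p}=\kl{q}{p}+O(\gamma)$, followed by the observation that $\gamma\to 0$ forces $\nu\to\infty$ and hence all Student-t components collapse to their Gaussian counterparts. Your closing remark about bypassing the limit-interchange issue via the closed-form expression in Eqn.~\ref{eq:power_form} is a nice refinement the paper does not make explicit, but the core argument is the same.
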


Therefore, under the limit $\gamma \rightarrow 0$, the standard diffusion model framework becomes a special case of our proposed framework. Moreover, for $\gamma=-2/(\nu + d)$, this also explains the tail estimation moving towards Gaussian diffusion for an increasing $\nu$ (See Fig.~\ref{fig:toy} for an illustration.)

\textbf{Simplifying the Training Objective.} Plugging the form of the forward posterior $q(\rvx_{t-\Delta t}|\rvx_t, \rvx_0)$ in Eqn.~\ref{eq:forward_post}, the reverse posterior $p_\vtheta(\rvx_{t-\Delta t}|\rvx_t)$ in the optimization objective in Eqn.~\ref{eq:gamma_vb}, we obtain the following simplified training loss (proof in App.~\ref{app:simplified_loss}),
\begin{equation}
    \gL(\theta) \;\;=\;\; \E_{\rvx_0 \sim p(\rvx_0)}\E_{t \sim p(t)}\E_{\epsilon \sim \gN(0, \mI_d)}\E_{\kappa \sim \frac{1}{\nu} \chi^2(\nu)}\Big\Vert \mD_\vtheta\big(\mu_t\rvx_0 + \sigma_t\frac{\epsilon}{\sqrt{\kappa}}, \sigma_t\big) - \rvx_0 \Big\Vert^2_2.
    \label{eq:3}
\end{equation}

Intuitively, the form of our training objective is similar to existing diffusion models \citep{ho2020denoising, karras2022elucidatingdesignspacediffusionbased}. However, the only difference lies in sampling the noisy state $\rvx_t$ from a Student-t distribution based perturbation kernel instead of a Gaussian distribution. Next, we discuss sampling from our proposed framework under discrete and continuous-time settings.

\begin{table}[]
\footnotesize
\centering
\begin{tabular}{@{}ccc@{}}
\toprule
Component                                                 & Gaussian Diffusion                                    & (Ours) t-Diffusion                                                                    \\ \midrule
Perturbation Kernel ($q(\rvx_t|\rvx_0)$)                  & $\gN(\mu_t\rvx_0, \sigma_t^2\mI_d)$                   & $t_d(\mu_t\rvx_0, \sigma_t^2\mI_d, \nu)$                                       \\
Forward Posterior ($q(\rvx_{t-\Delta t}|\rvx_t, \rvx_0)$) & $\gN(\bar{\vmu}_t, \bar{\sigma}_t^2 \mI_d)$           & $t_d(\bar{\vmu}_t, \frac{\nu + d_1}{\nu + d} \bar{\sigma}_t^2 \mI_d, \nu + d)$ \\
Reverse Posterior ($p_\vtheta(\rvx_{t-\Delta t}|\rvx_t)$) & $\gN(\vmu_\vtheta(\rvx_t, t), \bar{\sigma}_t^2\mI_d)$ & $t_d(\vmu_\vtheta(\rvx_t, t), \bar{\sigma}_t^2\mI_d, \nu + d)$                 \\
Divergence Measure                                        & $\kl{q}{p}$                                           & $\gamm{q}{p}$                                                                  \\ \bottomrule
\end{tabular}
\caption{Comparison between different modeling components for constructing Gaussian vs Heavy-Tailed diffusion models. Under the limit of $\nu \rightarrow \infty$, our proposed t-Diffusion framework converges to Gaussian diffusion models.}
\label{table:compare_diff_htdiff}
\end{table}

\subsection{Sampling}
\textbf{Discrete-time Sampling.} For discrete-time settings, we can simply perform ancestral sampling from the learned reverse posterior distribution $p_\vtheta(\rvx_{t-\Delta t}|\rvx_t)$. Therefore, following simple re-parameterization, an ancestral sampling update can be specified as,
\begin{align}
    \rvx_{t-\Delta t} &= \vmu_\vtheta(\rvx_t, t) + \bar{\sigma}_t \frac{\rvz}{\sqrt{\kappa}}, \quad \rvz \sim \gN(0, \mI_d),\;\; \kappa \sim \chi^2(\nu + d)/(\nu + d), \notag\\
    &= \frac{\sigma_{21}^2(t)}{\sigma_t^2}\rvx_t + \Big[\mu_{t-\Delta t} - \frac{\sigma_{21}^2(t)}{\sigma_t^2}\mu_t\Big]\mD_\theta(\rvx_t, \sigma_t) + \bar{\sigma}_t \frac{\rvz}{\sqrt{\kappa}}. \notag
\end{align}

\textbf{Continuous-time Sampling.} Due to recent advances in accelerating sampling in continuous-time diffusion processes \citep{pandey2024efficient, zhang2023fastsamplingdiffusionmodels, lu2022dpmsolverfastodesolver, song2022denoisingdiffusionimplicitmodels,xu2023restart}, we reformulate discrete-time dynamics in heavy-tailed diffusions to the continuous time regime. More specifically, we present a family of continuous-time processes in the following result (Proof in App.~\ref{app:proof_sampler}).
\begin{proposition}
    The posterior parameterization in Eqn.~\ref{eq:post_param} induces the following continuous-time dynamics,
    \begin{tcolorbox}[mathstyle]
    \begin{equation}
    d\rvx_t = \Bigg[\frac{\dot{\mu}_t}{\mu_t}\rvx_t - \bigg[f(\sigma_t, \dot{\sigma}_t) + \frac{\dot{\mu}_t}{\mu_t}\bigg](\rvx_t - \mu_t \mD_\vtheta(\rvx_t, \sigma_t))\Bigg]dt + \sqrt{\beta(t)g(\sigma_t, \dot{\sigma}_t)} d\mS_t,
    \label{eq:t_sde_main}
\end{equation}
\end{tcolorbox}
 where $f: \sR^+ \times \sR^+ \rightarrow \sR$ and $g: \sR^+ \times \sR^+ \rightarrow \sR^+$ are scalar-valued functions, $\beta_t \in \sR^+$ is a scaling coefficient such that the following condition holds,
 \begin{equation}
\frac{1}{\sigma_{12}^2(t)}\big(\sigma_{t-\Delta t}^2 - \beta(t)g(\sigma_t, \dot{\sigma}_t)\Delta t\big) - 1 = f(\sigma_t, \dot{\sigma}_t)\Delta t,\notag
\end{equation}
where $\dot{\mu}_t, \dot{\sigma}_t$ denote the first-order time-derivatives of the perturbation kernel parameters $\mu_t$ and $\sigma_t$ respectively and the differential $d\mS_t \sim t_d(0, dt, \nu + d)$.
\label{prop:2}
\end{proposition}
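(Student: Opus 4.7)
The plan is to pass to the $\Delta t \to 0$ limit of the ancestral sampling update implied by the reverse posterior in Eqn.~\ref{eq:post_param}, using the stated condition to read off the drift and diffusion of the limiting SDE. Reparameterizing the Student-t sample as $\rvy = \rvz/\sqrt{\kappa} \sim t_d(0, \mI_d, \nu+d)$, the ancestral step is
\[
\rvx_{t-\Delta t} = \tfrac{\sigma_{21}^2(t)}{\sigma_t^2}\rvx_t + \Bigl[\mu_{t-\Delta t} - \tfrac{\sigma_{21}^2(t)}{\sigma_t^2}\mu_t\Bigr]\mD_\vtheta(\rvx_t, \sigma_t) + \bar\sigma_t\,\rvy,
\]
with $\bar\sigma_t^2 = \sigma_{t-\Delta t}^2 - \sigma_{12}^4(t)/\sigma_t^2$. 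I would then algebraically factor out $(\rvx_t - \mu_t\mD_\vtheta)$ to write
\[
\rvx_{t-\Delta t} - \rvx_t = \Bigl(\tfrac{\sigma_{21}^2(t)}{\sigma_t^2} - 1\Bigr)(\rvx_t - \mu_t\mD_\vtheta) + (\mu_{t-\Delta t} - \mu_t)\mD_\vtheta + \bar\sigma_t\,\rvy,
\]
which exposes exactly the term structure of the target drift, since the drift in Eqn.~\ref{eq:t_sde_main} simplifies to $-f(\rvx_t - \mu_t\mD_\vtheta) + \dot\mu_t\mD_\vtheta$ via the identity $\dot\mu_t\mD_\vtheta = \tfrac{\dot\mu_t}{\mu_t}\rvx_t - \tfrac{\dot\mu_t}{\mu_t}(\rvx_t - \mu_t\mD_\vtheta)$.

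Next, I would Taylor-expand each coefficient to first order in $\Delta t$. The elementary expansion $\mu_{t-\Delta t} - \mu_t = -\dot\mu_t\Delta t + O(\Delta t^2)$ delivers the $\dot\mu_t\mD_\vtheta$ piece. The stated condition $\sigma_{12}^2(t)(1+f\Delta t) = \sigma_{t-\Delta t}^2 - \beta(t)g\,\Delta t$, combined with $\sigma_{t-\Delta t}^2 = \sigma_t^2 - 2\sigma_t\dot\sigma_t\Delta t + O(\Delta t^2)$, is the key relation: it simultaneously yields the two leading-order identities
\[
\tfrac{\sigma_{21}^2(t)}{\sigma_t^2} - 1 = f(\sigma_t,\dot\sigma_t)\,\Delta t + O(\Delta t^2),\qquad \bar\sigma_t^2 = \beta(t)\,g(\sigma_t,\dot\sigma_t)\,\Delta t + O(\Delta t^2).
\]
The first is an algebraic rearrangement of the condition divided by $\sigma_t^2$, and the second follows by substituting the resulting first-order form of $\sigma_{12}^2(t)$ into $\bar\sigma_t^2 = \sigma_{t-\Delta t}^2 - \sigma_{12}^4(t)/\sigma_t^2$ and watching the $\sigma_t^2$ and $\dot\sigma_t$ contributions cancel, leaving the $\beta(t)g\,\Delta t$ residue. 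The Student-t noise scale then identifies with $\sqrt{\beta(t)g}\, d\mS_t$, where $d\mS_t \sim t_d(0, dt, \nu+d)$ has discrete-step realization $\sqrt{\Delta t}\,\rvy$.

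Substituting the two leading-order identities back into the factored difference and interpreting the step $\rvx_{t-\Delta t}-\rvx_t$ as an infinitesimal of a backward-in-time process (so that $dt = -\Delta t$) gives
\[
d\rvx_t = \bigl[-f(\rvx_t - \mu_t\mD_\vtheta) + \dot\mu_t\mD_\vtheta\bigr]\,dt + \sqrt{\beta(t)g}\,d\mS_t,
\]
and the regrouping identity above then rewrites this as Eqn.~\ref{eq:t_sde_main}. The main obstacle is the bookkeeping in the second step: one must verify that the single scalar condition simultaneously forces the first-order form $\sigma_{12}^2(t) = \sigma_t^2(1 + f\Delta t) + O(\Delta t^2)$ needed for the drift coefficient and the $\bar\sigma_t^2 = \beta(t)g\,\Delta t$ identity needed for the noise scale. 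Together these imply the consistency relation $2\sigma_t^2 f + 2\sigma_t\dot\sigma_t + \beta(t)g = 0$, which is the content of the stated condition at leading order. A secondary subtlety is justifying the Student-t differential $d\mS_t$ as a meaningful continuous-time driving noise analogous to Brownian motion; this is handled formally through the $\sqrt{\Delta t}$-scaled increment, mirroring the Gaussian case.
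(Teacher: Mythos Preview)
Your overall strategy---write the ancestral step, factor out $(\rvx_t - \mu_t\mD_\vtheta)$, Taylor-expand, and read off the SDE---is exactly what the paper does (the paper works in the $\epsilon$-prediction parameterization rather than the $\rvx_0$-prediction one, but that is cosmetic). The gap is in how you extract the two leading-order identities from the single stated condition. You write $\bar\sigma_t^2 = \sigma_{t-\Delta t}^2 - \sigma_{12}^4(t)/\sigma_t^2$, implicitly setting $\sigma_{21}=\sigma_{12}$, and then claim the condition alone yields both $\sigma_{21}^2/\sigma_t^2 - 1 = f\Delta t$ and $\bar\sigma_t^2 = \beta g\Delta t$. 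It does not: the condition is one equation that pins down $\sigma_{12}^2$ given $f,g,\beta$; it imposes no relation among $f,g,\beta$ themselves. Solving the condition for $\sigma_{12}^2$ and expanding under your symmetry assumption gives $\sigma_{12}^2/\sigma_t^2 - 1 = -(2\dot\sigma_t/\sigma_t + \beta g/\sigma_t^2 + f)\Delta t + O(\Delta t^2)$, which is not $f\Delta t$ in general. Your ``consistency relation'' $2\sigma_t^2 f + 2\sigma_t\dot\sigma_t + \beta g = 0$ is precisely what is \emph{needed} to make both identities hold under symmetry, but it is not ``the content of the stated condition at leading order'': freezing $\sigma_{12}^2\approx\sigma_t^2$ in the condition produces $f\sigma_t^2 + 2\sigma_t\dot\sigma_t + \beta g = 0$, off by a factor of $2$ on $f$---and since $\sigma_{12}^2$ is a free design parameter that approximation is unjustified anyway.

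The paper avoids this by treating $\bar\sigma_t^2 = \beta(t)g(\sigma_t,\dot\sigma_t)\Delta t$ as a \emph{separate design choice} (effectively the definition of $g$), not as something to be derived. With that choice in hand, the relation $\sigma_{21}^2/\sigma_t^2 = (\sigma_{t-\Delta t}^2 - \bar\sigma_t^2)/\sigma_{12}^2$ combined with the stated condition gives $\sigma_{21}^2/\sigma_t^2 = 1 + f\Delta t$ directly, after which your factoring and limit argument go through verbatim. In other words, the single condition displayed in the proposition is really the \emph{combination} of two independent design choices made in the proof; your derivation needs both, not the condition alone.
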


Based on the result in Proposition \ref{prop:2}, it is possible to construct deterministic/stochastic samplers for heavy-tailed diffusions. It is worth noting that the SDE in Eqn.~\ref{eq:t_sde_main} implies adding heavy-tailed stochastic noise during inference \citep{64996089-c390-3dca-86d7-3449b5368d22}. Next, we provide specific instantiations of the generic sampler in Eq.~\ref{eq:t_sde_main}.

\textbf{Sampler Instantiations.} We instantiate the continuous-time SDE in Eqn.~\ref{eq:t_sde_main} by setting $g(\sigma_t, \dot{\sigma}_t)=0$ and $\sigma_{12}(t) = \sigma_t\sigma_{t-\Delta t}$. Consequently, $f(\sigma_t, \dot{\sigma}_t) = -\frac{\dot{\sigma}_t}{\sigma_t}$. In this case, the SDE in Eqn.~\ref{eq:t_sde_main} reduces to an ODE, which can be represented as,
    \begin{equation}
        \frac{d\rvx_t}{dt} = \frac{\dot{\mu}_t}{\mu_t}\rvx_t - \bigg[-\frac{\dot{\sigma}_t}{\sigma_t} + \frac{\dot{\mu}_t}{\mu_t}\bigg](\rvx_t - \mu_t \mD_\vtheta(\rvx_t, \sigma_t)).
        \label{eq:t_ode}
    \end{equation}

\textbf{Summary.}~To summarize our theoretical framework, we present an overview of the comparison between Gaussian diffusion models and our proposed heavy-tailed diffusion models in Table \ref{table:compare_diff_htdiff}.

\begin{figure}[t]
\begin{minipage}[t]{0.45\textwidth}
\begin{algorithm}[H]
  \caption{Training (t-EDM)} \label{algo:t_edm_train}
  \small
  \begin{algorithmic}[1]
    \REPEAT
      \STATE $\rvx_0 \sim p(\rvx_0)$
      \STATE $\sigma \sim \mathrm{LogNormal}(\pi_\text{mean}, \pi_\text{std})$
      \STATE \textcolor{blue}{$\rvx = \rvx_0 + \rvn, \quad \rvn \sim t_d(0, \sigma^2\mI_d, \nu)$}
      \STATE \textcolor{blue}{$\sigma = \sigma \sqrt{\frac{\nu}{\nu - 2}}$}
      \STATE $D_\theta(\rvx, \sigma) = c_\text{skip}(\sigma)\rvx + c_\text{out}(\sigma)F_\theta(c_\text{in}(\sigma)\rvx, c_\text{noise}(\sigma))$
      \STATE $\lambda(\sigma) = c_\text{out}^{-2}(\sigma)$
      \STATE Take gradient descent step on
      \STATE $\qquad \nabla_\vtheta \Big[\lambda(\sigma)\left\| D_\theta(\rvx, \sigma) - \rvx_0 \right\|^2\Big]$
    \UNTIL{converged}
  \end{algorithmic}
\end{algorithm}
\end{minipage}
\hspace{0.5cm}
\begin{minipage}[t]{0.495\textwidth}
\begin{algorithm}[H]
\footnotesize
\caption{Sampling (t-EDM) ($\mu_t=1, \sigma_t=t$)}
\begin{spacing}{1.1}
\begin{algorithmic}[1]
    \STATE{\textcolor{blue}{{\bf sample} $\rvx_0 \sim t_d\big(0, t_0^2 \mI_d, \nu\big)$}}
    \FOR{$i \in \{0, \dots, N-1\}$}
      \STATE{$\vd_i \gets \big( \rvx_i - D_\vtheta(\rvx_i; t_i) \big) / t_i$}
      \STATE{$\rvx_{i+1} \gets \rvx_i + (t_{i+1} - t_i) \vd_i$}
      \IF{$t_{i+1} \ne 0$}
        \STATE{$\vd_i \gets \big( \rvx_{i+1} - D_\vtheta(\rvx_{i+1}; t_{i+1}) \big) / t_{i+1} $}
        \STATE{$\rvx_{i+1} \gets \rvx_i + (t_{i+1} - t_i) \big( \frac{1}{2}\vd_i + \frac{1}{2}\ddp_i \big)$}
      \ENDIF
    \ENDFOR
    \STATE{\textbf{return} $\rvx_N$}
\end{algorithmic}
\end{spacing}
\end{algorithm}
\end{minipage}
\hfill
\caption{ Training and Sampling algorithms for t-EDM ($\nu > 2$). Our proposed method requires minimal code updates (indicated with \textcolor{blue}{blue}) over traditional Gaussian diffusion models and converges to the latter as $\nu \rightarrow \infty$.}
\label{fig:compare_train}
\end{figure}

\subsection{Specific Instantiations: t-EDM}
\label{subsec:t_edm}
\citet{karras2022elucidatingdesignspacediffusionbased} highlight several design choices during training and sampling which significantly improve sample quality while reducing sampling budget for image datasets like CIFAR-10 \citep{krizhevsky2009learning} and ImageNet \citep{5206848}. With a similar motivation, we reformulate the perturbation kernel as $q(\rvx_t|\rvx_0)=t_d(s(t)\rvx_0, s(t)^2\sigma(t)^2\mI_d, \nu)$ and denote the resulting diffusion model as \emph{t-EDM}.

\textbf{Training.} During training, we set the perturbation kernel, $q(\rvx_t|\rvx_0)$, parameters $s(t)=1$, $\sigma(t)=\sigma \sim \mathrm{LogNormal}(P_\text{mean}, P_\text{std})$. Moreover, we parameterize the denoiser $\mD_\vtheta(\rvx_t, \sigma_t)$ as
\begin{equation}
    D_\theta(\rvx, \sigma) = c_\text{skip}(\sigma, \nu) \rvx + c_\text{out}(\sigma, \nu)\mF_\vtheta(c_\text{in}(\sigma, \nu)\rvx, c_\text{noise}(\sigma)). \notag
\end{equation}
Our denoiser parameterization is similar to \citet{karras2022elucidatingdesignspacediffusionbased} with the difference that coefficients like $c_\text{out}$ additionally depend on $\nu$. We include full derivations in Appendix \ref{app:precond}. Consequently, our denoising loss can be specified as follows:
\begin{equation}
    \gL(\theta) \;\;\propto\;\; \E_{\rvx_0 \sim p(\rvx_0)}\E_{\sigma}\E_{\rvn \sim t_d(0, \sigma^2\mI_d, \nu)}\big[\lambda(\sigma, \nu)\Vert \mD_\vtheta(\rvx_0 + \rvn, \sigma) - \rvx_0 \Vert^2_2\big],
    \label{eq:denoiser_loss}
\end{equation}
where $\lambda(\sigma, \nu)$ is a weighting function set to $\lambda(\sigma, \nu) = 1/\cout(\sigma, \nu)^2$.

\textbf{Sampling.} 
 Interestingly, it can be shown that the ODE in Eqn.~\ref{eq:t_ode} is equivalent to the deterministic dynamics presented in \citet{karras2022elucidatingdesignspacediffusionbased} (See Appendix \ref{app:edm_ode_equi} for proof). Consequently, we choose $s(t)=1$ and $\sigma(t)=t$ during sampling, further simplifying the dynamics in Eqn.~\ref{eq:t_ode} to
\begin{equation}
    d\rvx_t / dt = (\rvx_t - \mD_\vtheta(\rvx_t, t)) / t. \notag
\end{equation}
We adopt the timestep discretization schedule and the choice of the numerical ODE solver (Heun's method \citep{doi:10.1137/1.9781611971392}) directly from EDM. 

Figure \ref{fig:compare_train} illustrates the ease of transitioning from a Gaussian diffusion framework (EDM) to t-EDM. Under standard settings, transitioning to t-EDM requires as few as two lines of code change, making our method readily compatible with existing implementations of Gaussian diffusion models.

\textbf{Extension to Flows.} While our discussion has been restricted to diffusion models, we can also construct flows \citep{albergo2023stochastic, lipman2023flow} using heavy-tailed base distributions. We denote the resulting model as \emph{t-Flow} and discuss its construction in more detail in App. \ref{app:t_flow}.

\section{Experiments}
\label{sec:experiments}
\begin{figure}[t]
    \centering
    \includegraphics[width=1.0\linewidth]{./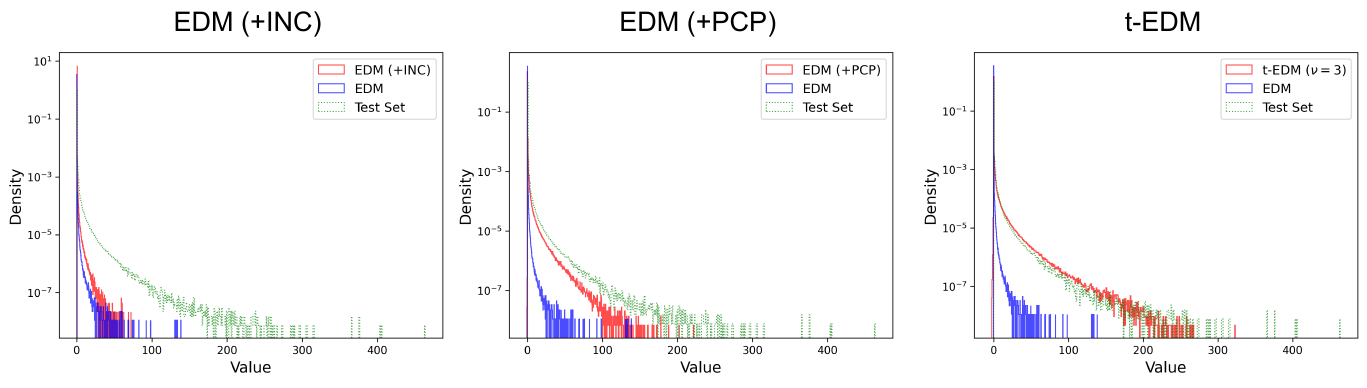}
    \caption{\small Sample 1-d histogram comparison between EDM and t-EDM on the test set for the Vertically Integrated Liquid (VIL) channel. t-EDM captures heavy-tailed behavior more accurately than other baselines. INC: Inverse CDF Normalization, PCP: Per-Channel Preconditioning}
    \label{fig:hist_vil}
\vspace{-1.2em}
\end{figure}

To assess the effectiveness of the proposed heavy-tailed diffusion and flow models, we demonstrate experiments using real-world weather data for both unconditional and conditional generation tasks. We include full experimental details in App. \ref{app:exp}.

\textbf{Datasets.}~We adopt the High-Resolution Rapid Refresh (HRRR) \citep{HRRR} dataset, which is an operational archive of the US km-scale forecasting model. Among multiple dynamical variables in the dataset that exhibit heavy-tailed behavior, based on their dynamic range, we only consider the \emph{Vertically Integrated Liquid} (VIL) and \emph{Vertical Wind Velocity} at level 20 (denoted as w20) channels (see App. \ref{app:exp_uncond} for more details). It is worth noting that the VIL and w20 channels have heavier right and left tails, respectively (See Fig. \ref{fig:inc_viz})

\textbf{Tasks and Metrics.}~We consider both unconditional and conditional generative tasks relevant to weather and climate science. For unconditional modeling, we aim to generate the VIL and w20 physical variables in the HRRR dataset. For conditional modeling, we aim to generatively predict the hourly evolution of the target variable for the next lead-time ($\tau+1$) hour-ahead evolution of VIL and w20 based on information only at the current state time $\tau$; see more details in the appendix and see \cite{pathak2024kilometerscaleconvectionallowingmodel} for discussion of why hour-ahead, km-scale atmospheric prediction is a stochastic physical task appropriate for conditional generative models. To quantify the empirical performance of unconditional modeling, we rely on comparing 1-d statistics of generated and train/test set samples. More specifically, for quantitative analysis, we report the \emph{Kurtosis Ratio} (KR), the \emph{Skewness Ratio} (SR), and the \emph{Kolmogorov-Smirnov} (KS)-2 sample statistic (at the tails) between the generated and train/test set samples. For qualitative analysis, we compare 1-d histograms between generated and train/test set samples. For the conditional task, we adopt standard probabilistic prediction score metrics such as the \emph{Continuous Ranked Probability Score} (CRPS), the \emph{Root-Mean Squared Error} (RMSE), and the \emph{skill-spread ratio} (SSR); see, e.g., \cite{mardani2023generative, srivastava2023probabilistic}. A more detailed explanation of our evaluation protocol is provided in App. \ref{app:exp}.

\textbf{Methods and Baselines.} In addition to standard diffusion (EDM \citep{karras2022elucidatingdesignspacediffusionbased}) and flow models \citep{albergo2023stochastic} based on Gaussian priors, we introduce two additional baselines that are variants of EDM. To account for the high dynamic-range often exhibited by heavy-tailed distributions, we include \emph{Inverse CDF Normalization} (INC) as an alternative data preprocessing step to z-score normalization. Using the former reduces dynamic range significantly and can make the data distribution closer to Gaussian. We denote this preprocessing scheme combined with standard EDM training as \emph{EDM + INC}. Alternatively, we could instead modulate the noise levels used during EDM training as a function of the dynamic range of the input channel while keeping the data preprocessing unchanged. The main intuition is to use more heavy-tailed noise for large values. We denote this modulating scheme as \emph{Per-Channel Preconditioning} (PCP) and denote the resulting baseline as \emph{EDM + PCP}. We elaborate on these baselines in more detail in App. \ref{s2sec:baselines}

\setlength{\tabcolsep}{3.8pt}
\begin{table}[]
\centering
\footnotesize
\begin{tabular}{@{}cc|ccccccc|ccccccc@{}}
\toprule
                           &        &          & \multicolumn{3}{c}{VIL (Train)}                & \multicolumn{3}{c|}{VIL (Test)}                &          & \multicolumn{3}{c}{w20 (Train)}                & \multicolumn{3}{c}{w20 (Test)}                 \\ \midrule
                           & Method & $\nu$    & KR $\downarrow$           & SR $\downarrow$           & KS $\downarrow$            & KR $\downarrow$           & SR $\downarrow$           & KS $\downarrow$            & $\nu$    & KR $\downarrow$           & SR $\downarrow$           & KS $\downarrow$            & KR $\downarrow$           & SR $\downarrow$           & KS  $\downarrow$           \\ \midrule
\multirow{3}{*}{Baselines} & EDM    & $\infty$ & 210.11        & 10.79         & 0.997          & 45.35         & 5.23          & 0.991          & $\infty$ & 12.59         & 0.89          & 0.991          & 5.01          & 0.38          & 0.978          \\
                           & +INC   & $\infty$ & 11.33         & 2.29          & 0.987          & 1.70          & 0.74          & 0.95           & $\infty$ & \textbf{1.80} & \textbf{0.18} & 0.909          & \textbf{0.23} & \textbf{0.13} & 0.763          \\
                           & +PCP   & $\infty$ & 2.12          & 0.72          & 0.800          & \textbf{0.31} & \textbf{0.09} & 0.522          & $\infty$ & 2.17          & 0.70          & 0.838          & 0.40          & 0.24          & 0.648          \\ \midrule
\multirow{3}{*}{Ours}      & t-EDM  & 3        & \textbf{1.06} & \textbf{0.43} & \textbf{0.431} & 0.54          & 0.23          & \textbf{0.114} & 3        & 2.44          & 0.65          & \textbf{0.683} & 0.52          & 0.21          & \textbf{0.286} \\
                           & t-EDM  & 5        & 29.66         & 4.07          & 0.955          & 5.73          & 1.68          & 0.888          & 5        & 8.55          & 1.77          & 0.895          & 3.22          & 1.03          & 0.774          \\
                           & t-EDM  & 7        & 24.35         & 4.14          & 0.959          & 4.57          & 1.72          & 0.908          & 7        & 7.03          & 1.58          & 0.82           & 2.55          & 0.89          & 0.622          \\ \bottomrule
\end{tabular}
\caption{\small t-EDM outperforms standard diffusion models for unconditional generation on the HRRR dataset. For all metrics, lower is better. Values in \textbf{bold} indicate the best results in a column. 
\vspace{-1em}
}
\label{table:uncond_results}
\end{table}

\subsection{Unconditional Generation}
\label{sec:uncond_gen}
We assess the effectiveness of different methods on unconditional modeling for the VIL and w20 channels in the HRRR dataset. Fig. \ref{fig:hist_vil} qualitatively compares 1-d histograms of sample intensities between different methods for the VIL channel. We make the following key observations. Firstly, though EDM (with additional tricks like noise conditioning) can improve tail coverage, t-EDM covers a broader range of extreme values in the test set. Secondly, in addition to better dynamic range coverage, t-EDM qualitatively performs much better in capturing the density assigned to intermediate intensity levels under the model. We note similar observations from our quantitative results in Table \ref{table:uncond_results}, where t-EDM outperforms other baselines on the KS metric, implying our model exhibits better tail estimation over competing baselines for both the VIL and w20 channels. More importantly, unlike traditional Gaussian diffusion models like EDM, t-EDM enables controllable tail estimation by varying $\nu$, which could be useful when modeling a combination of channels with diverse statistical properties. On the contrary, standard diffusion models like EDM do not have such controllability. Lastly, we present similar quantitative results for t-Flow in Table \ref{table:uncond_results_flow}. We present additional results for unconditional modeling in App. \ref{app:exp_uncond}

\begin{table}[t]
\centering
\footnotesize
\begin{tabular}{@{}cc|ccccccc|ccccccc@{}}
\toprule
                      &                                                          &          & \multicolumn{3}{c}{VIL (Train)}                & \multicolumn{3}{c|}{VIL (Test)}                &          & \multicolumn{3}{c}{w20 (Train)}                & \multicolumn{3}{c}{w20 (Test)}                 \\ \midrule
                      & Method                                                   & $\nu$    & KR $\downarrow$           & SR$\downarrow$            & KS$\downarrow$             & KR $\downarrow$           & SR $\downarrow$           & KS$\downarrow$             & $\nu$    & KR $\downarrow$           & SR $\downarrow$           & KS $\downarrow$            & KR $\downarrow$           & SR $\downarrow$           & KS $\downarrow$            \\ \midrule
Baselines             & \begin{tabular}[c]{@{}c@{}}Gaussian \\ Flow\end{tabular} & $\infty$ & \textbf{0.46} & \textbf{0.09} & 0.897          & 0.67          & 0.52          & 0.704          & $\infty$ & 2.03          & 0.36          & 0.294          & 0.34          & 0.01          & 0.384          \\ \midrule
\multirow{3}{*}{Ours} & t-Flow                                                   & 3        & 1.39          & 0.37          & \textbf{0.711} & 0.47          & 0.27          & \textbf{0.275} & 5        & \textbf{1.08} & \textbf{0.21} & 0.333          & \textbf{0.07} & 0.42          & 0.512          \\
                      & t-Flow                                                   & 5        & 3.30          & 0.75          & 0.857          & 0.05          & 0.07          & 0.633          & 7        & 3.24          & 0.36          & \textbf{0.259} & 0.87          & \textbf{0.01} & 0.300          \\
                      & t-Flow                                                   & 7        & 3.36          & 0.84          & 0.844          & \textbf{0.04} & \textbf{0.02} & 0.603          & 9        & 5.47          & 0.41          & 0.478          & 1.86          & 0.034         & \textbf{0.289} \\ \bottomrule
\end{tabular}
\caption{t-Flow outperforms standard Gaussian flows for unconditional generation on the HRRR dataset. For all metrics, lower is better. Values in \textbf{bold} indicate the best results in a column. 
\vspace{-1em}
}
\label{table:uncond_results_flow}
\end{table}

\subsection{Conditional Generation}
Next, we consider the task of conditional modeling, where we aim to predict the hourly evolution of a target variable for the next lead time ($\tau+1$) based on the current state at time $\tau$. Table \ref{table:conditional_results} illustrates the performance of EDM and t-EDM on this task for the VIL and w20 channels. We make the following key observations. Firstly, for both channels, t-EDM exhibits better CRPS and SSR scores, implying better probabilistic forecast skills and ensemble than EDM. Moreover, while t-EDM exhibits under-dispersion for VIL, while it is well-calibrated for w20, with its SSR close to an ideal score of 1. On the contrary, the baseline EDM model exhibits under-dispersion for both channels, thus implying overconfident predictions. Secondly, in addition to better calibration, t-EDM is better at tail estimation (as measured by the KS statistic) for the underlying conditional distribution. Lastly, we notice that different values of the parameter $\nu$ are optimal for different channels, which suggests a more data-driven approach to learning the optimal $\nu$ directly. We present additional results for conditional modeling in App. \ref{app:exp_conditional}.

\begin{table}[]
\centering
\footnotesize
\begin{tabular}{@{}cccccccccccc@{}}
\toprule
                      &                             &          & \multicolumn{4}{c}{VIL (Test)}                                                                       &          & \multicolumn{4}{c}{w20 (Test)}                                                \\ \midrule
                      & \multicolumn{1}{c|}{Method} & $\nu$    & CRPS $\downarrow$ & RMSE $\downarrow$ & SSR ($\rightarrow 1$) & \multicolumn{1}{c|}{KS $\downarrow$} & $\nu$    & CRPS $\downarrow$ & RMSE $\downarrow$ & SSR ($\rightarrow 1$) & KS $\downarrow$ \\ \midrule
Baselines             & \multicolumn{1}{c|}{EDM}    & $\infty$ & 1.696             & 4.473             & 0.203                 & \multicolumn{1}{c|}{0.715}           & $\infty$ & 0.304             & \textbf{0.664}    & 0.865               & 0.345           \\ \midrule
\multirow{2}{*}{Ours} & \multicolumn{1}{c|}{t-EDM}  & 3        & 1.649             & 4.526             & 0.255                 & \multicolumn{1}{c|}{\textbf{0.419}}  & 3        & \textbf{0.295}    & 0.734             & \textbf{1.045}      & \textbf{0.111}  \\
                      & \multicolumn{1}{c|}{t-EDM}  & 5        & \textbf{1.609}    & \textbf{4.361}    & \textbf{0.305}        & \multicolumn{1}{c|}{0.665}           & 5        & 0.301             & 0.674             & 0.901               & 0.323           \\ \bottomrule
\end{tabular}
\caption{t-EDM outperforms EDM for conditional next frame prediction for the HRRR dataset. Values in \textbf{bold} indicate the best results in each column. We note that VIL has a higher dynamic range over w20, and thus, the gains for VIL are more apparent (see hist plots in Fig. \ref{fig:inc_viz}). ($\rightarrow 1$) indicates values near 1 are better.}
\label{table:conditional_results}
\vspace{-0.8em}
\end{table}

\section{Discussion and Theoretical Insights}
\label{sec:theoretical}
To conclude, we propose a framework for constructing heavy-tailed diffusion models and demonstrate their effectiveness over traditional diffusion models on unconditional and conditional generation tasks for a high-resolution weather dataset. Here, we highlight some theoretical connections that help gain insights into the effectiveness of our proposed framework while establishing connections with prior work.

\begin{wrapfigure}{r}{0.4\textwidth}
\captionsetup{skip=0pt}
  \centering
  \begin{minipage}{0.5\linewidth}
    \centering
    \includegraphics[width=\linewidth]{./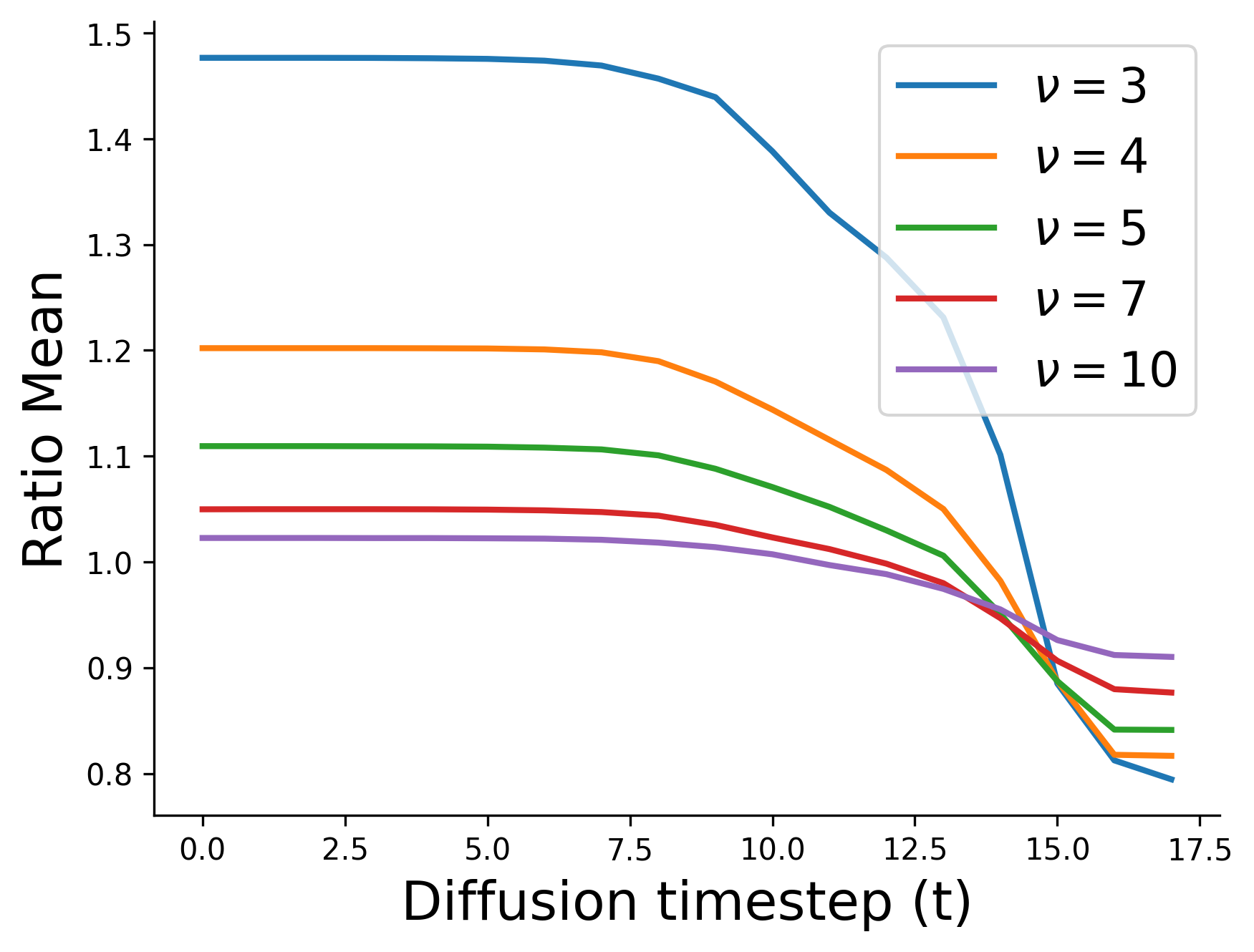}
    \label{fig:subfig1}
  \end{minipage}%
  \begin{minipage}{0.5\linewidth}
    \centering
    \includegraphics[width=\linewidth]{./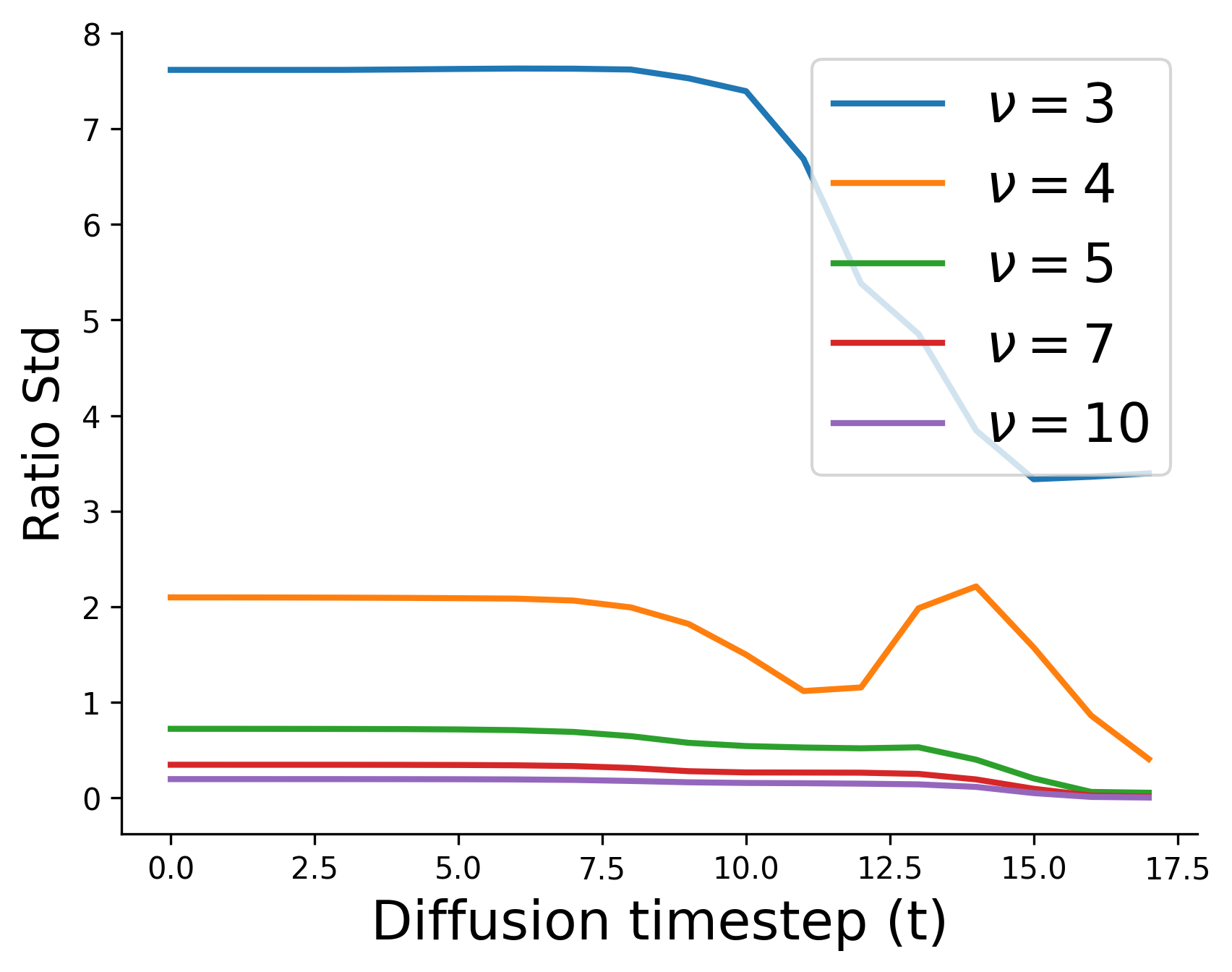}
    \label{fig:subfig2}
  \end{minipage}
  \caption{\small Variation of the mean and standard deviation of the ratio $(\nu + d_1)/(\nu + d)$ with $\nu$ across diffusion sampling trajectory for the toy dataset. As $\nu$ decreases, the mean ratio and its standard deviation increase, leading to large score multiplier weights.}
  \label{fig:factor}
  \vspace{-10pt}
\end{wrapfigure}

\textbf{Exploring distribution tails during sampling.} The ODE in Eq. \ref{eq:t_ode} can re-formulated as,
\begin{equation}
    \frac{d\rvx_t}{dt} = \frac{\dot{\mu}_t}{\mu_t}\rvx_t + \sigma_t^2\Big(\frac{\nu + d'_1}{\nu + d}\Big)\bigg[\frac{\dot{\mu}_t}{\mu_t} - \frac{\dot{\sigma}_t}{\sigma_t}\bigg]\nabla_{\rvx} \log p(\rvx_t, t),
    \label{eq:t_reform_ode}
\end{equation}
where $d'_1 = (1/\sigma_t^2)\|\rvx_t - \mu_t\mD_\vtheta(\rvx_t, \sigma_t)\|_2^2$. By formulating the ODE in terms of the score function, we can gain some intuition into the effectiveness of our model in modeling heavy-tailed distributions. Figure \ref{fig:factor} illustrates the variation of the mean and variance of the multiplier 
$(\nu + d'_1)/(\nu + d)$ along the diffusion trajectory across 1M samples generated from our toy models. Interestingly, as the value of $\nu$ decreases, the mean and variance of this multiplier increase significantly, which leads to large score multiplier weights. We hypothesize that this behavior allows our proposed model to explore more diverse regions during sampling (more details in App. \ref{app:inverse}).

\textbf{Enabling efficient tail coverage during training.} The optimization objective in Eq. \ref{eq:gamma_vb} has several connections with robust statistical estimators. More specifically, it can be shown that (proof in App. \ref{subsec:conn_robust}),
\begin{equation}
    \nabla_\theta \gamm{q}{p_
    \theta} = -\int q(\rvx) \bigg(\frac{p_\theta(\rvx)}{\norm{p_\theta}_{1+\gamma}}\bigg)^{\gamma} \Big(\nabla_\theta \log p_\theta(\rvx) - \E_{\tilde{p}_\theta(\rvx)}[\nabla_\theta \log p_\theta (\rvx)]\Big) d\rvx, \notag
\end{equation}
where $q$ and $p_\theta$ denote the forward ($q(\rvx_{t-\Delta t}|\rvx_t,\rvx_0)$) and reverse diffusion posteriors ($p_\theta(\rvx_{t-\Delta t}|\rvx_t)$), respectively. Intuitively, the coefficient $\gamma$ weighs the likelihood gradient, $\nabla_\theta \log p_\theta(\rvx)$, and can be set accordingly to ignore or consider outliers when modeling the data distribution. Specifically, when $\gamma > 1$, the model would learn to ignore outliers \citep{futami2018variationalinferencebasedrobust, FUJISAWA20082053, basu1998robust} since data points on the tails would be assigned low likelihood. On the contrary, a negative value of $\gamma$ (as is the case in this work since we set $\gamma=-2/(\nu + d)$), the model can assign more weights to capture these extreme values.

We discuss some other connections to prior work in heavy-tailed generative modeling and more recent work in diffusion models in App. \ref{app:related} and some limitations of our approach in App. \ref{app:limitations}.

\section*{Reproducibility Statement}
We include proofs for all theoretical results introduced in the main text in Appendix \ref{app:proofs}. We describe our complete experimental setup (including data processing steps, model specification for training and inference, description of evaluation metrics, and extended experimental results) in Appendix \ref{app:exp}.

\section*{Ethics Statement}
We develop a generative framework for modeling heavy-tailed distributions and demonstrate its effectiveness for scientific applications. In this context, we do not think our model poses a risk of misinformation or other ethical biases associated with large-scale image synthesis models. However, we would like to point out that similar to other generative models, our model can sometimes hallucinate predictions for certain channels, which could impact downstream applications like weather forecasting.

\bibliography{iclr2025_conference}

\begin{thebibliography}{61}
\providecommand{\natexlab}[1]{#1}
\providecommand{\url}[1]{\texttt{#1}}
\expandafter\ifx\csname urlstyle\endcsname\relax
  \providecommand{\doi}[1]{doi: #1}\else
  \providecommand{\doi}{doi: \begingroup \urlstyle{rm}\Url}\fi

\bibitem[Albergo et~al.(2023{\natexlab{a}})Albergo, Boffi, and Vanden-Eijnden]{albergo2023stochastic}
Michael~S Albergo, Nicholas~M Boffi, and Eric Vanden-Eijnden.
\newblock Stochastic interpolants: A unifying framework for flows and diffusions.
\newblock \emph{arXiv preprint arXiv:2303.08797}, 2023{\natexlab{a}}.

\bibitem[Albergo et~al.(2023{\natexlab{b}})Albergo, Boffi, and Vanden-Eijnden]{albergo2023stochasticinterpolantsunifyingframework}
Michael~S. Albergo, Nicholas~M. Boffi, and Eric Vanden-Eijnden.
\newblock Stochastic interpolants: A unifying framework for flows and diffusions, 2023{\natexlab{b}}.
\newblock URL \url{https://arxiv.org/abs/2303.08797}.

\bibitem[Andrews \& Mallows(1974)Andrews and Mallows]{f4b6e54c-9789-36a6-b4ae-370ca07e6041}
D.~F. Andrews and C.~L. Mallows.
\newblock Scale mixtures of normal distributions.
\newblock \emph{Journal of the Royal Statistical Society. Series B (Methodological)}, 36\penalty0 (1):\penalty0 99--102, 1974.
\newblock ISSN 00359246.
\newblock URL \url{http://www.jstor.org/stable/2984774}.

\bibitem[Ascher \& Petzold(1998)Ascher and Petzold]{doi:10.1137/1.9781611971392}
Uri~M. Ascher and Linda~R. Petzold.
\newblock \emph{Computer Methods for Ordinary Differential Equations and Differential-Algebraic Equations}.
\newblock Society for Industrial and Applied Mathematics, Philadelphia, PA, 1998.
\newblock \doi{10.1137/1.9781611971392}.
\newblock URL \url{https://epubs.siam.org/doi/abs/10.1137/1.9781611971392}.

\bibitem[Basu et~al.(1998)Basu, Harris, Hjort, and Jones]{basu1998robust}
Ayanendranath Basu, Ian~R Harris, Nils~L Hjort, and MC~Jones.
\newblock Robust and efficient estimation by minimising a density power divergence.
\newblock \emph{Biometrika}, 85\penalty0 (3):\penalty0 549--559, 1998.

\bibitem[Bollerslev(1987)]{64996089-c390-3dca-86d7-3449b5368d22}
Tim Bollerslev.
\newblock A conditionally heteroskedastic time series model for speculative prices and rates of return.
\newblock \emph{The Review of Economics and Statistics}, 69\penalty0 (3):\penalty0 542--547, 1987.
\newblock ISSN 00346535, 15309142.
\newblock URL \url{http://www.jstor.org/stable/1925546}.

\bibitem[Brooks et~al.(2011)Brooks, Gelman, Jones, and Meng]{Brooks_2011}
Steve Brooks, Andrew Gelman, Galin Jones, and Xiao-Li Meng.
\newblock \emph{Handbook of Markov Chain Monte Carlo}.
\newblock Chapman and Hall/CRC, May 2011.
\newblock ISBN 9780429138508.
\newblock \doi{10.1201/b10905}.
\newblock URL \url{http://dx.doi.org/10.1201/b10905}.

\bibitem[Chai \& Draxler(2014)Chai and Draxler]{chai2014root}
Tianfeng Chai and Roland~R Draxler.
\newblock Root mean square error (rmse) or mean absolute error (mae)?--arguments against avoiding rmse in the literature.
\newblock \emph{Geoscientific Model Development}, 7\penalty0 (3):\penalty0 1247--1250, 2014.

\bibitem[Chung et~al.(2022)Chung, Kim, Mccann, Klasky, and Ye]{chung2022diffusion}
Hyungjin Chung, Jeongsol Kim, Michael~Thompson Mccann, Marc~Louis Klasky, and Jong~Chul Ye.
\newblock Diffusion posterior sampling for general noisy inverse problems.
\newblock In \emph{The Eleventh International Conference on Learning Representations}, 2022.

\bibitem[Deng et~al.(2009)Deng, Dong, Socher, Li, Li, and Fei-Fei]{5206848}
Jia Deng, Wei Dong, Richard Socher, Li-Jia Li, Kai Li, and Li~Fei-Fei.
\newblock Imagenet: A large-scale hierarchical image database.
\newblock In \emph{2009 IEEE Conference on Computer Vision and Pattern Recognition}, pp.\  248--255, 2009.
\newblock \doi{10.1109/CVPR.2009.5206848}.

\bibitem[Ding(2016)]{ding2016conditionaldistributionmultivariatet}
Peng Ding.
\newblock On the conditional distribution of the multivariate $t$ distribution, 2016.
\newblock URL \url{https://arxiv.org/abs/1604.00561}.

\bibitem[Dockhorn et~al.(2022)Dockhorn, Vahdat, and Kreis]{dockhorn2022scorebasedgenerativemodelingcriticallydamped}
Tim Dockhorn, Arash Vahdat, and Karsten Kreis.
\newblock Score-based generative modeling with critically-damped langevin diffusion, 2022.
\newblock URL \url{https://arxiv.org/abs/2112.07068}.

\bibitem[Dosovitskiy \& Brox(2016)Dosovitskiy and Brox]{dosovitskiy2016generatingimagesperceptualsimilarity}
Alexey Dosovitskiy and Thomas Brox.
\newblock Generating images with perceptual similarity metrics based on deep networks, 2016.
\newblock URL \url{https://arxiv.org/abs/1602.02644}.

\bibitem[Dowell et~al.(2022)Dowell, Alexander, James, Weygandt, Benjamin, Manikin, Blake, Brown, Olson, Hu, Smirnova, Ladwig, Kenyon, Ahmadov, Turner, Duda, and Alcott]{HRRR}
David~C. Dowell, Curtis~R. Alexander, Eric~P. James, Stephen~S. Weygandt, Stanley~G. Benjamin, Geoffrey~S. Manikin, Benjamin~T. Blake, John~M. Brown, Joseph~B. Olson, Ming Hu, Tatiana~G. Smirnova, Terra Ladwig, Jaymes~S. Kenyon, Ravan Ahmadov, David~D. Turner, Jeffrey~D. Duda, and Trevor~I. Alcott.
\newblock The high-resolution rapid refresh (hrrr): An hourly updating convection-allowing forecast model. part i: Motivation and system description.
\newblock \emph{Weather and Forecasting}, 37\penalty0 (8):\penalty0 1371 -- 1395, 2022.
\newblock \doi{10.1175/WAF-D-21-0151.1}.
\newblock URL \url{https://journals.ametsoc.org/view/journals/wefo/37/8/WAF-D-21-0151.1.xml}.

\bibitem[Eguchi(2021)]{EGUCHI202115}
Shinto Eguchi.
\newblock Chapter 2 - pythagoras theorem in information geometry and applications to generalized linear models.
\newblock In Angelo Plastino, Arni~S.R. {Srinivasa Rao}, and C.R. Rao (eds.), \emph{Information Geometry}, volume~45 of \emph{Handbook of Statistics}, pp.\  15--42. Elsevier, 2021.
\newblock \doi{https://doi.org/10.1016/bs.host.2021.06.001}.
\newblock URL \url{https://www.sciencedirect.com/science/article/pii/S0169716121000225}.

\bibitem[Esser et~al.(2024)Esser, Kulal, Blattmann, Entezari, Müller, Saini, Levi, Lorenz, Sauer, Boesel, Podell, Dockhorn, English, Lacey, Goodwin, Marek, and Rombach]{esser2024scalingrectifiedflowtransformers}
Patrick Esser, Sumith Kulal, Andreas Blattmann, Rahim Entezari, Jonas Müller, Harry Saini, Yam Levi, Dominik Lorenz, Axel Sauer, Frederic Boesel, Dustin Podell, Tim Dockhorn, Zion English, Kyle Lacey, Alex Goodwin, Yannik Marek, and Robin Rombach.
\newblock Scaling rectified flow transformers for high-resolution image synthesis, 2024.
\newblock URL \url{https://arxiv.org/abs/2403.03206}.

\bibitem[Fujisawa \& Eguchi(2008)Fujisawa and Eguchi]{FUJISAWA20082053}
Hironori Fujisawa and Shinto Eguchi.
\newblock Robust parameter estimation with a small bias against heavy contamination.
\newblock \emph{Journal of Multivariate Analysis}, 99\penalty0 (9):\penalty0 2053--2081, 2008.
\newblock ISSN 0047-259X.
\newblock \doi{https://doi.org/10.1016/j.jmva.2008.02.004}.
\newblock URL \url{https://www.sciencedirect.com/science/article/pii/S0047259X08000456}.

\bibitem[Futami et~al.(2018)Futami, Sato, and Sugiyama]{futami2018variationalinferencebasedrobust}
Futoshi Futami, Issei Sato, and Masashi Sugiyama.
\newblock Variational inference based on robust divergences, 2018.
\newblock URL \url{https://arxiv.org/abs/1710.06595}.

\bibitem[Gr\"{u}ndemann et~al.(2022)Gr\"{u}ndemann, van~de Giesen, Brunner, and van~der Ent]{Grndemann2022}
Gaby~Joanne Gr\"{u}ndemann, Nick van~de Giesen, Lukas Brunner, and Ruud van~der Ent.
\newblock Rarest rainfall events will see the greatest relative increase in magnitude under future climate change.
\newblock \emph{Communications Earth \&; Environment}, 3\penalty0 (1), October 2022.
\newblock ISSN 2662-4435.
\newblock \doi{10.1038/s43247-022-00558-8}.
\newblock URL \url{http://dx.doi.org/10.1038/s43247-022-00558-8}.

\bibitem[Guo et~al.(2015)Guo, Golaz, Donner, Wyman, Zhao, and Ginoux]{GuoCLUBB}
H.~Guo, J.-C. Golaz, L.~J. Donner, B.~Wyman, M.~Zhao, and P.~Ginoux.
\newblock Clubb as a unified cloud parameterization: Opportunities and challenges.
\newblock \emph{Geophysical Research Letters}, 42\penalty0 (11):\penalty0 4540--4547, 2015.
\newblock \doi{https://doi.org/10.1002/2015GL063672}.
\newblock URL \url{https://agupubs.onlinelibrary.wiley.com/doi/abs/10.1002/2015GL063672}.

\bibitem[Heusel et~al.(2018)Heusel, Ramsauer, Unterthiner, Nessler, and Hochreiter]{heusel2018ganstrainedtimescaleupdate}
Martin Heusel, Hubert Ramsauer, Thomas Unterthiner, Bernhard Nessler, and Sepp Hochreiter.
\newblock Gans trained by a two time-scale update rule converge to a local nash equilibrium, 2018.
\newblock URL \url{https://arxiv.org/abs/1706.08500}.

\bibitem[Ho et~al.(2020)Ho, Jain, and Abbeel]{ho2020denoising}
Jonathan Ho, Ajay Jain, and Pieter Abbeel.
\newblock Denoising diffusion probabilistic models.
\newblock \emph{Advances in Neural Information Processing Systems}, 33:\penalty0 6840--6851, 2020.

\bibitem[Hutchinson(1990)]{doi:10.1080/03610919008812866}
M.F. Hutchinson.
\newblock A stochastic estimator of the trace of the influence matrix for laplacian smoothing splines.
\newblock \emph{Communications in Statistics - Simulation and Computation}, 19\penalty0 (2):\penalty0 433--450, 1990.
\newblock \doi{10.1080/03610919008812866}.
\newblock URL \url{https://doi.org/10.1080/03610919008812866}.

\bibitem[Jaini et~al.(2020)Jaini, Kobyzev, Yu, and Brubaker]{jaini2020tailslipschitztriangularflows}
Priyank Jaini, Ivan Kobyzev, Yaoliang Yu, and Marcus Brubaker.
\newblock Tails of lipschitz triangular flows, 2020.
\newblock URL \url{https://arxiv.org/abs/1907.04481}.

\bibitem[Karras et~al.(2022)Karras, Aittala, Aila, and Laine]{karras2022elucidatingdesignspacediffusionbased}
Tero Karras, Miika Aittala, Timo Aila, and Samuli Laine.
\newblock Elucidating the design space of diffusion-based generative models, 2022.
\newblock URL \url{https://arxiv.org/abs/2206.00364}.

\bibitem[Kim et~al.(2024{\natexlab{a}})Kim, Kwon, Cho, Lee, and Won]{kim2024t3variationalautoencoderlearningheavytailed}
Juno Kim, Jaehyuk Kwon, Mincheol Cho, Hyunjong Lee, and Joong-Ho Won.
\newblock $t^3$-variational autoencoder: Learning heavy-tailed data with student's t and power divergence, 2024{\natexlab{a}}.
\newblock URL \url{https://arxiv.org/abs/2312.01133}.

\bibitem[Kim et~al.(2024{\natexlab{b}})Kim, Kwon, Cho, Lee, and Won]{kim2024tvariational}
Juno Kim, Jaehyuk Kwon, Mincheol Cho, Hyunjong Lee, and Joong-Ho Won.
\newblock \$t{\textasciicircum}3\$-variational autoencoder: Learning heavy-tailed data with student's t and power divergence.
\newblock In \emph{The Twelfth International Conference on Learning Representations}, 2024{\natexlab{b}}.
\newblock URL \url{https://openreview.net/forum?id=RzNlECeoOB}.

\bibitem[Kingma \& Welling(2022)Kingma and Welling]{kingma2022autoencodingvariationalbayes}
Diederik~P Kingma and Max Welling.
\newblock Auto-encoding variational bayes, 2022.
\newblock URL \url{https://arxiv.org/abs/1312.6114}.

\bibitem[Krizhevsky(2009)]{krizhevsky2009learning}
Alex Krizhevsky.
\newblock Learning multiple layers of features from tiny images.
\newblock pp.\  32--33, 2009.
\newblock URL \url{https://www.cs.toronto.edu/~kriz/learning-features-2009-TR.pdf}.

\bibitem[Laszkiewicz et~al.(2022)Laszkiewicz, Lederer, and Fischer]{laszkiewicz2022marginaltailadaptivenormalizingflows}
Mike Laszkiewicz, Johannes Lederer, and Asja Fischer.
\newblock Marginal tail-adaptive normalizing flows, 2022.
\newblock URL \url{https://arxiv.org/abs/2206.10311}.

\bibitem[Lipman et~al.(2023)Lipman, Chen, Ben-Hamu, Nickel, and Le]{lipman2023flow}
Yaron Lipman, Ricky T.~Q. Chen, Heli Ben-Hamu, Maximilian Nickel, and Matthew Le.
\newblock Flow matching for generative modeling.
\newblock In \emph{International Conference on Learning Representations}, 2023.
\newblock URL \url{https://openreview.net/forum?id=PqvMRDCJT9t}.

\bibitem[Liu et~al.(2022)Liu, Gong, and Liu]{liu2022flowstraightfastlearning}
Xingchao Liu, Chengyue Gong, and Qiang Liu.
\newblock Flow straight and fast: Learning to generate and transfer data with rectified flow, 2022.
\newblock URL \url{https://arxiv.org/abs/2209.03003}.

\bibitem[Lu et~al.(2022)Lu, Zhou, Bao, Chen, Li, and Zhu]{lu2022dpmsolverfastodesolver}
Cheng Lu, Yuhao Zhou, Fan Bao, Jianfei Chen, Chongxuan Li, and Jun Zhu.
\newblock Dpm-solver: A fast ode solver for diffusion probabilistic model sampling in around 10 steps, 2022.
\newblock URL \url{https://arxiv.org/abs/2206.00927}.

\bibitem[Mardani et~al.(2023{\natexlab{a}})Mardani, Brenowitz, Cohen, Pathak, Chen, Liu, Vahdat, Kashinath, Kautz, and Pritchard]{mardani2023generative}
Morteza Mardani, Noah Brenowitz, Yair Cohen, Jaideep Pathak, Chieh-Yu Chen, Cheng-Chin Liu, Arash Vahdat, Karthik Kashinath, Jan Kautz, and Mike Pritchard.
\newblock Residual corrective diffusion modeling for km-scale atmospheric downscaling.
\newblock \emph{arXiv preprint arXiv:2309.15214}, 2023{\natexlab{a}}.

\bibitem[Mardani et~al.(2023{\natexlab{b}})Mardani, Song, Kautz, and Vahdat]{mardani2023variational}
Morteza Mardani, Jiaming Song, Jan Kautz, and Arash Vahdat.
\newblock A variational perspective on solving inverse problems with diffusion models.
\newblock In \emph{The Twelfth International Conference on Learning Representations}, 2023{\natexlab{b}}.

\bibitem[Mardani et~al.(2024)Mardani, Brenowitz, Cohen, Pathak, Chen, Liu, Vahdat, Nabian, Ge, Subramaniam, Kashinath, Kautz, and Pritchard]{mardani2024residualcorrectivediffusionmodeling}
Morteza Mardani, Noah Brenowitz, Yair Cohen, Jaideep Pathak, Chieh-Yu Chen, Cheng-Chin Liu, Arash Vahdat, Mohammad~Amin Nabian, Tao Ge, Akshay Subramaniam, Karthik Kashinath, Jan Kautz, and Mike Pritchard.
\newblock Residual corrective diffusion modeling for km-scale atmospheric downscaling, 2024.
\newblock URL \url{https://arxiv.org/abs/2309.15214}.

\bibitem[Massey(1951)]{ks_test}
Frank~J. Massey.
\newblock The kolmogorov-smirnov test for goodness of fit.
\newblock \emph{Journal of the American Statistical Association}, 46\penalty0 (253):\penalty0 68--78, 1951.
\newblock ISSN 01621459, 1537274X.
\newblock URL \url{http://www.jstor.org/stable/2280095}.

\bibitem[Neal(2003)]{neals_funnel}
Radford~M. Neal.
\newblock {Slice sampling}.
\newblock \emph{The Annals of Statistics}, 31\penalty0 (3):\penalty0 705 -- 767, 2003.
\newblock \doi{10.1214/aos/1056562461}.
\newblock URL \url{https://doi.org/10.1214/aos/1056562461}.

\bibitem[Pandey \& Mandt(2023)Pandey and Mandt]{pandey2023completerecipediffusiongenerative}
Kushagra Pandey and Stephan Mandt.
\newblock A complete recipe for diffusion generative models, 2023.
\newblock URL \url{https://arxiv.org/abs/2303.01748}.

\bibitem[Pandey et~al.(2022)Pandey, Mukherjee, Rai, and Kumar]{pandey2022diffusevaeefficientcontrollablehighfidelity}
Kushagra Pandey, Avideep Mukherjee, Piyush Rai, and Abhishek Kumar.
\newblock Diffusevae: Efficient, controllable and high-fidelity generation from low-dimensional latents, 2022.
\newblock URL \url{https://arxiv.org/abs/2201.00308}.

\bibitem[Pandey et~al.(2024{\natexlab{a}})Pandey, Rudolph, and Mandt]{pandey2024efficient}
Kushagra Pandey, Maja Rudolph, and Stephan Mandt.
\newblock Efficient integrators for diffusion generative models.
\newblock In \emph{The Twelfth International Conference on Learning Representations}, 2024{\natexlab{a}}.
\newblock URL \url{https://openreview.net/forum?id=qA4foxO5Gf}.

\bibitem[Pandey et~al.(2024{\natexlab{b}})Pandey, Yang, and Mandt]{pandey2024fastsamplersinverseproblems}
Kushagra Pandey, Ruihan Yang, and Stephan Mandt.
\newblock Fast samplers for inverse problems in iterative refinement models, 2024{\natexlab{b}}.
\newblock URL \url{https://arxiv.org/abs/2405.17673}.

\bibitem[Pathak et~al.(2024)Pathak, Cohen, Garg, Harrington, Brenowitz, Durran, Mardani, Vahdat, Xu, Kashinath, and Pritchard]{pathak2024kilometerscaleconvectionallowingmodel}
Jaideep Pathak, Yair Cohen, Piyush Garg, Peter Harrington, Noah Brenowitz, Dale Durran, Morteza Mardani, Arash Vahdat, Shaoming Xu, Karthik Kashinath, and Michael Pritchard.
\newblock Kilometer-scale convection allowing model emulation using generative diffusion modeling, 2024.
\newblock URL \url{https://arxiv.org/abs/2408.10958}.

\bibitem[Podell et~al.(2023)Podell, English, Lacey, Blattmann, Dockhorn, Müller, Penna, and Rombach]{podell2023sdxlimprovinglatentdiffusion}
Dustin Podell, Zion English, Kyle Lacey, Andreas Blattmann, Tim Dockhorn, Jonas Müller, Joe Penna, and Robin Rombach.
\newblock Sdxl: Improving latent diffusion models for high-resolution image synthesis, 2023.
\newblock URL \url{https://arxiv.org/abs/2307.01952}.

\bibitem[Rezende \& Mohamed(2016)Rezende and Mohamed]{rezende2016variationalinferencenormalizingflows}
Danilo~Jimenez Rezende and Shakir Mohamed.
\newblock Variational inference with normalizing flows, 2016.
\newblock URL \url{https://arxiv.org/abs/1505.05770}.

\bibitem[Rombach et~al.(2022)Rombach, Blattmann, Lorenz, Esser, and Ommer]{rombach2022highresolutionimagesynthesislatent}
Robin Rombach, Andreas Blattmann, Dominik Lorenz, Patrick Esser, and Björn Ommer.
\newblock High-resolution image synthesis with latent diffusion models, 2022.
\newblock URL \url{https://arxiv.org/abs/2112.10752}.

\bibitem[Shariatian et~al.(2024)Shariatian, Simsekli, and Durmus]{shariatian2024denoisinglevyprobabilisticmodels}
Dario Shariatian, Umut Simsekli, and Alain Durmus.
\newblock Denoising l\'evy probabilistic models, 2024.
\newblock URL \url{https://arxiv.org/abs/2407.18609}.

\bibitem[Singhal et~al.(2023)Singhal, Goldstein, and Ranganath]{singhal2023diffusediffusebackautomated}
Raghav Singhal, Mark Goldstein, and Rajesh Ranganath.
\newblock Where to diffuse, how to diffuse, and how to get back: Automated learning for multivariate diffusions, 2023.
\newblock URL \url{https://arxiv.org/abs/2302.07261}.

\bibitem[Skilling(1989)]{Skilling1989}
John Skilling.
\newblock \emph{The Eigenvalues of Mega-dimensional Matrices}, pp.\  455--466.
\newblock Springer Netherlands, Dordrecht, 1989.
\newblock ISBN 978-94-015-7860-8.
\newblock \doi{10.1007/978-94-015-7860-8_48}.
\newblock URL \url{https://doi.org/10.1007/978-94-015-7860-8_48}.

\bibitem[Sohl-Dickstein et~al.(2015)Sohl-Dickstein, Weiss, Maheswaranathan, and Ganguli]{sohl2015deep}
Jascha Sohl-Dickstein, Eric Weiss, Niru Maheswaranathan, and Surya Ganguli.
\newblock Deep unsupervised learning using nonequilibrium thermodynamics.
\newblock In \emph{International Conference on Machine Learning}, pp.\  2256--2265. PMLR, 2015.

\bibitem[Song et~al.(2022{\natexlab{a}})Song, Meng, and Ermon]{song2022denoisingdiffusionimplicitmodels}
Jiaming Song, Chenlin Meng, and Stefano Ermon.
\newblock Denoising diffusion implicit models, 2022{\natexlab{a}}.
\newblock URL \url{https://arxiv.org/abs/2010.02502}.

\bibitem[Song et~al.(2022{\natexlab{b}})Song, Vahdat, Mardani, and Kautz]{song2022pseudoinverse}
Jiaming Song, Arash Vahdat, Morteza Mardani, and Jan Kautz.
\newblock Pseudoinverse-guided diffusion models for inverse problems.
\newblock In \emph{International Conference on Learning Representations}, 2022{\natexlab{b}}.

\bibitem[Song et~al.(2020)Song, Sohl-Dickstein, Kingma, Kumar, Ermon, and Poole]{songscore}
Yang Song, Jascha Sohl-Dickstein, Diederik~P Kingma, Abhishek Kumar, Stefano Ermon, and Ben Poole.
\newblock Score-based generative modeling through stochastic differential equations.
\newblock In \emph{International Conference on Learning Representations}, 2020.

\bibitem[Song et~al.(2021)Song, Durkan, Murray, and Ermon]{song2021maximumlikelihoodtrainingscorebased}
Yang Song, Conor Durkan, Iain Murray, and Stefano Ermon.
\newblock Maximum likelihood training of score-based diffusion models, 2021.
\newblock URL \url{https://arxiv.org/abs/2101.09258}.

\bibitem[Srivastava et~al.(2023)Srivastava, Yang, Kerrigan, Dresdner, McGibbon, Bretherton, and Mandt]{srivastava2023probabilistic}
Prakhar Srivastava, Ruihan Yang, Gavin Kerrigan, Gideon Dresdner, Jeremy McGibbon, Christopher Bretherton, and Stephan Mandt.
\newblock Precipitation downscaling with spatiotemporal video diffusion.
\newblock \emph{arXiv preprint arXiv:2312.06071}, 2023.

\bibitem[Vincent(2011)]{6795935}
Pascal Vincent.
\newblock A connection between score matching and denoising autoencoders.
\newblock \emph{Neural Computation}, 23\penalty0 (7):\penalty0 1661--1674, 2011.
\newblock \doi{10.1162/NECO_a_00142}.

\bibitem[Wilks(2011)]{wilks2011statistical}
Daniel~S Wilks.
\newblock \emph{Statistical methods in the atmospheric sciences}, volume 100.
\newblock Academic press, 2011.

\bibitem[Xu et~al.(2023{\natexlab{a}})Xu, Deng, Cheng, Tian, Liu, and Jaakkola]{xu2023restart}
Yilun Xu, Mingyang Deng, Xiang Cheng, Yonglong Tian, Ziming Liu, and Tommi~S. Jaakkola.
\newblock Restart sampling for improving generative processes.
\newblock In \emph{Thirty-seventh Conference on Neural Information Processing Systems}, 2023{\natexlab{a}}.
\newblock URL \url{https://openreview.net/forum?id=wFuemocyHZ}.

\bibitem[Xu et~al.(2023{\natexlab{b}})Xu, Liu, Tian, Tong, Tegmark, and Jaakkola]{xu2023pfgm++}
Yilun Xu, Ziming Liu, Yonglong Tian, Shangyuan Tong, Max Tegmark, and Tommi Jaakkola.
\newblock Pfgm++: Unlocking the potential of physics-inspired generative models.
\newblock In \emph{International Conference on Machine Learning}, pp.\  38566--38591. PMLR, 2023{\natexlab{b}}.

\bibitem[Yoon et~al.(2023)Yoon, Park, Kim, and Lim]{yoon2023scorebased}
Eunbi Yoon, Keehun Park, Sungwoong Kim, and Sungbin Lim.
\newblock Score-based generative models with l\'evy processes.
\newblock In \emph{Thirty-seventh Conference on Neural Information Processing Systems}, 2023.
\newblock URL \url{https://openreview.net/forum?id=0Wp3VHX0Gm}.

\bibitem[Zhang \& Chen(2023)Zhang and Chen]{zhang2023fastsamplingdiffusionmodels}
Qinsheng Zhang and Yongxin Chen.
\newblock Fast sampling of diffusion models with exponential integrator, 2023.
\newblock URL \url{https://arxiv.org/abs/2204.13902}.

\end{thebibliography}
\bibliographystyle{iclr2025_conference}

\appendix
\tableofcontents
\section{Proofs}
\label{app:proofs}
\subsection{Derivation of the Perturbation Kernel}
\label{app:proof_marginal}
\begin{proof}
    By re-parameterization of the distribution $q(\rvx|\rvx_0)$, we have,
    \begin{equation}
        \rvx = \vmu + \mSigma^{1/2}\rvz/\sqrt{\kappa}, \;\; \rvz \sim \mathcal{N}(0, \mI_{2d})\;\; \text{and}\;\; \kappa \sim \chi^2(\nu)/\nu
    \end{equation}
    This implies that the conditional distribution $q(\rvx|\rvx_0,\kappa) = \mathcal{N}(\mu, \mSigma/\kappa)$. Therefore, following properties of gaussian distributions, $q(\rvx_t|\rvx_0,\kappa) = \mathcal{N}(\mu_t \rvx_0, \sigma_t^2/\kappa \mI_d)$. Therefore, from reparameterization,
    \begin{equation}
        \rvx_t | \kappa = \mu_t\rvx_0 + \sigma_t\rvz / \sqrt{\kappa} 
    \end{equation}
    which implies that $q(\rvx_t|\rvx_0) = t_d(\mu_t\rvx_0, \sigma_t^2\mI_d, \nu)$. This completes the proof.
\end{proof}

\subsection{On the Posterior Parameterization}
\label{app:alt_params}
 The perturbation kernel $q(\rvx_t|\rvx_0)$ for Student-t diffusions is parameterized as,
\begin{equation}
    q(\rvx_t|\rvx_0) = t_d(\mu_t\rvx_0, \sigma_t^2\mI_d, \nu)
\end{equation}
Using re-parameterization,
\begin{equation}
    \rvx_t = \mu_t\rvx_0 + \sigma_t \frac{\epsilon}{\sqrt{\kappa}}, \quad \epsilon \sim \gN(0, \mI_d), \kappa \sim \chi^2(\nu)/\nu
\end{equation}
During inference, given a noisy state $\rvx_t$, we have the following estimation problem,
\begin{equation}
    \rvx_t = \mu_t\E[\rvx_0|\rvx_t] + \sigma_t \E\big[\frac{\epsilon}{\sqrt{\kappa}}|\rvx_t\big]
    \label{eqn:app_estimation}
\end{equation}
Therefore, the task of denoising can be posed as either estimating $\E[\rvx_0|\rvx_t]$ or $\E\big[\frac{\epsilon}{\sqrt{\kappa}}|\rvx_t\big]$. With this motivation, the posterior $p_\vtheta(\rvx_{t-\Delta t}|\rvx_t)$ can be parameterized appropriately. Recall the form of the forward posterior
\begin{equation}
    q(\rvx_{t-\Delta t}|\rvx_t, \rvx_0) = t_d(\bar{\vmu}_t, \frac{\nu + d_1}{\nu + d} \bar{\sigma}_t^2 \mI_d, \nu + d)
\end{equation}
\begin{equation}
    \bar{\vmu}_t = \mu_{t-\Delta t}\rvx_0 + \frac{\sigma_{21}^2(t)}{\sigma_t^2}(\rvx_t - \mu_t\rvx_0), \qquad \bar{\sigma}_t^2 = \Big[\sigma_{t-\Delta t}^2 - \frac{\sigma_{21}^2(t)\sigma_{12}^2(t)}{\sigma_t^2}\Big]
\end{equation}
where $d_1 = \frac{1}{\sigma_t^2}\Vert\rvx_t - \mu_t\rvx_0\Vert^2$. Further simplifying the mean $\bar{\vmu}_t$,
\begin{align}
    \bar{\vmu}_t(\rvx_t, \rvx_0, t) &= \mu_{t-\Delta t}\rvx_0 + \frac{\sigma_{21}^2(t)}{\sigma_t^2}(\rvx_t - \mu_t\rvx_0) \\
    &= \mu_{t-\Delta t}\rvx_0 + \frac{\sigma_{21}^2(t)}{\sigma_t^2}\rvx_t - \frac{\sigma_{21}^2(t)}{\sigma_t^2}\mu_t\rvx_0 \\
    &= \Big(\mu_{t-\Delta t} - \mu_t\frac{\sigma_{21}^2(t)}{\sigma_t^2}\Big)\rvx_0 + \frac{\sigma_{21}^2(t)}{\sigma_t^2}\rvx_t
\end{align}
Therefore, the mean $\vmu_\vtheta(\rvx_t, t)$ of the reverse posterior $p_\vtheta(\rvx_{t-\Delta t}|\rvx_t)$ can be parameterized as,
\begin{align}
    \bar{\vmu}_\vtheta(\rvx_t, t) &= \Big(\mu_{t-\Delta t} - \mu_t\frac{\sigma_{21}^2(t)}{\sigma_t^2}\Big)\E[\rvx_0|\rvx_t] + \frac{\sigma_{21}^2(t)}{\sigma_t^2}\rvx_t \\
    &\approx \Big(\mu_{t-\Delta t} - \mu_t\frac{\sigma_{21}^2(t)}{\sigma_t^2}\Big)D_\vtheta(\rvx_t, \sigma_t) + \frac{\sigma_{21}^2(t)}{\sigma_t^2}\rvx_t
\end{align}
where $\E[\rvx_0|\rvx_t]$ is learned using a parametric estimator $D_\vtheta(\rvx_t, \sigma_t)$. This corresponds to the $\rvx_0$-prediction parameterization presented in Eq. \ref{eq:mu_param} in the main text. Alternatively, From Eqn. \ref{eqn:app_estimation},
\begin{align}
    \bar{\vmu}_\vtheta(\rvx_t, t) &= \Big(\mu_{t-\Delta t} - \mu_t\frac{\sigma_{21}^2(t)}{\sigma_t^2}\Big)\E[\rvx_0|\rvx_t] + \frac{\sigma_{21}^2(t)}{\sigma_t^2}\rvx_t \\
     &= \frac{1}{\mu_t}\Big(\mu_{t-\Delta t} - \mu_t\frac{\sigma_{21}^2(t)}{\sigma_t^2}\Big)\Big(\rvx_t - \sigma_t \E\big[\frac{\epsilon}{\sqrt{\kappa}}|\rvx_t\big]\Big) + \frac{\sigma_{21}^2(t)}{\sigma_t^2}\rvx_t \\
     &= \frac{\mu_{t-\Delta t}}{\mu_t}\rvx_t - \frac{\sigma_t}{\mu_t}\Big(\mu_{t-\Delta t} - \mu_t\frac{\sigma_{21}^2(t)}{\sigma_t^2}\Big)\E\big[\frac{\epsilon}{\sqrt{\kappa}}|\rvx_t\big] \\
     &\approx \frac{\mu_{t-\Delta t}}{\mu_t}\rvx_t - \frac{\sigma_t}{\mu_t}\Big(\mu_{t-\Delta t} - \mu_t\frac{\sigma_{21}^2(t)}{\sigma_t^2}\Big)\epsilon_\vtheta(\rvx_t, \sigma_t)
\end{align}
where $\E\big[\frac{\epsilon}{\sqrt{\kappa}}|\rvx_t\big]$ is learned using a parametric estimator $\epsilon_\vtheta(\rvx_t, \sigma_t)$. This corresponds to the $\epsilon$-prediction parameterization \citep{ho2020denoising}.

\subsection{Proof of Proposition \ref{prop:1}}
\label{app:proof_prop1}
We restate the proposition here for convenience,
\begin{proposition*}
    For arbitrary distributions $q$ and $p$, in the limit of $\gamma \rightarrow 0$, $\gamm{q}{p}$ converges to $\kl{q}{p}$. Consequently, the objective in Eqn. \ref{eq:gamma_vb} converges to the DDPM objective stated in Eqn. \ref{eq:vb}.
\end{proposition*}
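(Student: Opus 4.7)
The plan is to evaluate the limit $\lim_{\gamma\to 0}\gamm{q}{p}$ by observing that the bracket $\gC_\gamma(q,p)-\gH_\gamma(q)$ is a $0/0$ indeterminate form at $\gamma=0$, and then applying L'H\^{o}pital's rule (equivalently, a first-order Taylor expansion of the numerator around $\gamma=0$). First I would verify the indeterminacy: at $\gamma=0$ we have $\gH_0(q)=-\|q\|_1=-1$ and $\gC_0(q,p)=-\int q(\rvx)\cdot 1\,d\rvx=-1$, so the numerator vanishes and the required limit equals $\frac{d}{d\gamma}\bigl[\gC_\gamma(q,p)-\gH_\gamma(q)\bigr]\bigm|_{\gamma=0}$.

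Next I would compute the two derivatives separately via logarithmic differentiation. For $\gH_\gamma(q)=-\bigl(\int q^{1+\gamma}\bigr)^{1/(1+\gamma)}$, writing $h(\gamma)=\exp\!\bigl(\tfrac{1}{1+\gamma}\log\!\int q^{1+\gamma}\bigr)$ and using $h(0)=1$ with $\log\!\int q = 0$, the derivative $\tfrac{d}{d\gamma}\log h(\gamma)\bigm|_{\gamma=0}$ reduces to $\int q(\rvx)\log q(\rvx)\,d\rvx$, so $\gH_\gamma'(0)=-\int q\log q$. For $\gC_\gamma(q,p)=-\int q(\rvx)\,p(\rvx)^\gamma\,\|p\|_{1+\gamma}^{-\gamma}\,d\rvx$, I would split the derivative into a contribution from $p^\gamma$ (giving $-\int q\log p$) and a contribution from $\|p\|_{1+\gamma}^{-\gamma}$; the latter vanishes at $\gamma=0$ because $\tfrac{d}{d\gamma}\bigl[\gamma\log\|p\|_{1+\gamma}\bigr]\bigm|_{\gamma=0}=\log\|p\|_1=0$. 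Hence $\gC_\gamma'(0)=-\int q\log p$. Subtracting,
\begin{equation}
\lim_{\gamma\to 0}\gamm{q}{p}\;=\;\gC_\gamma'(0)-\gH_\gamma'(0)\;=\;\int q(\rvx)\log\frac{q(\rvx)}{p(\rvx)}\,d\rvx\;=\;\kl{q}{p}.\notag
\end{equation}

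For the consequence, I would note that with $\gamma=-2/(\nu+d)$ and finite $d$, $\gamma\to 0$ corresponds exactly to $\nu\to\infty$. By the first part, each $\gamm{\cdot}{\cdot}$ summand in Eqn.~\ref{eq:gamma_vb} (the prior-matching term at $t=T$ and each posterior-matching term) converges pointwise to the corresponding $\kl{\cdot}{\cdot}$ summand in Eqn.~\ref{eq:vb}, while the reconstruction term $-\log p_\theta(\rvx_0|\rvx_{\Delta t})$ is identical in both formulations. Linearity of expectation and finiteness of the sum then give convergence of the full objective. This is also consistent with the fact that as $\nu\to\infty$ all the Student-t distributions in Table~\ref{table:compare_diff_htdiff} collapse to their Gaussian counterparts.

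The main obstacles are the routine but delicate bookkeeping of the two logarithmic derivatives (in particular recognizing that the $\|p\|_{1+\gamma}^{-\gamma}$ factor contributes nothing at leading order because $\|p\|_1=1$), and justifying the exchange of derivative and integral in $\tfrac{d}{d\gamma}\int q^{1+\gamma}$ and $\tfrac{d}{d\gamma}\int q\,p^\gamma$; this needs only standard dominated-convergence hypotheses, namely that $\int q|\log q|<\infty$ and $\int q|\log p|<\infty$, which are implicit in any meaningful statement about $\kl{q}{p}$.
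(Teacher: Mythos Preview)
Your proposal is correct and follows essentially the same approach as the paper: both compute the first-order behavior of $\gC_\gamma(q,p)-\gH_\gamma(q)$ near $\gamma=0$ and divide by $\gamma$, then invoke $\nu\to\infty$ for the second part. The only cosmetic difference is that you package the computation as L'H\^{o}pital/logarithmic differentiation, whereas the paper writes out explicit Taylor expansions of $q^\gamma$, $(1+\delta)^{1/(1+\gamma)}$, and $\log(1+\delta)$ step by step; your presentation is arguably cleaner but the underlying argument is identical.
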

\begin{proof}
    We present our proof in two parts:
    \begin{enumerate}
        \item Firstly, we establish the following relation between the $\gamma$-Power Divergence and the KL-Divergence between two distributions $q$ and $p$.
        \begin{equation}
        \gamm{q}{p} = \kl{q}{p} + \gO(\gamma)
    \end{equation}
        \item Next, for the choice of $\gamma=-\frac{2}{\nu + d}$, we show that in the limit of $\gamma \rightarrow 0$, the optimization objective in Eqn. \ref{eq:gamma_vb} converges to the optimization objective in Eqn. \ref{eq:vb}
    \end{enumerate}
\textbf{Relation between $\kl{q}{p}$ and $\gamm{q}{p}$.}
The $\gamma$-Power Divergence as stated in \citep{kim2024t3variationalautoencoderlearningheavytailed} assumes the following form:
\begin{equation}
    \gamm{q}{p} = \frac{1}{\gamma}\big[\gC_\gamma(q,p) - \gH_\gamma(q)\big]
\end{equation}
where $\gamma \in (-1, 0) \cup (0, \infty)$, and,
\begin{equation}
    \gH_\gamma(p) = -\norm{p}_{1+\gamma}=-\bigg(\int p(\rvx)^{1+\gamma}d\rvx\bigg)^{\frac{1}{1+\gamma}}\qquad\gC_\gamma(q, p) = -\int q(\rvx) \bigg(\frac{p(\rvx)}{\norm{p}_{1+\gamma}}\bigg)^\gamma d\rvx
\end{equation}
For subsequent analysis, we assume $\gamma \rightarrow 0$. Under this assumption, we simplify $\gH_\gamma(q)$ as follows. By definition,
\begin{align}
    \gH_\gamma(q) = -\norm{q}_{1+\gamma}&=-\bigg(\int q(\rvx)^{1+\gamma}d\rvx\bigg)^{\frac{1}{1+\gamma}} \\
    &= -\bigg(\int q(\rvx)q(\rvx)^{\gamma}d\rvx\bigg)^{\frac{1}{1+\gamma}} \\
    &= -\bigg(\int q(\rvx)\exp(\gamma \log q(\rvx))d\rvx\bigg)^{\frac{1}{1+\gamma}} \\
    &= -\bigg(\int q(\rvx)\big[1 + \gamma \log q(\rvx) + \gO(\gamma^2)\big]d\rvx\bigg)^{\frac{1}{1+\gamma}} \\
    &= -\bigg(\int q(\rvx) d\rvx + \gamma\int q(\rvx) \log q(\rvx) d\rvx  + \gO(\gamma^2)\bigg)^{\frac{1}{1+\gamma}} \\
    &= -\bigg(1 + \gamma\int q(\rvx) \log q(\rvx) d\rvx  + \gO(\gamma^2)\bigg)^{\frac{1}{1+\gamma}}
\end{align}
Using the approximation $(1 + \delta x)^\alpha \approx 1 + \alpha \delta x$ for a small $\delta$ in the above equation, we have,
\begin{align}
    \gH_\gamma(q) = -\norm{q}_{1+\gamma}&\approx -\bigg(1 + \frac{\gamma}{1 + \gamma}\Big[\int q(\rvx) \log q(\rvx) d\rvx  + \gO(\gamma)\Big]\bigg) \\
    &\approx -\bigg(1 + \gamma(1-\gamma)\Big[\int q(\rvx) \log q(\rvx) d\rvx  + \gO(\gamma)\Big]\bigg)
\end{align}
where we have used the approximation $\frac{1}{1+\gamma} \approx 1-\gamma$ in the above equation. This is justified since $\gamma$ is assumed to be small enough. Therefore, we have,
\begin{equation}
    \gH_\gamma(q) = -\norm{q}_{1+\gamma} \approx -\bigg(1 + \gamma \int q(\rvx) \log q(\rvx) d\rvx  + \gO(\gamma^2)\bigg) \label{eq:ge_approx}
\end{equation}
Similarly, we now obtain an approximation for the power-cross entropy as follows. By definition,
\begin{align}
    \gC_\gamma(q, p) &= -\int q(\rvx) \bigg(\frac{p(\rvx)}{\norm{p}_{1+\gamma}}\bigg)^\gamma d\rvx \\
    &= -\int q(\rvx) \bigg(1 + \gamma \log\bigg(\frac{p(\rvx)}{\norm{p}_{1+\gamma}}\bigg) + \gO(\gamma^2)\bigg) d\rvx \\
    &= -\bigg( \int q(\rvx) d\rvx + \gamma \int q(\rvx) \log\bigg(\frac{p(\rvx)}{\norm{p}_{1+\gamma}}\bigg) + \gO(\gamma^2)\bigg) \\
    &= -\bigg( 1 + \gamma \int q(\rvx) \log p(\rvx) d\rvx -\gamma\int q(\rvx) \log\norm{p}_{1+\gamma}d\rvx + \gO(\gamma^2)\bigg) \label{eqn:ce_approx}
\end{align}
From Eqn. \ref{eq:ge_approx}, it follows that,
\begin{equation}
    \norm{p}_{1+\gamma} \approx \bigg(1 + \gamma \int p(\rvx) \log p(\rvx) d\rvx  + \gO(\gamma^2)\bigg)
\end{equation}
Therefore,
\begin{align}
    \log\norm{p}_{1+\gamma} &= \log \bigg(1 + \gamma \int p(\rvx) \log p(\rvx) d\rvx  + \gO(\gamma^2)\bigg)\\
    &\approx \gamma \int p(\rvx) \log p(\rvx) d\rvx \label{eqn:log_approx}
\end{align}
where the above result follows from the logarithmic series and ignores the terms of order $\gO(\gamma^2)$ or higher. Plugging the approximation in Eqn. \ref{eqn:log_approx} in Eqn. \ref{eqn:ce_approx}, we have,
\begin{align}
    \gC_\gamma(q, p) &=-\bigg( 1 + \gamma \int q(\rvx) \log p(\rvx) d\rvx -\gamma\int q(\rvx) \log\norm{p}_{1+\gamma}d\rvx + \gO(\gamma^2)\bigg) \\
    &\approx-\bigg( 1 + \gamma \int q(\rvx) \log p(\rvx) d\rvx + \gO(\gamma^2)\bigg)
\end{align}
Therefore,
\begin{align}
    \gamm{q}{p} &= \frac{1}{\gamma}\big[\gC_\gamma(q,p) - \gH_\gamma(q)\big] \\
    &= \frac{1}{\gamma}\big[\gamma \bigg(\int q(\rvx) \log q(\rvx) d\rvx - \int q(\rvx) \log p(\rvx) d\rvx + \gO(\gamma^2)\bigg) \big]\\
    &= \int q(\rvx) \log \frac{q(\rvx)}{p(\rvx)} d\rvx + \gO(\gamma) \\
    &= \kl{q}{p} + \gO(\gamma)
\end{align}
This establishes the relationship between the KL and $\gamma$-Power divergence between two distributions. Therefore, for two distributions $q$ and $p$, the difference in the magnitude of $\kl{q}{p}$ and $\gamm{q}{p}$ is of the order of $\gO(\gamma)$. In the limit of $\gamma \rightarrow 0$, the $\gamm{q}{p} \rightarrow \kl{q}{p}$. This concludes the first part of our proof. 

\textbf{Equivalence between the objectives under $\gamma \rightarrow 0$. } For $\gamma=-\frac{2}{\gamma + d}$ and a finite-dimensional dataset with $\rvx_0 \in \sR^d$, it follows that $\gamma \rightarrow 0$ implies $\nu \rightarrow \infty$. Moreover, in the limit of $\nu \rightarrow \infty$, the multivariate Student-t distribution converges to a Gaussian distribution. As already shown in the previous part, under this limit, $\gamm{q}{p}$ converges to $\kl{q}{p}$. Therefore, under this limit, the optimization objective in Eqn. \ref{eq:gamma_vb} converges to the standard DDPM objective in Eqn. \ref{eq:vb}. This completes the proof.
\end{proof}

\subsection{Derivation of the simplified denoising loss}
\label{app:simplified_loss}
Here, we derive the simplified denoising loss presented in Eq. \ref{eq:3} in the main text. We specifically consider the term $\gamm{q(\rvx_{t-\Delta t}|\rvx_t,\rvx_0)}{p_\theta(\rvx_{t-\Delta t}|\rvx_t)}$ in Eq. \ref{eq:gamma_vb}. The $\gamma$-power divergence between two Student-t distributions is given by,
\begin{align}
\begin{split}
\mathcal{D}_\gamma[q_\nu||p_\nu] &=\textstyle
-\frac{1}{\gamma} C_{\nu, d}^\frac{\gamma}{1+\gamma}
    \left(
        1 + \frac{d}{\nu-2}
    \right)^{-\frac{\gamma}{1+\gamma}} \Bigl[\,
-|\mSigma_0|^{-\frac{\gamma}{2(1+\gamma)}}
        \left(
            1 + \frac{d}{\nu-2}
        \right) \\
        &\textstyle\quad +|\mSigma_1|^{-\frac{\gamma}{2(1+\gamma)}}
        \left(
             1 + \frac{1}{\nu - 2}\text{tr} \left( \mSigma_1^{-1}\mSigma_0
            \right) +
            \frac{1}{\nu} (\vmu_0 - \vmu_1)^\top\mSigma_1^{-1} (\vmu_0 - \vmu_1)
        \right)
    \Bigr].
\end{split}
\end{align}
where $\gamma=-\frac{2}{\nu + d}$. Recall, the definitions of the forward denoising posterior,
\begin{equation}
    q(\rvx_{t-\Delta t}|\rvx_t, \rvx_0) = t_d(\bar{\vmu}_t, \frac{\nu + d_1}{\nu + d} \bar{\sigma}_t^2 \mI_d, \nu + d)
\end{equation}
\begin{equation}
    \bar{\vmu}_t = \mu_{t-\Delta t}\rvx_0 + \frac{\sigma_{21}^2(t)}{\sigma_t^2}(\rvx_t - \mu_t\rvx_0), \qquad \bar{\sigma}_t^2 = \Big[\sigma_{t-\Delta t}^2 - \frac{\sigma_{21}^2(t)\sigma_{12}^2(t)}{\sigma_t^2}\Big]
\end{equation}
and the reverse denoising posterior,
\begin{equation}
    p_\vtheta(\rvx_{t-\Delta t}|\rvx_t) = t_d(\vmu_\vtheta(\rvx_t, t), \bar{\sigma}_t^2\mI_d, \nu + d)
\end{equation}
where the denoiser mean $\vmu_\vtheta(\rvx_t, t)$ is further parameterized as follows:
\begin{align}
    \vmu_\theta(\rvx_t,t) = \frac{\sigma_{21}^2(t)}{\sigma_t^2}\rvx_t + \Big[\mu_{t-\Delta t} - \frac{\sigma_{21}^2(t)}{\sigma_t^2}\mu_t\Big]\mD_\theta(\rvx_t, \sigma_t)
\end{align}
Since we only parameterize the mean of the reverse posterior, the majority of the terms in the $\gamma$-power divergence are independent of $\theta$ and can be ignored (or treated as scalar coefficients). Therefore,
\begin{align}
    \gamm{q(\rvx_{t-\Delta t}|\rvx_t, \rvx_0)}{p_\vtheta(\rvx_{t-\Delta t}|\rvx_t)} &\propto (\bar{\vmu}_t - \vmu_\theta(\rvx_t, t))^\top(\bar{\vmu}_t - \vmu_\theta(\rvx_t, t)) \\
    &\propto \Vert \bar{\vmu}_t - \vmu_\theta(\rvx_t, t) \Vert_2^2 \\
    &\propto \Big[\mu_{t-\Delta t} - \frac{\sigma_{21}^2(t)}{\sigma_t^2}\mu_t\Big]^2\Vert \rvx_0 - D_\vtheta(\rvx_t, \sigma_t) \Vert_2^2
\end{align}
For better sample quality, it is common to ignore the scalar multiple in prior works \citep{ho2020denoising, songscore}. Therefore, ignoring the time-dependent scalar multiple,
\begin{equation}
    \gamm{q(\rvx_{t-\Delta t}|\rvx_t, \rvx_0)}{p_\vtheta(\rvx_{t-\Delta t}|\rvx_t)} \propto \Vert \rvx_0 - D_\vtheta(\rvx_t, \sigma_t) \Vert_2^2
\end{equation}
Therefore, the final loss function $\gL_\theta$ can be stated as,
\begin{equation}
    \gL(\theta) = \E_{\rvx_0 \sim p(\rvx_0)}\E_{t \sim p(t)}\E_{\epsilon \sim \gN(0, \mI_d)}\E_{\kappa \sim \frac{1}{\nu} \chi^2(\nu)}\Big\Vert \mD_\vtheta\big(\mu_t\rvx_0 + \sigma_t\frac{\epsilon}{\sqrt{\kappa}}, \sigma_t\big) - \rvx_0 \Big\Vert^2_2
\end{equation}

\subsection{Proof of Proposition \ref{prop:2}}
\label{app:proof_sampler}
We restate the proposition here for convenience,
\begin{proposition*}
    The posterior parameterization in Eqn. \ref{eq:post_param} induces the following continuous-time dynamics,
    \begin{equation}
    d\rvx_t = \Bigg[\frac{\dot{\mu}_t}{\mu_t}\rvx_t - \bigg[f(\sigma_t, \dot{\sigma}_t) + \frac{\dot{\mu}_t}{\mu_t}\bigg](\rvx_t - \mu_t \mD_\vtheta(\rvx_t, t))\Bigg]dt + \sqrt{\beta(t)g(\sigma_t, \dot{\sigma}_t)} d\mS_t
\end{equation}
 where $f: \sR^+ \times \sR^+ \rightarrow \sR$ and $g: \sR^+ \times \sR^+ \rightarrow \sR^+$ are scalar-valued functions, $\beta_t \in \sR^+$ is a scaling coefficient such that the following condition holds,
 \begin{equation}
\frac{1}{\sigma_{12}^2(t)}\big(\sigma_{t-\Delta t}^2 - \beta(t)g(\sigma_t, \dot{\sigma}_t)\Delta t\big) - 1 = f(\sigma_t, \dot{\sigma}_t)\Delta t
\end{equation}
where $\dot{\mu}_t, \dot{\sigma}_t$ denote the first-order time-derivatives of the perturbation kernel parameters $\mu_t$ and $\sigma_t$ respectively and the differential $d\mS_t \sim t_d(0, dt, \nu + d)$.
\end{proposition*}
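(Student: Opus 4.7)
The plan is to start from the discrete-time ancestral update derived earlier and then take the limit $\Delta t \to 0$ after substituting the parameterization of $\sigma_{12}^2(t)$ encoded in the scalar functions $f, g, \beta$.

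First, I would write the ancestral update from $p_\vtheta(\rvx_{t-\Delta t} | \rvx_t)$ as
$$\rvx_{t-\Delta t} - \rvx_t \;=\; \bigl(\vmu_\vtheta(\rvx_t,t) - \rvx_t\bigr) \;+\; \bar{\sigma}_t \tfrac{\rvz}{\sqrt{\kappa}},\qquad \rvz\sim\gN(0,\mI_d),\;\kappa\sim\chi^2(\nu+d)/(\nu+d),$$
and, denoting $A(t) := \sigma_{21}^2(t)/\sigma_t^2$, rearrange the mean term using Eqn.~\ref{eq:mu_param} as
$$\vmu_\vtheta(\rvx_t,t) - \rvx_t \;=\; (A(t)-1)(\rvx_t - \mu_t \mD_\vtheta(\rvx_t,\sigma_t)) \;+\; (\mu_{t-\Delta t} - \mu_t)\,\mD_\vtheta(\rvx_t,\sigma_t).$$
This is the key algebraic identity that isolates the factor $\rvx_t - \mu_t \mD_\vtheta$ appearing in the claimed SDE.

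Next, I would invoke the stated constraint. Since $\sigma_{12}(t)=\sigma_{21}(t)$, we have $\bar{\sigma}_t^2 = \sigma_{t-\Delta t}^2 - A(t)\,\sigma_{12}^2(t)$. Solving for $A(t)$ gives $A(t) = \sigma_{12}^{-2}(t)(\sigma_{t-\Delta t}^2 - \bar{\sigma}_t^2)$, so asking that $\bar{\sigma}_t^2 = \beta(t) g(\sigma_t,\dot\sigma_t)\Delta t + o(\Delta t)$ and $A(t) - 1 = f(\sigma_t,\dot\sigma_t)\Delta t + o(\Delta t)$ is \emph{exactly} the consistency condition
$$\sigma_{12}^{-2}(t)\bigl(\sigma_{t-\Delta t}^2 - \beta(t) g(\sigma_t,\dot\sigma_t)\Delta t\bigr) - 1 \;=\; f(\sigma_t,\dot\sigma_t)\Delta t$$
stated in the proposition. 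A Taylor expansion also yields $\mu_{t-\Delta t} - \mu_t = -\dot\mu_t\Delta t + o(\Delta t)$. Substituting these into the ancestral update produces
$$\rvx_{t-\Delta t} - \rvx_t \;=\; f(\sigma_t,\dot\sigma_t)\Delta t\,(\rvx_t - \mu_t\mD_\vtheta) \;-\; \dot\mu_t\Delta t\,\mD_\vtheta \;+\; \sqrt{\beta(t) g(\sigma_t,\dot\sigma_t)\Delta t}\,\tfrac{\rvz}{\sqrt{\kappa}} \;+\; o(\Delta t).$$

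Finally, I would take $\Delta t\to 0$ and interpret the left-hand side as $-d\rvx_t$ with $dt = \Delta t > 0$, so that the drift becomes $-f(\rvx_t - \mu_t\mD_\vtheta)dt + \dot\mu_t\mD_\vtheta\,dt$ and the stochastic part becomes $\sqrt{\beta(t)g(\sigma_t,\dot\sigma_t)}\,d\mS_t$ with $d\mS_t\sim t_d(0,dt,\nu+d)$, obtained from the reparameterization $d\mS_t = \sqrt{dt}\,\rvz/\sqrt{\kappa}$. To match the form stated in Eqn.~\ref{eq:t_sde_main} it only remains to verify the algebraic identity
$$\tfrac{\dot\mu_t}{\mu_t}\rvx_t - \bigl[f(\sigma_t,\dot\sigma_t) + \tfrac{\dot\mu_t}{\mu_t}\bigr](\rvx_t - \mu_t\mD_\vtheta) \;=\; -f(\sigma_t,\dot\sigma_t)(\rvx_t - \mu_t\mD_\vtheta) + \dot\mu_t\mD_\vtheta,$$
which follows by expanding the bracket and cancelling the two copies of $\tfrac{\dot\mu_t}{\mu_t}\rvx_t$.

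The main obstacle is not algebraic but interpretive: the noise increments $\sqrt{\Delta t}\,\rvz/\sqrt{\kappa}$ do not converge to a Brownian differential, since the mixing variable $\kappa$ is $O(1)$ per step and the limit is a \emph{heavy-tailed} differential. I would handle this by \emph{defining} $\mS_t$ via the discrete scaling above and reading $d\mS_t \sim t_d(0,dt,\nu+d)$ as the marginal distribution of the per-step increment, rather than insisting on a martingale/quadratic-variation characterization. Care is also needed with $o(\Delta t)$ remainders when $\sigma_{12}^2(t)\to 0$ (as in the ODE instantiation where $\sigma_{12} = \sigma_t\sigma_{t-\Delta t}$); in that regime the limit should be verified by direct expansion, which I would include as a sanity check recovering the ODE in Eqn.~\ref{eq:t_ode}.
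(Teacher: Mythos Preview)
Your proposal is correct and follows essentially the same route as the paper's proof: write the discrete ancestral update, impose the design choices $\bar\sigma_t^2=\beta(t)g(\sigma_t,\dot\sigma_t)\Delta t$ and the constraint linking $\sigma_{12}^2(t)$ to $f$, Taylor-expand $\mu_{t-\Delta t}$, and pass to the limit $\Delta t\to 0$. The only cosmetic difference is that the paper carries out the computation in the $\epsilon$-prediction parameterization (so the intermediate SDE is written in terms of $\epsilon_\vtheta$ and then implicitly converted via $\sigma_t\epsilon_\vtheta=\rvx_t-\mu_t\mD_\vtheta$), whereas you work directly with $\mD_\vtheta$ and supply the final algebraic identity explicitly; your version is slightly more direct for matching the stated form of the drift. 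Your remarks on the interpretive status of $d\mS_t$ and the $\sigma_{12}\to 0$ edge case go a bit beyond what the paper spells out, but they are accurate caveats rather than gaps.
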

\begin{proof}
    We start by writing a single sampling step from our learned posterior distribution. Recall
\begin{equation}
    p_\theta(\rvx_{t-\Delta t}|\rvx_t) = t_d(\vmu_\theta(\rvx_t, t), \bar{\sigma}_t^2\mI_d, \nu + d)
\end{equation}
where (using the $\epsilon$-prediction parameterization in App. \ref{app:alt_params}),
\begin{equation}
    \vmu_\vtheta(\rvx_t, t) = \frac{\mu_{t-\Delta t}}{\mu_t}\rvx_t + \frac{1}{\sigma_t}\bigg[\sigma_{21}^2(t) - \frac{\mu_{t-\Delta t}}{\mu_t}\sigma_t^2\bigg]\epsilon_\theta(\rvx_t, \sigma_t)
\end{equation}
From re-parameterization, we have,
\begin{align}
    \rvx_{t-\Delta t} &= \mu_\theta(\rvx_t, t) + \frac{\bar{\sigma}_t}{\sqrt{\kappa}} \rvz\quad \rvz \sim \mathcal{N}(0, \mI_d),\;\; \kappa \sim \chi^2(\nu + d)/(\nu + d) \\
    \rvx_{t-\Delta t} &= \frac{\mu_{t-\Delta t}}{\mu_t}\rvx_t + \frac{1}{\sigma_t}\bigg[\sigma_{21}^2(t) -\frac{\mu_{t-\Delta t}}{\mu_t}\sigma_t^2\bigg]\epsilon_\theta(\rvx_t, \sigma_t) + \frac{\bar{\sigma}_t}{\sqrt{\kappa}} \rvz \label{eq:5}
\end{align}
Moreover, we choose the posterior scale to be the same as the forward posterior $q(\rvx_{t-1}|\rvx_t, \rvx_0)$ i.e.
\begin{equation}
    \bar{\sigma}_t^2 = \Big[\sigma_{t-\Delta t}^2 - \frac{\sigma_{21}^2(t)\sigma_{12}^2(t)}{\sigma_t^2}\Big]\mI_d
\end{equation}
This implies,
\begin{equation}
    \sigma_{21}^2(t) = \frac{\sigma_t^2}{\sigma_{12}^2(t)}\big(\sigma_{t-\Delta t}^2 - \bar{\sigma}_t^2\big)
\end{equation}
Substituting this form of $\sigma_{21}^2(t)$ into Eqn. \ref{eq:5}, we have,
\begin{align}
    \rvx_{t-\Delta t} &= \frac{\mu_{t-\Delta t}}{\mu_t}\rvx_t + \frac{1}{\sigma_t}\bigg[\frac{\sigma_t^2}{\sigma_{12}^2(t)}\big(\sigma_{t-\Delta t}^2 - \bar{\sigma}_t^2\big) -\frac{\mu_{t-\Delta t}}{\mu_t}\sigma_t^2\bigg]\epsilon_\theta(\rvx_t, \sigma_t) + \frac{\bar{\sigma}_t}{\sqrt{\kappa}} \rvz \\
    &= \frac{\mu_{t-\Delta t}}{\mu_t}\rvx_t + \sigma_t\bigg[\frac{1}{\sigma_{12}^2(t)}\big(\sigma_{t-\Delta t}^2 - \bar{\sigma}_t^2\big) -\frac{\mu_{t-\Delta t}}{\mu_t}\bigg]\epsilon_\theta(\rvx_t, \sigma_t) + \frac{\bar{\sigma}_t}{\sqrt{\kappa}} \rvz \\
    &= \frac{\mu_{t-\Delta t}}{\mu_t}\rvx_t + \sigma_t\bigg[\frac{1}{\sigma_{12}^2(t)}\big(\sigma_{t-\Delta t}^2 - \bar{\sigma}_t^2\big) - 1 + 1 - \frac{\mu_{t-\Delta t}}{\mu_t}\bigg]\epsilon_\theta(\rvx_t, \sigma_t) + \frac{\bar{\sigma}_t}{\sqrt{\kappa}} \rvz \\
    &= \frac{\mu_{t-\Delta t}}{\mu_t}\rvx_t + \sigma_t\bigg[\frac{1}{\sigma_{12}^2(t)}\big(\sigma_{t-\Delta t}^2 - \bar{\sigma}_t^2\big) - 1 + \frac{\dot{\mu}_t}{\mu_t}\Delta t\bigg]\epsilon_\theta(\rvx_t, \sigma_t) + \frac{\bar{\sigma}_t}{\sqrt{\kappa}} \rvz \label{eq:post_update}
\end{align}
where in the above equation we use the first-order approximation $\dot{\mu}_t = \frac{\mu_t - \mu_{t-\Delta t}}{\Delta t}$. Next, we make the following design choices:
\begin{enumerate}
    \item Firstly, we assume the following form of the reverse posterior variance $\bar{\sigma}_t^2$:
    \begin{equation}
        \bar{\sigma}_t^2 = \beta(t)g(\sigma_t, \dot{\sigma}_t)\Delta t
    \end{equation}
    where $g: \mathbb{R}^+ \times \mathbb{R}^+ \rightarrow \mathbb{R}^+$ and $\beta(t) \in \mathbb{R}^+$ represents a time-varying scaling factor chosen empirically which can be used to vary the noise injected at each sampling step. It is worth noting that a positive $\dot{\sigma}_t$ (as indicated in the definition of g) is a consequence of a monotonically increasing noise schedule $\sigma_t$ in diffusion model design.
    \item Secondly, we make the following design choice:
    \begin{equation}
        \frac{1}{\sigma_{12}^2(t)}\big(\sigma_{t-\Delta t}^2 - \bar{\sigma}_t^2\big) - 1 = f(\sigma_t, \dot{\sigma}_t)\Delta t
    \end{equation}
    where $f: \mathbb{R}^+ \times \mathbb{R}^+ \rightarrow \mathbb{R}$
\end{enumerate}
With these two design choices, Eqn. \ref{eq:post_update} simplifies as:
\begin{align}
    \rvx_{t-\Delta t} &= \frac{\mu_{t-\Delta t}}{\mu_t}\rvx_t + \sigma_t\bigg[\frac{1}{\sigma_{12}^2(t)}\big(\sigma_{t-\Delta t}^2 - \bar{\sigma}_t^2\big) - 1 + \frac{\dot{\mu}_t}{\mu_t}\Delta t\bigg]\epsilon_\theta(\rvx_t, \sigma_t) + \frac{\bar{\sigma}_t}{\sqrt{\kappa}} \rvz \\
    &= \frac{\mu_{t-\Delta t}}{\mu_t}\rvx_t + \sigma_t\bigg[f(\sigma_t, \dot{\sigma}_t)\Delta t + \frac{\dot{\mu}_t}{\mu_t}\Delta t\bigg]\epsilon_\theta(\rvx_t, \sigma_t) + \sqrt{\beta(t)g(\sigma_t, \dot{\sigma}_t)} \sqrt{\Delta t}\frac{\rvz}{\sqrt{\kappa}}\\
    &= \frac{\mu_{t-\Delta t}}{\mu_t}\rvx_t + \sigma_t\bigg[f(\sigma_t, \dot{\sigma}_t) + \frac{\dot{\mu}_t}{\mu_t}\bigg]\epsilon_\theta(\rvx_t, \sigma_t)\Delta t + \sqrt{\beta(t)g(\sigma_t, \dot{\sigma}_t)} \sqrt{\Delta t}\frac{\rvz}{\sqrt{\kappa}} \\
    \rvx_{t-\Delta t} - \rvx_t &= \big(\frac{\mu_{t-\Delta t}}{\mu_t}-1\big)\rvx_t + \sigma_t\bigg[f(\sigma_t, \dot{\sigma}_t) + \frac{\dot{\mu}_t}{\mu_t}\bigg]\epsilon_\theta(\rvx_t, \sigma_t)\Delta t + \sqrt{\beta(t)g(\sigma_t, \dot{\sigma}_t)} \sqrt{\Delta t}\frac{\rvz}{\sqrt{\kappa}} \\
    \rvx_{t-\Delta t} - \rvx_t  &= -\Bigg[\frac{\dot{\mu}_t}{\mu_t}\rvx_t - \sigma_t\bigg[f(\sigma_t, \dot{\sigma}_t) + \frac{\dot{\mu}_t}{\mu_t}\bigg]\epsilon_\theta(\rvx_t, \sigma_t)\Bigg]\Delta t + \sqrt{\beta(t)g(\sigma_t, \dot{\sigma}_t)} \sqrt{\Delta t}\frac{\rvz}{\sqrt{\kappa}}
\end{align}
In the limit of $\Delta t \rightarrow 0$:
\begin{equation}
    d\rvx_t = \Bigg[\frac{\dot{\mu}_t}{\mu_t}\rvx_t - \sigma_t\bigg[f(\sigma_t, \dot{\sigma}_t) + \frac{\dot{\mu}_t}{\mu_t}\bigg]\epsilon_\theta(\rvx_t, \sigma_t)\Bigg]dt + \sqrt{\beta(t)g(\sigma_t, \dot{\sigma}_t)} \frac{dW_t}{\sqrt{\kappa}}
    \label{eq:t_sde}
\end{equation}
which gives the required SDE formulation for the diffusion posterior sampling. Next, we discuss specific choices of $f(\sigma_t, \dot{\sigma}_t)$ and $g(\sigma_t, \dot{\sigma}_t)$.

\textbf{On the choice of $f(\sigma_t, \dot{\sigma}_t)$ and $g(\sigma_t, \dot{\sigma}_t)$:} Recall our design choices:
\begin{equation}
\bar{\sigma}_t^2 = \beta(t)g(\sigma_t, \dot{\sigma}_t)\Delta t
\end{equation}
\begin{equation}
\frac{1}{\sigma_{12}^2(t)}\big(\sigma_{t-\Delta t}^2 - \bar{\sigma}_t^2\big) - 1 = f(\sigma_t, \dot{\sigma}_t)\Delta t
\end{equation}
Substituting the value of $\bar{\sigma}_t^2$ from the first design choice to the second yields the following condition:
\begin{equation}
\frac{1}{\sigma_{12}^2(t)}\big(\sigma_{t-\Delta t}^2 - \beta(t)g(\sigma_t, \dot{\sigma}_t)\Delta t\big) - 1 = f(\sigma_t, \dot{\sigma}_t)\Delta t
\end{equation}
This concludes the proof. It is worth noting that the above equation provides two degrees of freedom, i.e., we can choose two variables among $\sigma_{12}(t), g, f$ and automatically determine the third. However, it is more convenient to choose $\sigma_{12}(t)$ and $g$, since both these quantities should be positive. Different choices of these quantities yield different instantiations of the SDE in Eqn. \ref{eq:t_sde} as illustrated in the main text.
\end{proof}

\subsection{Deriving the Denoiser Preconditioner for t-EDM}
\label{app:precond}
Recall our denoiser parameterization,
\begin{equation}
    D_\theta(\rvx, \sigma) = c_\text{skip}(\sigma, \nu) \rvx + c_\text{out}(\sigma, \nu)\mF_\vtheta(c_\text{in}(\sigma, \nu)\rvx, c_\text{noise}(\sigma))
\end{equation}
\citet{karras2022elucidatingdesignspacediffusionbased} highlight the following design choices, which we adopt directly.
\begin{enumerate}
    \item Derive $\cin$ based on constraining the input variance to 1
    \item Derive $\cskip$ and $\cout$ to constrain output variance to 1 and additionally minimizing $\cout$ to bound scaling errors in $\mF_\theta(\rvx, \sigma)$.
\end{enumerate}

The coefficient $\cin$ can be derived by setting the network inputs to have unit variance. Therefore,
\begin{eqnarray}
  \Var_{\rvx_0, \rvn} \big[ \cin(\sigma) (\rvx_0 + \rvn) \big] &=& 1 \\
  \cin(\sigma, \nu)^2 ~\Var_{\rvx_0, \rvn} \big[ \rvx_0 + \rvn \big] &=& 1 \\
  \cin(\sigma, \nu)^2 \big( \sdata + \snoise \big) &=& 1 \\
  \cin(\sigma, \nu) &=& 1 \big/ \sqrt{\snoise + \sdata}
  \text{.}
\end{eqnarray}

The coefficients $\cskip$ and $\cout$ can be derived by setting the training target to have unit variance. Similar to \citet{karras2022elucidatingdesignspacediffusionbased} our training target can be specified as:
\begin{equation}
    \mF_\text{target} = \frac{1}{\cout(\sigma, \nu)} \big( \rvx_0 - \cskip(\sigma, \nu) (\rvx_0 {+} \rvn) \big)
\end{equation}
\begin{eqnarray}
  \Var_{\rvx_0, \rvn} \big[ F_\text{target}(\rvx_0, \rvn; \sigma) \big] &=& 1 \\
  \Var_{\rvx_0, \rvn} \Big[ \tfrac{1}{\cout(\sigma, \nu)} \big( \rvx_0 - \cskip(\sigma, \nu) (\rvx_0 + \rvn) \big) \Big] &=& 1 \\
  \tfrac{1}{\cout(\sigma, \nu)^2} \Var_{\rvx_0, \rvn} \big[ \rvx_0 - \cskip(\sigma, \nu) (\rvx_0 + \rvn) \big] &=& 1 \\
  \cout(\sigma, \nu)^2 &=& \Var_{\rvx_0, \rvn} \big[ \rvx_0 - \cskip(\sigma, \nu) (\rvx_0 + \rvn) \big] \\
  \cout(\sigma, \nu)^2 &=& \Var_{\rvx_0, \rvn} \Big[ \big( 1 - \cskip(\sigma, \nu) \big) ~\rvx_0 + \cskip(\sigma, \nu) ~\rvn \Big] \\
  \cout(\sigma, \nu)^2 &=& \big( 1 - \cskip(\sigma, \nu) \big)^2 ~\sdata + \cskip(\sigma, \nu)^2 \snoise
  \label{eq:coutpre}
  \text{.}
\end{eqnarray}

Lastly, setting $\cskip(\sigma, \nu)$ to minimize $\cout(\sigma, \nu)$, we obtain,
\begin{equation}
  \cskip(\sigma, \nu) = \sdata / \bigg( \snoise + \sdata \bigg)
  \text{.}
\end{equation}

Consequently $\cout(\sigma, \nu)$ can be specified as:
\begin{equation}
  \cout(\sigma, \nu) = \sqrt{\frac{\nu}{\nu - 2}}\sigma \cdot \sigma_\text{data} \big/ \sqrt{\snoise + \sdata}
  \text{.}
\end{equation}

\subsection{Equivalence with the EDM ODE}
\label{app:edm_ode_equi}
Similar to \citet{karras2022elucidatingdesignspacediffusionbased}, we start by deriving the optimal denoiser for the denoising loss function. Moreover, we reformulate the form of the perturbation kernel as $q(\rvx_t|\rvx_0)=t_d(\mu_t\rvx_0, \sigma_t^2\mI_d, \nu) = t_d(s(t)\rvx_0, s(t)^2\sigma(t)^2\mI_d, \nu)$ by setting $\mu_t=s(t)$ and $\sigma_t = s(t)\sigma(t)$. The denoiser loss can then be specified as follows,
\begin{align}
    \gL(D, \sigma) \;\;&=\;\; \E_{\rvx_0 \sim p(\rvx_0)}\E_{\rvn \sim t_d(0, \sigma^2\mI_d, \nu)}\big[\lambda(\sigma, \nu)\Vert \mD(s(t)[\rvx_0 + \rvn], \sigma) - \rvx_0 \Vert^2_2\big] \\
    &=\;\; \E_{\rvx_0 \sim p(\rvx_0)}\E_{\rvx \sim t_d(s(t)\rvx_0, s(t)^2\sigma^2\mI_d, \nu)}\big[\lambda(\sigma, \nu)\Vert \mD(\rvx, \sigma) - \rvx_0 \Vert^2_2\big] \\
    &=\;\; \E_{\rvx_0 \sim p(\rvx_0)}\Big[\int t_d(\rvx; s(t)\rvx_i, s(t)^2\sigma^2\mI_d, \nu) \big[\lambda(\sigma, \nu)\Vert \mD(\rvx, \sigma) - \rvx_0 \Vert^2_2\big]d\rvx \Big] \\
    &=\;\; \frac{1}{N}\sum_i\int t_d(\rvx; s(t)\rvx_i, s(t)^2\sigma^2\mI_d, \nu) \big[\lambda(\sigma, \nu)\Vert \mD(\rvx, \sigma) - \rvx_i \Vert^2_2\big]d\rvx
\end{align}
where the last result follows from assuming $p(\rvx_0)$ as the empirical data distribution. Thus, the optimal denoiser can be specified by setting $\nabla_{D} \gL(D, \sigma) = 0$. Therefore,
\begin{equation}
    \nabla_{D} \gL(D, \sigma) = 0
\end{equation}
Consequently,
\begin{align}
    \nabla_D \frac{1}{N}\sum_i\int t_d(\rvx; s(t)\rvx_i, s(t)^2\sigma^2\mI_d, \nu) \big[\lambda(\sigma, \nu)\Vert \mD(\rvx, \sigma) - \rvx_i \Vert^2_2\big]d\rvx &= 0 \\
    \frac{1}{N}\sum_i\int t_d(\rvx; s(t)\rvx_i, s(t)^2\sigma^2\mI_d, \nu) \big[\lambda(\sigma, \nu) (\mD(\rvx, \sigma) - \rvx_i)\big]d\rvx &= 0 \\
    \int \sum_i t_d(\rvx; s(t)\rvx_i, s(t)^2\sigma^2\mI_d, \nu) (\mD(\rvx, \sigma) - \rvx_i)d\rvx &= 0
    \label{eqn:app_denoiser}
\end{align}
The optimal denoiser $\mD(\rvx, \sigma)$, can be obtained from Eq. \ref{eqn:app_denoiser} as,
\begin{equation}
    \mD(\rvx, \sigma) = \frac{\sum_i  t_d(\rvx; s(t)\rvx_i, s(t)^2\sigma^2\mI_d, \nu) \rvx_i}{\sum_i  t_d(\rvx; s(t)\rvx_i, s(t)^2\sigma^2\mI_d, \nu)}
\end{equation}
We can further simplify the optimal denoiser as,
\begin{align}
    \mD(\rvx, \sigma) &= \frac{\sum_i  t_d(\rvx; s(t)\rvx_i, s(t)^2\sigma^2\mI_d, \nu) \rvx_i}{\sum_i  t_d(\rvx; s(t)\rvx_i, s(t)^2\sigma^2\mI_d, \nu)} \\
    &= \frac{\sum_i  t_d(\frac{\rvx}{s(t)}; \rvx_i, \sigma^2\mI_d, \nu) \rvx_i}{\sum_i  t_d(\frac{\rvx}{s(t)}; \rvx_i, \sigma^2\mI_d, \nu)} \\
    &= \mD\Big(\frac{\rvx}{s(t)}, \sigma\Big)
    \label{eq:app_scale_equi}
\end{align}
Next, recall the ODE dynamics (Eqn. \ref{eq:t_ode}) in our formulation,
\begin{equation}
        \frac{d\rvx_t}{dt} = \frac{\dot{\mu}_t}{\mu_t}\rvx_t - \bigg[-\frac{\dot{\sigma}_t}{\sigma_t} + \frac{\dot{\mu}_t}{\mu_t}\bigg](\rvx_t - \mu_t \mD_\vtheta(\rvx_t, \sigma(t)))
\end{equation}
Reparameterizing the above ODE by setting $\mu_t = s(t)$ and $\sigma_t = s(t)\sigma(t)$,
\begin{align}
    \frac{d\rvx_t}{dt} &= \frac{\dot{\mu}_t}{\mu_t}\rvx_t - \bigg[-\frac{\dot{\sigma}_t}{\sigma_t} + \frac{\dot{\mu}_t}{\mu_t}\bigg](\rvx_t - s(t) \mD_\vtheta(\rvx_t, \sigma(t))) \\
    &=\frac{\dot{s}(t)}{s(t)}\rvx_t - \bigg[\frac{\dot{s}(t)}{s(t)} - \frac{\dot{\sigma}(t)s(t) + \sigma(t)\dot{s}(t)}{\sigma(t)s(t)}\bigg](\rvx_t - s(t) \mD_\vtheta(\rvx_t, \sigma(t))) \\
    &=\frac{\dot{s}(t)}{s(t)}\rvx_t - \bigg[- \frac{\dot{\sigma}(t)}{\sigma(t)}\bigg](\rvx_t - s(t) \mD_\vtheta(\rvx_t, \sigma(t))) \\
    &=\frac{\dot{s}(t)}{s(t)}\rvx_t + \frac{\dot{\sigma}(t)}{\sigma(t)}(\rvx_t - s(t) \mD_\vtheta(\rvx_t, \sigma(t)))
\end{align}
Lastly, since \citet{karras2022elucidatingdesignspacediffusionbased} propose to train the denoiser $\mD_\vtheta$ using unscaled noisy state, from Eqn. \ref{eq:app_scale_equi}, we can re-write the above ODE as,
\begin{align}
    \frac{d\rvx_t}{dt} &=\Big[\frac{\dot{s}(t)}{s(t)} + \frac{\dot{\sigma}(t)}{\sigma(t)}\Big]\rvx_t - \frac{\dot{\sigma}(t)s(t)}{\sigma(t)}\mD_\vtheta\Big(\frac{\rvx_t}{s(t)}, \sigma(t)\Big)
    \label{eq:app_ode}
\end{align}
The form of the ODE in Eqn. \ref{eq:app_ode} is equivalent to the ODE presented in \citet{karras2022elucidatingdesignspacediffusionbased} (Algorithm 1 line 4 in their paper). This concludes the proof.

\subsection{Conditional Vector Field for t-Flow}
\label{app:t_interpolant}
Here, we derive the conditional vector field for the \emph{t-Flow} interpolant. Recall, the interpolant in \emph{t-Flow} is derived as follows,
\begin{equation}
    \rvx_t = t \rvx_0 + (1 - t) \frac{\epsilon}{\sqrt{\kappa}}, \quad \epsilon \sim \gN(0, \mI_d), \kappa \sim \chi^2(\nu)/\nu
\end{equation}
It follows that,
\begin{equation}
    \rvx_t = t \E[\rvx_0|\rvx_t] + (1 - t) \E\big[\frac{\epsilon}{\sqrt{\kappa}}|\rvx_t\big]
    \label{eq:app_interp}
\end{equation}
Moreover, following Eq. 2.10 in \citet{albergo2023stochasticinterpolantsunifyingframework}, the conditional vector field $\vb(\rvx_t, t)$ can be defined as,
\begin{equation}
    \vb(\rvx_t, t) = \E[\dot{\rvx}_t|\rvx_t] = \E[\rvx_0|\rvx_t] - \E\big[\frac{\epsilon}{\sqrt{\kappa}}|\rvx_t\big]
    \label{eq:app_velocity}
\end{equation}
From Eqns. \ref{eq:app_interp} and \ref{eq:app_velocity}, the conditional vector field can be simplified as,
\begin{equation}
    \vb(\rvx_t, t) = \frac{\rvx_t - \E\big[\frac{\epsilon}{\sqrt{\kappa}}|\rvx_t\big]}{t}
\end{equation}
This concludes the proof.

\subsection{Connection to Denoising Score Matching}
\label{app:dsm}
We start by defining the score for the perturbation kernel $q(\rvx_t|\rvx_0)$. The pdf for the perturbation kernel $q(\rvx_t|\rvx_0)=t_d(\mu_t\rvx_0, \sigma_t^2\mI_d, \nu)$ can be expressed as (ignoring the normalization constant):
\begin{equation}
    q(\rvx_t|\rvx_0) \;\;\propto\;\; \bigg[1 + \frac{1}{\nu\sigma_t^2}\big(\rvx_t - \mu_t\rvx_0\big)^\top\big(\rvx_t - \mu_t\rvx_0\big)\bigg]^\frac{-(\nu + d)}{2}
\end{equation}
Therefore,
\begin{align}
    \nabla_{\rvx_t} \log q(\rvx_t|\rvx_0) &= -\frac{\nu + d}{2} \nabla_{\rvx_t} \log \bigg[1 + \frac{1}{\nu\sigma_t^2}\Vert\rvx_t - \mu_t\rvx_0\Vert^2_2 \bigg] \\
    &= -\frac{\nu + d}{2} \frac{1}{1 + \frac{1}{\nu\sigma_t^2}\Vert\rvx_t - \mu_t\rvx_0\Vert^2_2}\frac{2}{\nu\sigma_t^2}\bigg(\rvx_t - \mu_t\rvx_0\bigg)
\end{align}

Denoting $d_1 = \frac{1}{\sigma_t^2}\Vert\rvx_t - \mu_t\rvx_0\Vert^2_2$ for convenience, we have,
\begin{equation}
    \nabla_{\rvx_t} \log q(\rvx_t|\rvx_0) = -\bigg(\frac{\nu + d}{\nu + d_1}\bigg)\frac{1}{\sigma_t^2}\bigg(\rvx_t - \mu_t\rvx_0\bigg)
\end{equation}

In Denoising Score Matching (DSM) \citep{6795935}, the following objective is minimized,
\begin{equation}
    L_\text{DSM} = \mathbb{E}_{t \sim p(t), \rvx_0 \sim p(\rvx_0),\rvx_t \sim q(\rvx_t|\rvx_0)}\Bigg[\gamma(t)\Vert \nabla_{\rvx_t} \log q(\rvx_t|\rvx_0) - \rvs_\theta(\rvx_t, t)\Vert_2^2\Bigg]
\end{equation}
for some loss weighting function $\gamma(t)$. Parameterizing the score estimator $\rvs_\vtheta(\rvx_t, t)$ as:
\begin{equation}
    \rvs_\theta(\rvx_t, t) = -\bigg(\frac{\nu + d}{\nu + d_1}\bigg)\frac{1}{\sigma_t^2}\bigg(\rvx_t - \mu_t D_\vtheta(\rvx_t, \sigma_t)\bigg)
    \label{eq:t_score}
\end{equation}
With this parameterization of $\rvs_\theta(\rvx_t, t)$ and some choice of $\gamma(t)$, the DSM objective can be further simplified as follows,
\begin{align}
    L_\text{DSM} &= \mathbb{E}_{\rvx_0 \sim p(\rvx_0)}\E_t\mathbb{E}_{\epsilon \sim \mathcal{N}(0, I), \kappa \sim \chi^2(\nu) /\nu}\Big[\Big(\frac{\nu + d}{\nu + d_1}\Big)^2\Big\Vert \rvx_0 - D_\vtheta(\mu_t\rvx_0 + \sigma_t\frac{\epsilon}{\sqrt{\kappa}}, \sigma_t) \Big\Vert^2_2 \Big] \\
    &= \mathbb{E}_{\rvx_0 \sim p(\rvx_0)}\E_t\mathbb{E}_{\epsilon \sim \mathcal{N}(0, I), \kappa \sim \chi^2(\nu) /\nu}\Big[\lambda(\rvx_t, \nu, t)\Big\Vert \rvx_0 - D_\vtheta(\mu_t\rvx_0 + \sigma_t\frac{\epsilon}{\sqrt{\kappa}}, \sigma_t) \Big\Vert^2_2 \Big]
\end{align}
where $\lambda(\rvx_t, \nu, t) = \Big(\frac{\nu + d}{\nu + d_1}\Big)^2$, which is equivalent to a scaled version of the simplified denoising loss Eq. \ref{eq:3}. This concludes the proof. As an additional caveat, the score parameterization in Eq. \ref{eq:t_score} depends on $d_1 = \frac{1}{\sigma_t^2}\Vert\rvx_t - \mu_t\rvx_0\Vert^2_2$, which can be approximated during inference as, $d_1 \approx \frac{1}{\sigma_t^2}\Vert\rvx_t - \mu_t\mD_\theta(\rvx_t, \sigma_t)\Vert^2_2$

\subsection{ODE Reformulation and Connections to Inverse Problems.}
\label{app:inverse}
Recall the ODE dynamics in terms of the denoiser can be specified as,
\begin{equation}
    \frac{d\rvx_t}{dt} = \frac{\dot{\mu}_t}{\mu_t}\rvx_t - \bigg[-\frac{\dot{\sigma}_t}{\sigma_t} + \frac{\dot{\mu}_t}{\mu_t}\bigg](\rvx_t - \mu_t \mD_\vtheta(\rvx_t, \sigma_t))
\end{equation}
From Eq. \ref{eq:t_score}, we have,
\begin{equation}
    \rvx_t - \mu_t D_\vtheta(\rvx_t, \sigma_t) = -\sigma_t^2\bigg(\frac{\nu + d_1}{\nu + d}\bigg)\nabla_{\rvx_t} \log p(\rvx_t, t)
\end{equation}
where $d_1 = \frac{1}{\sigma_t^2}\Vert\rvx_t - \mu_t\rvx_0\Vert^2_2$. Substituting the above result in the ODE dynamics, we obtain the reformulated ODE in Eq. \ref{eq:t_reform_ode}.
\begin{equation}
    \frac{d\rvx_t}{dt} = \frac{\dot{\mu}_t}{\mu_t}\rvx_t + \sigma_t^2\Big(\frac{\nu + d_1}{\nu + d}\Big)\bigg[\frac{\dot{\mu}_t}{\mu_t} - \frac{\dot{\sigma}_t}{\sigma_t}\bigg]\nabla_{\rvx} \log p(\rvx_t, t)
\end{equation}
Since the term $d_1$ is data-dependent, we can estimate it during inference as $d_1 \approx d'_1 = \frac{1}{\sigma_t^2}\Vert\rvx_t - \mu_t\mD_\theta(\rvx_t, \sigma_t)\Vert^2_2$. Thus, the above ODE can be reformulated as,
\begin{equation}
    \frac{d\rvx_t}{dt} = \frac{\dot{\mu}_t}{\mu_t}\rvx_t + \sigma_t^2\Big(\frac{\nu + d'_1}{\nu + d}\Big)\bigg[\frac{\dot{\mu}_t}{\mu_t} - \frac{\dot{\sigma}_t}{\sigma_t}\bigg]\nabla_{\rvx} \log p(\rvx_t, t)
\end{equation}

\textbf{Tweedie's Estimate and Inverse Problems.} Given the perturbation kernel $q(\rvx_t|\rvx_0)=t_d(\mu_t\rvx_0, \sigma_t^2\mI_d, \nu)$, we have,
\begin{equation}
    \rvx_t = \mu_t \rvx_0 + \sigma_t \frac{\epsilon}{\sqrt{\kappa}}, \quad \epsilon \sim \gN(0, \mI_d), \kappa \sim \chi^2(\nu)/\nu
\end{equation}
It follows that,
\begin{align}
    \rvx_t &= \mu_t \E[\rvx_0|\rvx_t] + \sigma_t \E\Big[\frac{\epsilon}{\sqrt{\kappa}}|\rvx_t\Big] \\
    \E[\rvx_0|\rvx_t] &= \frac{1}{\mu_t}\Bigg(\rvx_t - \sigma_t\E\Big[\frac{\epsilon}{\sqrt{\kappa}}|\rvx_t\Big]\Bigg) \\
    &\approx \frac{1}{\mu_t}\Bigg(\rvx_t + \sigma_t^2 \Big(\frac{\nu + d'_1}{\nu + d}\Big)\nabla_{\rvx} \log p(\rvx_t, t)\Bigg)
\end{align}
which gives us an estimate of the predicted $\rvx_0$ at any time $t$. Moreover, the framework can also be extended for solving inverse problems using diffusion models. More specifically, given a conditional signal $\rvy$, the goal is to simulate the ODE,
\begin{align}
    \frac{d\rvx_t}{dt} &= \frac{\dot{\mu}_t}{\mu_t}\rvx_t + \sigma_t^2\Big(\frac{\nu + d'_1}{\nu + d}\Big)\bigg[\frac{\dot{\mu}_t}{\mu_t} - \frac{\dot{\sigma}_t}{\sigma_t}\bigg]\nabla_{\rvx} \log p(\rvx_t|\rvy) \\
    &= \frac{\dot{\mu}_t}{\mu_t}\rvx_t + \sigma_t^2\Big(\frac{\nu + d'_1}{\nu + d}\Big)\bigg[\frac{\dot{\mu}_t}{\mu_t} - \frac{\dot{\sigma}_t}{\sigma_t}\bigg] \Big[w_t\nabla_{\rvx}\log p(\rvy|\rvx_t) + \nabla_{\rvx}\log p(\rvx_t)\Big]
\end{align}
where the above decomposition follows from $p(\rvx_t|y) \propto p(\rvy|\rvx_t)^{w_t}p(\rvx_t)$ and the weight $w_t$ represents the \emph{guidance weight} of the distribution $p(\rvy|\rvx_t)$.
The term $\log p(\rvy|\rvx_t)$ can now be approximated using existing posterior approximation methods in inverse problems \citep{chung2022diffusion, song2022pseudoinverse, mardani2023variational,pandey2024fastsamplersinverseproblems}

\subsection{Connections to Robust Statistical Estimators}
\label{subsec:conn_robust}
Here, we derive the expression for the gradient of the $\gamma$-power divergence between the forward and the reverse posteriors (denoted by $q$ and $p_\theta$, respectively for notational convenience) i.e., $\nabla_\theta\gamm{q}{p_\theta}$. By the definition of the $\gamma$-power divergence,
\begin{equation}
    \gamm{q}{p_
    \theta} = \frac{1}{\gamma}\big[\gC_\gamma(q,p_\theta) - \gH_\gamma(q)\big], \quad \gamma \in (-1, 0) \cup (0, \infty)
\end{equation}
\begin{equation}
    \gH_\gamma(p) = -\norm{p}_{1+\gamma}=-\bigg(\int p(\rvx)^{1+\gamma}d\rvx\bigg)^{\frac{1}{1+\gamma}}\qquad\gC_\gamma(q, p) = -\int q(\rvx) \bigg(\frac{p(\rvx)}{\norm{p}_{1+\gamma}}\bigg)^\gamma d\rvx
\end{equation}
Therefore,
\begin{equation}
    \nabla_\theta \gamm{q}{p_
    \theta} = \frac{1}{\gamma}\nabla_\theta \gC_\gamma(q,p_\theta)
    \label{eq:grad_gdiv}
\end{equation}
Consequently, we next derive an expression for $\nabla_\theta \gC_\gamma(q,p_\theta)$.
\begin{align}
    \nabla_\theta \gC_\gamma(q,p_\theta) &= -\nabla_\theta\int q(\rvx) \bigg(\frac{p_\theta(\rvx)}{\norm{p_\theta}_{1+\gamma}}\bigg)^\gamma d\rvx \\
    &= -\gamma\int q(\rvx) \bigg(\frac{p_\theta(\rvx)}{\norm{p_\theta}_{1+\gamma}}\bigg)^{\gamma-1} \nabla_\theta \Bigg(\frac{p_\theta(\rvx)}{\norm{p_\theta}_{1+\gamma}}\Bigg) d\rvx \\
    &= -\gamma\int q(\rvx) \bigg(\frac{p_\theta(\rvx)}{\norm{p_\theta}_{1+\gamma}}\bigg)^{\gamma-1} \frac{\norm{p_\theta}_{1+\gamma} \nabla_\theta p_\theta(\rvx) - p_\theta(\rvx) \nabla_\theta \norm{p_\theta}_{1+\gamma}}{\norm{p_\theta}_{1+\gamma}^2} d\rvx 
    \label{eq:grad_gce}
\end{align}
From the definition of $\norm{p_\theta}_{1+\gamma}$,
\begin{align}
    \norm{p_\theta}_{1+\gamma} &= \bigg(\int p_\theta(\rvx)^{1+\gamma}d\rvx\bigg)^{\frac{1}{1+\gamma}} \\
    \nabla_\theta \norm{p_\theta}_{1+\gamma}&= \nabla_\theta\bigg(\int p_\theta(\rvx)^{1+\gamma}d\rvx\bigg)^{\frac{1}{1+\gamma}} \\
    &= \frac{1}{1+\gamma}\bigg(\int p_\theta(\rvx)^{1+\gamma}d\rvx\bigg)^{-\frac{\gamma}{1+\gamma}}\int \nabla_\theta \big(p_\theta(\rvx)^{1+\gamma}\big)d\rvx \\
    &= \frac{1}{1+\gamma}\bigg(\int p_\theta(\rvx)^{1+\gamma}d\rvx\bigg)^{-\frac{\gamma}{1+\gamma}}(1 + \gamma)\int p_\theta(\rvx)^{\gamma} \nabla_\theta p_\theta (\rvx) d\rvx \\
    \nabla_\theta \norm{p_\theta}_{1+\gamma}&= \bigg(\int p_\theta(\rvx)^{1+\gamma}d\rvx\bigg)^{-\frac{\gamma}{1+\gamma}}\int p_\theta(\rvx)^{\gamma} \nabla_\theta p_\theta (\rvx) d\rvx \\
    &= \frac{\norm{p_\theta}_{1+\gamma}}{\bigg(\int p_\theta(\rvx)^{1+\gamma}d\rvx\bigg)}\int p_\theta(\rvx)^{\gamma} \nabla_\theta p_\theta (\rvx) d\rvx \\
    &= \frac{\norm{p_\theta}_{1+\gamma}}{\bigg(\int p_\theta(\rvx)^{1+\gamma}d\rvx\bigg)}\int p_\theta(\rvx)^{1+\gamma} \nabla_\theta \log p_\theta (\rvx) d\rvx \\
    &= \norm{p_\theta}_{1+\gamma}\int \underbrace{\frac{p_\theta(\rvx)^{1+\gamma}}{\bigg(\int p_\theta(\rvx)^{1+\gamma}d\rvx\bigg)}}_{=\tilde{p}_\theta(\rvx)} \nabla_\theta \log p_\theta (\rvx) d\rvx \\
    &= \norm{p_\theta}_{1+\gamma}\E_{\tilde{p}_\theta(\rvx)}[\nabla_\theta \log p_\theta (\rvx)]
\end{align}
where we denote $\tilde{p}_\theta(\rvx) = \frac{p_\theta(\rvx)^{1+\gamma}}{\bigg(\int p_\theta(\rvx)^{1+\gamma}d\rvx\bigg)}$ for notational convenience. Substituting the above result in Eq. \ref{eq:grad_gce}, we have the following simplification,

\begin{align}
    \nabla_\theta \gC_\gamma(q,p_\theta) &=  -\gamma\int q(\rvx) \bigg(\frac{p_\theta(\rvx)}{\norm{p_\theta}_{1+\gamma}}\bigg)^{\gamma-1} \frac{\norm{p_\theta}_{1+\gamma} \nabla_\theta p_\theta(\rvx) - p_\theta(\rvx) \nabla_\theta \norm{p_\theta}_{1+\gamma}}{\norm{p_\theta}_{1+\gamma}^2} d\rvx \\
     &=  -\gamma\int q(\rvx) \bigg(\frac{p_\theta(\rvx)}{\norm{p_\theta}_{1+\gamma}}\bigg)^{\gamma-1} \frac{\norm{p_\theta}_{1+\gamma} \nabla_\theta p_\theta(\rvx) - p_\theta(\rvx) \norm{p_\theta}_{1+\gamma}\E_{\tilde{p}_\theta(\rvx)}[\nabla_\theta \log p_\theta (\rvx)] }{\norm{p_\theta}_{1+\gamma}^2} d\rvx \\
     &=  -\gamma\int q(\rvx) \bigg(\frac{p_\theta(\rvx)}{\norm{p_\theta}_{1+\gamma}}\bigg)^{\gamma-1} \frac{\nabla_\theta p_\theta(\rvx) - p_\theta(\rvx) \E_{\tilde{p}_\theta(\rvx)}[\nabla_\theta \log p_\theta (\rvx)] }{\norm{p_\theta}_{1+\gamma}} d\rvx \\
     &=  -\gamma\int q(\rvx) \bigg(\frac{p_\theta(\rvx)}{\norm{p_\theta}_{1+\gamma}}\bigg)^{\gamma-1} \frac{p_\theta(\rvx) \nabla_\theta \log p_\theta(\rvx) - p_\theta(\rvx) \E_{\tilde{p}_\theta(\rvx)}[\nabla_\theta \log p_\theta (\rvx)] }{\norm{p_\theta}_{1+\gamma}} d\rvx \\
\end{align}
\begin{align}
     \nabla_\theta \gC_\gamma(q,p_\theta)&=  -\gamma\int q(\rvx) \bigg(\frac{p_\theta(\rvx)}{\norm{p_\theta}_{1+\gamma}}\bigg)^{\gamma} \Big(\nabla_\theta \log p_\theta(\rvx) - \E_{\tilde{p}_\theta(\rvx)}[\nabla_\theta \log p_\theta (\rvx)]\Big) d\rvx
\end{align}
Plugging this result in Eq. \ref{eq:grad_gdiv}, we have the following result,
\begin{equation}
    \nabla_\theta \gamm{q}{p_
    \theta} = \frac{1}{\gamma}\nabla_\theta \gC_\gamma(q,p_\theta) = -\int q(\rvx) \bigg(\frac{p_\theta(\rvx)}{\norm{p_\theta}_{1+\gamma}}\bigg)^{\gamma} \Big(\nabla_\theta \log p_\theta(\rvx) - \E_{\tilde{p}_\theta(\rvx)}[\nabla_\theta \log p_\theta (\rvx)]\Big) d\rvx
    \label{eq:gamm_grad_final}
\end{equation}
This completes the proof. Intuitively, the second term inside the integral in Eq. \ref{eq:gamm_grad_final} ensures \emph{unbiasedness} of the gradients. Therefore, the scalar coefficient $\gamma$ controls the weighting on the likelihood gradient and can be set accordingly to ignore or model outliers when modeling the data distribution.

\begin{figure}[t]
\begin{minipage}[t]{0.495\textwidth}
\begin{algorithm}[H]
  \caption{Training (t-Flow)} \label{alg:training}
  \small
  \begin{algorithmic}[1]
    \REPEAT
      \STATE $\rvx_1 \sim p(\rvx_1)$
      \STATE $t \sim \mathrm{Uniform}(\{1, \dotsc, T\})$
      \STATE $\mu_t = t, \sigma_t = 1-t$
      \STATE \textcolor{blue}{$\rvx_t = \mu_t \rvx_1 + \sigma_t\rvn, \quad \rvn \sim t_d(0,\mI_d, \nu)$}
      \STATE Take gradient descent step on
      \STATE $\qquad \nabla_\vtheta \left\|\rvn - \boldsymbol{\epsilon}_\vtheta(\rvx_t, \sigma_t) \right\|^2$
    \UNTIL{converged}
  \end{algorithmic}
\end{algorithm}
\end{minipage}
\hspace{0.5cm}
\begin{minipage}[t]{0.495\textwidth}
\begin{algorithm}[H]
\footnotesize
\caption{Sampling (t-Flow)}
\begin{spacing}{1.1}
\begin{algorithmic}[1]
    \STATE{\textcolor{blue}{{\bf sample} $\rvx_0 \sim t_d\big(0, \mI_d, \nu\big)$}}
    \FOR{$i \in \{0, \dots, N-1\}$}
      \STATE{$\vd_i \gets \big( \rvx_i - \boldsymbol{\epsilon}_\vtheta(\rvx_i; \sigma_{t_i}) \big) / t_i$}
      \STATE{$\rvx_{i+1} \gets \rvx_i + (t_{i+1} - t_i) \vd_i$}
      \IF{$t_{i+1} \ne 0$}
        \STATE{$\vd_i \gets \big( \rvx_{i+1} - \boldsymbol{\epsilon}_\vtheta(\rvx_{i+1}; \sigma_{t_{i+1}}) \big) / t_{i+1} $}
        \STATE{$\rvx_{i+1} \gets \rvx_i + (t_{i+1} - t_i) \big( \frac{1}{2}\vd_i + \frac{1}{2}\ddp_i \big)$}
      \ENDIF
    \ENDFOR
    \STATE{\textbf{return} $\rvx_N$}
\end{algorithmic}
\end{spacing}
\end{algorithm}
\end{minipage}
\caption{ Training and Sampling algorithms for t-Flow. Our proposed method requires minimal code updates (indicated with \textcolor{blue}{blue}) over traditional Gaussian flow models and converges to the latter as $\nu \rightarrow \infty$.}
\label{fig:compare_train_flow}
\end{figure}

\section{Extension to Flows}
\label{app:t_flow}
Here, we discuss an extension of our framework to flow matching models \citep{albergo2023stochastic, lipman2023flow} with a Student-t base distribution. More specifically, we define a \emph{straight-line} flow of the form,
\begin{equation}
    \rvx_t = t \rvx_1 + (1 - t) \frac{\epsilon}{\sqrt{\kappa}}, \quad \epsilon \sim \gN(0, \mI_d), \kappa \sim \chi^2(\nu)/\nu
    \label{eqn:interp}
\end{equation}
where $\rvx_1 \sim p(\rvx_1)$. Intuitively, at a given time $t$, the flow defined in Eqn. \ref{eqn:interp} linearly interpolates between data and Student-t noise. Following \citet{albergo2023stochastic}, the conditional vector field which induces this interpolant can be specified as (proof in App.~\ref{app:t_interpolant})
\begin{equation}
    \frac{d\rvx_t}{dt} = \vb(\rvx_t, t) = \frac{\rvx_t - \E\big[\frac{\epsilon}{\sqrt{\kappa}}|\rvx_t\big]}{t}.
    \label{eq:flow_ode}
\end{equation}
We estimate $\E\big[\frac{\epsilon}{\sqrt{\kappa}}|\rvx_t\big]$ by minimizing the objective
\begin{equation}
    \gL(\theta) \;\;=\;\; \E_{\rvx_0 \sim p(\rvx_0)}\E_{t \sim \gU[0,1]}\E_{\epsilon \sim \gN(0, \mI_d)}\E_{\kappa \sim \chi^2(\nu)/\nu}\Big[\Big\Vert \epsilon_\vtheta(t\rvx_0 + (1-t)\frac{\epsilon}{\sqrt{\kappa}}, t) - \frac{\epsilon}{\sqrt{\kappa}} \Big\Vert^2_2\Big].
    \label{eq:flow_loss}
\end{equation}
We refer to this flow setup as \emph{t-Flow}. To generate samples from our model, we simulate the ODE in Eq.~\ref{eq:flow_ode} using Heun's solver. Figure \ref{fig:compare_train_flow} illustrates the ease of transitioning from a Gaussian flow to t-Flow. Similar to Gaussian diffusion, transitioning to t-Flow requires very few lines of code change, making our method readily compatible with existing implementations of flow models.

\section{Experimental Setup}
\label{app:exp}

\subsection{Unconditional Modeling}
\label{app:exp_uncond}
\subsubsection{HRRR Dataset}
\label{app:hrrr_uncond}
We adopt the High-Resolution Rapid Refresh (HRRR) \citep{HRRR} dataset, which is an operational archive of the US km-scale forecasting model. Among multiple dynamical variables in the dataset that exhibit heavy-tailed behavior, based on their dynamic range, we only consider the \emph{Vertically Integrated Liquid} (VIL) and \emph{Vertical Wind Velocity} at level 20 (denoted as w20) channels. How to cope with the especially non-Gaussian nature of such physical variables on km-scales, represents an entire subfield of climate model subgrid-scale parameterization (e.g., \citet{GuoCLUBB}). We only use data for the years $2019-2020$ for training (17.4k samples) and the data for 2021 (8.7k samples) for testing; data before 2019 are avoided owing to non-stationaries associated with periodic version changes of the HRRR. Lastly, while the HRRR dataset spans the entire US, for simplicity, we work with regional crops of size $128 \times 128$ (corresponding to $384 \times 384$ km over the Central US). Unless specified otherwise, we perform z-score normalization using precomputed statistics as a preprocessing step. We do not perform any additional data augmentation.

\begin{figure}[t]
    \centering
    \includegraphics[width=1.0\linewidth]{./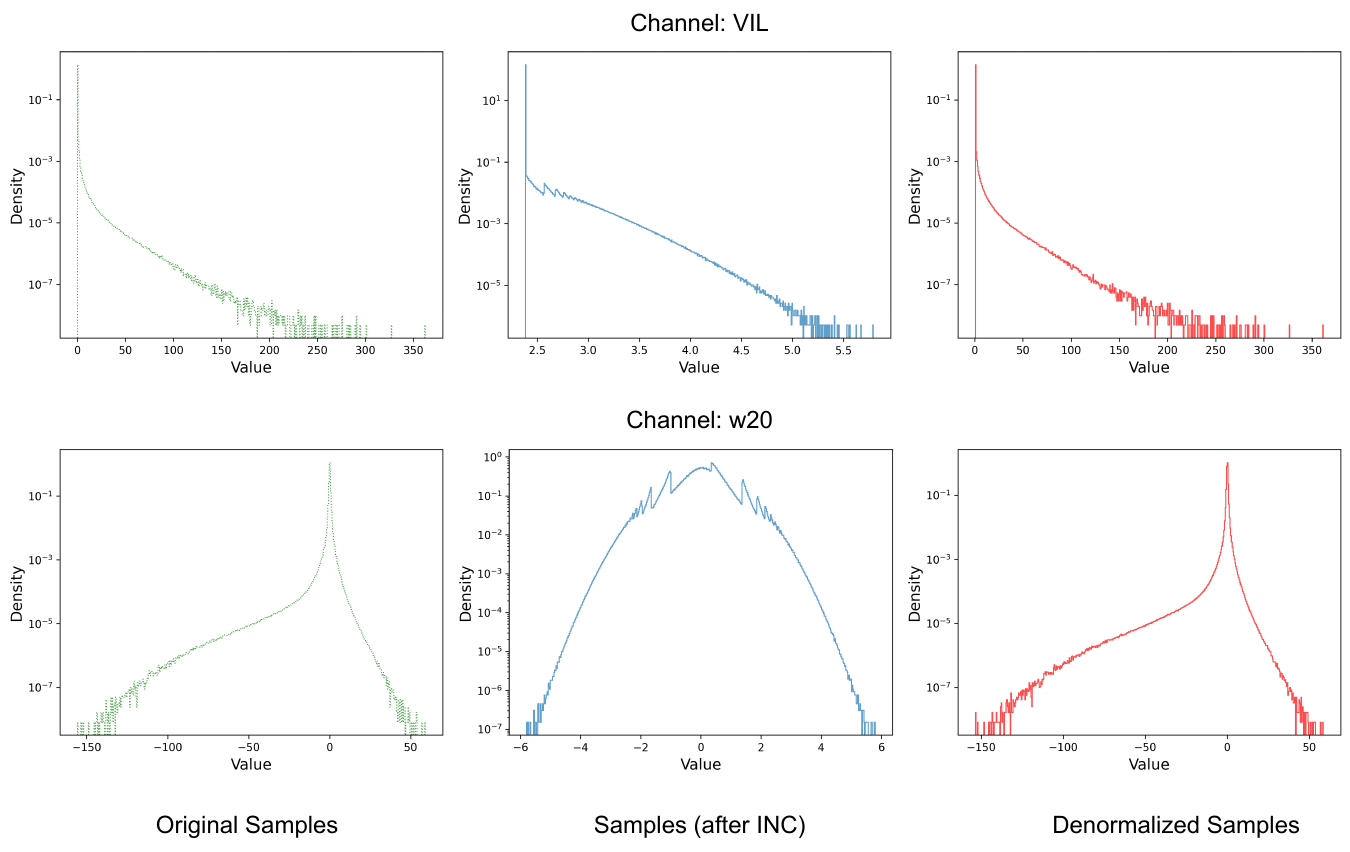}
    \caption{\textbf{Inverse CDF Normalization.} Using Inverse CDF Normalization (INC) can help reduce channel dynamic range during training while providing accurate denormalization. (Top Panel) INC applied to the Vertically Integrated Liquid (VIL) channel in the HRRR dataset. (Bottom Panel) INC applied to the Vertical Wind Velocity (w20) channel in the HRRR dataset.}
    \label{fig:inc_viz}
\end{figure}

\subsubsection{Baselines}
\label{s2sec:baselines}
\textbf{Baseline 1: EDM.} For standard Gaussian diffusion models, we use the recently proposed EDM \citep{karras2022elucidatingdesignspacediffusionbased} model, which shows strong empirical performance on various image synthesis benchmarks and has also been employed in recent work in weather forecasting and downscaling \citep{pathak2024kilometerscaleconvectionallowingmodel, mardani2024residualcorrectivediffusionmodeling}. To summarize, EDM employs the following denoising loss during training,
\begin{equation}
    \gL(\theta) \;\;\propto\;\; \E_{\rvx_0 \sim p(\rvx_0)}\E_{\sigma}\E_{\rvn \sim \gN(0, \sigma^2\mI_d)}\big[\lambda(\sigma)\Vert \mD_\vtheta(\rvx_0 + \rvn, \sigma) - \rvx_0 \Vert^2_2\big]
    \label{eq:edm_denoiser_loss}
\end{equation}
where the noise levels $\sigma$ are usually sampled from a LogNormal distribution, $p(\sigma)=\text{LogNormal}(\pi_\text{mean}, \pi_\text{std}^2)$

\textbf{Baseline 2: EDM + Inverse CDF Normalization (INC).} It is commonplace to perform z-score normalization as a data pre-processing step during training. However, since heavy-tailed channels usually exhibit a high dynamic range, using z-score normalization for such channels cannot fully compensate for this range, especially when working with diverse channels in downstream tasks in weather modeling. An alternative could be to use a more \emph{stronger} normalization scheme like Inverse CDF normalization, which essentially involves the following key steps:
\begin{enumerate}
    \item Compute channel-wise 1-d histograms of the training data.
    \item Compute channel-wise empirical CDFs from the 1-d histograms computed in Step 1.
    \item Use the empirical CDFs from Step 2 to compute the CDF at each spatial location.
    \item For each spatial location with a CDF value $p$, replace its value by the value obtained by applying the Inverse CDF operation under the standard Normal distribution.
\end{enumerate}
Fig. \ref{fig:inc_viz} illustrates the effect of performing normalization under this scheme. As can be observed, using such a normalization scheme can greatly reduce the dynamic range of a given channel while offering reliable denormalization. Moreover, since our normalization scheme only affects data preprocessing, we leave the standard EDM model parameters unchanged for this baseline.

\textbf{Baseline 3: EDM + Per Channel-Preconditioning (PCP).} Another alternative to account for extreme values in the data (or high dynamic range) could be to instead add more heavy-tailed noise during training. This can be controlled by modulating the $\pi_\text{mean}$ and $\pi_\text{std}$ parameters based on the dynamic range of the channel under consideration. Recall that these parameters control the domain of noise levels $\sigma$ used during EDM model training. In this work, we use a simple heuristic to modulate these parameters based on the normalized channel dynamic range (denoted as $d$). More specifically, we set,
\begin{equation}
    \pi_\text{mean} = -1.2 + \alpha * \text{RBF}(d, 1.0, \beta)
\end{equation}
where RBF denotes the Radial Basis Function kernel with radius=1.0, parameter $\beta$ and a magnitude scaling factor $\alpha$. We keep $\pi_\text{std}=1.2$ fixed for all channels. Intuitively, this implies that a higher normalized dynamic range (near 1.0) corresponds to sampling the noise levels $\sigma$ from a more heavy-tailed distribution during training. This is natural since a signal with a larger magnitude would need more amount of noise to convert it to noise during the forward diffusion process. In this work, we set $\alpha=3.0$, $\beta=2.0$, which yields $\pi_\text{mean}^\text{vil}=1.8$ and $\pi_\text{mean}^\text{w20}=0.453$ for the VIL and w20 channels, respectively. It is worth noting that, unlike the previous baseline, we use z-score normalization as a preprocessing step for this baseline. Lastly, we keep other EDM parameters during training and sampling fixed.

\textbf{Baseline 4. Gaussian Flow.} Since we also extend our framework to flow matching models \citep{albergo2023stochastic, lipman2023flow}, we also compare with a linear one-sided interpolant with a Gaussian base distribution. More specifically, 
\begin{equation}
    \rvx_t = t \rvx_1 + (1 - t) \epsilon, \quad \epsilon \sim \gN(0, \mI_d)
    \label{eqn:gaussian_interp}
\end{equation}
Similar to t-Flow (Section \ref{app:t_flow}), we train the Gaussian flow with the following objective,
\begin{equation}
    \gL(\theta) \;\;=\;\; \E_{\rvx_0 \sim p(\rvx_0)}\E_{t \sim \gU[0,1]}\E_{\epsilon \sim \gN(0, \mI_d)}\Big[\Big\Vert \epsilon_\vtheta(t\rvx_0 + (1-t)\epsilon, t) - \epsilon \Big\Vert^2_2\Big].
    \label{eq:gauss_flow_loss}
\end{equation}

\subsubsection{Evaluation}
\label{s2sec:metrics}
Here, we describe our scoring protocol used in Tables \ref{table:uncond_results} and \ref{table:uncond_results_flow} in more detail.

\textbf{Kurtosis Ratio (KR).} Intuitively, sample kurtosis characterizes the heavy-tailed behavior of a distribution and represents the fourth-order moment. Higher kurtosis represents greater deviations from the central tendency, such as from outliers in the data. In this work, given samples from the underlying train/test set, we generate 20k samples from our model. We then flatten all the samples and compute empirical kurtosis for both the underlying samples from the train/test set (denoted as $k_\text{data}$) and our model (denoted as $k_\text{sim}$). The Kurtosis ratio is then computed as,
\begin{equation}
    \text{KR} = \Big|1 - \frac{k_\text{sim}}{k_\text{data}}\Big|
\end{equation}
Lower values of this ratio imply a better estimation of the underlying sample kurtosis.

\textbf{Skewness Ratio (SR).} Intuitively, sample skewness represents the asymmetry of a tailed distribution and represents the third-order moment. In this work, given samples from the underlying train/test set, we generate 20k samples from our model. We then flatten all the samples and compute empirical skewness for both the underlying samples from the train/test set (denoted as $s_\text{data}$) and our model (denoted as $s_\text{sim}$). The Skewness ratio is then computed as,
\begin{equation}
    \text{SR} = \Big|1 - \frac{s_\text{sim}}{s_\text{data}}\Big|
\end{equation}
Lower values of this ratio imply a better estimation of the underlying sample skewness.

\textbf{Kolmogorov-Smirnov 2-Sample Test (KS).} The KS \citep{ks_test} statistic measures the maximum difference between the CDFs of two distributions. For heavy-tailed distributions, evaluating the KS statistic at the tails could provide a useful measure of the efficacy of our model in estimating tails reliably. To evaluate the KS statistic at the tails, similar to prior metrics, we generate 20k samples from our model. We then flatten all samples in the generated and train/test sets, followed by retaining samples lying above the 99.9th percentile (quantifying right tails/extreme region) or below the 0.1th percentile (quantifying the left tail/extreme region). Lastly, we compute the KS statistic between the retained samples from the generated and the train/test sets individually for each tail and average the KS statistic values for both tails to obtain an average KS score. The final score estimates how well the model might capture both tails. As an exception,. for the VIL channel, we report KS scores only for the right tail due to the absence of a left tail in the underlying samples for this channel (see Fig. \ref{fig:inc_viz} (first column) for 1-d intensity histograms for this channel). Lower values of the KS statistic imply better density assignment at the tails by the model.

\textbf{Histogram Comparisons.} As a qualitative metric, comparing 1-d intensity histograms between the generated and the original samples from the train/test set can serve as a reliable proxy to assess the tail estimation capabilities of all models.

\begin{table}[t]
\centering
\scriptsize
\begin{tabular}{ccccc}
\toprule
                                 & Parameters                               & EDM (+INC, +PCP)                                                                                                                   & t-EDM                                                                                                                & Flow/t-Flow                                                                                                                                                                                                     \\ \midrule
\multirow{4}{*}{Preconditioner}  & $\cskip$                                 & $\sdata / \bigg( \usnoise + \sdata \bigg)$                                                                                         & $\sdata / \bigg( \snoise + \sdata \bigg)$                                                                            & 0                                                                                                                                                                                                               \\
                                 & $\cout$                                  & $\sigma \cdot \sdata \big/ \sqrt{\usnoise + \sdata}$                                                                               & $\sqrt{\frac{\nu}{\nu - 2}}\sigma \cdot \sdata \big/ \sqrt{\snoise + \sdata}$                                        & 1                                                                                                                                                                                                               \\
                                 & $\cin$                                   & $1 \big/ \sqrt{\usnoise + \sdata}$                                                                                                 & $1 \big/ \sqrt{\snoise + \sdata}$                                                                                    & 1                                                                                                                                                                                                               \\
                                 & $\cnoise$                                & $\frac{1}{4} \log {\sigma}$                                                                                                        & $\frac{1}{4} \log {\sigma}$                                                                                          & $\sigma$                                                                                                                                                                                                        \\ \midrule
\multirow{4}{*}{Training}        & $\sigma$                                 & $\log{\sigma} \sim \gN(\pi_\text{mean}, \pi_\text{std}^2)$                                                                         & $\log{\sigma} \sim \gN(\pi_\text{mean}, \pi_\text{std}^2)$                                                           & $\sigma=1-t$, $t \sim \gU(0, 1)$                                                                                                                                                                                \\
                                 & $\mu_t$                                  & 1                                                                                                                                  & 1                                                                                                                    & t                                                                                                                                                                                                               \\
                                 & $\lambda(\sigma)$                        & $1 / \cout^2(\sigma)$                                                                                                              & $1 / \cout^2(\sigma, \nu)$                                                                                           & 1                                                                                                                                                                                                               \\
                                 & Loss                                     & Eq. \ref{eq:edm_denoiser_loss}                                                   & Eq. \ref{eq:denoiser_loss}                                                                         & \begin{tabular}[c]{@{}c@{}}t-Flow - Eq. \ref{eq:flow_loss}\\ Gaussian Flow - Eq. \ref{eq:gauss_flow_loss}\end{tabular}                                                     \\ \midrule
\multirow{5}{*}{Sampling}        & Solver                                   & Heun's (2nd order)                                                                                                                 & Heun's (2nd order)                                                                                                   & Heun's (2nd order)                                                                                                                                                                                              \\
                                 & ODE                                      & Eq. \ref{eq:t_ode}, $\rvx_T \sim \gN(0, \mI_d)$                                                                  & Eq. \ref{eq:t_ode}, $\rvx_T \sim t_d(0, \mI_d, \nu)$                                               & \begin{tabular}[c]{@{}c@{}}t-Flow: Eq. \ref{eq:flow_ode}, $\rvx_0 \sim t_d(0, \mI_d, \nu)$\\ Flow: Eq. \ref{eq:flow_ode}, $\rvx_0 \sim \gN(0, \mI_d)$\end{tabular} \\
                                 & Discretization                           & $\big(\smax^\frac{1}{\rho} + \frac{i}{N-1}\big[\smin^\frac{1}{\rho} - \smax^\frac{1}{\rho}\big]\big)^\frac{1}{\rho}$               & $\big(\smax^\frac{1}{\rho} + \frac{i}{N-1}\big[\smin^\frac{1}{\rho} - \smax^\frac{1}{\rho}\big]\big)^\frac{1}{\rho}$ & $\big(\smax^\frac{1}{\rho} + \frac{i}{N-1}\big[\smin^\frac{1}{\rho} - \smax^\frac{1}{\rho}\big]\big)^\frac{1}{\rho}$                                                                                            \\
                                 & Scaling: $\mu_t$                         & 1                                                                                                                                  & 1                                                                                                                    & t                                                                                                                                                                                                               \\
                                 & Schedule: $\sigma_t$                     & t                                                                                                                                  & t                                                                                                                    & 1-t                                                                                                                                                                                                             \\ \midrule
\multicolumn{1}{c|}{\multirow{6}{*}{Hyperparameters}} & $\sigma_\text{data}$                     & 1.0                                                                                                                                & 1.0                                                                                                                  & N/A                                                                                                                                                                                                             \\ \cmidrule(l){2-5}
                          \multicolumn{1}{c|}{}       & $\nu$                                    & $\infty$                                                                                                                           & \begin{tabular}[c]{@{}c@{}}VIL: $\{3,5,7\}$\\ w20: $\{3,5,7\}$\end{tabular}                                          & \begin{tabular}[c]{@{}c@{}}Flow: $\infty$\\ t-Flow ($\downarrow$) \\ VIL=$\{3,5,7\}$ \\ w20=$\{5,7,9\}$\end{tabular}                                                                                            \\ \cmidrule(l){2-5}
                          \multicolumn{1}{c|}{}       & $\pi_\text{mean}$, $\pi_\text{std}$      & \begin{tabular}[c]{@{}c@{}}EDM: -1.2, 1.2\\ EDM (+INC) : -1.2, 1.2\\ EDM (+PCP) ($\downarrow$)\\ VIL: 1.8, 1.2\\ w20: 0.453, 1.2\end{tabular} & -1.2, 1.2                                                                                                            & N/A                                                                                                                                                                                                             \\ \cmidrule(l){2-5}
                         \multicolumn{1}{c|}{}        & $\sigma_\text{max}$, $\sigma_\text{min}$ & 80, 0.002                                                                                                                          & 80, 0.002                                                                                                            & 1.0, 0.01                                                                                                                                                                                                       \\
                     \multicolumn{1}{c|}{}            & NFE                                      & 18                                                                                                                                 & 18                                                                                                                   & 25                                                                                                                                                                                                              \\
                   \multicolumn{1}{c|}{}              & $\rho$                                   & 7                                                                                                                                  & 7                                                                                                                    & 7                                                                                                                                                                                                               \\ \bottomrule
\end{tabular}
\caption{Comparison between design choices and specific hyperparameters between EDM \citep{karras2022elucidatingdesignspacediffusionbased} (+ related baselines) and t-EDM (Ours, Section \ref{subsec:t_edm}) for unconditional modeling (Section \ref{sec:uncond_gen}). INC: Inverse CDF Normalization baselines, PCP: Per-Channel Preconditioning baseline, VIL: Vertically Integrated Liquid channel in the HRRR dataset, w20: Vertical Wind Velocity at level 20 channel in the HRRR dataset, NFE: Number of Function Evaluations}
\label{table:app_uncond_hparams}
\end{table}

\subsubsection{Denoiser Architecture}
We use the DDPM++ architecture from \citep{karras2022elucidatingdesignspacediffusionbased, songscore}. We set the base channel multiplier to 32 and the per-resolution channel multiplier to [1,2,2,4,4] with self-attention at resolution 16. The rest of the hyperparameters remain unchanged from \citet{karras2022elucidatingdesignspacediffusionbased}, which results in a model size of around 12M parameters.

\subsubsection{Training}
We adopt the same training hyperparameters from \citet{karras2022elucidatingdesignspacediffusionbased} for training all models. Model training is distributed across 4 DGX nodes, each with 8 A100 GPUs, with a total batch size of 512. We train all models for a maximum budget of 60Mimg and select the best-performing model in terms of qualitative 1-D histogram comparisons.

\subsubsection{Sampling}
\label{app:uncond_sampling}
For the EDM and related baselines (INC, PCP), we use the ODE solver presented in \citet{karras2022elucidatingdesignspacediffusionbased}. For the t-EDM models, as presented in Section \ref{subsec:t_edm}, our sampler is the same as EDM with the only difference in the sampling of initial latents from a Student-t distribution instead (See Fig. \ref{fig:compare_train} (Right)). 
For Flow and t-Flow, we numerically simulate the ODE in Eq. \ref{eq:flow_ode} using the 2nd order Heun's solver with the timestep discretization proposed in \citet{karras2022elucidatingdesignspacediffusionbased}. For evaluation, we generate 20k samples from each model.

We summarize our experimental setup in more detail for unconditional modeling in Table \ref{table:app_uncond_hparams}.

\subsubsection{Extended Results on Unconditional Modeling}
\textbf{Sample Visualization.} We visualize samples generated from the t-EDM and t-Flow models for the VIL and w20 channels in Figs. \ref{fig:vil_diff_viz}-\ref{fig:w20_flow_viz}

\textbf{Visualization of 1-d histograms.} Similar to Fig. \ref{fig:hist_vil}, we present additional results on histogram comparisons between different baselines and our proposed methods for the VIL and w20 channels in Figs. \ref{fig:hists_tedm_extra} and \ref{fig:hists_tflow_extra}.

\subsection{Conditional Modeling}
\label{app:exp_conditional}

\begin{table}[]
\centering
\footnotesize
\begin{tabular}{@{}ccc@{}}
\toprule
Parameter                 & Model Levels                     & Height Levels (m) \\ \midrule
Zonal Wind (u)            & 1,2,3,4,5,6,7,8,9,10,11,13,15,20 & 10                \\
Meridonal Wind (v)        & 1,2,3,4,5,6,7,8,9,10,11,13,15,20 & 10                \\
Geopotential Height (z)   & 1,2,3,4,5,6,7,8,9,10,11,13,15,20 & None              \\
Humidity (q)              & 1,2,3,4,5,6,7,8,9,10,11,13,15,20 & None              \\
Pressure (p)              & 1,2,3,4,5,6,7,8,9,10,11,13,15,20 & None              \\
Temperature (t)           & 1,2,3,4,5,6,7,8,9,10,11,13,15,20 & 2                 \\
Radar Reflectivity (refc) & N/A                              & Integrated        \\ \bottomrule
\end{tabular}
\caption{Parameters in the HRRR dataset used for conditional modeling tasks.}
\label{tab:hrrr_variables}
\end{table}

\subsubsection{HRRR Dataset for Conditional Modeling}
Similar to unconditional modeling (See App. \ref{app:hrrr_uncond}), we use the HRRR dataset for conditional modeling at the 128 x 128 resolution. More specifically, for a lead time of 1hr, we sample (input, output) pairs at time $\tau$ and $\tau + 1$, respectively. For the input, at time $\tau$, we use a state vector consisting of a combination of 86 atmospheric channels (including the channel to be predicted at time $\tau$), which are summarized in Table \ref{tab:hrrr_variables}. For the output, at time $\tau+1$, we use either the Vertically Integrated Liquid (VIL) or Vertical Wind Velocity at level 20 (w20) channels, depending on the prediction task. Unless specified otherwise, we perform z-score normalization using precomputed statistics as a preprocessing step without any additional data augmentation.

\subsubsection{Baselines}
We adopt the standard EDM \citep{karras2022elucidatingdesignspacediffusionbased} for conditional modeling as our baseline.

\subsubsection{Denoiser Architecture}
We use the DDPM++ architecture from \citep{karras2022elucidatingdesignspacediffusionbased, songscore}. We set the base channel multiplier to 32 and the per-resolution channel multiplier to [1,2,2,4,4] with self-attention at resolution 16. Additionally, our noisy state $\rvx$ is channel-wise concatenated with an 86-channel conditioning signal, increasing the total number of input channels in the denoiser to 87. The number of output channels remains 1 since we are predicting only a single VIL/w20 channel. However, the increase in the number of parameters is minimal since only the first convolutional layer in the denoiser is affected. Therefore, our denoiser is around 12M parameters. The rest of the hyperparameters remain unchanged from \citet{karras2022elucidatingdesignspacediffusionbased}.

\subsubsection{Training}
We adopt the same training hyperparameters from \citet{karras2022elucidatingdesignspacediffusionbased} for training all conditional models. Model training is distributed across 4 DGX nodes, each with 8 A100 GPUs, with a total batch size of 512. We train all models for a maximum budget of 60Mimg.

\subsubsection{Sampling}
For both EDM and t-EDM models, we use the ODE solver presented in \citet{karras2022elucidatingdesignspacediffusionbased}. For the t-EDM models, as presented in Section \ref{subsec:t_edm}, our sampler is the same as EDM with the only difference in the sampling of initial latents from a Student-t distribution instead (See Fig. \ref{fig:compare_train} (Right)). For a given input conditioning state, we generate an ensemble of predictions of size 16 by randomly initializing our ODE solver with different random seeds. All other sampling parameters remain unchanged from our unconditional modeling setup (see App. \ref{app:uncond_sampling}).

\subsubsection{Evaluation}
\textbf{Root Mean Square Error (RMSE)}.~is a standard evaluation metric used to measure the difference between the predicted values and the true values \cite{chai2014root}. In the context of our problem, let $\vx$ be the true target and $\hat{\vx}$ be the predicted value. The RMSE is defined as:

\[
\text{RMSE} = \sqrt{\mathbb{E} \left[ \| \vx - \hat{\vx} \|^2 \right]}.
\]

This metric captures the average magnitude of the residuals, i.e., the difference between the predicted and true values. A lower RMSE indicates better model performance, as it suggests the predicted values are closer to the true values on average. RMSE is sensitive to large errors, making it an ideal choice for evaluating models where minimizing large deviations is critical.

\textbf{Continuous Ranked Probability Score (CRPS)}.~is a measure used to evaluate probabilistic predictions \cite{wilks2011statistical}. It compares the entire predicted distribution $F(\hat{\vx})$ with the observed data point $\vx$. For a probabilistic forecast with cumulative distribution function (CDF) $F$, and the true value $\vx$, the CRPS can be formulated as follows:

\[
\text{CRPS}(F, \vx) = \int_{-\infty}^{\infty} \left( F(y) - \mathbb{I}(y \geq \vx) \right)^2 \, dy,
\]

where $\mathbb{I}(\cdot)$ is the indicator function. Unlike RMSE, CRPS provides a more comprehensive evaluation of both the location and spread of the predicted distribution. A lower CRPS indicates a better match between the forecast distribution and the observed data. It is especially useful for probabilistic models that output a distribution rather than a single-point prediction.

\textbf{Spread-Skill Ratio (SSR)}.~is used to assess over/under-dispersion in probabilistic forecasts. Spread measures the uncertainty in the ensemble forecasts and can be represented by computing the standard deviation of the ensemble members. Skill represents the accuracy of the mean of the ensemble forecasts and can be represented by computing the RMSE between the ensemble mean and the observations.

\textbf{Scoring Criterion.} Since CRPS and SSR metrics are based on predicting ensemble forecasts for a given input state, we predict an ensemble of size 16 for 4000 samples from the VIL/w20 test set. We then enumerate window sizes of 16 x 16 across the spatial resolution of the generated sample (128 x 128). Since the VIL channel is quite sparse, we filter out windows with a maximum value of less than a threshold (1.0 for VIL) and compute the CRPS, SSR, and RMSE metrics for all remaining windows. As an additional caveat, we note that while it is common to roll out trajectories for weather forecasting, in this work, we only predict the target at the immediate next time step.

\subsubsection{Extended Results on Conditional Modeling}
\textbf{Sample Visualization.} We visualize samples generated from the t-EDM and t-Flow models for the VIL and w20 channels in Figs. \ref{fig:vil_cond} and \ref{fig:w20_cond}

\subsection{Toy Experiments}
\label{app:exp_toy}

\begin{table}[t]
\centering
\footnotesize
\begin{tabular}{@{}cccc@{}}
\toprule
                                 & Parameters                               & EDM                                                                           & t-EDM                                                                         \\ \midrule
\multirow{4}{*}{Preconditioner}  & $\cskip$                                 & $\sdata / \bigg( \usnoise + \sdata \bigg)$                                     & $\sdata / \bigg( \snoise + \sdata \bigg)$                                     \\
                                 & $\cout$                                  & $\sigma \cdot \sdata \big/ \sqrt{\usnoise + \sdata}$ & $\sqrt{\frac{\nu}{\nu - 2}}\sigma \cdot \sdata \big/ \sqrt{\snoise + \sdata}$ \\
                                 & $\cin$                                   & $1 \big/ \sqrt{\usnoise + \sdata}$                                             & $1 \big/ \sqrt{\snoise + \sdata}$                                             \\
                                 & $\cnoise$                                & $\frac{1}{4} \log {\sigma}$                                                   & $\frac{1}{4} \log {\sigma}$                                                   \\ \midrule
\multirow{4}{*}{Training}        & $\sigma$                                 & $\log{\sigma} \sim \gN(\pi_\text{mean}, \pi_\text{std}^2)$                   & $\log{\sigma} \sim \gN(\pi_\text{mean}, \pi_\text{std}^2)$                   \\
& $s(t)$ & 1 & 1 \\
                                 & $\lambda(\sigma)$                        & $1 / \cout^2(\sigma)$                                                                 & $1 / \cout^2(\sigma, \nu)$ \\
                                 & Loss & Eq. 2 in \citet{karras2022elucidatingdesignspacediffusionbased} & Eq. \ref{eq:denoiser_loss}
                                 \\ \midrule
\multirow{5}{*}{Sampling}        & Solver                                   & Heun's (2nd order)                                                            & Heun's (2nd order)                                                            \\
                                 & ODE                                      & Eq. \ref{eq:t_ode}, $\rvx_T \sim \gN(0, \mI_d)$         & Eq. \ref{eq:t_ode}, $\rvx_T \sim t_d(0, \mI_d, \nu)$    \\
                                 & Discretization                  & $\big(\smax^\frac{1}{\rho} + \frac{i}{N-1}\big[\smin^\frac{1}{\rho} - \smax^\frac{1}{\rho}\big]\big)^\frac{1}{\rho}$                                                    & $\big(\smax^\frac{1}{\rho} + \frac{i}{N-1}\big[\smin^\frac{1}{\rho} - \smax^\frac{1}{\rho}\big]\big)^\frac{1}{\rho}$                                                    \\
                                 & Scaling: $s(t)$                          & 1                                                                             & 1                                                                             \\
                                 & Schedule: $\sigma(t)$                    & t                                                                             & t                                                                             \\ \midrule
\multirow{6}{*}{Hyperparameters} & $\sigma_\text{data}$                     & 1.0                                                                           & 1.0                                                                           \\
                                 & $\nu$                                     & $\infty$                                                                      & $\rvx_1=20$, $\rvx_2 \in \{4,7,10\}$                                          \\
                                 & $\pi_\text{mean}$, $\pi_\text{std}$          & -1.2, 1.2                                                                     & -1.2, 1.2                                                                     \\
                                 & $\sigma_\text{max}$, $\sigma_\text{min}$ & 80, 0.002                                                                     & 80, 0.002                                                                     \\
                                 & NFE                     & 18                                                                            & 18                                                                            \\
                                 & $\rho$                                   & 7                                                                             & 7                                                                             \\ \bottomrule 
\end{tabular}
\caption{Comparison between design choices and specific hyperparameters between EDM \citep{karras2022elucidatingdesignspacediffusionbased} and t-EDM (Ours, Section \ref{subsec:t_edm}) for the Toy dataset analysis in Fig. \ref{fig:toy}. NFE: Number of Function Evaluations}
\label{table:app_toy_hparams}
\end{table}

\textbf{Dataset.} For the toy illustration in Fig. \ref{fig:toy}, we work with the Neals Funnel dataset \citep{neals_funnel}, which is commonly used in the MCMC literature \citep{Brooks_2011} due to its challenging geometry. The underlying generative process for Neal's funnel can be specified as follows:
\begin{equation}
    p(\rvx_1, \rvx_2) = \gN(\rvx_1; 0, 3)\gN(\rvx_2; 0, \exp{\rvx_1/2}).
    \label{eq:funnel_gen}
\end{equation}
For training, we randomly generate 1M samples from the generative process in Eq. \ref{eq:funnel_gen} and perform z-score normalization as a pre-processing step.

\textbf{Baselines and Models.} For the standard Gaussian diffusion model baseline (2nd column in Fig. \ref{fig:toy}), we use EDM with standard hyperparameters as presented in \citet{karras2022elucidatingdesignspacediffusionbased}. Consequently, for heavy-tailed diffusion models (columns 3-5 in Fig. \ref{fig:toy}), we use the \emph{t-EDM} instantiation of our framework as presented in Section \ref{subsec:t_edm}. Since the hyperparameter $\nu$ is key in our framework, we tune $\nu$ for each individual dimension in our toy experiments. We fix $\nu$ to 20 for the $\rvx_1$ dimension and vary it between $\nu \in \{4,7,10\}$ to illustrate controllable tail estimation along the $\rvx_2$ dimension.

\textbf{Denoiser Architecture.} For modeling the underlying denoiser $\mD_\theta(\rvx, \sigma)$, we use a simple MLP for all toy models. At the input, we concatenate the 2-dimensional noisy state vector $\rvx$ with the noise level $\sigma$. We use two hidden layers of size 64, followed by a linear output layer. This results in around 8.5k parameters for the denoiser. We share the same denoiser architecture across all toy models.

\textbf{Training.} We optimize the training objective in Eq. \ref{eq:denoiser_loss} for both t-EDM and EDM (See Fig. \ref{fig:compare_train} (Left)). Our training hyperparameters are the same as proposed in \citet{karras2022elucidatingdesignspacediffusionbased}. We train all toy models for a fixed duration of 30M samples and choose the last checkpoint for evaluation.

\textbf{Sampling.} For the EDM baseline, we use the ODE solver presented in \citet{karras2022elucidatingdesignspacediffusionbased}. For the t-EDM models, as presented in Section \ref{subsec:t_edm}, our sampler is the same as EDM with the only difference in the sampling of initial latents from a Student-t distribution instead (See Fig. \ref{fig:compare_train} (Right)). For visualization of the generated samples in Fig. \ref{fig:toy}, we generate 1M samples for each model.

Overall, our experimental setup for the toy dataset analysis in Fig. \ref{fig:toy} is summarized in Table \ref{table:app_toy_hparams}.

\section{Optimal Noise Schedule Design}
In this section we discuss a strategy for choosing the parameter $\sigma_\text{max}$ (denoted by $\sigma$ in this section for notational brevity) in a more principled manner as compared to EDM \citep{karras2022elucidatingdesignspacediffusionbased}. More specifically, our approach involves directly estimating $\sigma$ from the empirically observed samples which circumvents the need to rely on ad-hoc choices of this parameter which can affect downstream sampler performance. 

The main idea behind our approach is minimizing the statistical \emph{mutual information} between datapoints from the underlying data distribution, $\rvx_0 \sim p_\text{data}$, and their noisy counterparts $\rvx_\sigma \sim p(\rvx_\sigma)$. While a trivial (and non-practical) way to achieve this objective could be to set a large enough $\sigma$ i.e. $\sigma \rightarrow \infty$, we instead minimize the mutual information $I(\rvx_0, \rvx_\sigma)$ while ensuring the magnitude of $\sigma$ to be as small as possible. Formally, our objective can be defined as,
\begin{equation}
\min_{\sigma^2} \sigma^2 \quad \text{subject to} \quad I(\rvx_0, \rvx_\sigma) = 0
\label{eq:estimate_sigma}
\end{equation}
As we will discuss later, minimizing this constrained objective provides a more principled way to obtain $\sigma$ from the underlying data statistics for a specific level of mutual information desired by the user. Next, we first simplify the form of $I(\rvx_0, \rvx_\sigma)$, followed by a discussion on the estimation of $\sigma$ in the context of EDM and t-EDM. We also extend to the case of non-i.i.d noise.

\textbf{Simplification of $I(\rvx_0, \rvx_\sigma)$.} The mutual information $I(\rvx_0, \rvx_\sigma)$ can be stated and simplified as follows,
\begin{align}
    I(\rvx_0, \rvx_\sigma) &= \kl{p(\rvx_0, \rvx_\sigma)}{p(\rvx_0)p(\rvx_\sigma)} \\
    &= \int p(\rvx_0, \rvx_\sigma) \log \frac{p(\rvx_0, \rvx_\sigma)}{p(\rvx_0)p(\rvx_\sigma)} d\rvx_0 d\rvx_\sigma\\
    &= \int p(\rvx_0, \rvx_\sigma) \log \frac{p(\rvx_\sigma|\rvx_0)}{p(\rvx_\sigma)} d\rvx_0 d\rvx_\sigma \\
    &= \int p(\rvx_\sigma | \rvx_0) p(\rvx_0) \log \frac{p(\rvx_\sigma|\rvx_0)}{p(\rvx_\sigma)} d\rvx_0 d\rvx_\sigma
\end{align}
\begin{align}
     &= \E_{\rvx_0 \sim p(\rvx_0)} \Bigg[\int p(\rvx_\sigma | \rvx_0) \log \frac{p(\rvx_\sigma|\rvx_0)}{p(\rvx_\sigma)} d\rvx_\sigma \Bigg]\\
     &= \E_{\rvx_0 \sim p(\rvx_0)} \Big[\kl{p(\rvx_\sigma | \rvx_0)}{p(\rvx_\sigma)} \Big] \label{eq:mi_simplified}
\end{align}

\subsection{Design for EDM}
Given the simplification in Eqn. \ref{eq:mi_simplified}, the optimization problem in Eqn. \ref{eq:estimate_sigma} reduces to the following,
\begin{equation}
\min_{\sigma^2} \sigma^2 \quad \text{subject to} \quad \E_{\rvx_0 \sim p(\rvx_0)} \Big[\kl{p(\rvx_\sigma | \rvx_0)}{p(\rvx_\sigma)} \Big] = 0
\label{eq:edm_optim}
\end{equation}
Since at $\sigma_\text{max}$ we expect the marginal distribution $p(\rvx_\sigma)$ to converge to the generative prior (i.e. completely destory the structure of the data), we approximate $p(\rvx_\sigma) \approx \gN(0, \sigma^2\mI_d)$.
With this simplification, the Lagrangian for the optimization problem in Eqn. \ref{eq:estimate_sigma} can be specified as,
\begin{equation}
    \sigma^2_* =  \arg\min_{\sigma^2} \sigma^2 + \lambda\E_{\rvx_0 \sim p(\rvx_0)} \Big[\kl{\gN(\rvx_\sigma;\rvx_0, \sigma^2)}{\gN(\rvx_\sigma; 0, \sigma^2)} \Big] \label{eq:lagrangian}
\end{equation}
Setting the gradient w.r.t $\sigma^2$ = 0,
\begin{equation}
    1 - \frac{\lambda}{\sigma^4} \E_{\rvx_0}[\rvx_0^\top \rvx_0] = 0
\end{equation}
which implies,
\begin{equation}
    \sigma^2 = \sqrt{\lambda\E_{\rvx_0}[\rvx_0^\top \rvx_0]}
\end{equation}
For an empirical dataset,
\begin{equation}
    \sigma^2 = \sqrt{\frac{\lambda}{N}\sum_{i=1}^N \|\rvx_i\|_2^2}
\end{equation}
This allows us to choose a $\sigma_\text{max}$ from the underlying data statistics during training or sampling. It is worth noting that the choice of the multiplier $\lambda$ impacts the magnitude of $\sigma_\text{max}$. However, this parameter can be chosen in a principled manner. At $\sigma_\text{max}^2 = \sigma^2_*$, the estimate of the mutual information is given by:
\begin{equation}
    I_*(\rvx_0, \rvx_\sigma) = \frac{1}{\sigma_*^2}\E_{\rvx_0}[\rvx_0^\top\rvx_0] = \sqrt{\frac{\E_{\rvx_0}[\rvx_0^\top\rvx_0]}{\lambda}}
\end{equation}
which implies,
\begin{equation}
    \lambda = \frac{\E_{\rvx_0}[\rvx_0^\top\rvx_0]}{I_*(\rvx_0, \rvx_\sigma)^2}
\end{equation}
The above result provides a way to choose $\lambda$. Given the dataset stastistics, $\E_{\rvx_0}[\rvx_0^\top\rvx_0]$, the user can specify an acceptable level of mutual information $I(\rvx_0, \rvx_\sigma)$ to compute the correponding $\lambda$, which can then be used to find the corresponding minimum $\sigma_\text{max}$ required to achieve that level of mutual information. Next, we extend this analysis to t-EDM.
\noindent

\subsection{Extension to t-EDM} 
In the case of t-EDM, we pose the optimization problem as follows,
\begin{align}
    \sigma^* =  \arg\min_{\sigma^2} \sigma^2 + \lambda\E_{\rvx_0 \sim p(\rvx_0)} \Big[\gamm{t_d(\rvx_0, \sigma^2\mI_d, \nu)}{t_d(0, \sigma^2\mI_d, \nu)} \Big]
\end{align}
where $\gamm{q}{p}$ is the Gamma-power Divergence between two distributions q and p. From the definition of the $\gamma$-Power Divergence in Eqn. \ref{eq:power_form}, we have,
\begin{equation}
    \gamm{t_d(\rvx_0, \sigma^2\mI_d, \nu)}{t_d(0, \sigma^2\mI_d, \nu)} = \underbrace{-\frac{1}{\nu\gamma}C_{\nu, d}^{\frac{\gamma}{1 + \gamma}}(1 + \frac{d}{\nu - 2})^{-\frac{\gamma}{1 + \gamma}}}_{=f(\nu, d)}(\sigma^2)^{-\frac{\gamma d}{2(1 + \gamma)} - 1}\E_{\rvx_0}[\rvx_0^\top\rvx_0]
\end{equation}
where $C_{\nu, d} = \frac{\Gamma(\frac{\nu + d }{2})}{\Gamma(\frac{\nu}{2})(\nu\pi)^{\frac{d}{2}}}$ and $\gamma = -\frac{2}{\nu + d}$
Solving the optimization problem yields the following optimal $\sigma$,
\begin{equation}
    \sigma_*^2 = \Big[\lambda f(\nu, d)\Big(\frac{\nu - 2}{\nu - 2 + d}\Big)\E[\rvx_0^\top\rvx_0]\Big]^{\frac{\nu - 2 + d}{2(\nu - 2) + d}}
\end{equation}

For an empirical dataset, we have the following simplification,
\begin{equation}
    \sigma_*^2 = \Big[\lambda f(\nu, d)\Big(\frac{\nu - 2}{\nu - 2 + d}\Big)\frac{1}{N}\sum_i \|\rvx_i\|_2^2\Big]^{\frac{\nu - 2 + d}{2(\nu - 2) + d}}
\end{equation}

\subsection{Extension to Correlated Gaussian Noise}
We now extend our formulation for optimal noise schedule design to the case of correlated noise in the diffusion perturbation kernel. This is useful, especially for scientific applications where the data energy is distributed quite non-uniformly across the (data) spectrum. Let $\mathbf{R} = \mathbb{E}_{\mathbf{x_0} \sim p(\mathbf{x_0})}[\mathbf{x_0} \mathbf{x_0}^\top] \in \mathbb{R}^{d \times d}$ denote the data correlation matrix. Let also consider the perturbation kernel $\mathcal{N}(0, \boldsymbol{\Sigma})$ for the postitve-defintie covaraince matrix $\boldsymbol{\Sigma} \in \mathbb{R}^{d \times d}$. Following the steps in \eqref{eq:lagrangian}, the Lagrangian for noise covariance estimation can be formulated as follows:
\begin{align}
\min_{\boldsymbol{\Sigma}}& \; \text{trace}(\boldsymbol{\Sigma}) + \lambda\E_{\rvx_0 \sim p(\rvx_0)} \Big[\kl{\gN(\rvx_0, \mSigma)}{\gN(0, \mSigma)} \Big] \\
\min_{\boldsymbol{\Sigma}}& \; \text{trace}(\boldsymbol{\Sigma}) + \lambda\E_{\rvx_0 \sim p(\rvx_0)} \Big[\rvx_0^\top \mSigma^{-1} \rvx_0 \Big] \\
\min_{\boldsymbol{\Sigma}}& \; \text{trace}(\boldsymbol{\Sigma}) + \lambda\E_{\rvx_0 \sim p(\rvx_0)} \Big[\text{trace}(\mSigma^{-1}\rvx_0\rvx_0^\top) \Big] \\
\min_{\boldsymbol{\Sigma}}& \; \text{trace}(\boldsymbol{\Sigma}) + \lambda\, \text{trace}(\mSigma^{-1}\E_{\rvx_0 \sim p(\rvx_0)}[\rvx_0\rvx_0^\top]) \\
\min_{\boldsymbol{\Sigma}}& \; \text{trace}(\boldsymbol{\Sigma}) + \lambda \, \text{trace}(\boldsymbol{\Sigma}^{-1}\mathbf{R})
\end{align}
It can be shown that the optimal solution to this minimization problem is given by,
$\boldsymbol{\Sigma}^* = \sqrt{\lambda} \mathbf{R}^{1/2}$, where $\mathbf{R}^{1/2}$ denotes the matrix square root of $\mathbf{R}$. This implies that the noise energy must be distributed along the singular vectors of the correlaiton matrix, where the energy is proportional to the noise singular values. We include the proof below.

\noindent\textit{Proof.} We define the objective:
\begin{equation}
    f(\boldsymbol{\Sigma}) = \text{trace}(\boldsymbol{\Sigma}) + \lambda \, \text{trace}(\boldsymbol{\Sigma}^{-1}\mathbf{R}).
\end{equation}
We compute the gradient of $f(\boldsymbol{\Sigma})$ with respect to $\boldsymbol{\Sigma}$:
\begin{equation}
  \nabla_{\boldsymbol{\Sigma}} f(\boldsymbol{\Sigma}) = \frac{\partial}{\partial \boldsymbol{\Sigma}} \text{trace}(\boldsymbol{\Sigma}) + \lambda \frac{\partial}{\partial \boldsymbol{\Sigma}} \text{trace}(\boldsymbol{\Sigma}^{-1}\mathbf{R}).  
\end{equation}
The gradient of the first term is straightforward:
\begin{equation}
    \frac{\partial}{\partial \boldsymbol{\Sigma}} \text{trace}(\boldsymbol{\Sigma}) = \mathbf{I}.
\end{equation}
For the second term, using the matrix calculus identity, the gradient is:
\begin{equation}
  \frac{\partial}{\partial \boldsymbol{\Sigma}} \left( \lambda \, \text{trace}(\boldsymbol{\Sigma}^{-1}\mathbf{R}) \right) = -\lambda \boldsymbol{\Sigma}^{-1} \mathbf{R} \boldsymbol{\Sigma}^{-1}.  
\end{equation}
Combining these results, the total gradient is:
\begin{equation}
    \nabla_{\boldsymbol{\Sigma}} f(\boldsymbol{\Sigma}) = \mathbf{I} - \lambda \boldsymbol{\Sigma}^{-1} \mathbf{R} \boldsymbol{\Sigma}^{-1}.
\end{equation}
Setting the gradient to zero to find the critical point:
\begin{equation}
    \mathbf{I} - \lambda \boldsymbol{\Sigma}^{-1} \mathbf{R} \boldsymbol{\Sigma}^{-1} = 0.
\end{equation}
\begin{equation}
    \boldsymbol{\Sigma}^{-1} \mathbf{R} \boldsymbol{\Sigma}^{-1} = \frac{1}{\lambda} \mathbf{I}.
\end{equation}
which implies,
\begin{equation}
    \mathbf{R} = \lambda \boldsymbol{\Sigma}^2.
\end{equation}
\begin{equation}
\boldsymbol{\Sigma} = \sqrt{\lambda} \mathbf{R}^{1/2}.
\end{equation}

which completes the proof.

\section{Log-Likelihood for t-EDM}
Here, we present a method to estimate the log-likelihood for the generated samples using the ODE solver for t-EDM (see Section \ref{subsec:t_edm}). Our analysis is based on the likelihood computation in continuous-time diffusion models as discussed in \citet{songscore} (Appendix D.2). More specifically, given a probability-flow ODE,
\begin{equation}
    \frac{d\rvx_t}{dt} = \vf_\vtheta(\rvx_t, t),
\end{equation}
with a vector field $\vf_\vtheta(\rvx_t, t)$, \citet{songscore} propose to estimate the log-likelihood of the model as follows,
\begin{equation}
    \log p(\rvx_0) = \log p(\rvx_T) + \int_0^T \nabla \cdot \vf_\vtheta(\rvx_t, t) dt.
\end{equation}
The divergence of the vector field $\vf_\vtheta(\rvx_t, t)$ is further estimated using the Skilling-Hutchinson trace estimator \citep{Skilling1989, doi:10.1080/03610919008812866} as follows,
\begin{equation}
    \nabla \cdot \vf_\vtheta(\rvx_t, t) = \E_{p(\boldsymbol{\epsilon})}[\boldsymbol{\epsilon}^\top \nabla \vf_\vtheta(\rvx_t, t) \boldsymbol{\epsilon}]
\end{equation}
where usually, $\boldsymbol{\epsilon} \sim \gN(\boldsymbol{0}, \mI_d)$. For t-EDM ODE, this estimate can be further simplified as follows,
\begin{align}
    \nabla \cdot \vf_\vtheta(\rvx_t, t) &= \E_{p(\boldsymbol{\epsilon})}[\boldsymbol{\epsilon}^\top \nabla \vf_\vtheta(\rvx_t, t) \boldsymbol{\epsilon}] \\
    &= \E_{p(\boldsymbol{\epsilon})}\Big[\boldsymbol{\epsilon}^\top \nabla \Big(\frac{\rvx_t - \mD_\vtheta(\rvx_t, t)}{t}\Big) \boldsymbol{\epsilon}\Big] \\
    &= \E_{p(\boldsymbol{\epsilon})}\Big[\boldsymbol{\epsilon}^\top \Big(\frac{\mI_d - \nabla_{\rvx_t} \mD_\vtheta(\rvx_t, t)}{t}\Big) \boldsymbol{\epsilon}\Big] \\
    &= \frac{1}{t}\E_{p(\boldsymbol{\epsilon})}\Big[\boldsymbol{\epsilon}^\top \boldsymbol{\epsilon} - \boldsymbol{\epsilon}^\top \nabla_{\rvx_t} \mD_\vtheta(\rvx_t, t) \boldsymbol{\epsilon}\Big] \\
    &= \frac{1}{t}\E_{p(\boldsymbol{\epsilon})}\Big[\boldsymbol{\epsilon}^\top \boldsymbol{\epsilon} - \boldsymbol{\epsilon}^\top \nabla_{\rvx_t} \mD_\vtheta(\rvx_t, t) \boldsymbol{\epsilon}\Big] \\
    &= \frac{1}{t}\Big[d - \E_{p(\boldsymbol{\epsilon})}\big(\boldsymbol{\epsilon}^\top \nabla_{\rvx_t} \mD_\vtheta(\rvx_t, t) \boldsymbol{\epsilon}\big)\Big]
\end{align}
where $d$ is the data dimensionality. Thus, the log-likelihood can be specified as,
\begin{equation}
    \log p(\rvx_0) = \log p(\rvx_T) + \int_0^T \frac{1}{t}\Big[d - \E_{p(\boldsymbol{\epsilon})}\big(\boldsymbol{\epsilon}^\top \nabla \mD_\vtheta(\rvx_t, t) \boldsymbol{\epsilon}\big)\Big] dt.
    \label{eq:nll}
\end{equation}
When $\epsilon \sim \gN(0, \sigma^2\mI_d)$, the above result can be re-formulated as,
\begin{equation}
    \log p(\rvx_0) = \log p(\rvx_T) + \int_0^T \frac{1}{t}\Big[d - \frac{1}{\sigma^2}\E_{p(\boldsymbol{\epsilon})}\big(\boldsymbol{\epsilon}^\top \nabla \mD_\vtheta(\rvx_t, t) \boldsymbol{\epsilon}\big)\Big] dt.
\end{equation}
Moreover, using the first-order taylor series expansion
\begin{align}
    D_\theta(\rvx + \epsilon) = D_\theta(\rvx) + \nabla D_\theta \epsilon + \gO(\|\epsilon\|^2)
\end{align}
For a sufficiently small $\sigma$, higher-order terms in $\gO(\|\epsilon\|^2)$ can be ignored since $\E[\|\epsilon^2\|] = \sigma^2 d$. Therefore,
\begin{align}
    D_\theta(\rvx + \epsilon) &\approx D_\theta(\rvx) + \nabla D_\theta \epsilon \\
    \epsilon^\top \nabla D_\theta \epsilon &\approx \epsilon^\top [D_\theta(\rvx + \epsilon) - D_\theta(\rvx)] \\
    \E_\epsilon[\epsilon^\top \nabla D_\theta \epsilon] &\approx \E_\epsilon[\epsilon^\top (D_\theta(\rvx + \epsilon) - D_\theta(\rvx))] \\
    \E_\epsilon[\epsilon^\top \nabla D_\theta \epsilon] &\approx \E_\epsilon[\epsilon^\top D_\theta(\rvx + \epsilon)]
\end{align}
Therefore, the log-likelihood expression can be further simplified as,
\begin{align}
    \log p(\rvx_0) &= \log p(\rvx_T) + \int_0^T \frac{1}{t}\Big[d - \frac{1}{\sigma^2}\E_{p(\boldsymbol{\epsilon})}\big(\boldsymbol{\epsilon}^\top \nabla \mD_\vtheta(\rvx_t, t) \boldsymbol{\epsilon}\big)\Big] dt \\
    \log p(\rvx_0) &= \log p(\rvx_T) + \int_0^T \frac{1}{t}\Big[d - \frac{1}{\sigma^2}\E_{p(\boldsymbol{\epsilon})}\big(\boldsymbol{\epsilon}^\top D_\theta(\rvx_t + \boldsymbol{\epsilon}, t)\big)\Big] dt
\end{align}
The advantage of this simplification is that we don't need to rely on expensive jacobian-vector products in Eq. \ref{eq:nll}. However, since the denoiser now depends on $\epsilon$, monte-carlo approximation of the expectation in the above equation could be computationally expensive for many samples $\epsilon$

\section{Discussion and Limitations}
\subsection{Related Work}
\label{app:related}
\textbf{Connections with Denoising Score Matching.} For the perturbation kernel $q(\rvx_t|\rvx_0)=t_d(\mu_t\rvx_0, \sigma_t^2\mI_d, \nu)$, the denoising score matching \citep{6795935, songscore} loss, $\gL_\text{DSM}$, can be formulated as,
    \begin{equation}
    \gL_\text{DSM}(\theta) \;\;\propto\;\; \E_{\rvx_0 \sim p(\rvx_0)}\E_{t}\E_{\epsilon \sim \gN(0, \mI_d)}\E_{\kappa \sim \chi^2(\nu)/\nu}\Big[\lambda(\rvx_t, \nu, t)\Big\Vert \mD_\vtheta(\mu_t\rvx_0 + \sigma_t\frac{\epsilon}{\sqrt{\kappa}}, \sigma_t) - \rvx_0 \Big\Vert^2_2\Big]
\end{equation}
with the scaling factor $\lambda(\rvx_t, \nu, t) = [(\nu + d) / (\nu + d_1)]^2$ where $d_1 = (1/ \sigma_t^2)\|\rvx_t - \mu_t\rvx_0\|_2^2$ (proof in App. \ref{app:dsm}). Therefore, the denoising score matching loss in our framework is equivalent to the simplified training objective in Eq. \ref{eq:3} scaled by a data-dependent coefficient. However, in this work, we do not explore this loss formulation and leave further exploration to future work.

\textbf{Prior work in Heavy-Tailed Generative Modeling.} The idea of exploring heavy-tailed priors for modeling heavy-tailed distributions has been explored in several works in the past. More specifically, \citet{jaini2020tailslipschitztriangularflows} argue that a Lipschitz flow map cannot change the tails of the base distribution significantly. Consequently, they use a heavy-tailed prior (modeled using a Student-t distribution) as the base distribution to learn Tail Adaptive flows (TAFs), which can model the tails more accurately. In this work, we make similar observations where standard diffusion models fail to accurately model the tails of real-world distributions. Consequently, \citet{laszkiewicz2022marginaltailadaptivenormalizingflows} assess the \emph{tailedness} of each marginal dimension and set the prior accordingly. On a similar note, we note that learning the tail parameter $\nu$ spatially and across channels can provide greater modeling flexibility for downstream tasks and will be an important direction for future work on this problem. More recently, \citet{kim2024tvariational} introduce heavy-tailed VAEs \citep{kingma2022autoencodingvariationalbayes, rezende2016variationalinferencenormalizingflows} based on minimizing $\gamma$-power divergences \citep{EGUCHI202115}. This is perhaps the closest connection of our method with prior work since we rely on $\gamma$-power divergences to minimize the divergence between heavy-tailed forward and reverse diffusion posteriors. However, VAEs often have scalability issues and tend to produce blurry artifacts \citep{dosovitskiy2016generatingimagesperceptualsimilarity, pandey2022diffusevaeefficientcontrollablehighfidelity}. On the other hand, we work with diffusion models, which are known to scale well to large-scale modeling applications \citep{pathak2024kilometerscaleconvectionallowingmodel, mardani2024residualcorrectivediffusionmodeling, esser2024scalingrectifiedflowtransformers, podell2023sdxlimprovinglatentdiffusion}.
In another line of work, \citet{yoon2023scorebased} presents a framework for modeling heavy-tailed distributions using $\alpha$-stable Levy processes while \citet{shariatian2024denoisinglevyprobabilisticmodels} simplify the framework proposed in \citet{yoon2023scorebased} and instantiate it for more practical diffusion models like DDPM. In contrast, our work deals with Student-t noise, which in general (with the exceptions of Cauchy and the Gaussian distribution) is not $\alpha$-stable and, therefore, a distinct category of diffusion models for modeling heavy-tailed distributions. Moreover, prior works like \citet{yoon2023scorebased, shariatian2024denoisinglevyprobabilisticmodels} rely on empirical evidence from light-tailed variants of small-scale datasets like CIFAR-10 \citep{krizhevsky2009learning} and their efficacy on actual large-scale scientific datasets like weather datasets remains to be seen.

\textbf{Prior work in Diffusion Models.} Our work is a direct extension of standard diffusion models in the literature \citep{karras2022elucidatingdesignspacediffusionbased, ho2020denoising, songscore}. Moreover, since it only requires a few lines of code change to transition from standard diffusion models to our framework, our work is directly compatible with popular families of latent diffusion models \citep{pandey2022diffusevaeefficientcontrollablehighfidelity, rombach2022highresolutionimagesynthesislatent} and augmented diffusion models \citep{dockhorn2022scorebasedgenerativemodelingcriticallydamped,pandey2023completerecipediffusiongenerative, singhal2023diffusediffusebackautomated}. Our work is also related to prior work in diffusion models on a more theoretical level. More specifically, PFGM++~\citep{xu2023pfgm++} is a unique type of generative flow model inspired by electrostatic theory. It treats d-dimensional data as electrical charges in a $D+d$-dimensional space, where the electric field lines define a bijection between a heavy-tailed prior and the data distribution. $D$ is a hyperparameter controlling the shape of the electric fields that define the generative mapping. In essence, their method can be seen as utilizing a perturbation kernel:
\begin{align*}
p(\rvx_t|\rvx_0) \propto (||\rvx_t - \rvx_0||_2^2 + \sigma_t^2D))^{-\frac{D+d}{2}}
= t_d(\rvx_0, {\sigma_t^2}\mI_d, D)
\end{align*}
When setting $\nu=D$, the perturbation kernel becomes equivalent to that of t-EDM, indicating the Student-t perturbation kernel can be interpreted from another physical perspective --- that of electrostatic fields and charges. The authors demonstrated that using an intermediate value for $D$ (or $\nu$) leads to improved robustness compared to diffusion models (where $D\to \infty$), due to the heavy-tailed perturbation kernel. 

\subsection{Limitations}
\label{app:limitations}
While our proposed framework works well for modeling heavy-tailed data, it is not without its limitations. Firstly, while the parameter $\nu$ offers controllability for tail estimation using diffusion models, it also increases the tuning budget by introducing an extra hyperparameter. Moreover, for diverse data channels, tuning $\nu$ per channel could be key to good estimation at the tails. This could result in a combinatorial explosion with manual tuning. Therefore, learning $\nu$ directly from the data could be an important direction for the practical deployment of heavy-tailed diffusion models. Secondly, our evaluation protocol relies primarily on comparing the statistical properties of samples obtained by flattening the generated or train/test set samples. One disadvantage of this approach is that our current evaluation metrics ignore the structure of the generated samples. In general, developing metrics like FID \citep{heusel2018ganstrainedtimescaleupdate} to assess the perceptual quality of synthetic data for scientific domains like weather analysis remains an important future direction.

\begin{figure}[t]
    \centering
    \includegraphics[width=0.95\linewidth]{./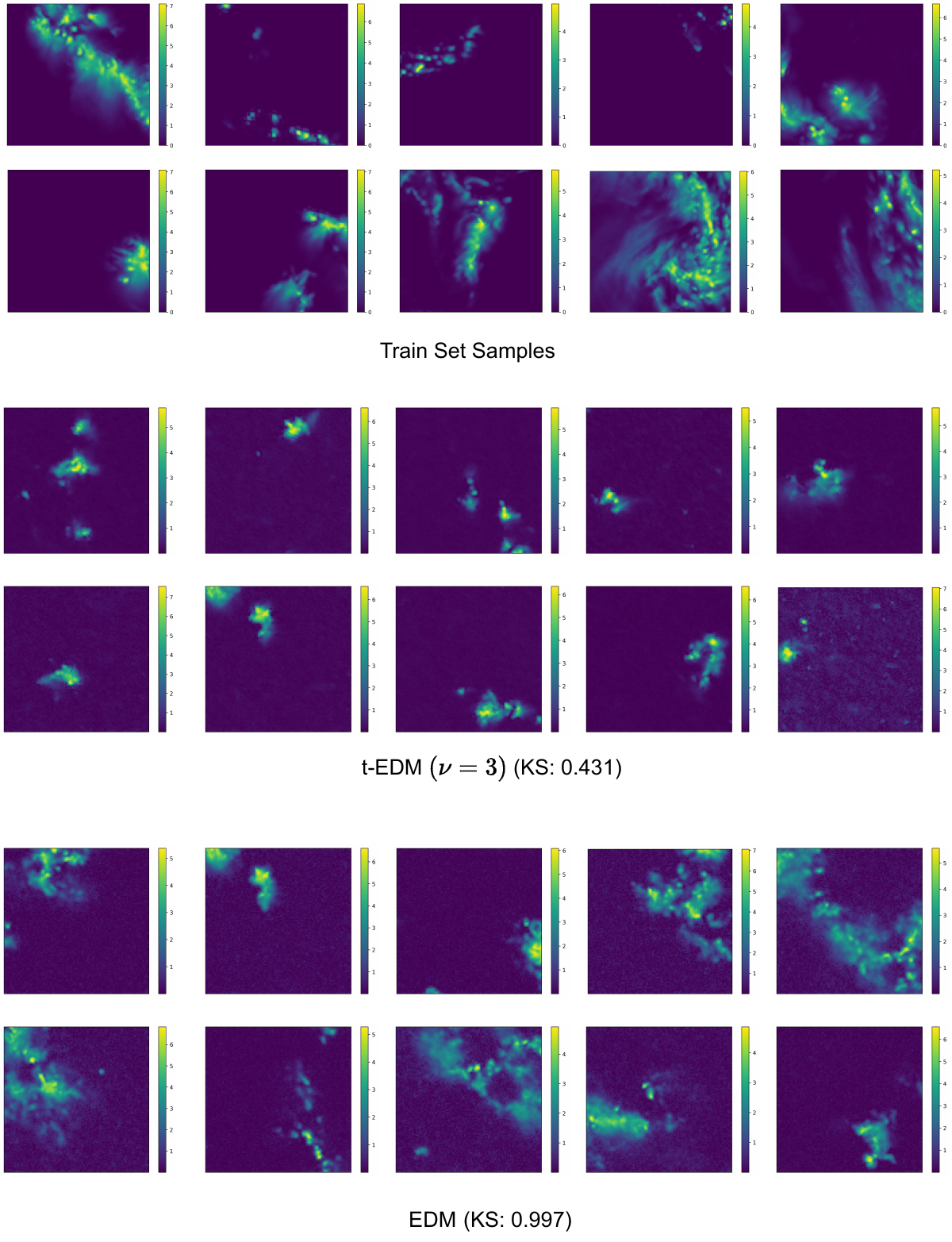}
    \caption{Random samples generated from t-EDM (Top Panel) and EDM (Bottom Panel) for the Vertically Integrated Liquid (VIL) channel. KS: Kolmogorov-Smirnov 2-sample statistic. Samples have been scaled logarithmically for better visualization}
    \label{fig:vil_diff_viz}
\end{figure}

\begin{figure}[t]
    \centering
    \includegraphics[width=0.95\linewidth]{./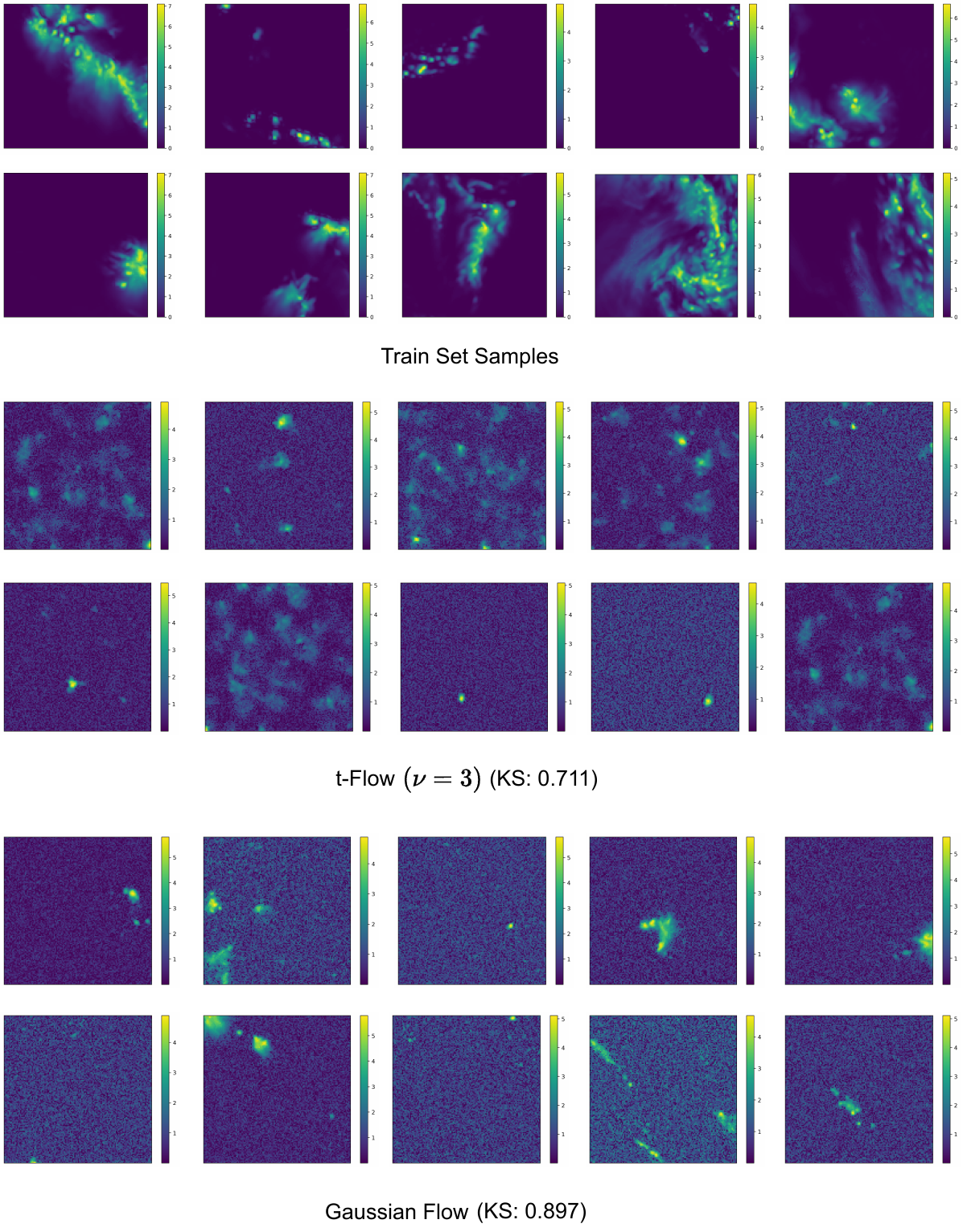}
    \caption{Random samples generated from t-Flow (Top Panel) and Gaussian Flow (Bottom Panel) for the Vertically Integrated Liquid (VIL) channel. KS: Kolmogorov-Smirnov 2-sample statistic. Samples have been scaled logarithmically for better visualization}
    \label{fig:vil_flow_viz}
\end{figure}

\begin{figure}[t]
    \centering
    \includegraphics[width=0.97\linewidth]{./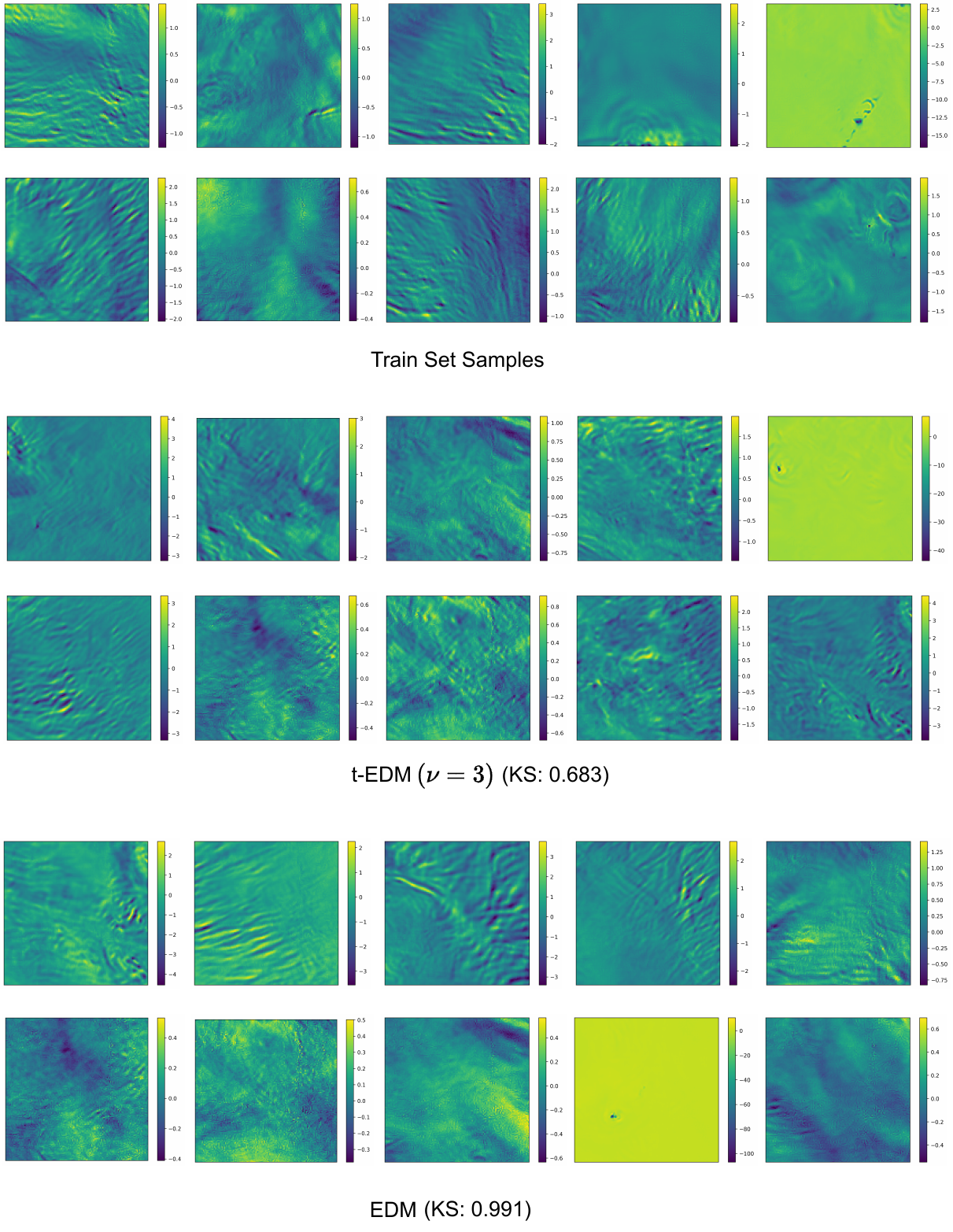}
    \caption{Random samples generated from t-EDM (Top Panel) and EDM (Bottom Panel) for the Vertical Wind Velocity (w20) channel. KS: Kolmogorov-Smirnov 2-sample statistic}
    \label{fig:w20_diff_viz}
\end{figure}

\begin{figure}[t]
    \centering
    \includegraphics[width=0.95\linewidth]{./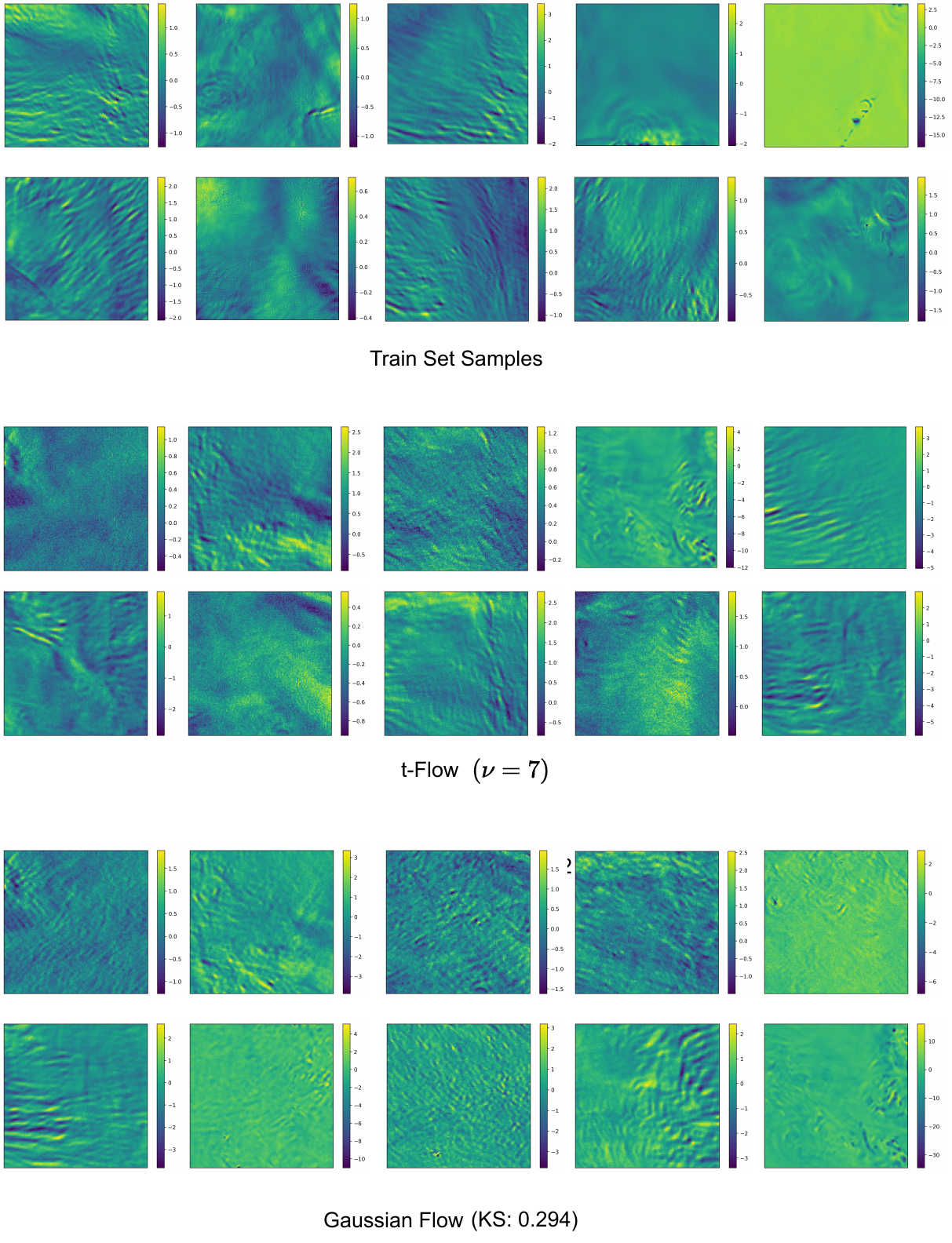}
    \caption{Random samples generated from t-Flow (Top Panel) and Gaussian Flow (Bottom Panel) for the Vertical Wind Velocity (w20) channel. KS: Kolmogorov-Smirnov 2-sample statistic}
    \label{fig:w20_flow_viz}
\end{figure}

\begin{figure}[t]
    \centering
    \includegraphics[width=1.0\linewidth]{./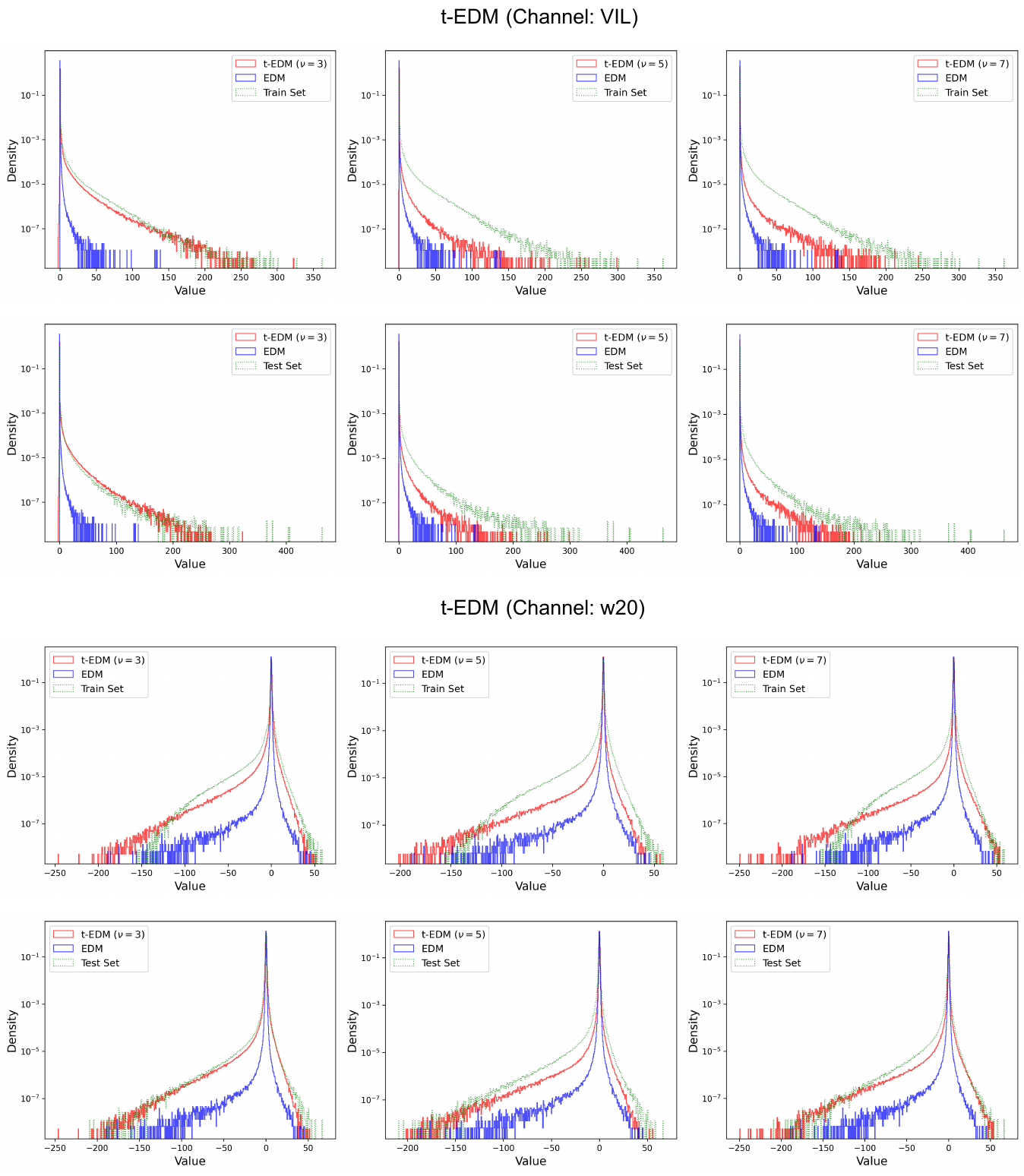}
    \caption{1-d Histogram Comparisons between samples from the generated and the Train/Test set for the Vertically Integrated Liquid (VIL, see Top Pandel) and Vertical Wind Velocity (w20, see Bottom Panel) channels using t-EDM (with varying $\nu$).}
    \label{fig:hists_tedm_extra}
\end{figure}

\begin{figure}[t]
    \centering
    \includegraphics[width=1.0\linewidth]{./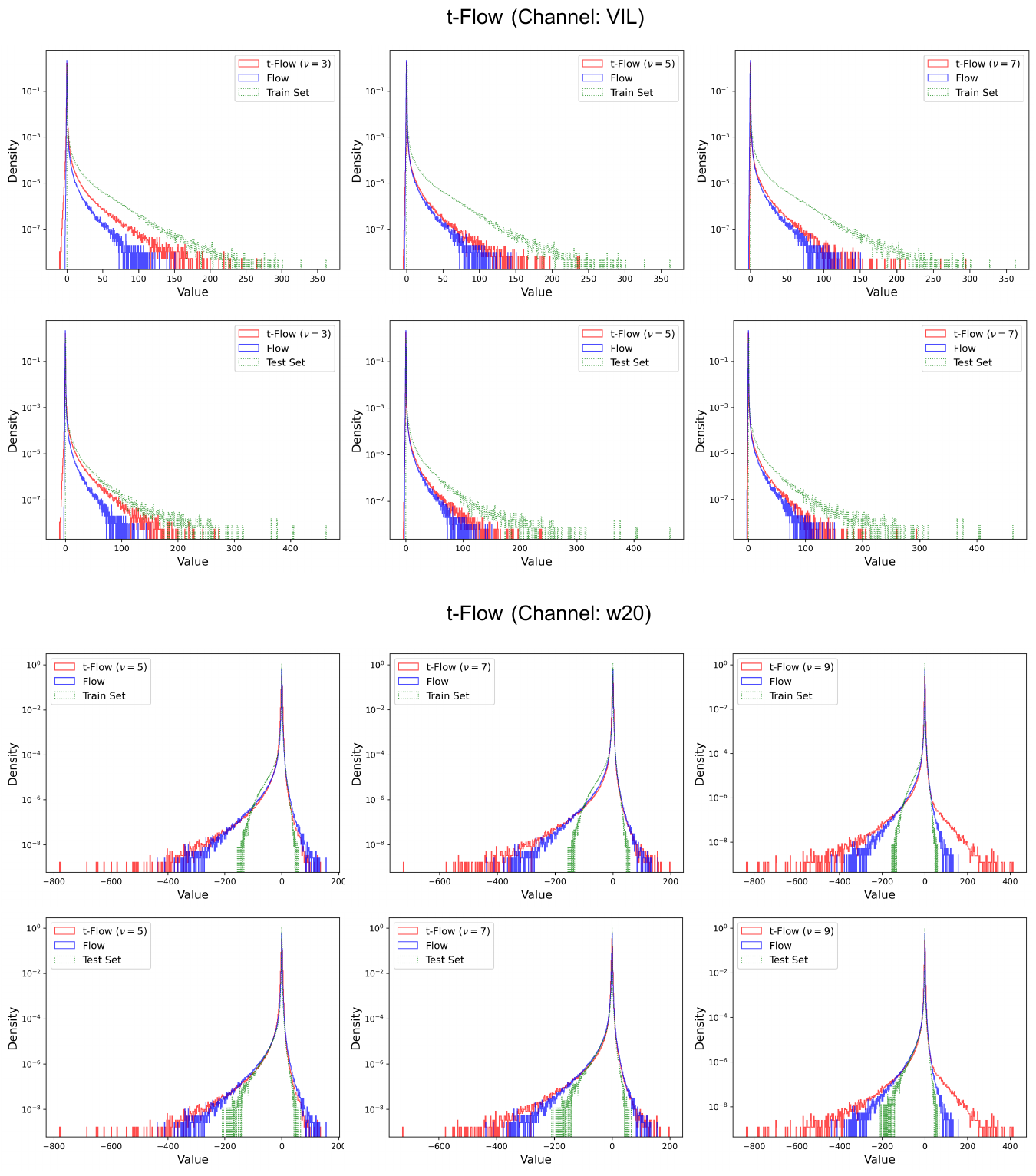}
    \caption{1-d Histogram Comparisons between samples from the generated and the Train/Test set for the Vertically Integrated Liquid (VIL, see Top Pandel) and Vertical Wind Velocity (w20, see Bottom Panel) channels using t-Flow (with varying $\nu$).}
    \label{fig:hists_tflow_extra}
\end{figure}

{
\renewcommand{\tablename}{Figure}
\begin{table}[h]
\begin{tabular}{ccccc}
\multirow{3}{*}{\begin{tabular}[c]{@{}c@{}}\includegraphics[width=0.2\linewidth]{./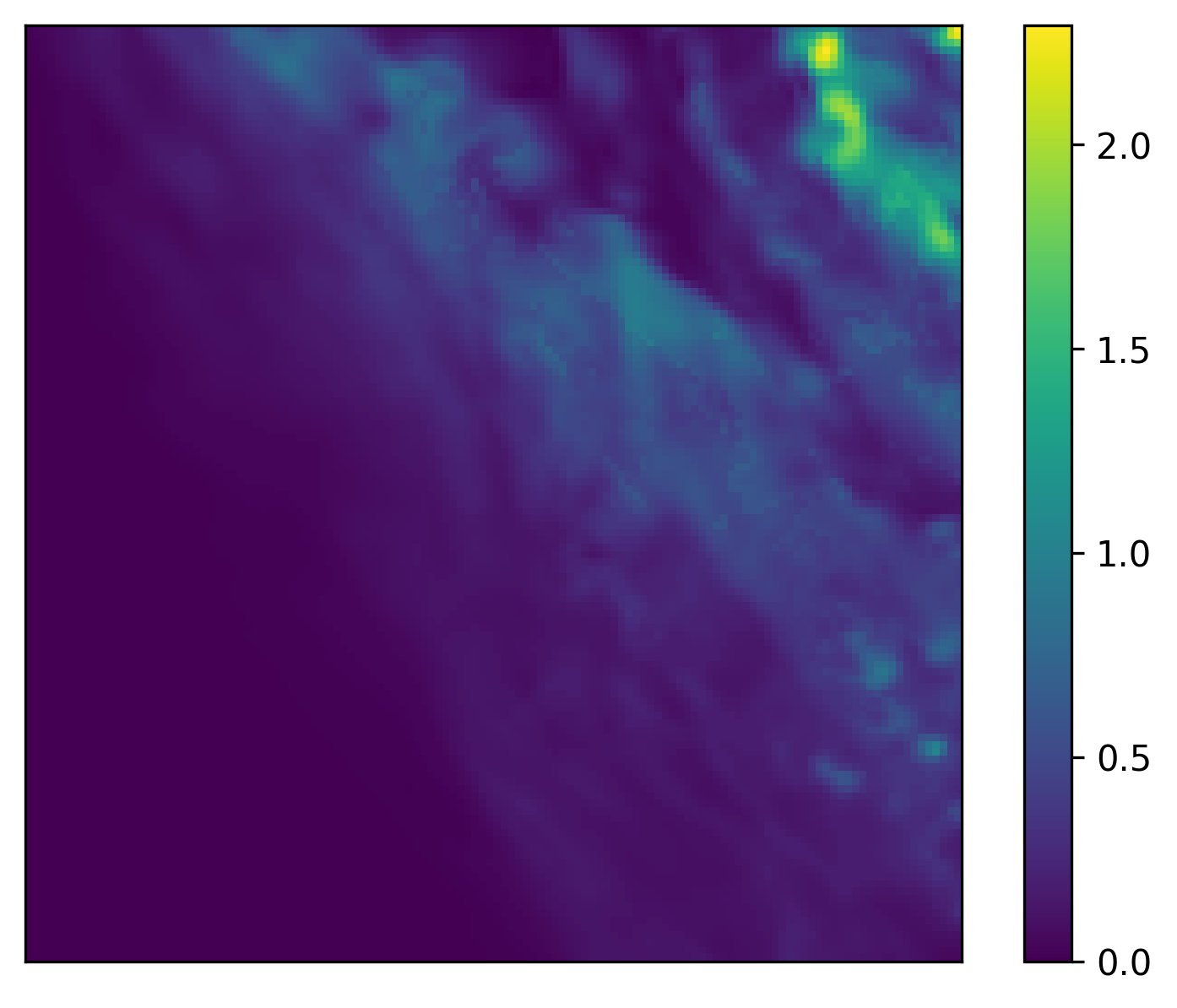}\\Ground Truth \end{tabular}} & \begin{tabular}[c]{@{}c@{}}Ensemble Mean \\ \includegraphics[width=0.2\linewidth]{./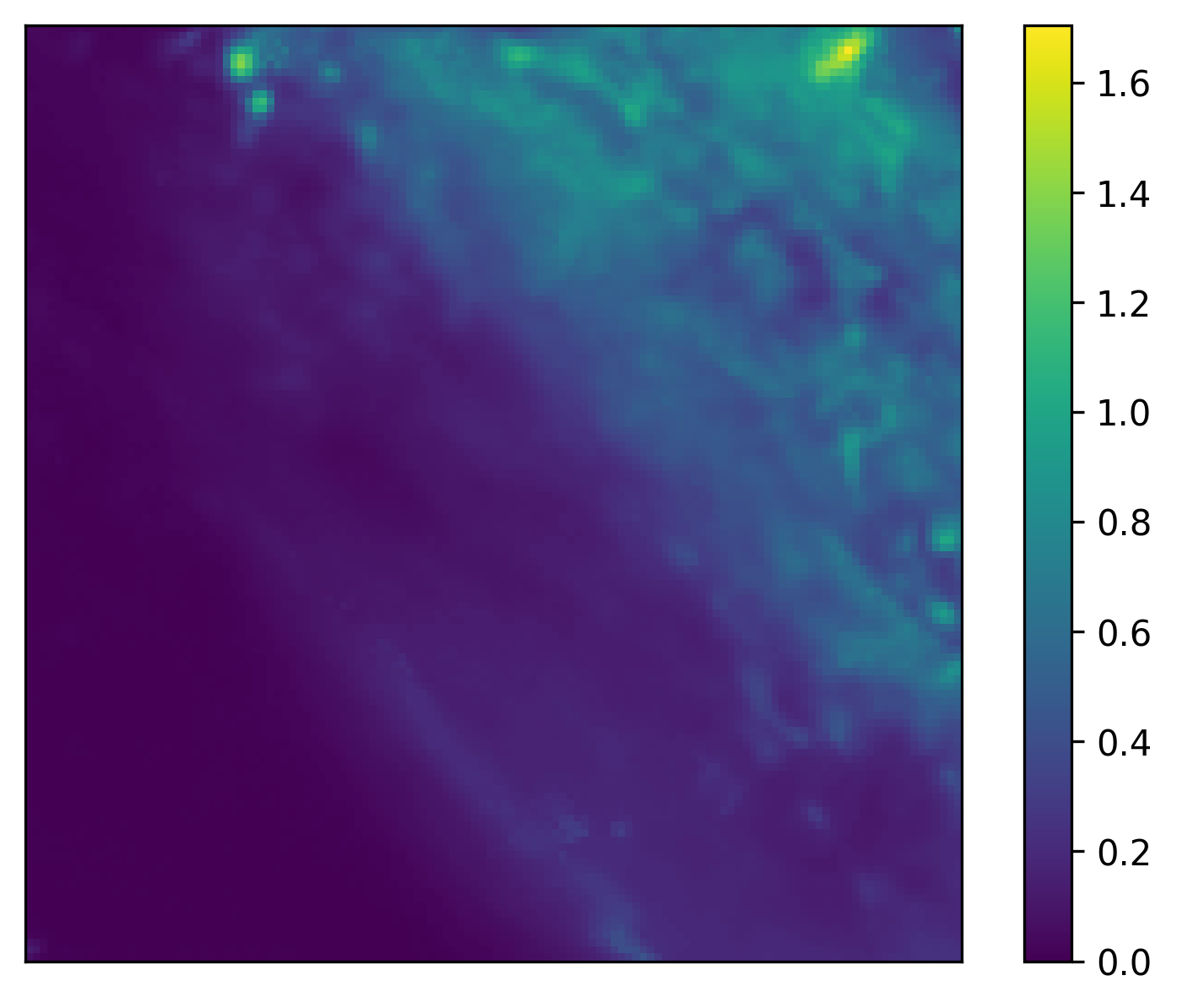}\end{tabular} & \begin{tabular}[c]{@{}c@{}}Prediction 1 \\ \includegraphics[width=0.2\linewidth]{./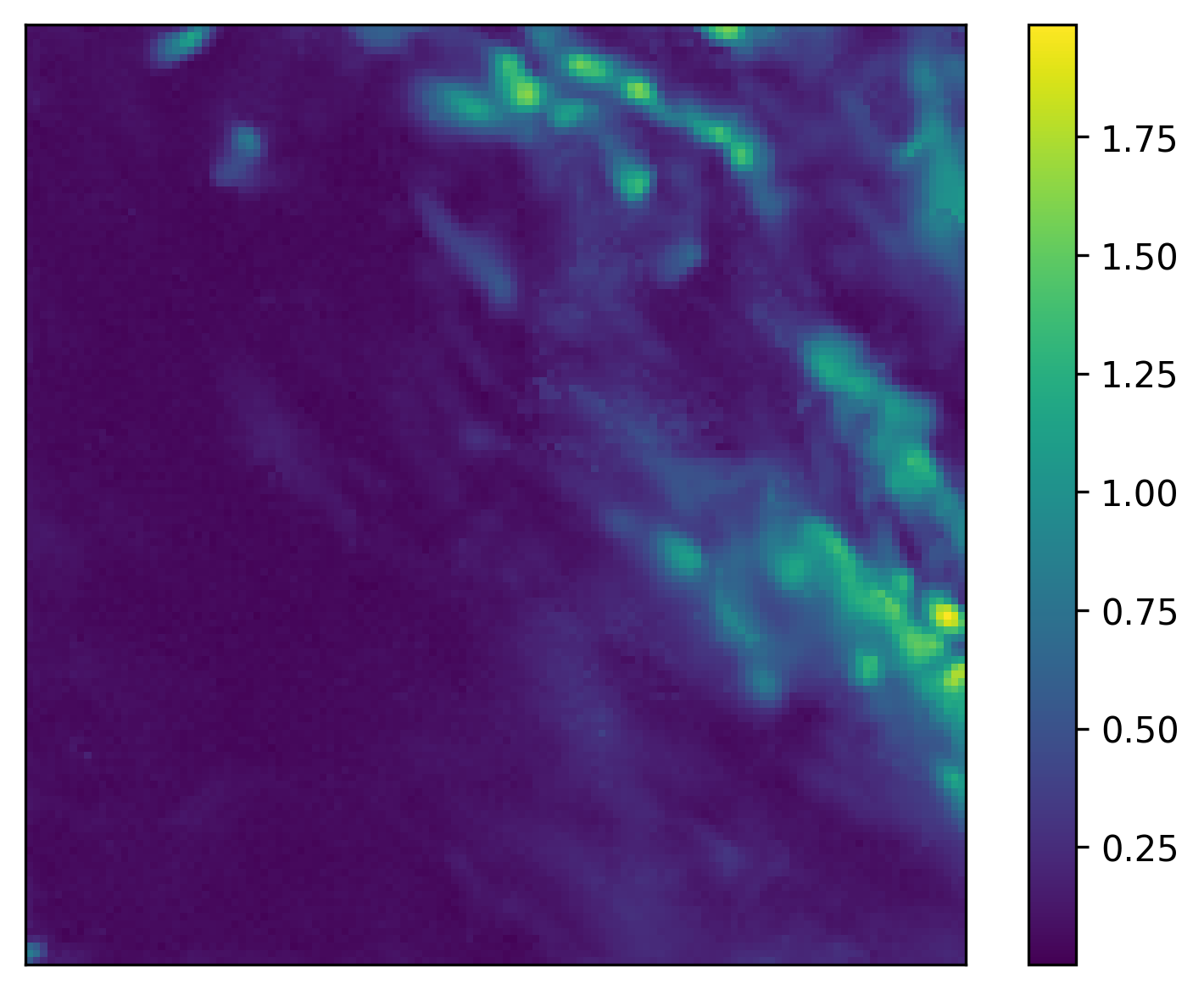}\end{tabular} & \begin{tabular}[c]{@{}c@{}}Prediction 2 \\ \includegraphics[width=0.2\linewidth]{./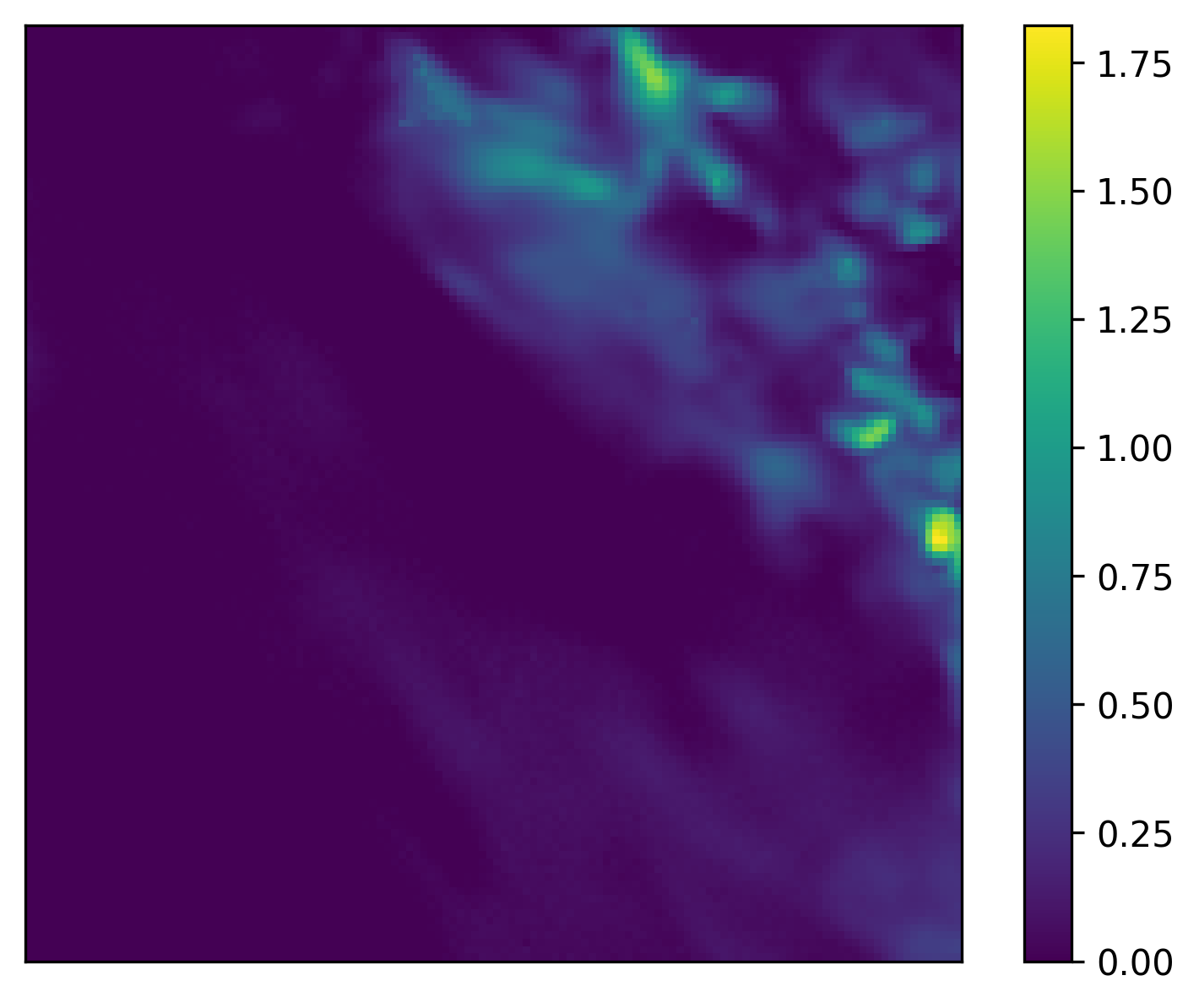}\end{tabular}
   & \begin{tabular}[c]{@{}c@{}} Ensemble (gif) \\ \animategraphics[loop, autoplay, width=0.2\linewidth]{1}{figures/nfp_vil/edm/sample_0/ensemble/vil_pred_0_}{0}{15}\end{tabular}   \\
         & \includegraphics[width=0.2\linewidth]{./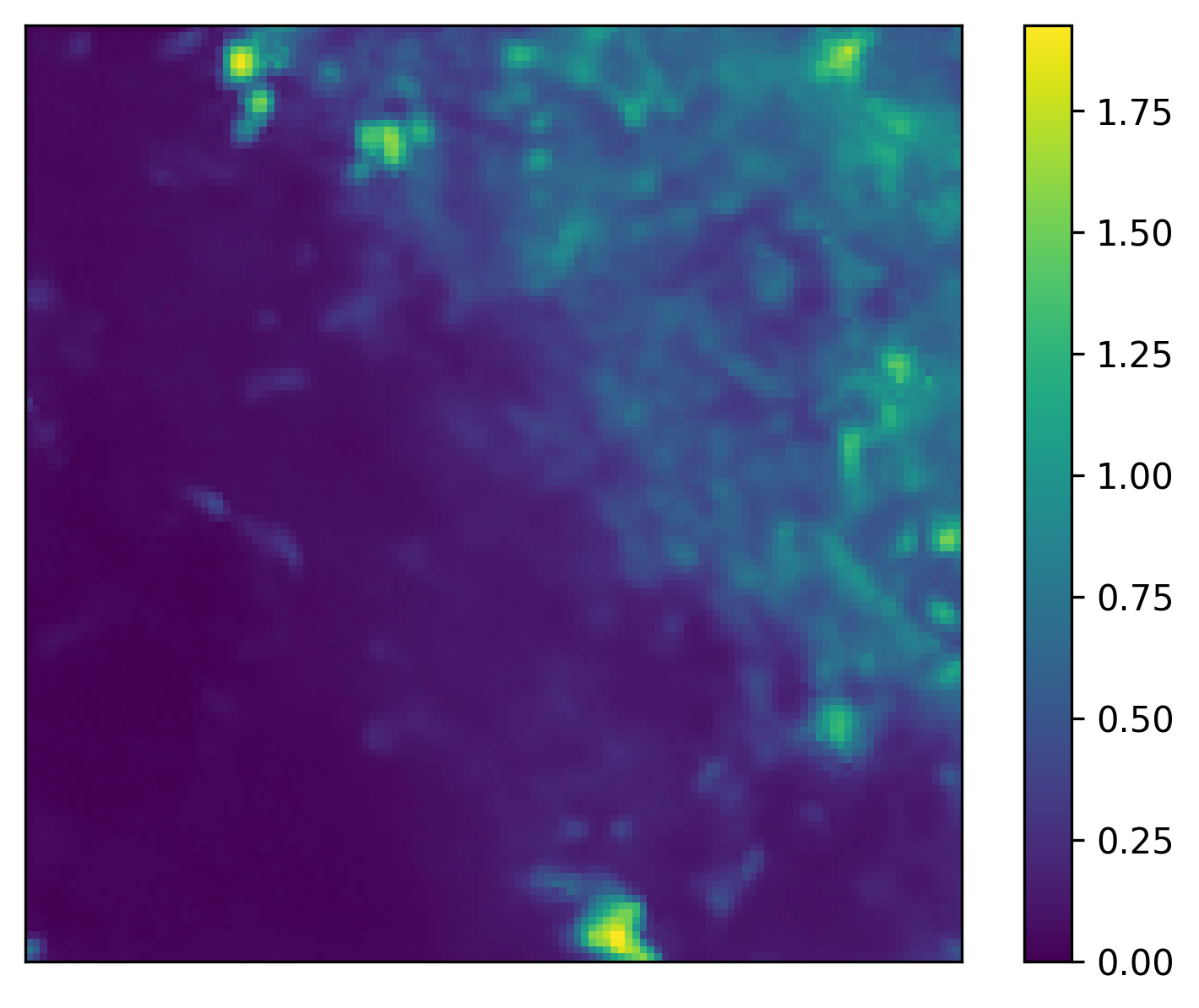} & \includegraphics[width=0.2\linewidth]{./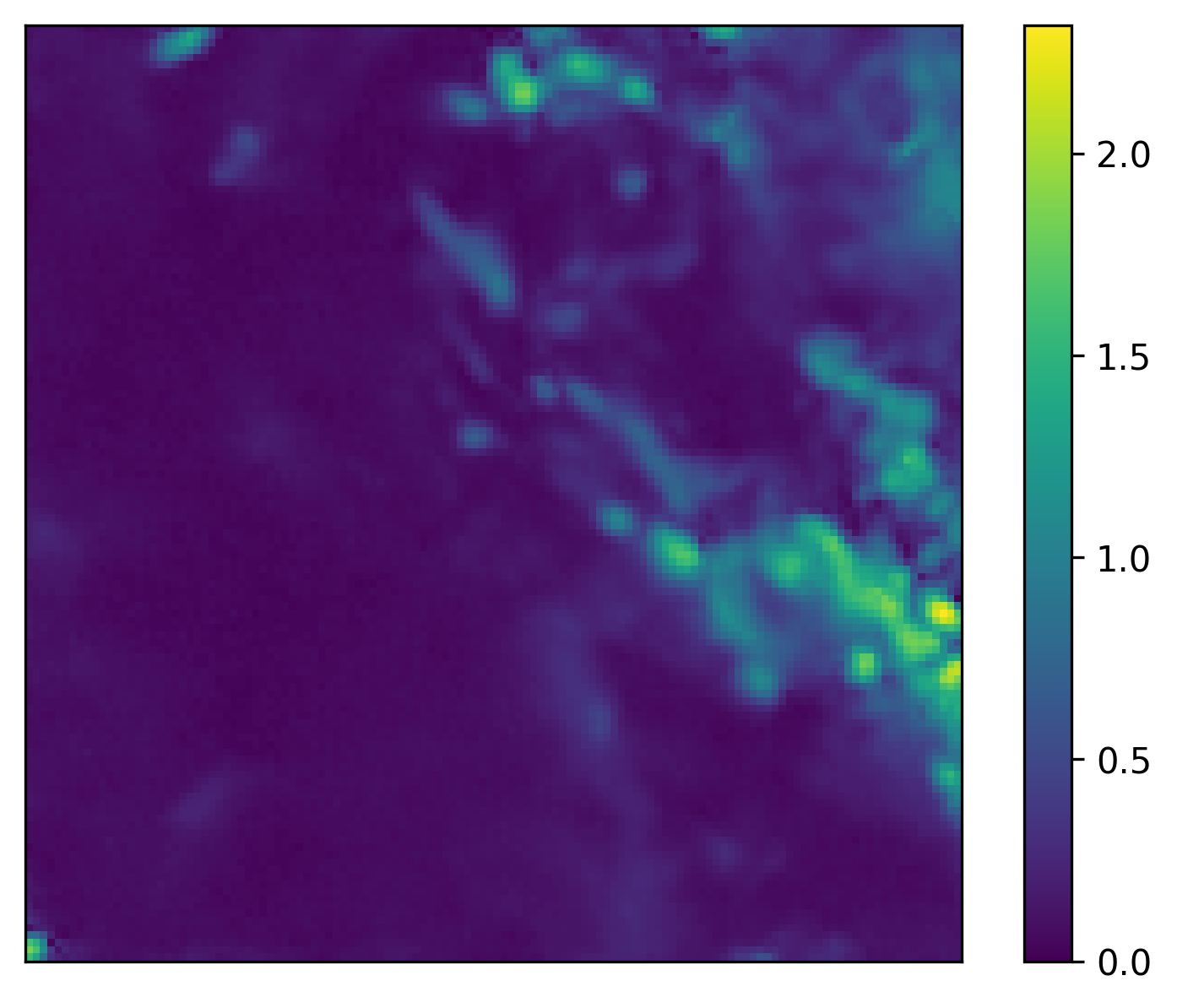} & \includegraphics[width=0.2\linewidth]{./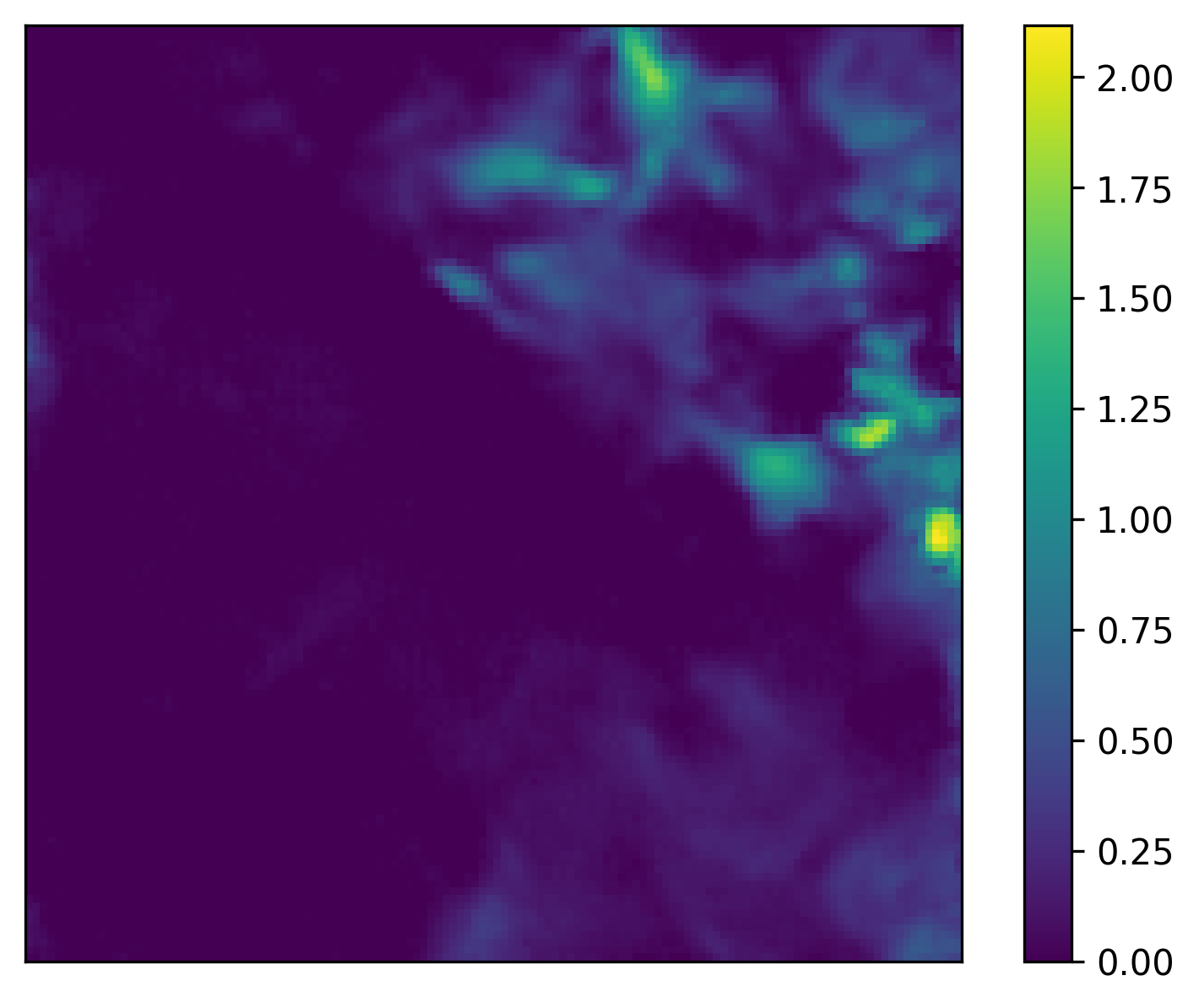} & \animategraphics[loop, autoplay, width=0.2\linewidth]{1}{figures/nfp_vil/tedm_nu=3/sample_0/ensemble/vil_pred_0_}{0}{15} \\
         & \includegraphics[width=0.2\linewidth]{./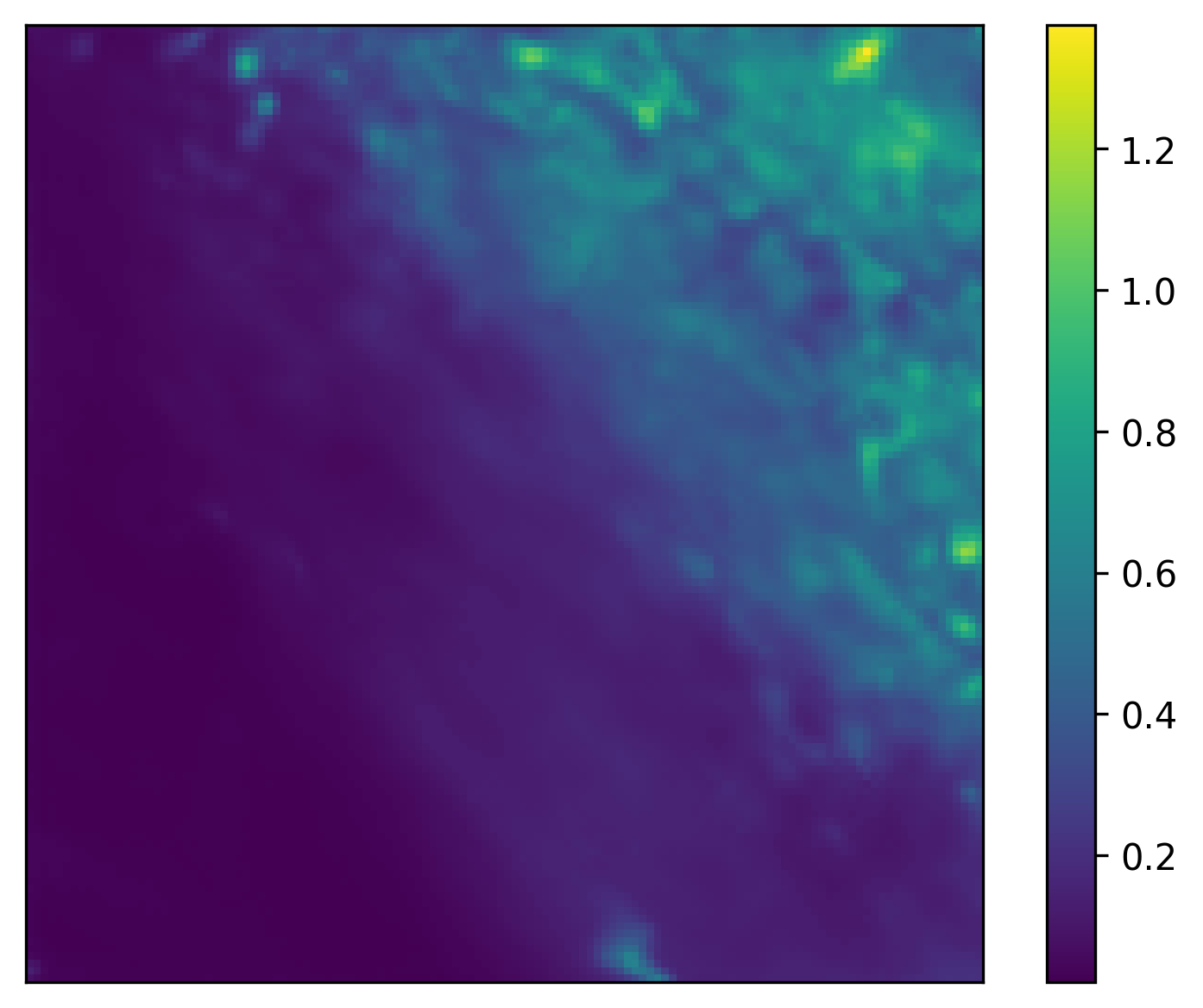} &
         \includegraphics[width=0.2\linewidth]{./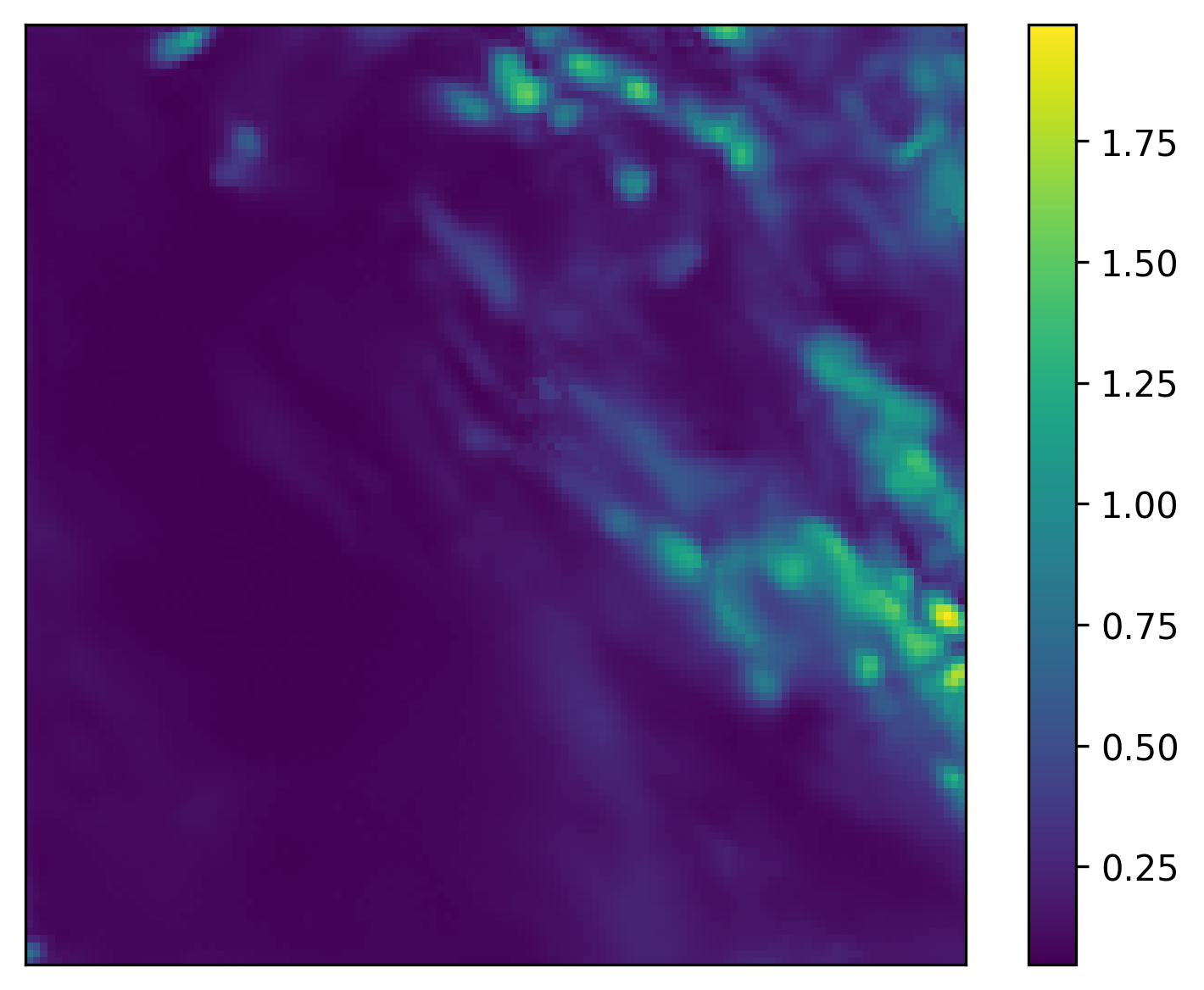} &
         \includegraphics[width=0.2\linewidth]{./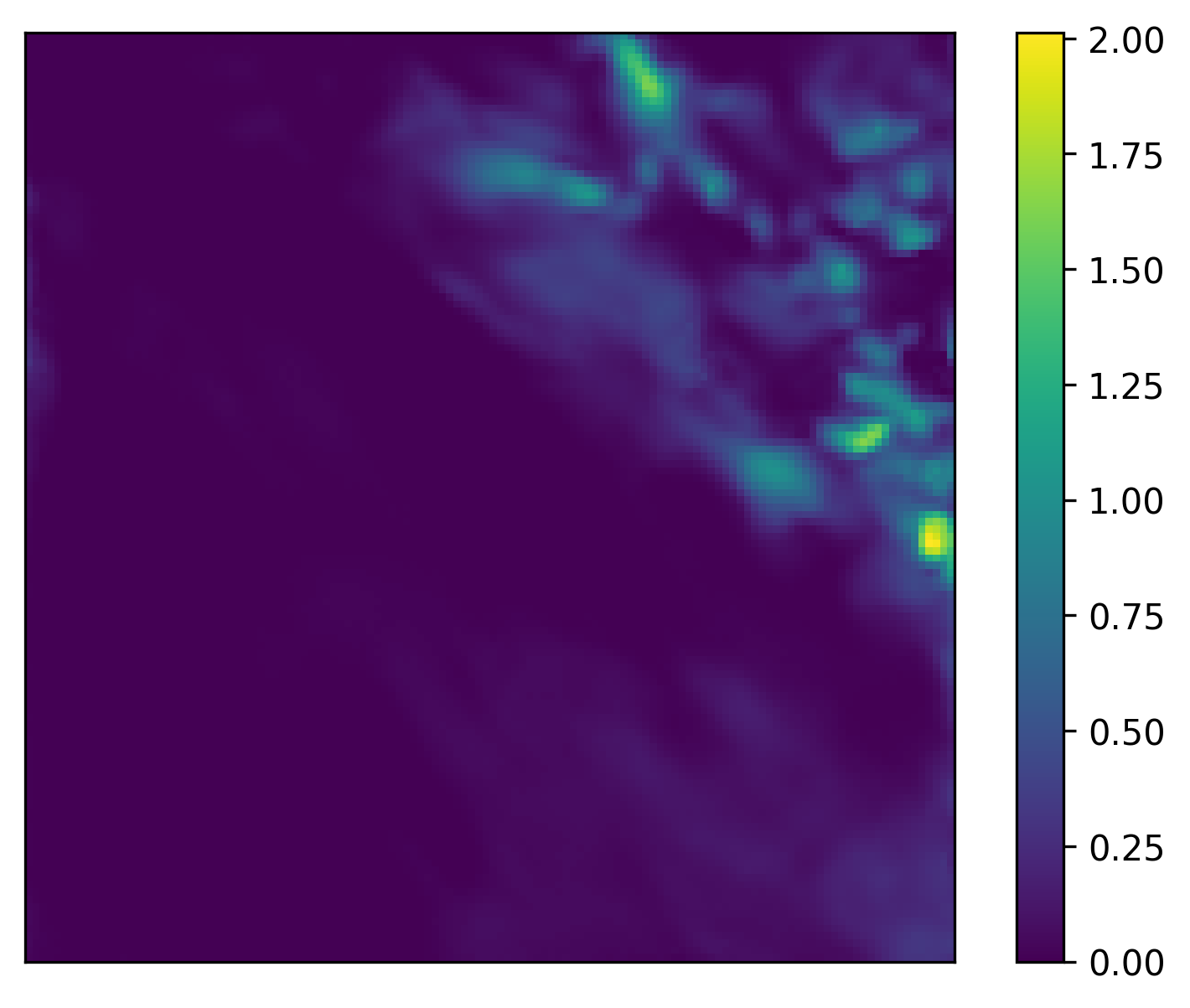} &
         \animategraphics[loop, autoplay, width=0.2\linewidth]{1}{figures/nfp_vil/tedm_nu=5/sample_0/ensemble/vil_pred_0_}{0}{15} \\
\multirow{3}{*}{\begin{tabular}[c]{@{}c@{}}\includegraphics[width=0.2\linewidth]{./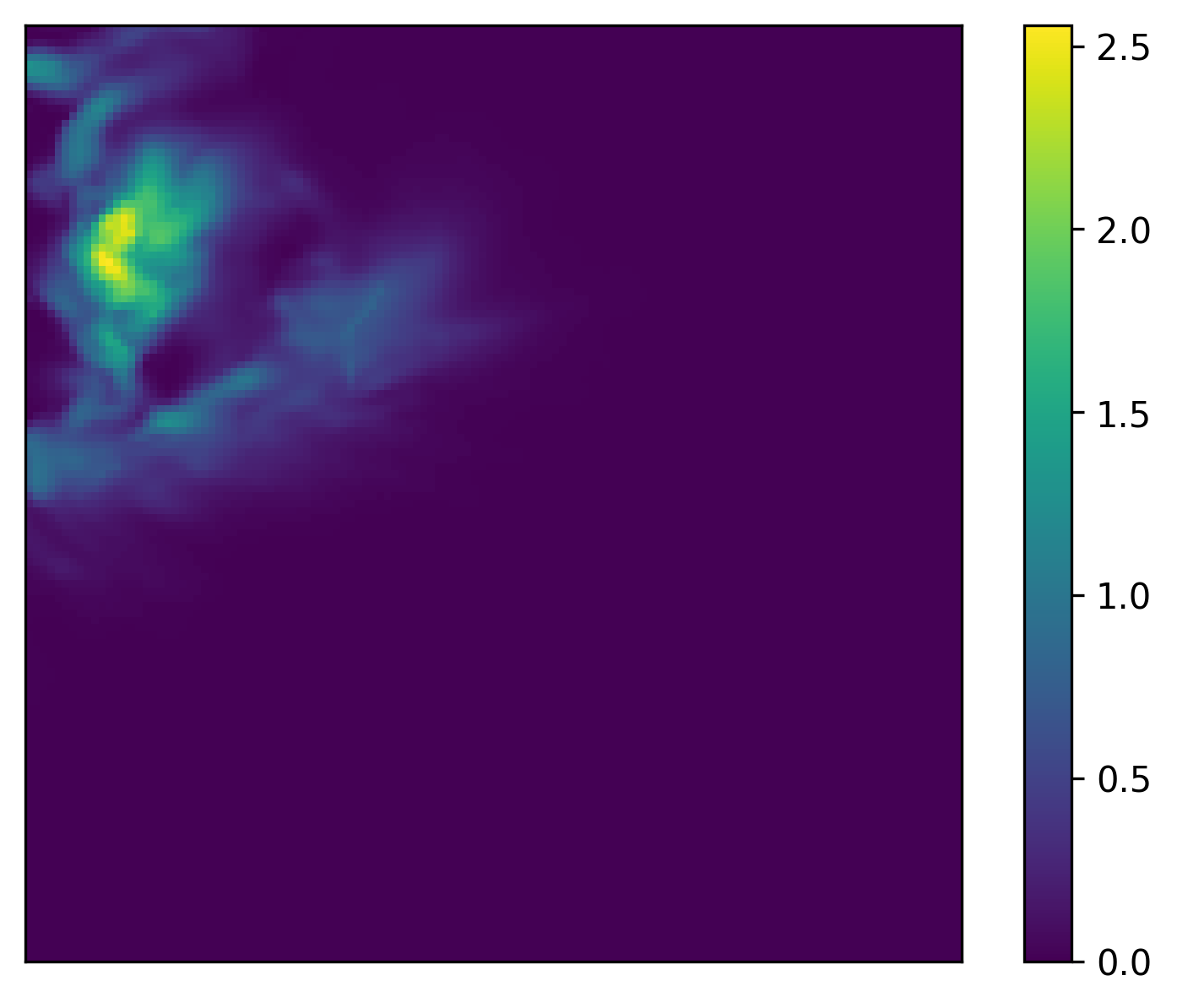}\\Ground Truth \end{tabular}} & \begin{tabular}[c]{@{}c@{}} Ensemble Mean \\ \includegraphics[width=0.2\linewidth]{./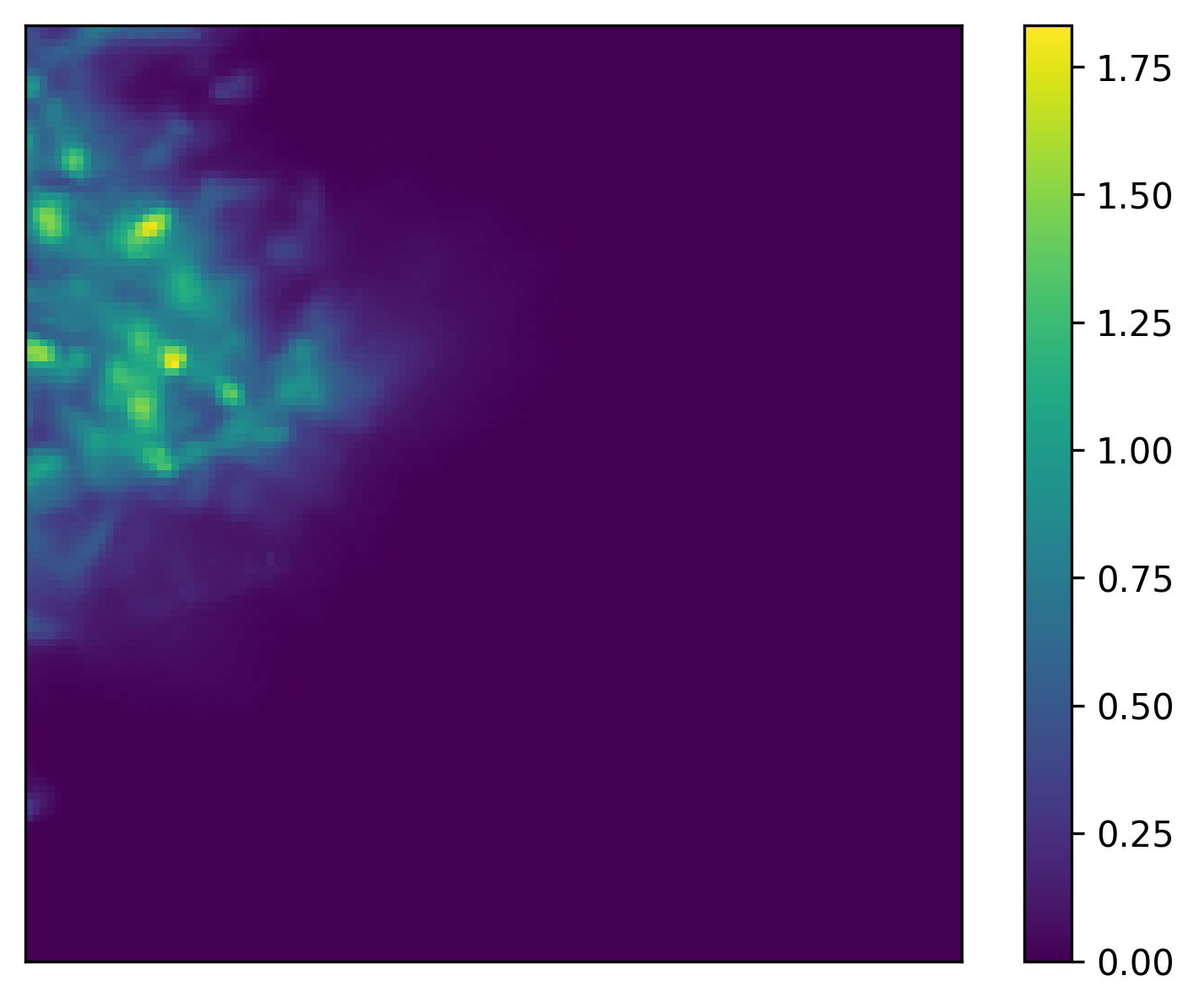}\end{tabular}   &
\begin{tabular}[c]{@{}c@{}}Prediction 1 \\ \includegraphics[width=0.2\linewidth]{./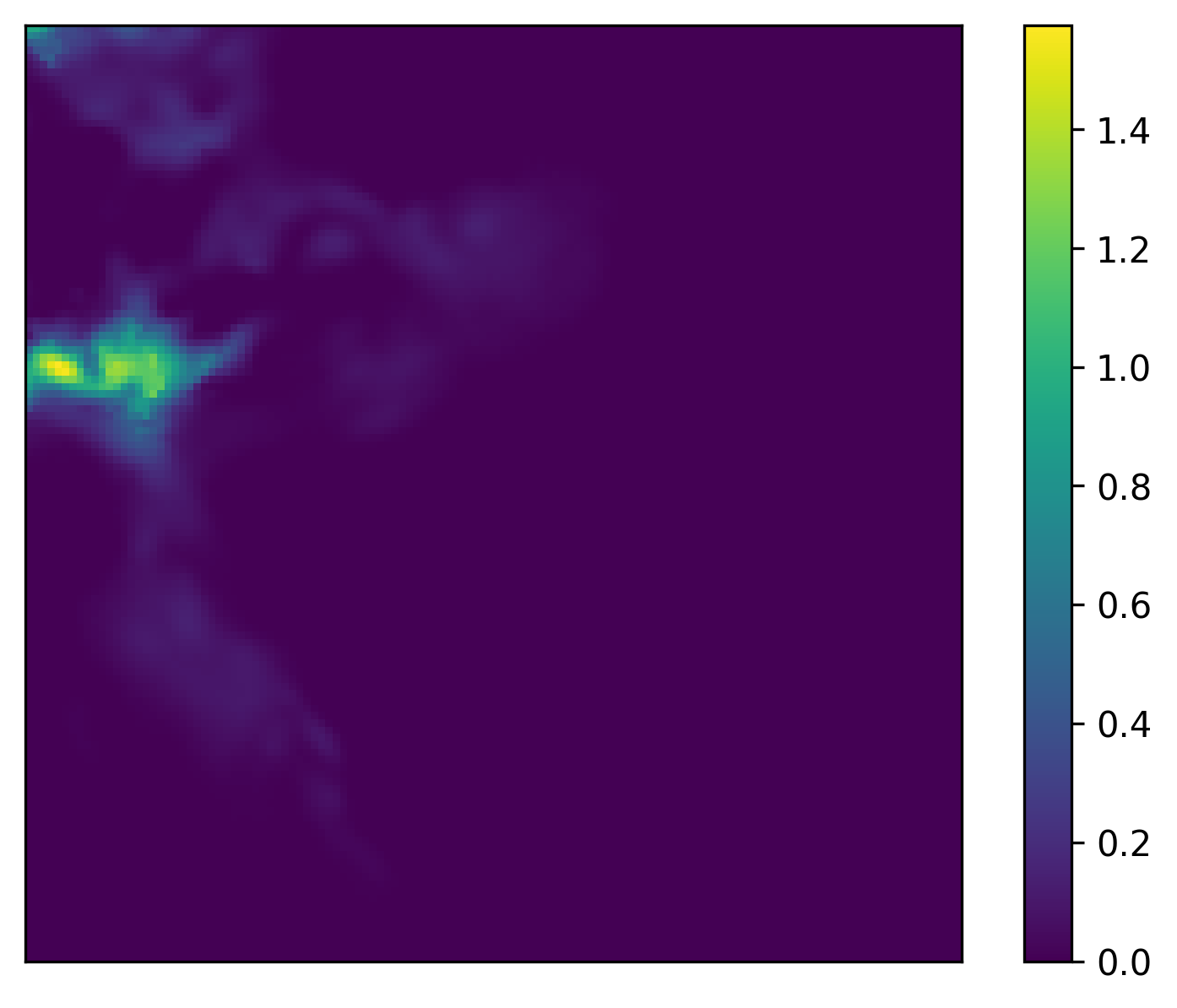}\end{tabular} &
\begin{tabular}[c]{@{}c@{}}Prediction 2 \\ \includegraphics[width=0.2\linewidth]{./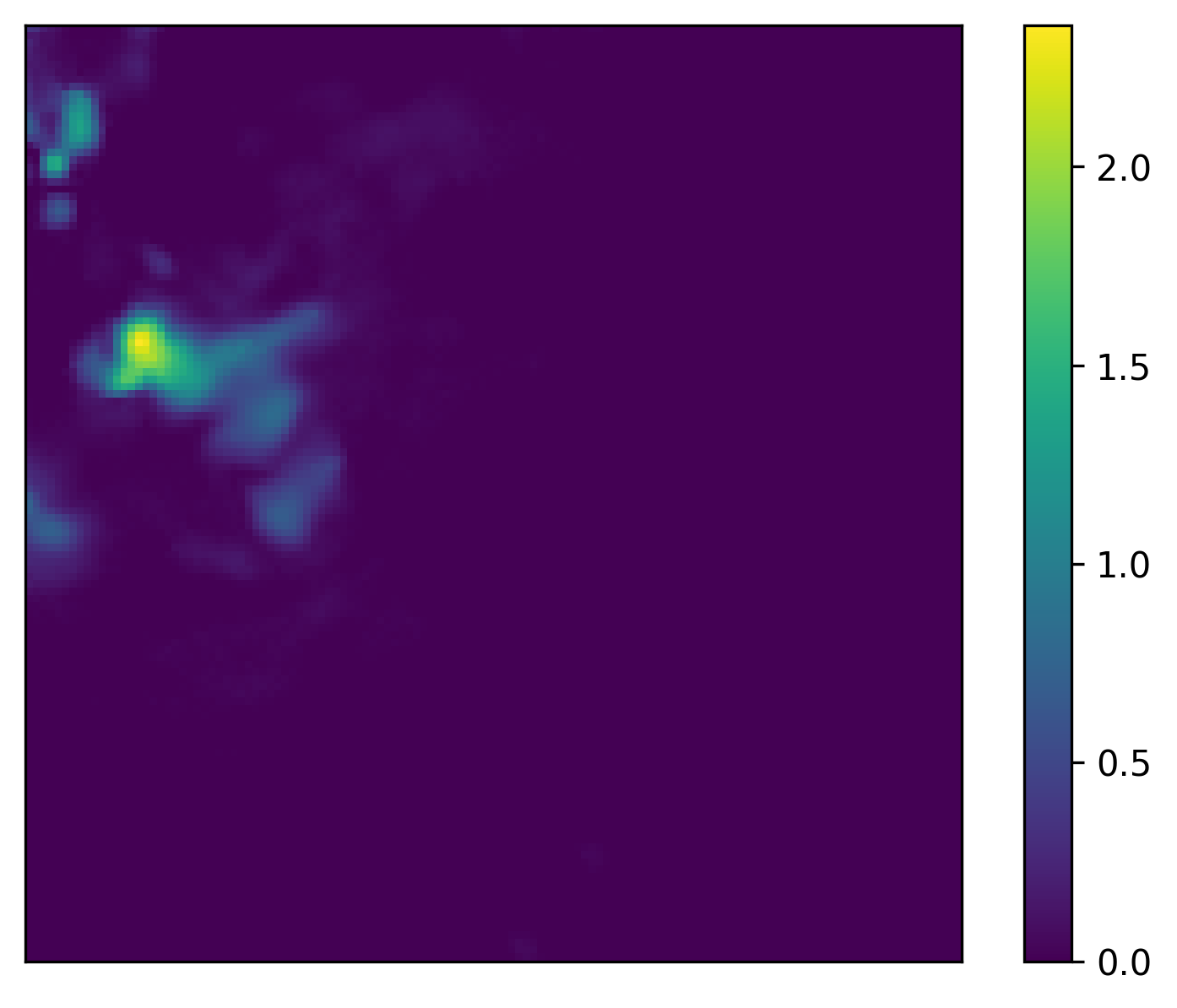}\end{tabular} &
\begin{tabular}[c]{@{}c@{}} Ensemble (gif) \\ \animategraphics[loop, autoplay, width=0.2\linewidth]{1}{figures/nfp_vil/edm/sample_1/ensemble/vil_pred_1_}{0}{15} \end{tabular}  \\
         & \includegraphics[width=0.2\linewidth]{./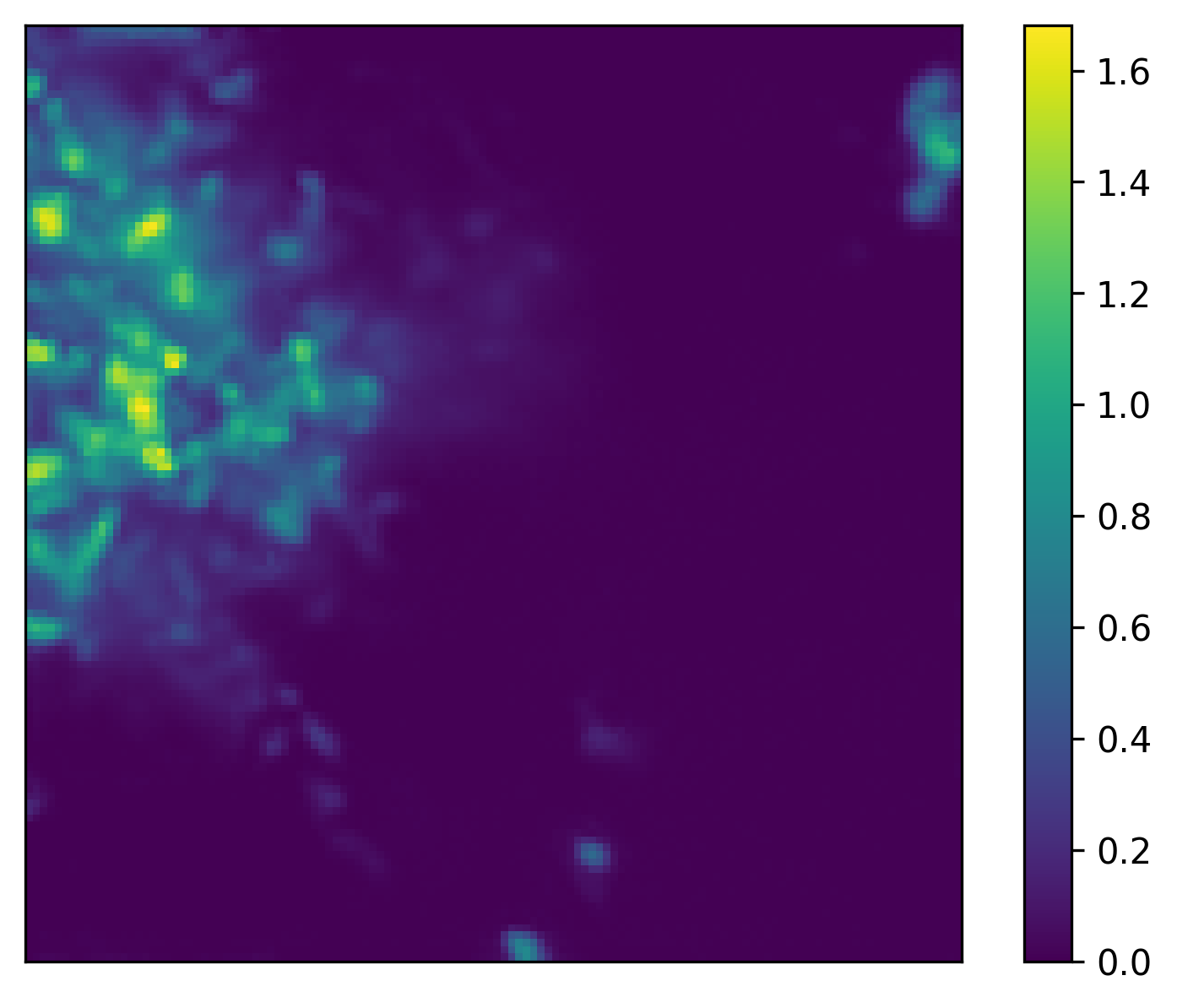} &
         \includegraphics[width=0.2\linewidth]{./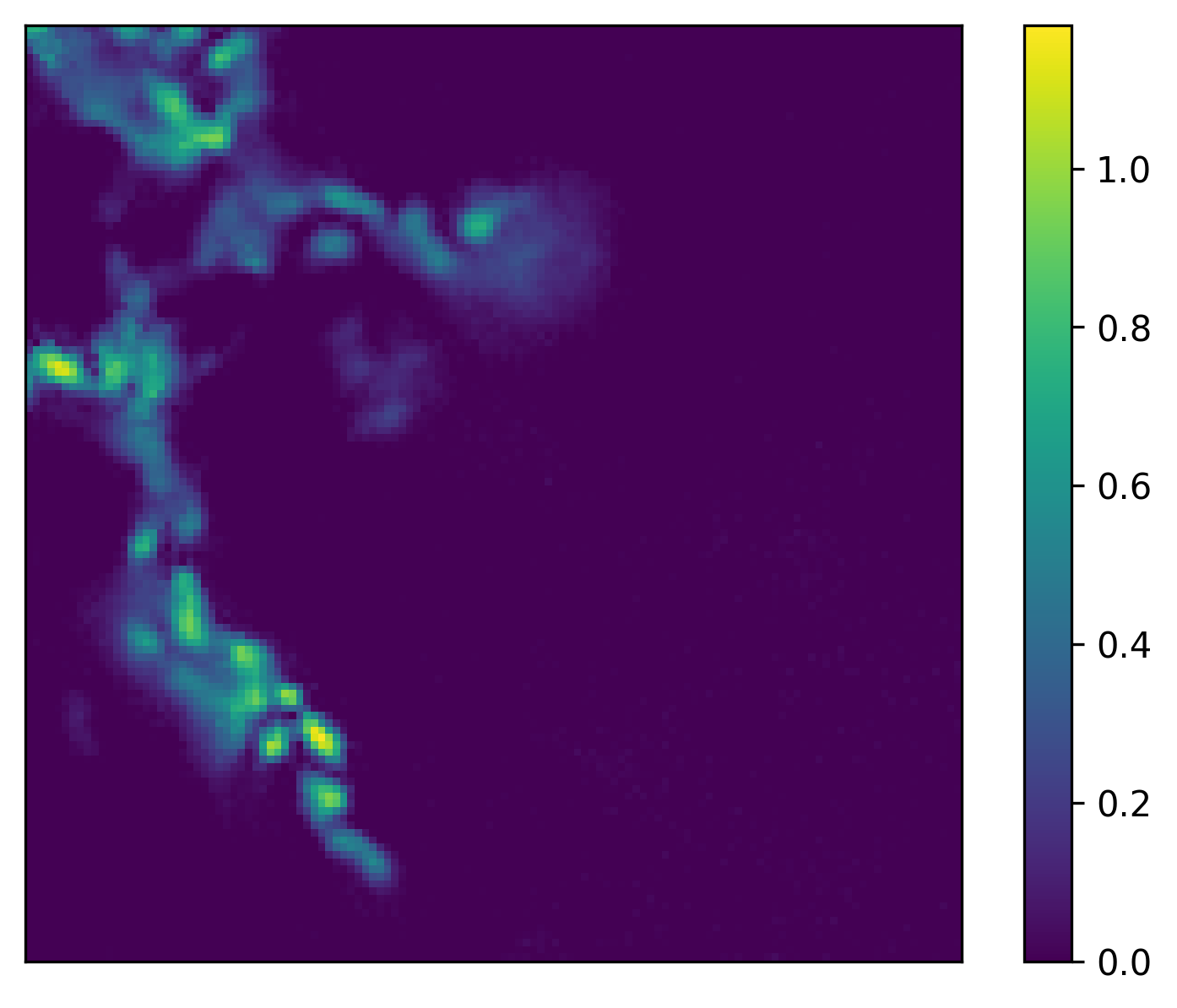} &
         \includegraphics[width=0.2\linewidth]{./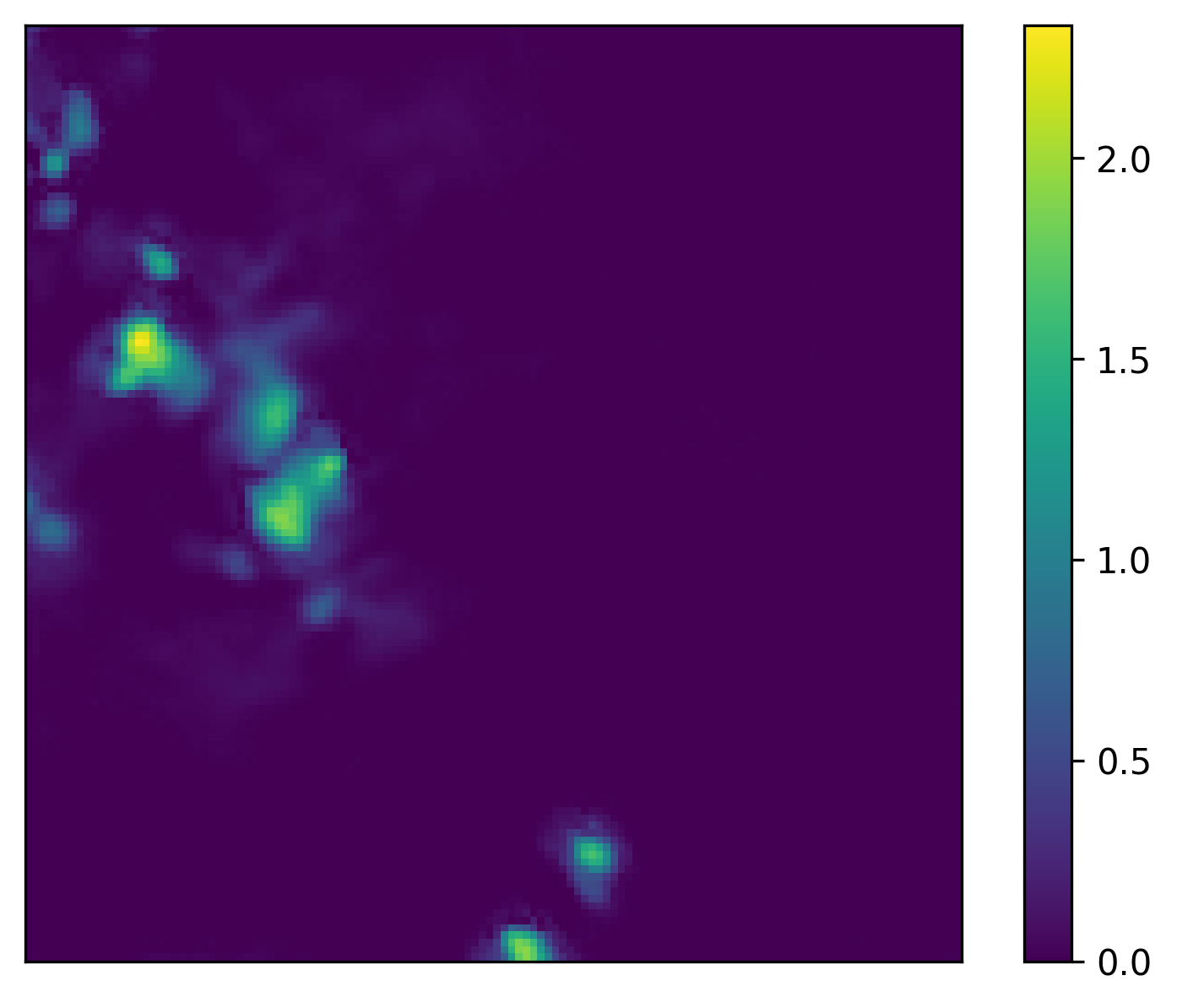} &
         \animategraphics[loop, autoplay, width=0.2\linewidth]{1}{figures/nfp_vil/tedm_nu=3/sample_1/ensemble/vil_pred_1_}{0}{15} \\
         & \includegraphics[width=0.2\linewidth]{./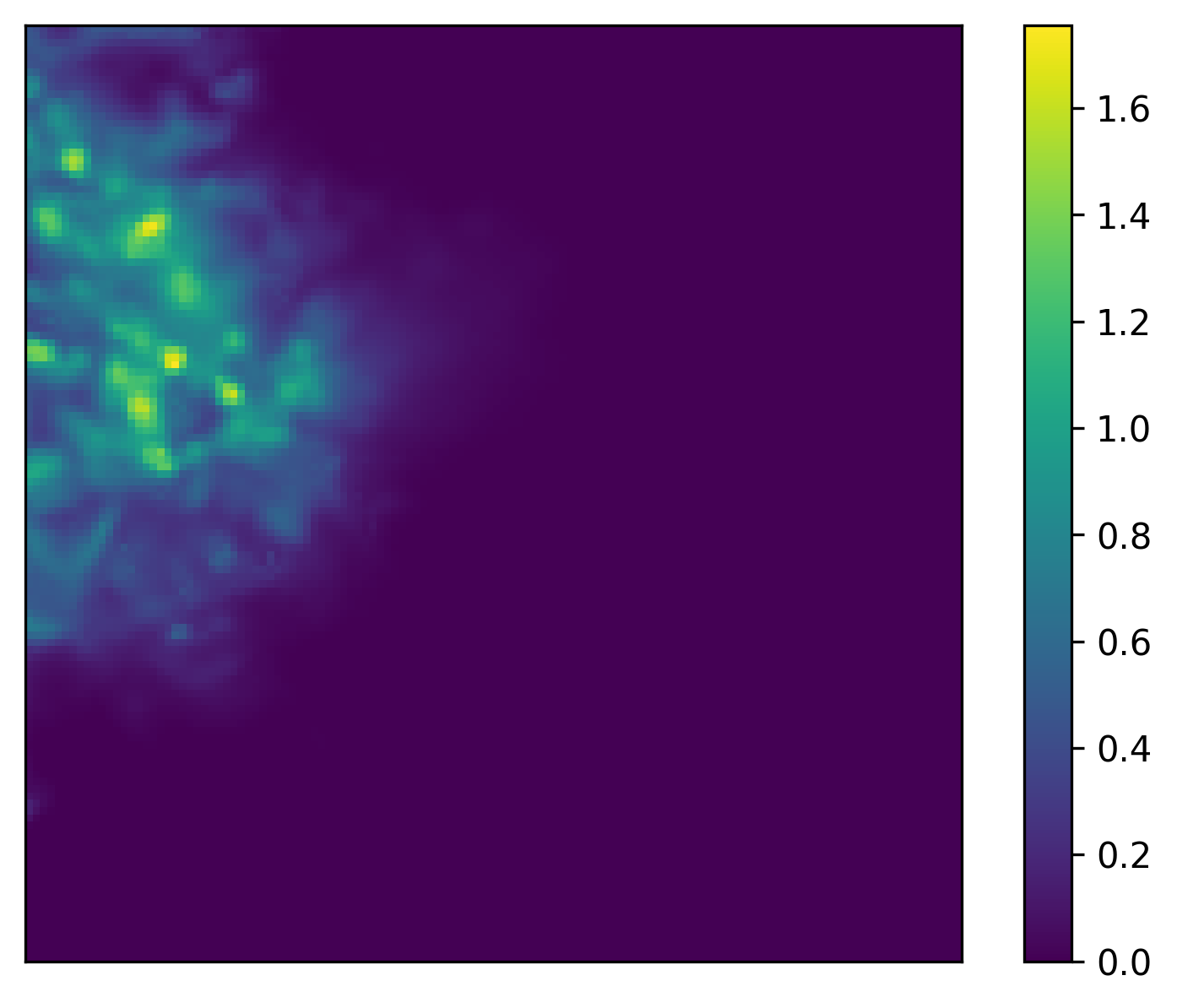} &
         \includegraphics[width=0.2\linewidth]{./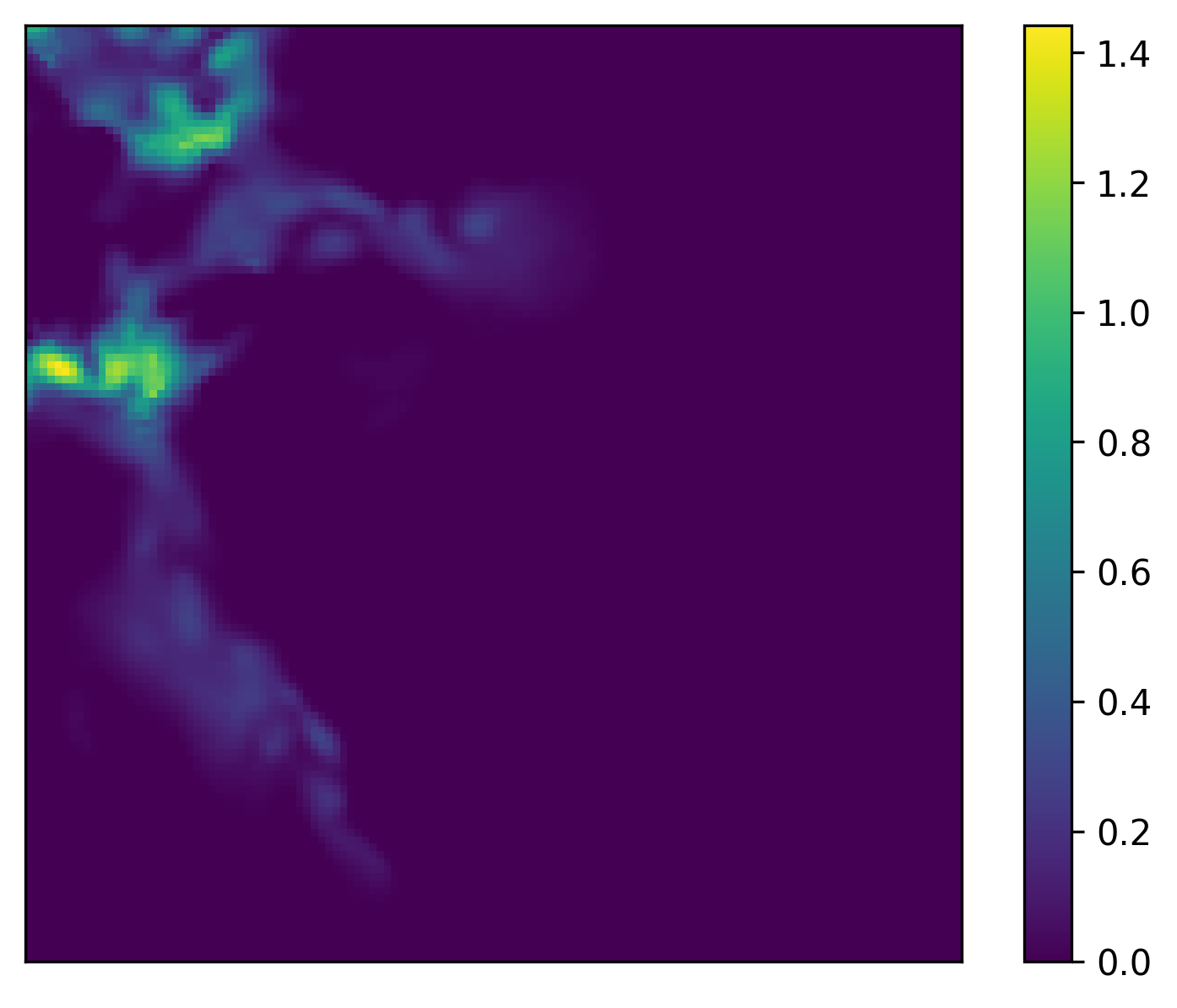} &
         \includegraphics[width=0.2\linewidth]{./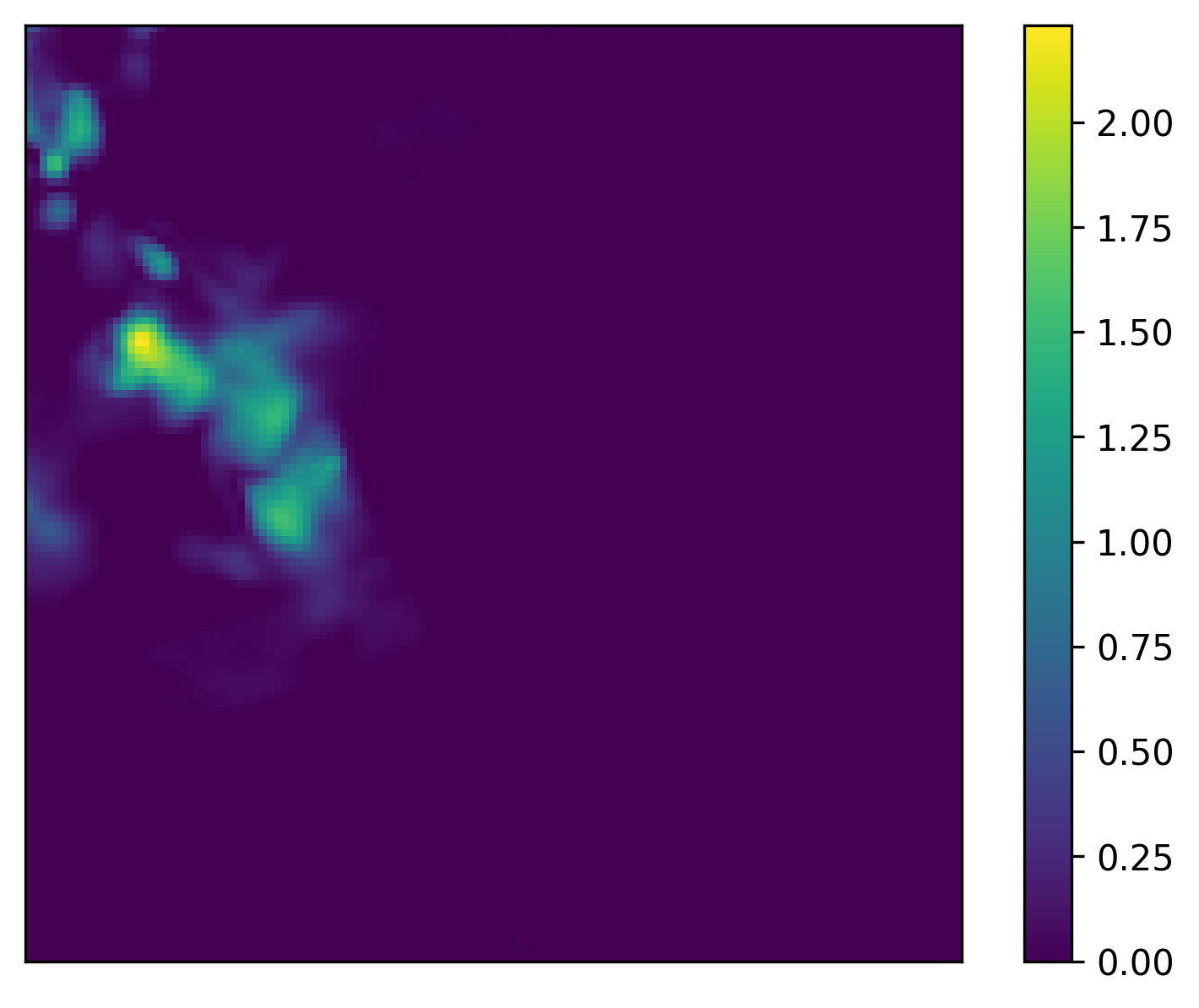} &
         \animategraphics[loop, autoplay, width=0.2\linewidth]{1}{figures/nfp_vil/tedm_nu=5/sample_1/ensemble/vil_pred_1_}{0}{15} \\
\end{tabular}
\caption{Qualitative visualization of samples generated from our conditional modeling for predicting the next state for the Vertically Integrated Liquid (VIL) channel. The ensemble mean represents the mean of ensemble predictions (16 in our case). Columns 2-3 represent two samples from the ensemble. The last column visualizes an animation of all ensemble members (Best viewed in a dedicated PDF reader). For each sample, the rows correspond to predictions from EDM, t-EDM ($\nu=3$), and t-EDM ($\nu=5$) from top to bottom, respectively. Samples have been scaled logarithmically for better visualization}
\label{fig:vil_cond}
\end{table}
}
{
\renewcommand{\tablename}{Figure}
\begin{table}[h]
\begin{tabular}{ccccc}
\multirow{3}{*}{\begin{tabular}[c]{@{}c@{}}\includegraphics[width=0.2\linewidth]{./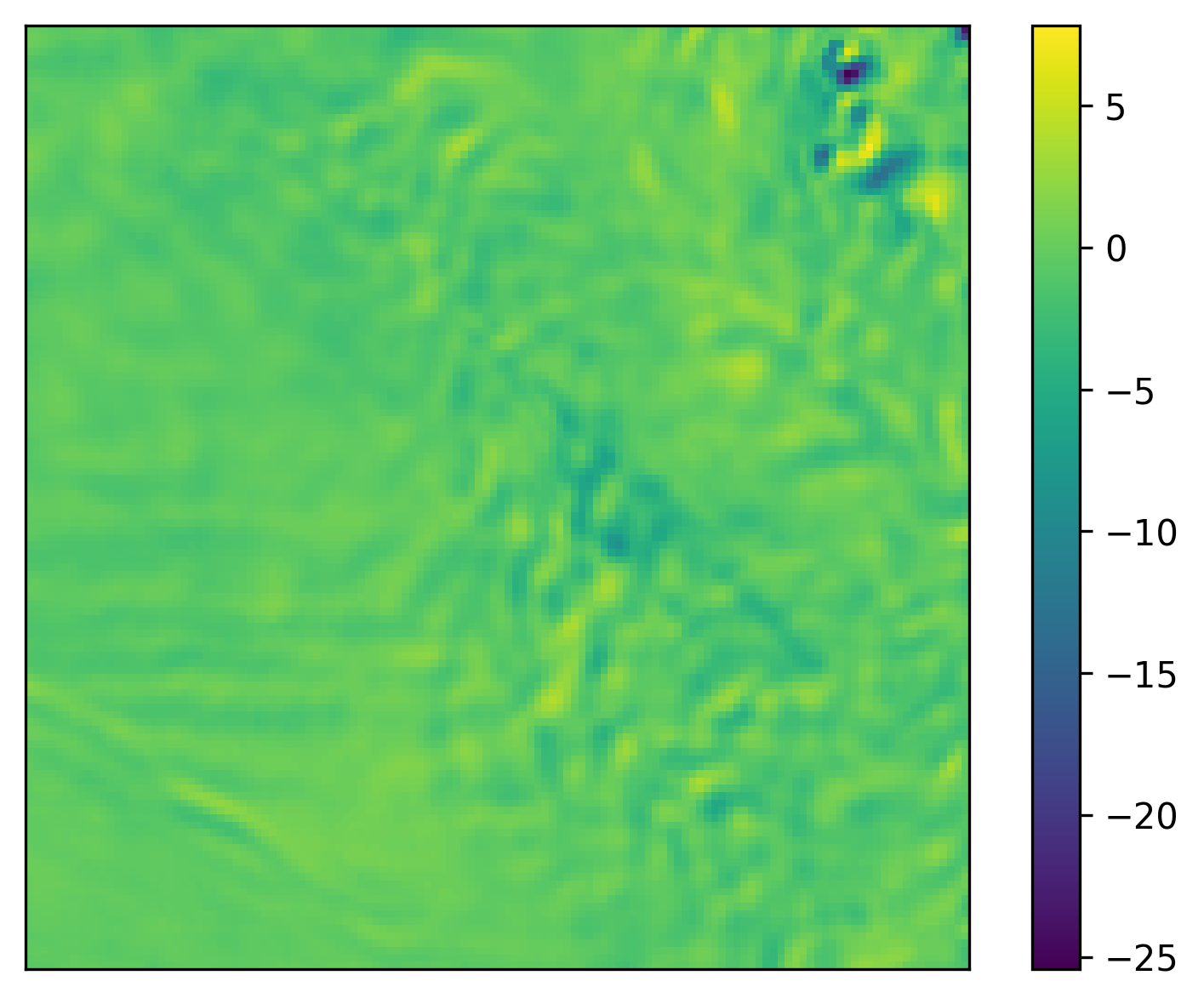}\\Ground Truth \end{tabular}} & \begin{tabular}[c]{@{}c@{}}Ensemble Mean \\ \includegraphics[width=0.2\linewidth]{./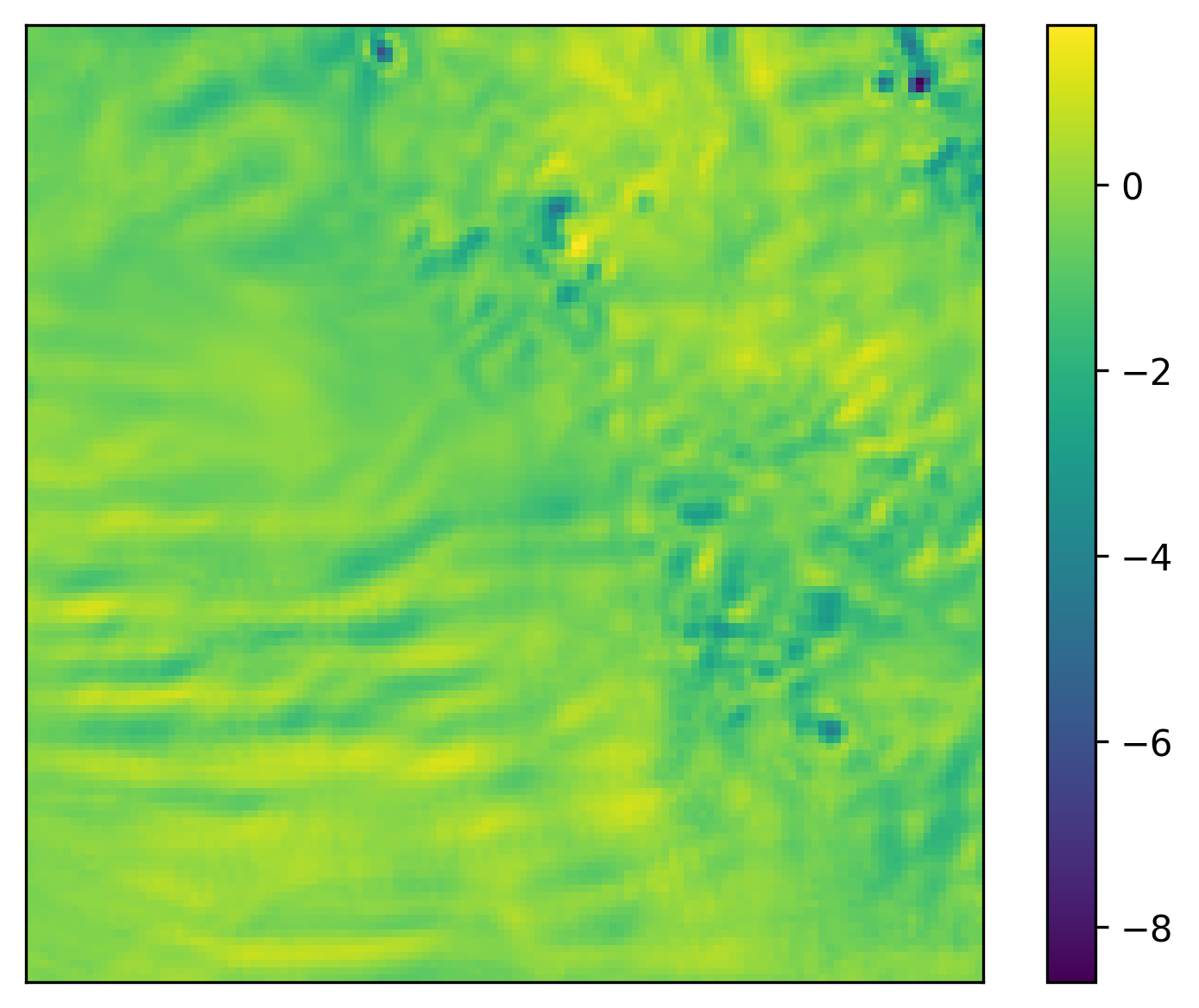}\end{tabular} & \begin{tabular}[c]{@{}c@{}}Prediction 1 \\ \includegraphics[width=0.2\linewidth]{./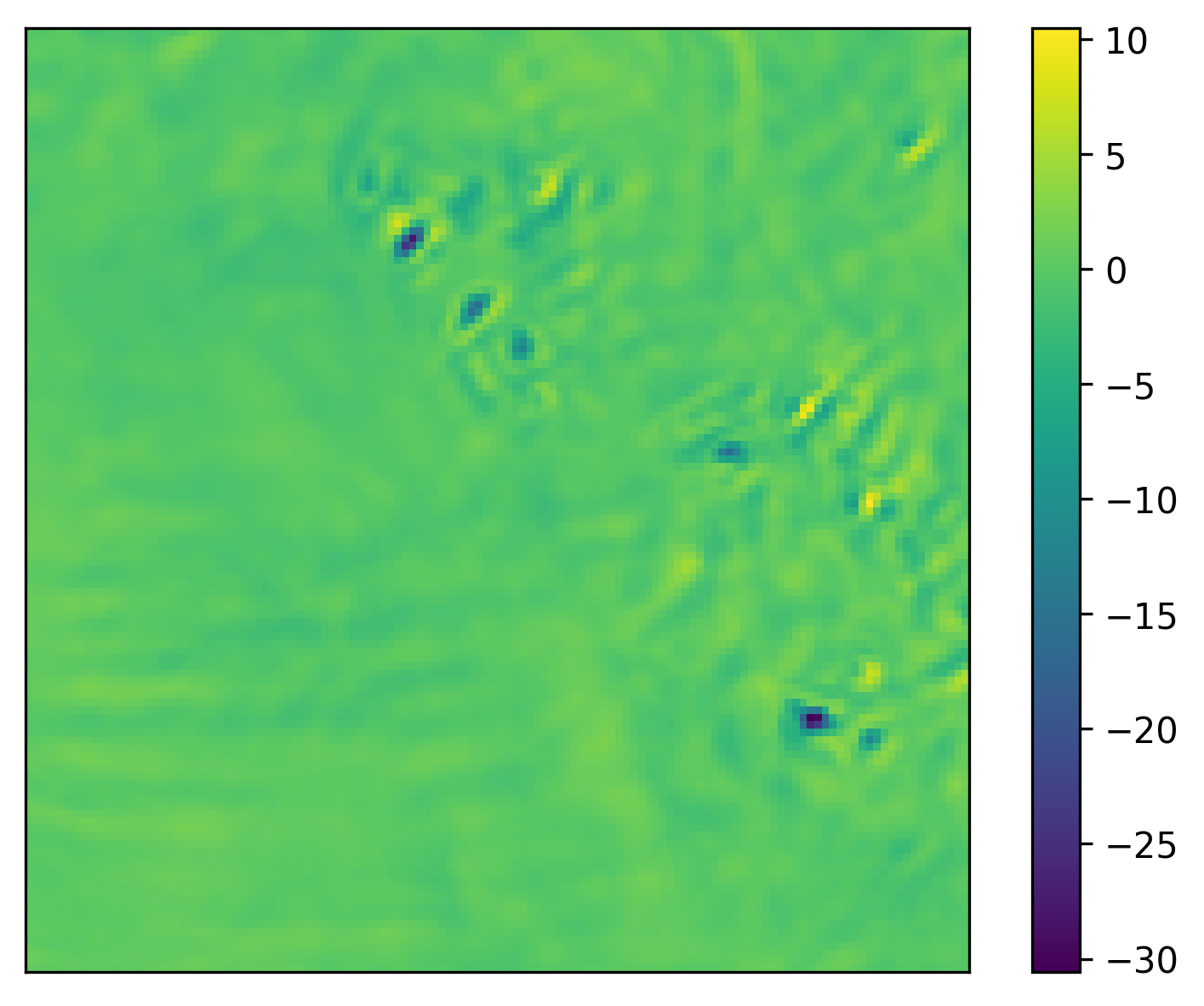}\end{tabular} & \begin{tabular}[c]{@{}c@{}}Prediction 2 \\ \includegraphics[width=0.2\linewidth]{./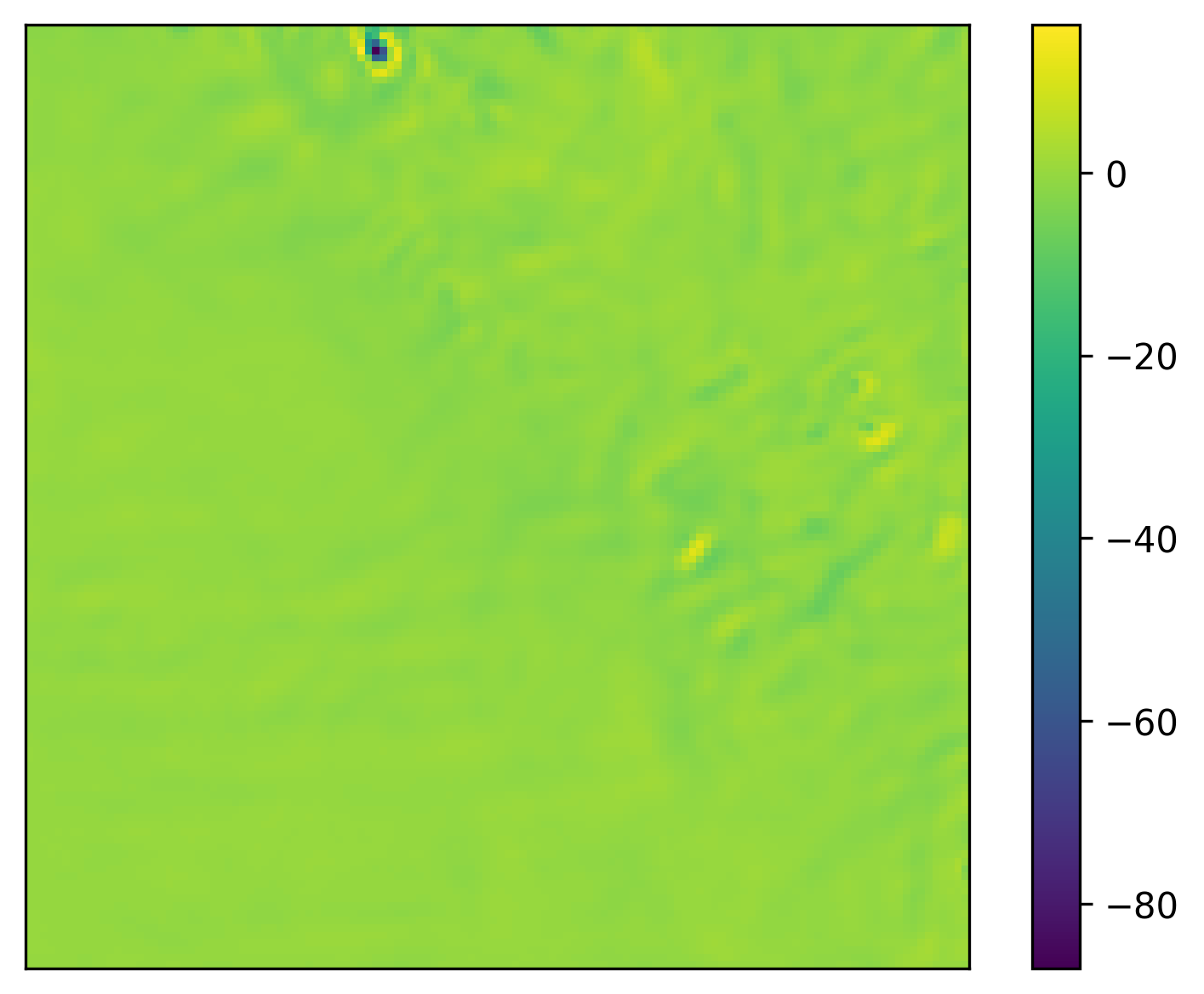}\end{tabular}
   & \begin{tabular}[c]{@{}c@{}} Ensemble (gif) \\ \animategraphics[loop, autoplay, width=0.2\linewidth]{1}{figures/nfp_w20/edm/sample_0/ensemble/w20_pred_0_}{0}{15}\end{tabular}   \\
         & \includegraphics[width=0.2\linewidth]{./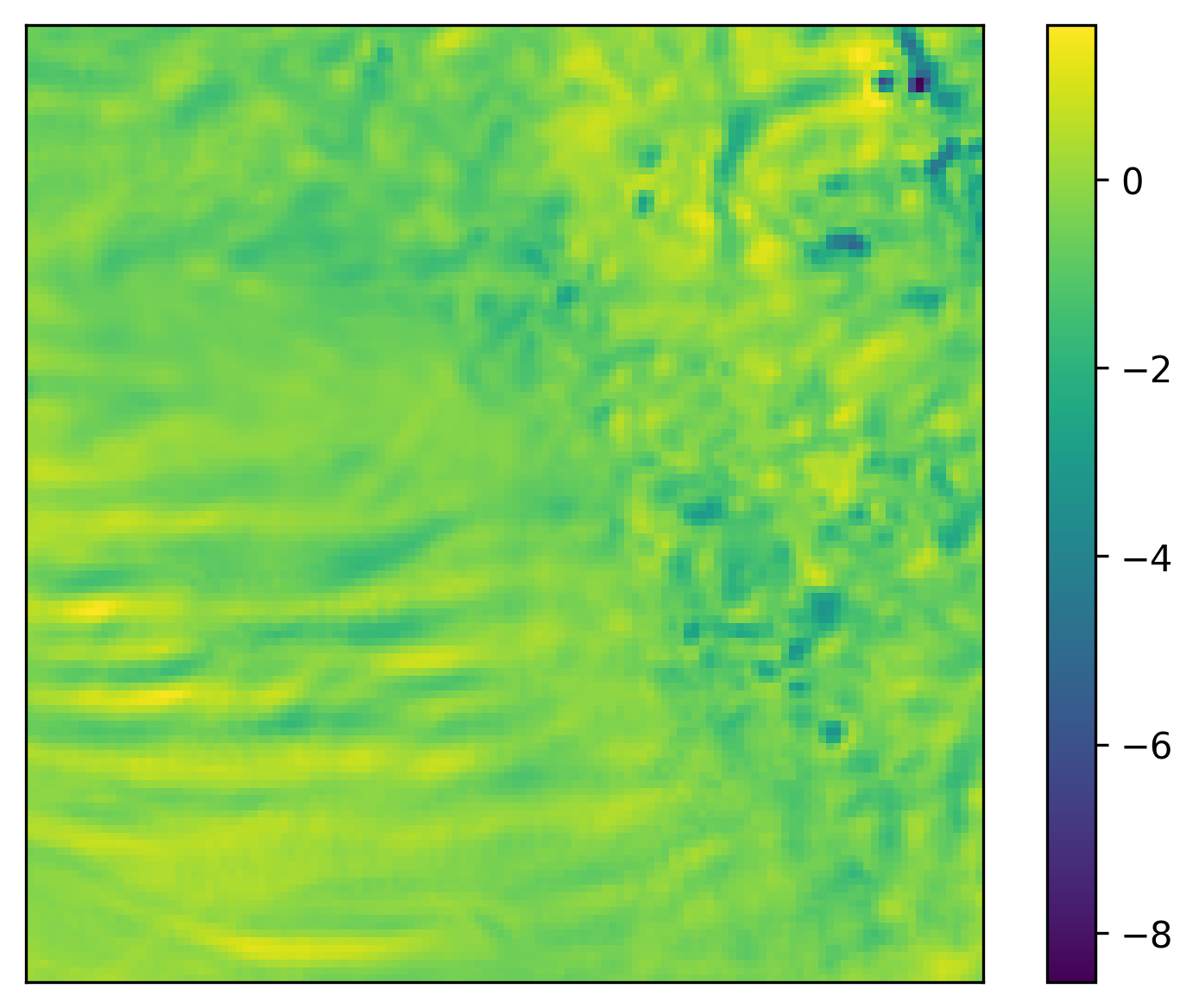} & \includegraphics[width=0.2\linewidth]{./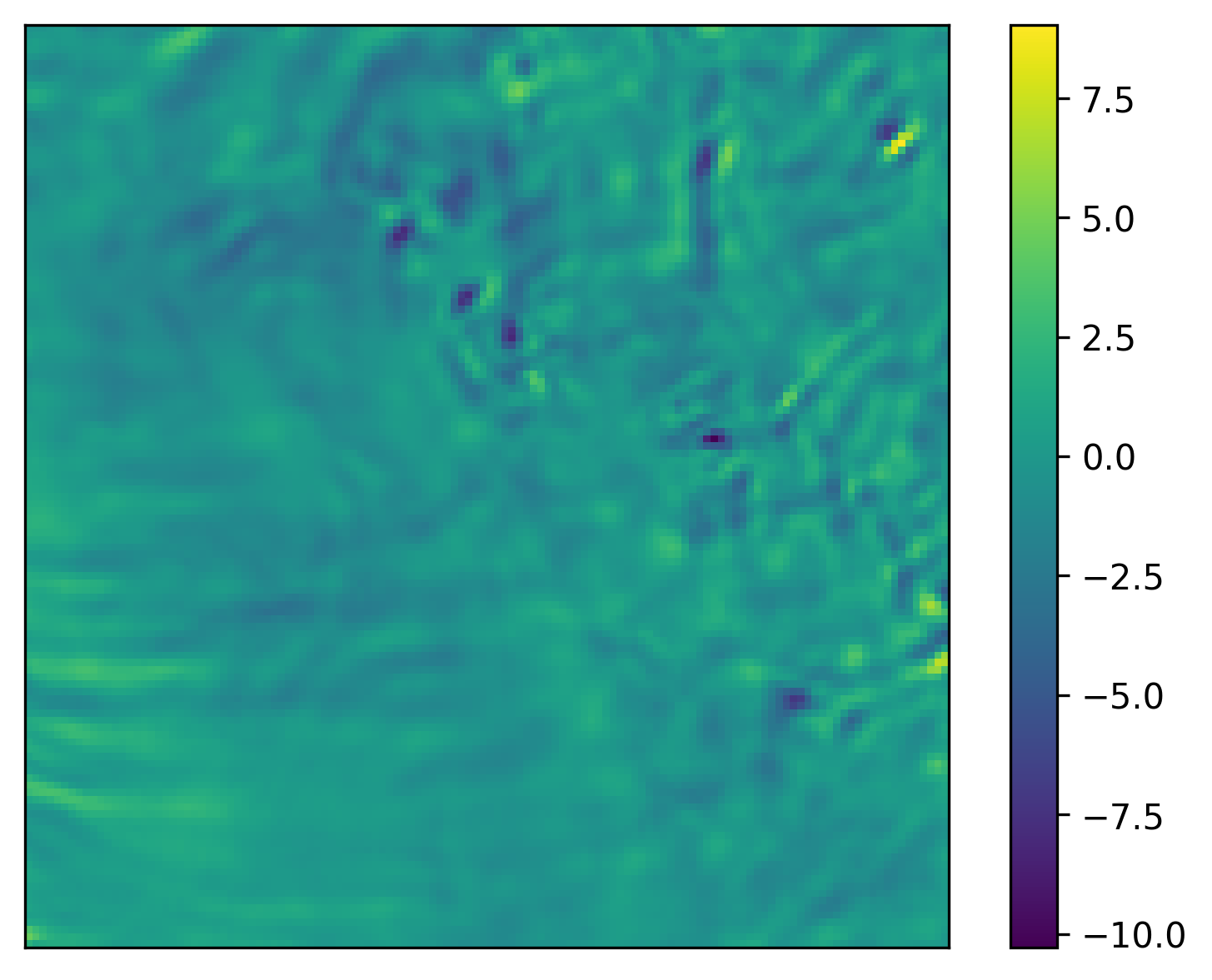} & \includegraphics[width=0.2\linewidth]{./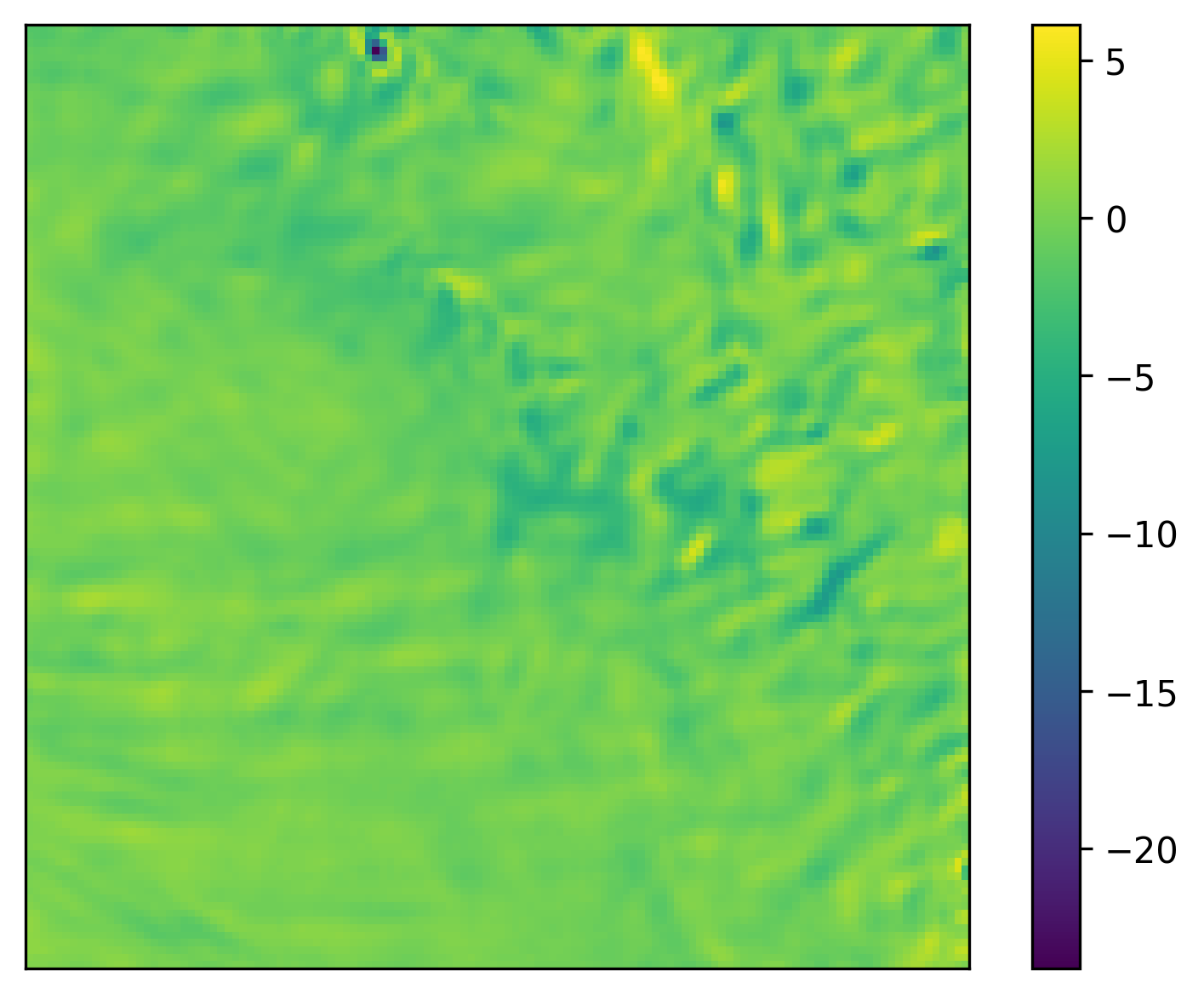} & \animategraphics[loop, autoplay, width=0.2\linewidth]{1}{figures/nfp_w20/tedm_nu=3/sample_0/ensemble/w20_pred_0_}{0}{15} \\
         & \includegraphics[width=0.2\linewidth]{./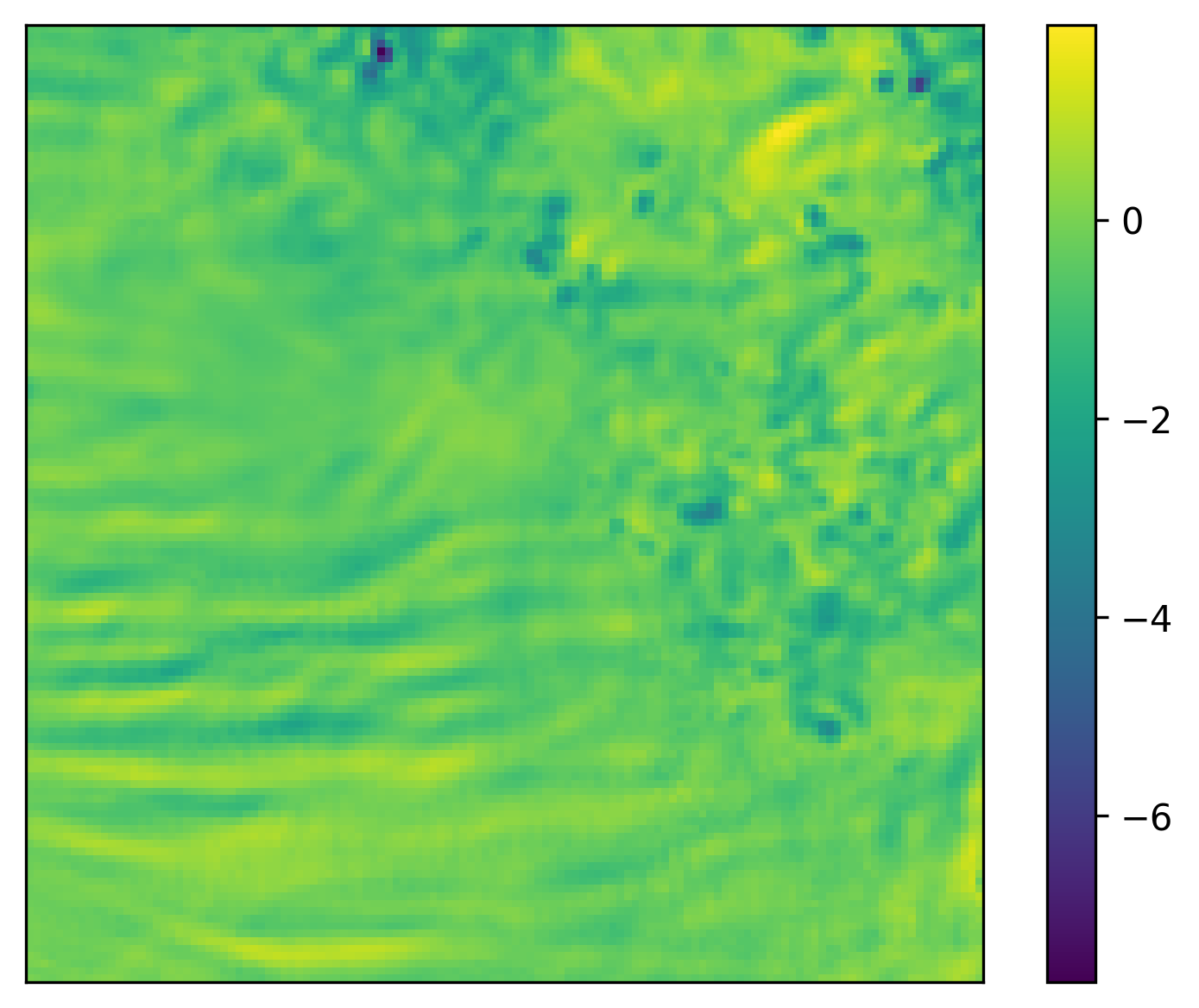} &
         \includegraphics[width=0.2\linewidth]{./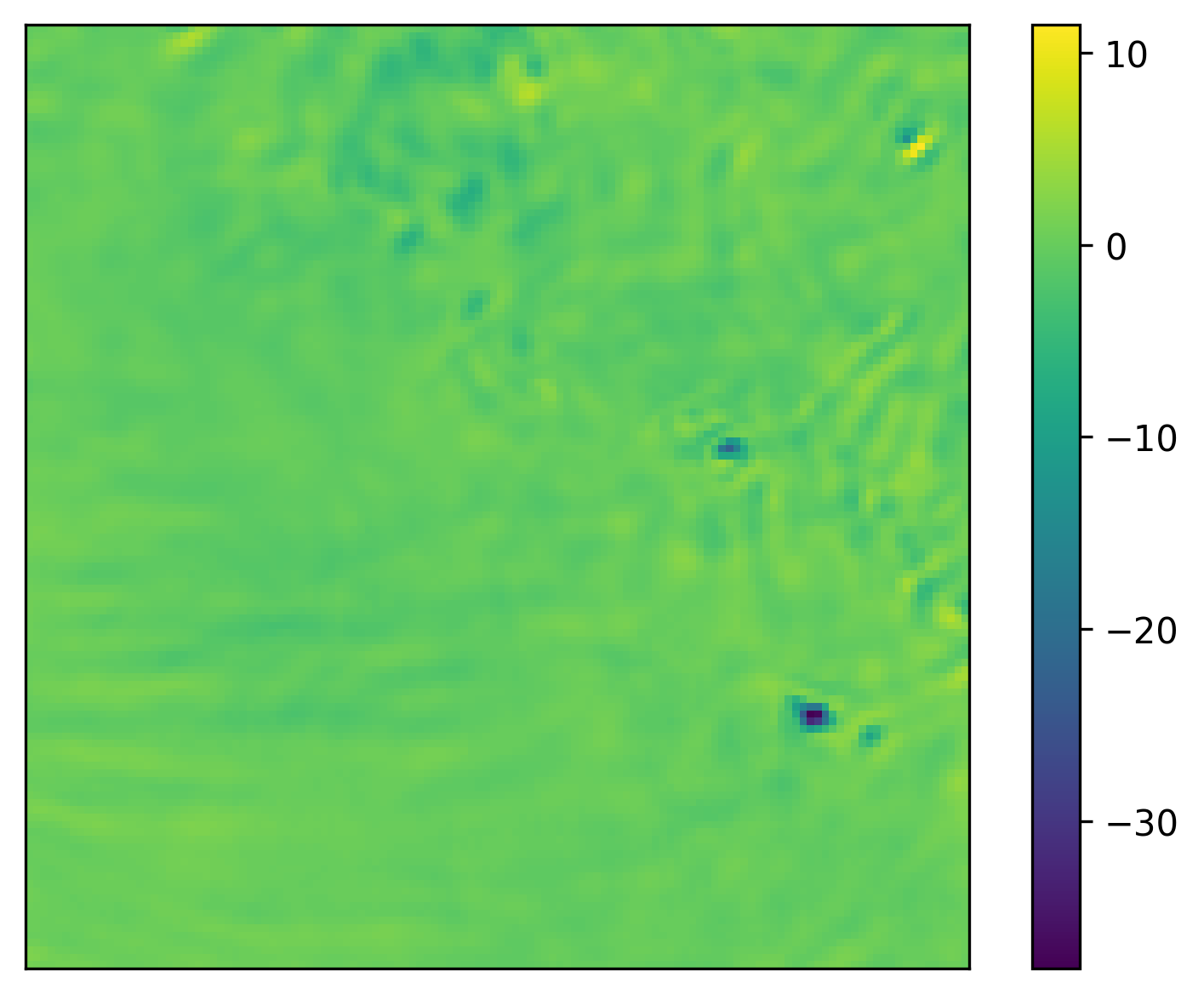} &
         \includegraphics[width=0.2\linewidth]{./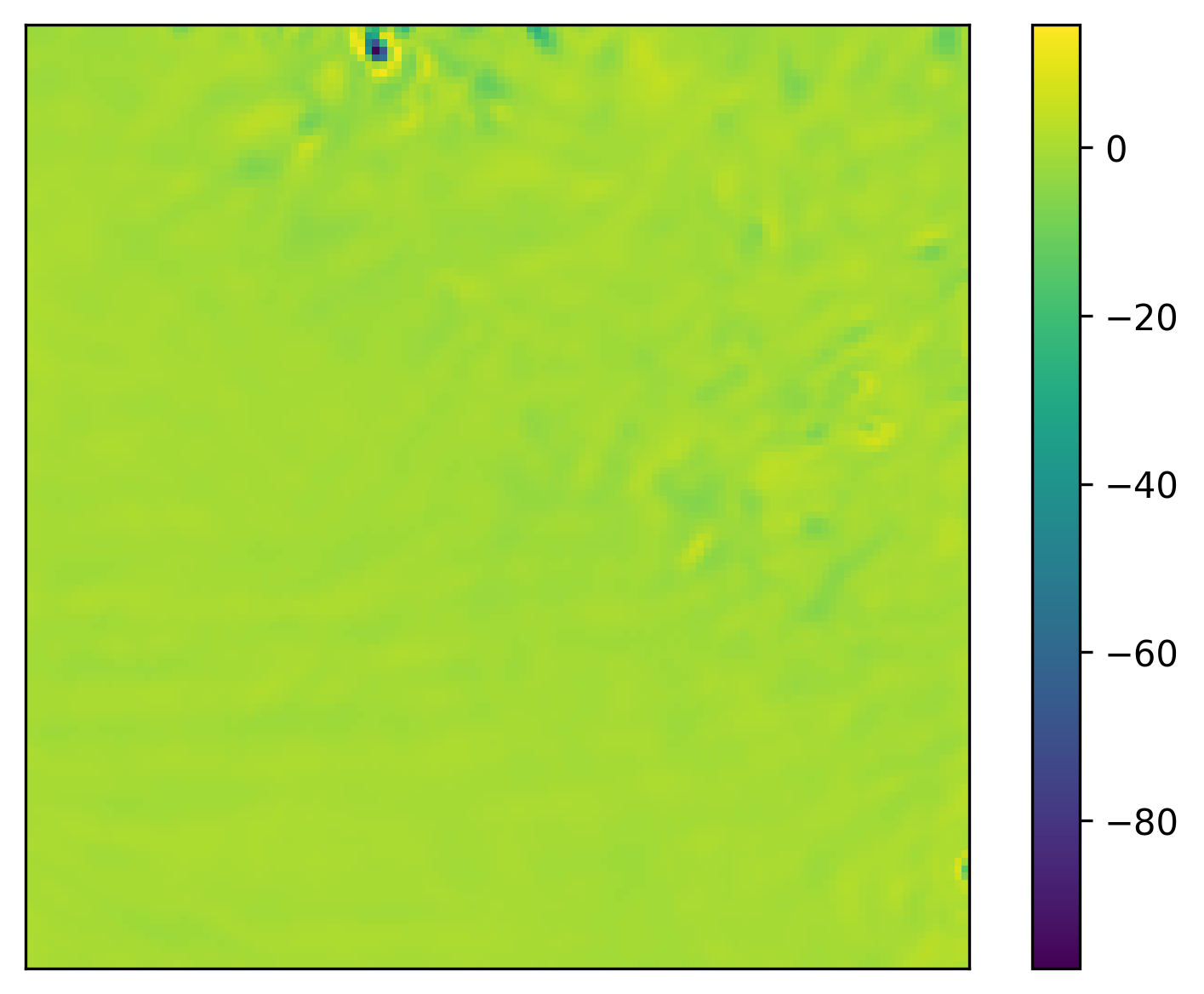} &
         \animategraphics[loop, autoplay, width=0.2\linewidth]{1}{figures/nfp_w20/tedm_nu=5/sample_0/ensemble/w20_pred_0_}{0}{15} \\
\multirow{3}{*}{\begin{tabular}[c]{@{}c@{}}\includegraphics[width=0.2\linewidth]{./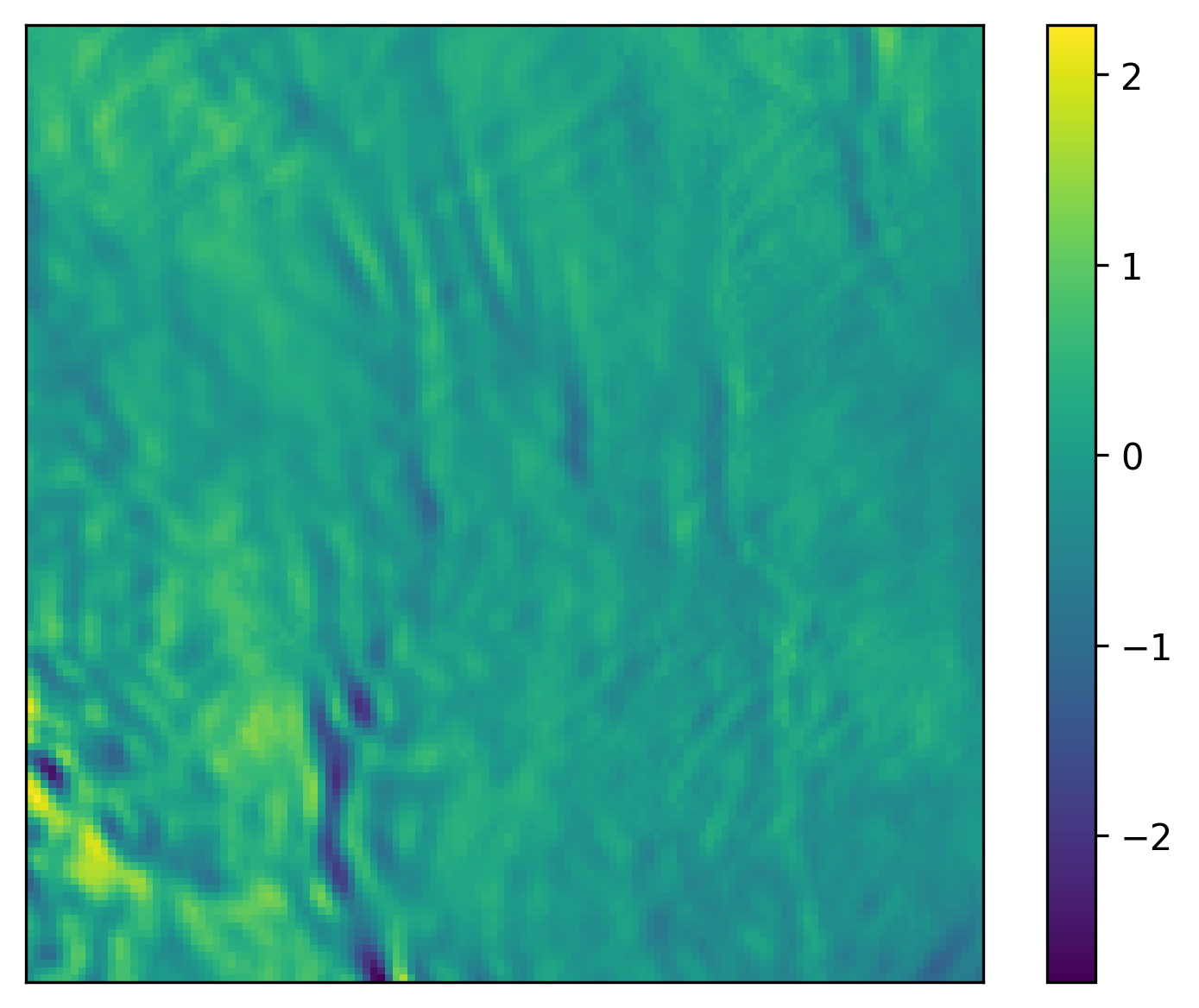}\\Ground Truth \end{tabular}} & \begin{tabular}[c]{@{}c@{}} Ensemble Mean \\ \includegraphics[width=0.2\linewidth]{./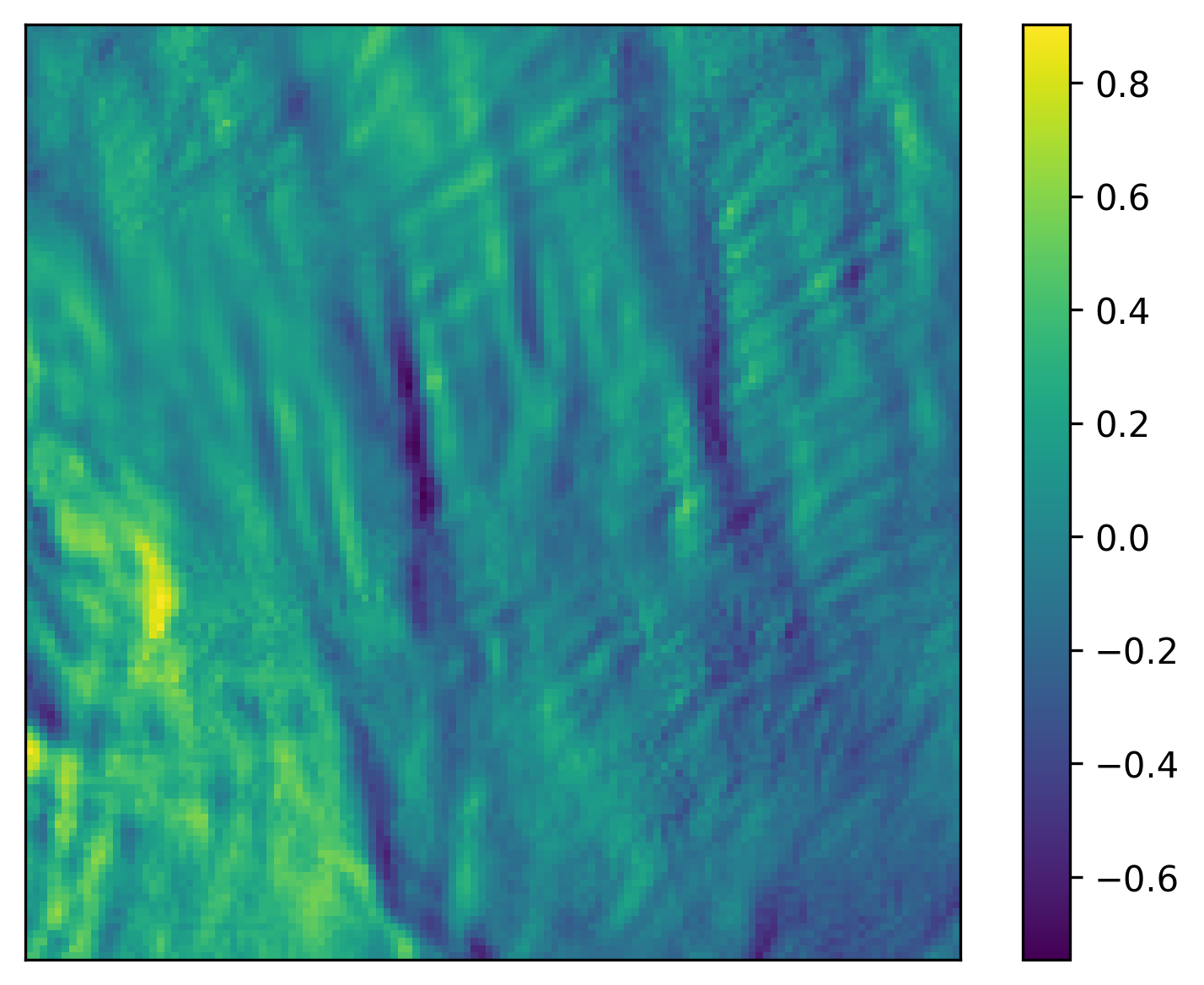}\end{tabular}   &
\begin{tabular}[c]{@{}c@{}}Prediction 1 \\ \includegraphics[width=0.2\linewidth]{./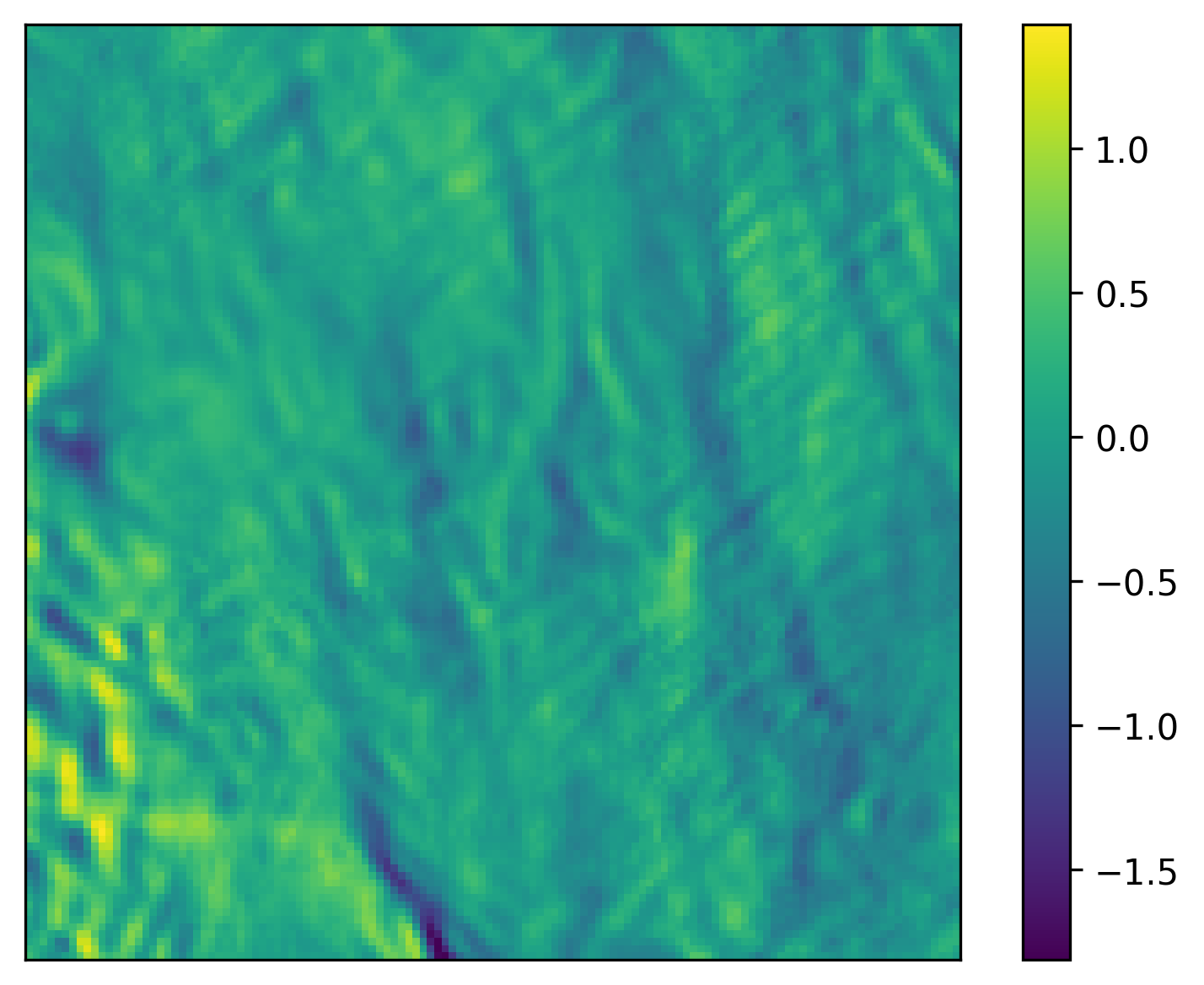}\end{tabular} &
\begin{tabular}[c]{@{}c@{}}Prediction 2 \\ \includegraphics[width=0.2\linewidth]{./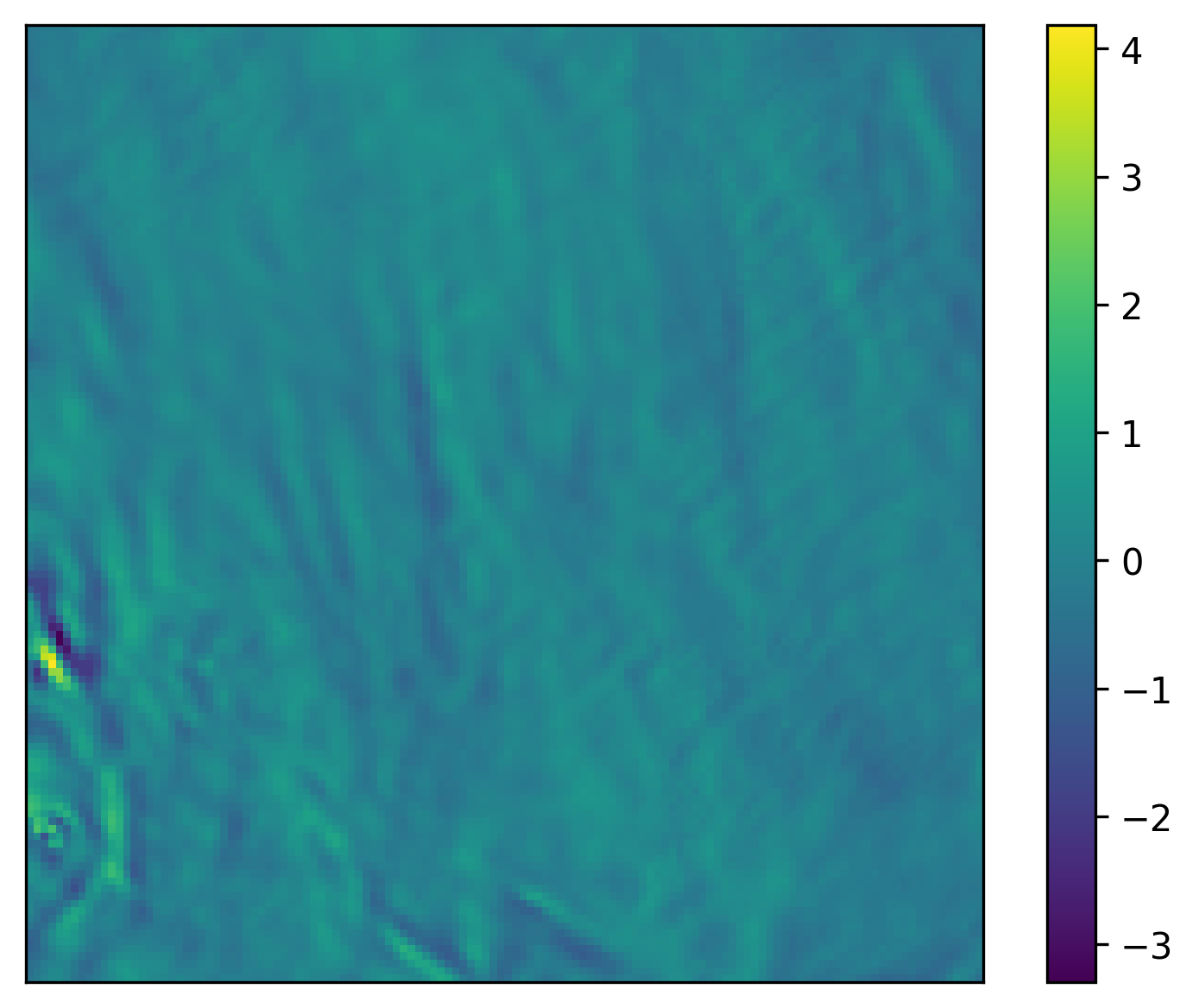}\end{tabular} &
\begin{tabular}[c]{@{}c@{}} Ensemble (gif) \\ \animategraphics[loop, autoplay, width=0.2\linewidth]{1}{figures/nfp_w20/edm/sample_1/ensemble/w20_pred_1_}{0}{15} \end{tabular}  \\
         & \includegraphics[width=0.2\linewidth]{./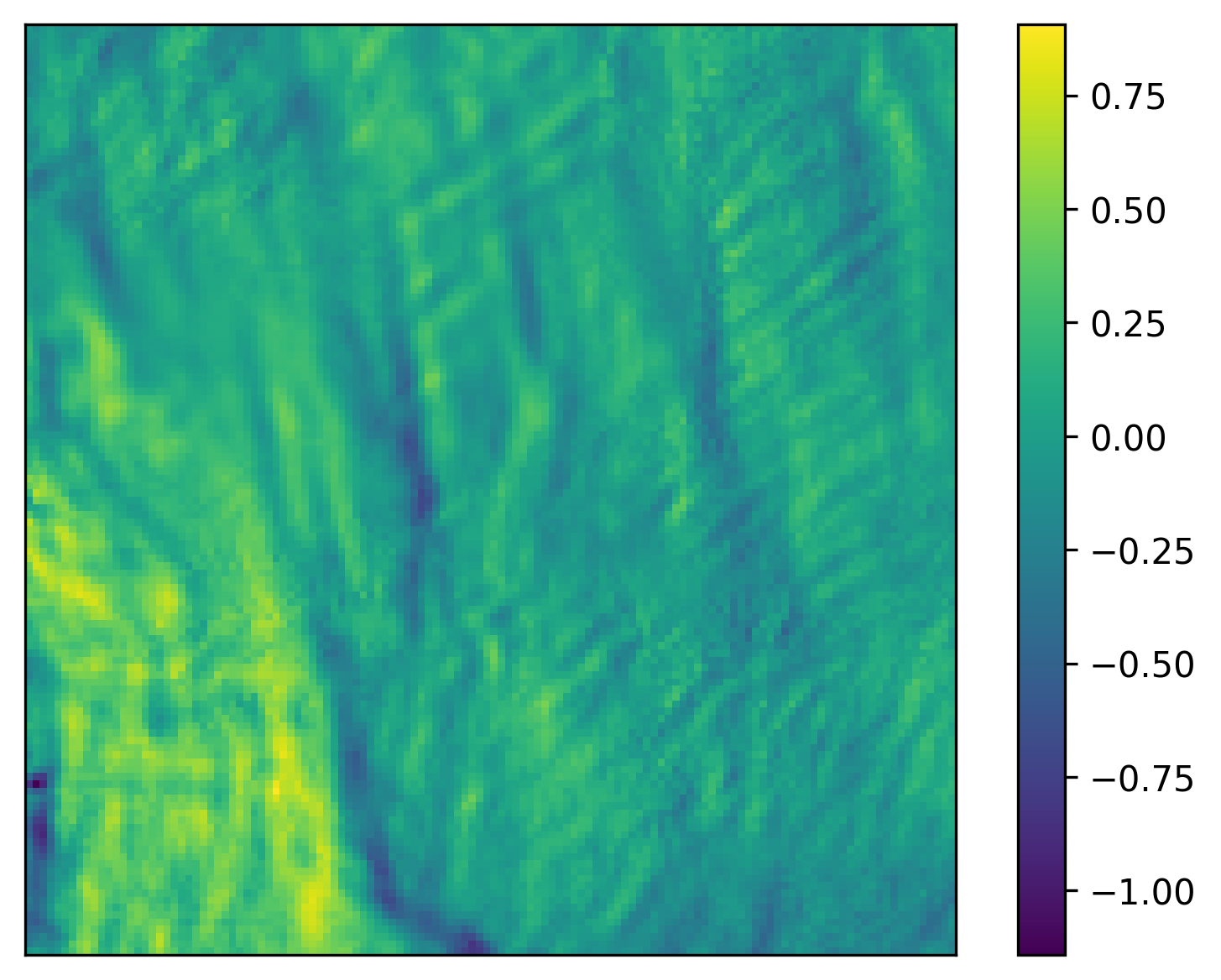} &
         \includegraphics[width=0.2\linewidth]{./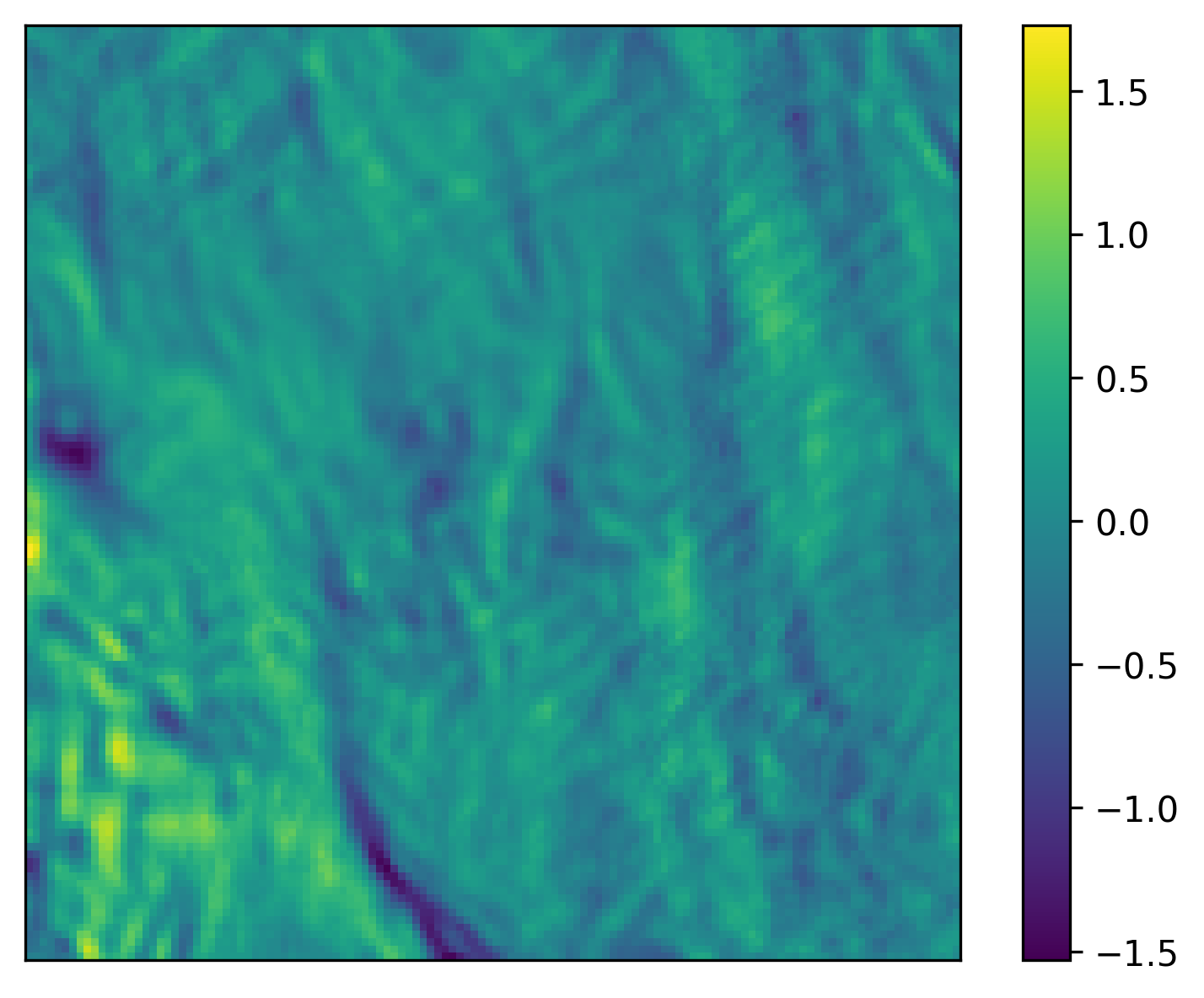} &
         \includegraphics[width=0.2\linewidth]{./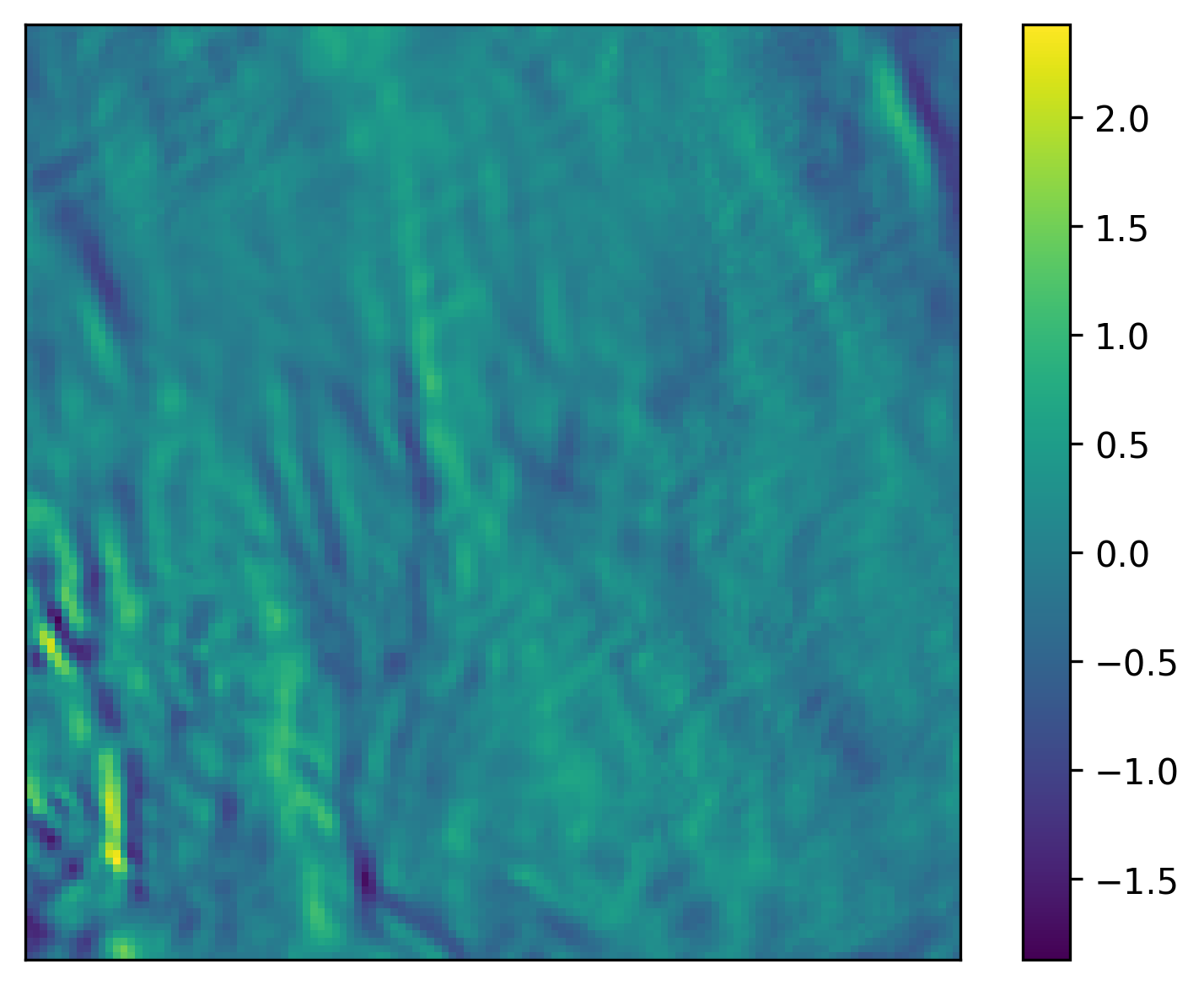} &
         \animategraphics[loop, autoplay, width=0.2\linewidth]{1}{figures/nfp_w20/tedm_nu=3/sample_1/ensemble/w20_pred_1_}{0}{15} \\
         & \includegraphics[width=0.2\linewidth]{./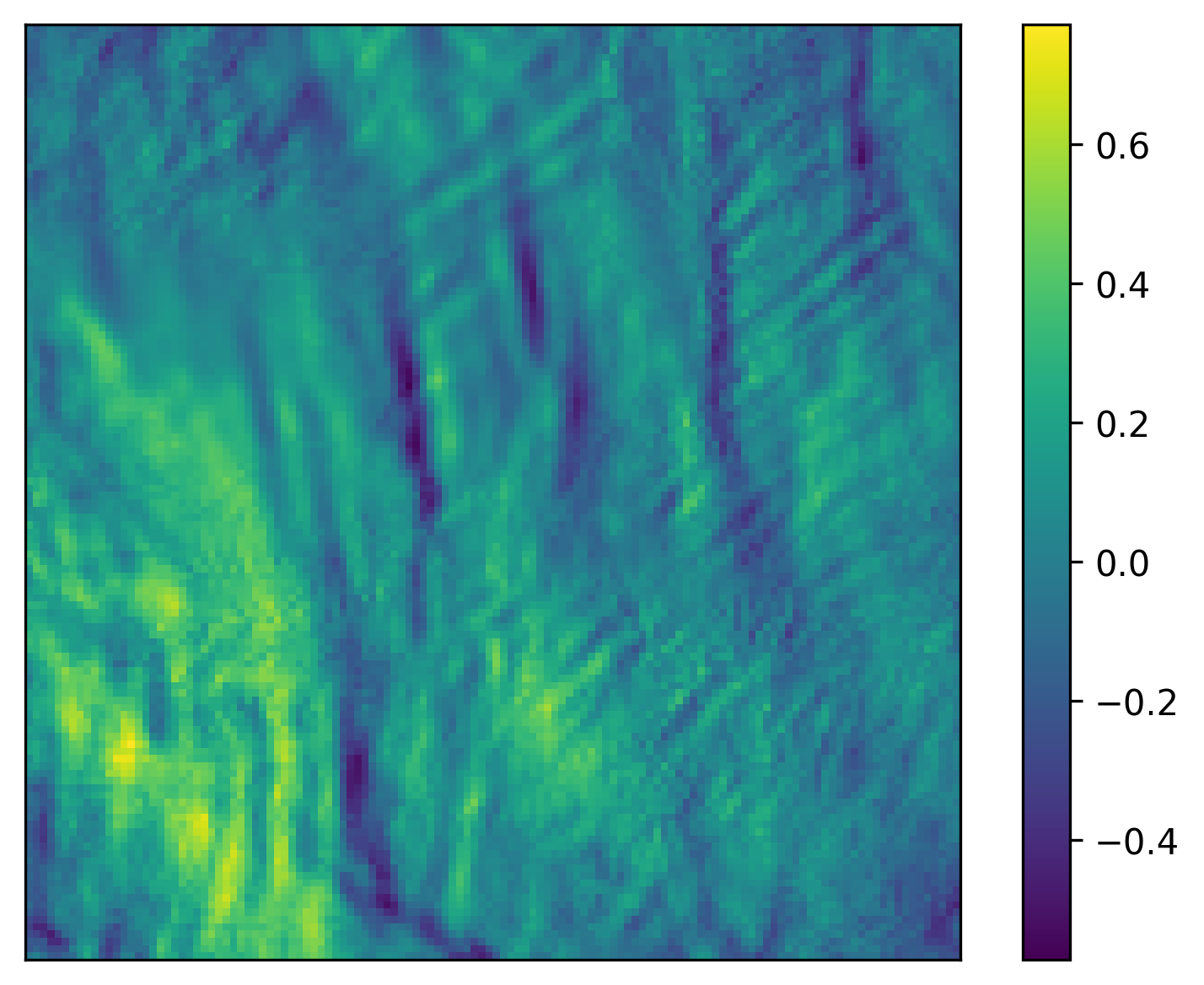} &
         \includegraphics[width=0.2\linewidth]{./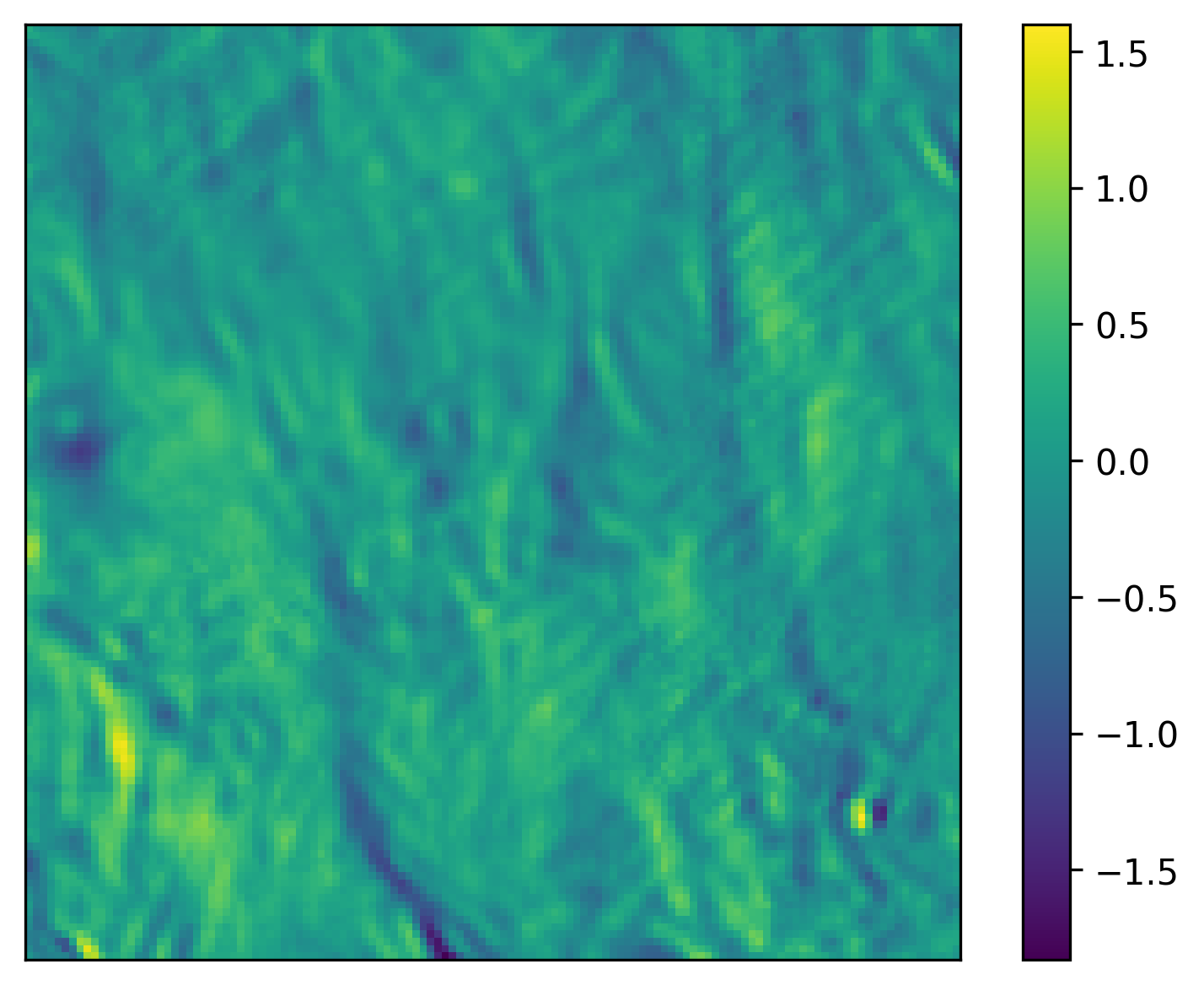} &
         \includegraphics[width=0.2\linewidth]{./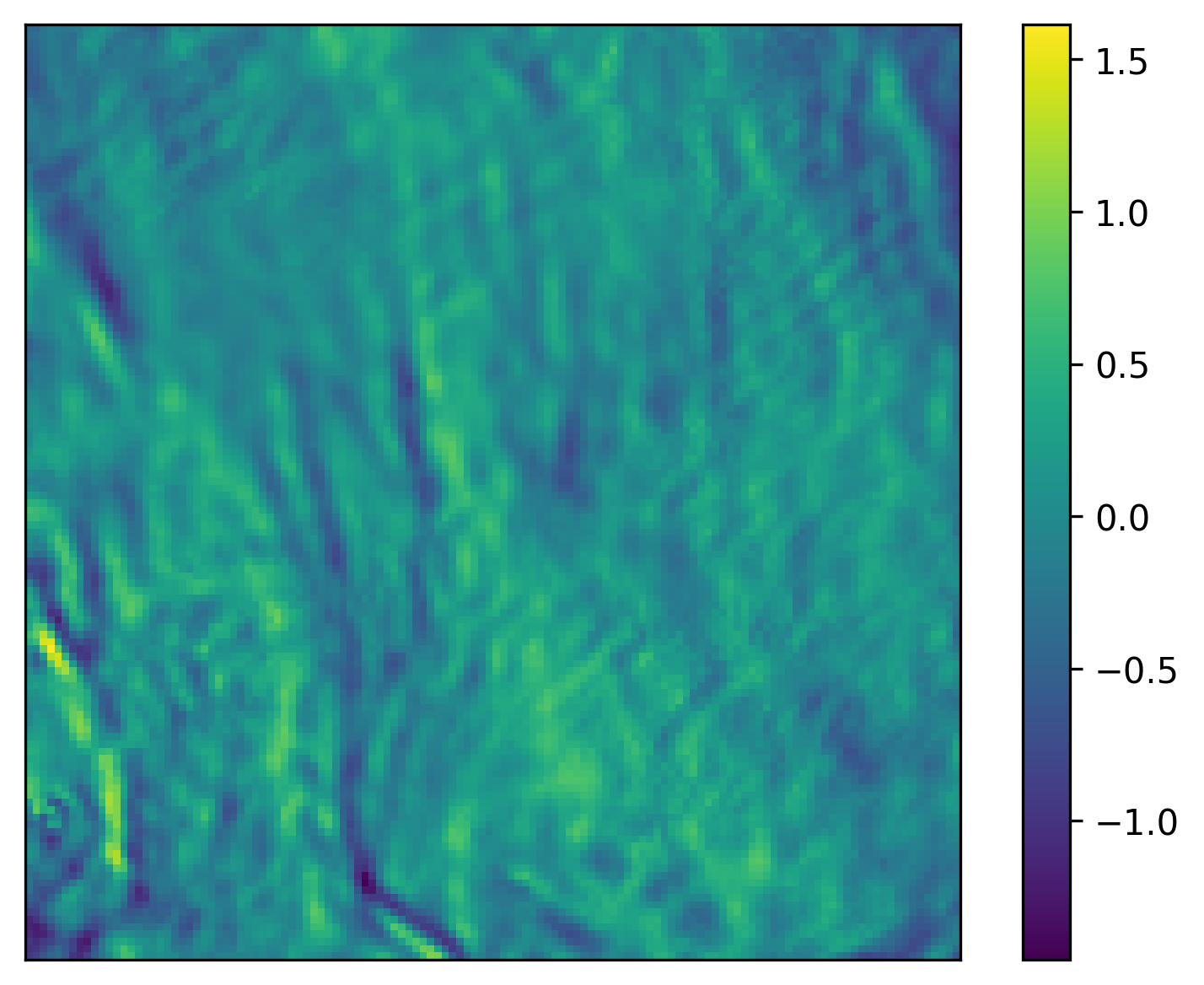} &
         \animategraphics[loop, autoplay, width=0.2\linewidth]{1}{figures/nfp_w20/tedm_nu=5/sample_1/ensemble/w20_pred_1_}{0}{15} \\
\end{tabular}
\caption{Qualitative visualization of samples generated from our conditional modeling for predicting the next state for the Vertical Wind Velocity (w20) channel. The ensemble mean represents the mean of ensemble predictions (16 in our case). Columns 2-3 represent two samples from the ensemble. The last column visualizes an animation of all ensemble members (Best viewed in a dedicated PDF reader). For each sample, the rows correspond to predictions from EDM, t-EDM ($\nu=3$), and t-EDM ($\nu=5$) from top to bottom, respectively.}
\label{fig:w20_cond}
\end{table}
}

\end{document}